\documentclass[a4paper]{article}

\usepackage{amsmath,amssymb,amsfonts,mathrsfs,bbm,amsthm,verbatim,cite,geometry,hyperref,nicefrac}

\geometry{a4paper,top=30mm,left=25mm,right=25mm,bottom=38mm,headsep=2mm,footskip=15mm}
          
\usepackage[shortlabels,inline]{enumitem}
\usepackage[english]{babel}
\usepackage[utf8]{inputenc}
\usepackage[T1]{fontenc}

\newcommand{\bi}{\begin{itemize} }
\newcommand{\ei}{\end{itemize} }
\newcommand{\A}{\mathscr{A} }
\newcommand{\B}{\mathscr{B} }
\newcommand{\compactset}{\mathscr{K}}

\DeclareFontEncoding{LS1}{}{}
\DeclareFontSubstitution{LS1}{stix}{m}{n}
\DeclareMathAlphabet{\mathscr}{LS1}{stixscr}{m}{n}

\usepackage[sort,capitalise]{cleveref} %

\crefname{enumi}{item}{items}
\crefname{subsec}{Section}{Sections}
\crefname{listing}{Source Code}{Source Codes}
\crefname{equation}{}{}

\crefname{subsection}{Subsection}{Subsections}
\crefname{figure}{Figure}{Figures}

\usepackage{caption}
\usepackage{newfloat}
\DeclareFloatingEnvironment[fileext=frm,placement={!ht},name=Source Code]{listing}
\captionsetup{width=\linewidth}
\captionsetup[longtable]{justification=centering}

\usepackage{tikz}
\usetikzlibrary{matrix,chains,positioning,decorations.pathreplacing,arrows,shapes}
\tikzset{ font={\fontsize{9pt}{12} \selectfont} }
\usepackage{adjustbox}

\hypersetup{colorlinks=true}

\theoremstyle{plain}
\newtheorem{theorem}{Theorem}[section]
\newtheorem{setting}[theorem]{Setting}
\newtheorem{lemma}[theorem]{Lemma}
\newtheorem{prop}[theorem]{Proposition}
\newtheorem{cor}[theorem]{Corollary}
\newtheorem{proposition} [theorem]{Proposition}
\theoremstyle{definition}

\numberwithin{equation}{section}

\usepackage{xparse}
\usepackage{etoolbox}

\usepackage{listings}
\lstset{
    aboveskip={1.3\baselineskip},
    basicstyle=\footnotesize\ttfamily\linespread{4},
    breaklines=false,
    columns=flexible,
    commentstyle=\color[rgb]{0.127,0.427,0.514}\ttfamily\itshape,
    escapechar=@,
    xleftmargin=.08\textwidth,
    xrightmargin=.08\textwidth,
    extendedchars=true,
    frame=single,
    identifierstyle=\color{black},
    inputencoding=latin1,
    keywordstyle=\color[HTML]{228B22}\bfseries,
    language=Python,
    ndkeywordstyle=\color[HTML]{228B22}\bfseries,
    numbers=left,
    numberstyle=\tiny,
    prebreak = \raisebox{0ex}[0ex][0ex]{\ensuremath{\hookleftarrow}},
    stringstyle=\color[rgb]{0.639,0.082,0.082}\ttfamily,
    upquote=true,
    showstringspaces=false,
    literate=%
}

\usepackage[section]{placeins}
\usepackage{csvsimple}
\usepackage{siunitx}
\usepackage{longtable}

\ExplSyntaxOn

\NewDocumentCommand{\enum}{ O{;} m o }
 {
  \my_enum:nnn { #1 } { #2 } { #3 }
 }

\seq_new:N \l__my_enum_seq
\tl_new:N \l__my_enum_item_tl


\cs_new_protected:Nn \my_enum:nnn
 {
  \seq_set_split:Nnn \l__my_enum_seq { #1 } { #2 }
  \seq_remove_all:Nn \l__my_enum_seq {}
  \int_set_eq:NN \l__number_of_args { \seq_count:N \l__my_enum_seq }
  \seq_use:Nnnn \l__my_enum_seq { ~and~ } { ,~ } { ,~and~ }
  \IfNoValueTF{#3}{}{
    \space
    \int_compare:nNnTF{ \l__number_of_args } < {2}{ \prop_item:Nn \l__verbs {#3} }{ #3 }
  }
 }

\ExplSyntaxOff

 \ExplSyntaxOn

 \seq_new:N \g_cflist_loaded
 \seq_new:N \g_cflist_pending
 
 \NewDocumentCommand{\cfadd}{ m }
 {
   \seq_if_in:NnF \g_cflist_loaded { #1 } {
     \seq_if_in:NnF \g_cflist_pending { #1 } {
       \seq_gput_right:Nn \g_cflist_pending { #1 }
     }
   }
 }

 \NewDocumentCommand{\cfconsiderloaded}{ m }{
   \seq_gput_right:Nn \g_cflist_loaded {#1}
 }
 
 \NewDocumentCommand{\cfremove}{ m }
 {
   \seq_gremove_all:Nn \g_cflist_pending { #1 }
 }

 \NewDocumentCommand{\cfload}{ o }
 {
   \seq_if_empty:NTF \g_cflist_pending {\unskip} {
     (cf.\ \cref{\seq_use:Nn \g_cflist_pending {,} } )\IfValueTF{#1}{#1~}{\unskip}
     \seq_gconcat:NNN \g_cflist_loaded \g_cflist_loaded \g_cflist_pending
     \seq_gclear:N \g_cflist_pending
   }
 }
 
 \NewDocumentCommand{\cfclear} {} {
   \seq_gclear:N \g_cflist_loaded
   \seq_gclear:N \g_cflist_pending
 }
 
 \NewDocumentCommand{\cfout}{ o }
 {
   \seq_if_empty:NTF \g_cflist_pending {\unskip} {
     (cf.\ \cref{\seq_use:Nn \g_cflist_pending {,} } )\IfValueTF{#1}{#1~}{\unskip}
     \seq_gclear:N \g_cflist_pending
   }
 }
 
 \NewDocumentCommand{\ifnocf} { m } {
   \seq_if_empty:NT \g_cflist_pending { #1 }
 }
 
 \ExplSyntaxOff

\NewDocumentEnvironment {athm} {m m} {%
\begin{#1} \label{#2} \global\def\loc{#2}%
}{%
\end{#1}%
}

\NewDocumentEnvironment{aproof} {} {%
\begin{proof}[Proof~of~\cref{\loc}]%
}{%
\end{proof}%
}

\ExplSyntaxOff

\ExplSyntaxOn

\NewDocumentEnvironment{flexmath}{ m o }{
  \str_if_eq:noTF {a} {#1} {
    \equation
    \IfValueT{#2}{\label{eq:\loc.#2} }
    \aligned
  } {
    \catcode`&=9
    \renewcommand{\\}{}
    \str_if_eq:noTF {d} {#1} {
      \equation
      \IfValueT{#2}{\label{eq:\loc.#2} }
    } {
      \math
    }
  }
}{
  \str_if_eq:noTF {i} {#1} {
    \endmath
    \catcode`&=4
  } {
    \str_if_eq:noTF {d} {#1} {
      \endequation
    } {
      \endaligned
      \endequation
    }
  }
}

\ExplSyntaxOff

\ExplSyntaxOn

\bool_new:N \g_noteobserve

\NewDocumentCommand{\setnote}{}{
  \bool_gset_true:N \g_noteobserve
}

\NewDocumentCommand{\setobserve}{}{
  \bool_gset_false:N \g_noteobserve
}

\NewDocumentCommand{\nobs}{ o }{
  \IfValueT{#1}{
    \str_if_eq:noTF {note} {#1} {
      \bool_gset_true:N \g_noteobserve
    } {
      \str_if_eq:noTF {Note} {#1} {
        \bool_gset_true:N \g_noteobserve
      } {
        \bool_gset_false:N \g_noteobserve
      }
    }
  }
  \bool_if:nTF { \g_noteobserve } {
    \bool_gset_false:N \g_noteobserve
    note
  } {
    \bool_gset_true:N \g_noteobserve
    observe
  }
  \IfValueF{#1}{~}
}

\NewDocumentCommand{\Nobs}{ o }{
  \IfValueT{#1}{
    \str_if_eq:noTF {note} {#1} {
      \bool_gset_true:N \g_noteobserve
    } {
      \str_if_eq:noTF {Note} {#1} {
        \bool_gset_true:N \g_noteobserve
      } {
        \bool_gset_false:N \g_noteobserve
      }
    }
  }
  \bool_if:nTF { \g_noteobserve } {
    \bool_gset_false:N \g_noteobserve
    Note
  } {
    \bool_gset_true:N \g_noteobserve
    Observe
  }
  \IfValueF{#1}{~}
}

\int_new:N \g_furthermore

\NewDocumentCommand{\Moreover}{ o o }{
  \IfValueT{#1}{
    \str_case:nn {#1} {
      {Furthermore} {\int_set:Nn {\g_furthermore} {0} }
      {Moreover} {\int_set:Nn {\g_furthermore} {1} }
      {In~addition} {\int_set:Nn {\g_furthermore} {2} }
      {note} {\bool_gset_true:N \g_noteobserve}
      {observe} {\bool_gset_false:N \g_noteobserve}
    }
    \IfValueT{#2}{
      \str_case:nn {#2} {
        {Furthermore} {\int_set:Nn {\g_furthermore} {0} }
        {Moreover} {\int_set:Nn {\g_furthermore} {1} }
        {In~addition} {\int_set:Nn {\g_furthermore} {2} }
        {note} {\bool_gset_true:N \g_noteobserve}
        {observe} {\bool_gset_false:N \g_noteobserve}
      }
    }
  }
  \int_case:nn { \int_mod:nn {\g_furthermore} {3} } {
    { 0 } { Furthermore,~\nobs that}
    { 1 } { Moreover,~\nobs that}
    { 2 } { In~addition,~\nobs that}
  }
  \int_incr:N \g_furthermore
  \IfValueF{#1}{~}
}

\bool_new:N \g_hencetherefore

\NewDocumentCommand{\hence}{}{
  \bool_if:nTF { \g_hencetherefore } {
    \bool_gset_false:N \g_hencetherefore
    hence~
  } {
    \bool_gset_true:N \g_hencetherefore
    therefore~
  }
}

\NewDocumentCommand{\Hence}{}{
  \bool_if:nTF { \g_hencetherefore } {
    \bool_gset_false:N \g_hencetherefore
    Hence,~we~obtain~
  } {
    \bool_gset_true:N \g_hencetherefore
    Therefore,~we~obtain~
  }
}
\ExplSyntaxOff

\newcommand{\diml}{\mathbf{d}}

\ExplSyntaxOn

\NewDocumentCommand{\mEE}{ o m }{
  \IfValueTF{#1}{
    \str_case:on {#1} {
      {0}{\mathbb E\br{#2} }
      {1}{\mathbb E\mkern-1mu\br[\big]{#2} }
      {2}{\mathbb E\mkern-1.1mu\br[\Big]{#2} }
      {3}{\mathbb E\mkern-1.3mu\br[\bigg]{#2} }
      {4}{\mathbb E\mkern-1.5mu\br[\Bigg]{#2} }
    }
  } {
    \mathbb E\br{#2}
  }
}

\ExplSyntaxOff

\usepackage{mleftright}

\usepackage{mathtools}
\DeclarePairedDelimiter{\pr}{ (}{)}
\DeclarePairedDelimiter{\br}{[}{]}
\DeclarePairedDelimiter{\cu}{\{}{\} }
\DeclarePairedDelimiter{\abs}{\lvert}{\rvert}
\DeclarePairedDelimiter{\norm}{\lVert}{\rVert}

\makeatletter
\newcommand{\mylabel}[2]{#2\def\@currentlabel{#2} \label{#1} }
\makeatother

\newcommand{\E}{\mathbb{E} }

\newcommand{\F}{\mathcal{F} }

\renewcommand{\P}{\mathbb{P} }

\usepackage{mleftright}

\usepackage{subcaption}

\usepackage{marginfix}


\usepackage{mathrsfs}
\newcommand{\fw}{\mathfrak{w} }
\newcommand{\fd}{\mathfrak{d} }
\newcommand{\fb}{\mathfrak{b} }
\newcommand{\fM}{\mathfrak{M} }

\newcommand{\fL}{\mathfrak{L} }
\newcommand{\fC}{\mathfrak{C} }
\newcommand{\fG}{\mathfrak{G} }
\def\mN{\mathcal N}
\def\mQ{\mathcal Q}
\def\mL{\mathcal L}
\def\mA{\mathcal A}
\def\mG{\mathcal G}
\def\mX{\mathcal X}
\def\mM{\mathcal{M} }

\def\R{\mathscr{R} }
\def\Reals{\mathbb{R} }
\newcommand{\Rr}{\Reals }
\def\enne{\mathbb{N} }
\newcommand{\N}{\enne }
\def\ip #1#2{\left<#1,#2\right>}
\renewcommand{\d}{{\mathrm d} }

\newcommand{\qqandqq}{\qquad\text{and} \qquad}

\newcommand{\indicator}[1]{\mathbbm{1}_{\smash{#1}}}

\title{
\vspace{-1cm}
Convergence proof for stochastic gradient descent\\
in the training of deep neural networks with\\
ReLU activation for constant target functions}

\author{
Martin Hutzenthaler$^1$, 
Arnulf Jentzen$^{2,3} $,
Katharina Pohl$^4$, \\
Adrian Riekert$^5$,
and 
Luca Scarpa$^6$
\bigskip
\\
\small{$^1$ Faculty of Mathematics, University of Duisburg-Essen,}
\vspace{-0.1cm}\\
\small{Essen, Germany, e-mail: \texttt{martin.hutzenthaler}\textcircled{\texttt{a}}\texttt{uni-due.de}}
\smallskip
\\
\small{$^2$ School of Data Science and Shenzhen Research Institute of Big Data,}
\vspace{-0.1cm}\\
\small{The Chinese University of Hong Kong, Shenzhen, China, e-mail: \texttt{ajentzen}\textcircled{\texttt{a}}\texttt{cuhk.edu.cn}}
\smallskip
\\
\small{$^3$ Applied Mathematics: Institute for Analysis and Numerics,}
\vspace{-0.1cm}\\
\small{Faculty of Mathematics and Computer Science, University of M{\"u}nster,}
\vspace{-0.1cm}\\
\small{M{\"u}nster, Germany, e-mail: \texttt{ajentzen}\textcircled{\texttt{a}}\texttt{uni-muenster.de}}
\smallskip
\\
\small{$^4$ Faculty of Mathematics, University of Duisburg-Essen,}
\vspace{-0.1cm}\\
\small{Essen, Germany, e-mail: \texttt{katharina.pohl}\textcircled{\texttt{a}}\texttt{uni-due.de}}
\smallskip
\\
\small{$^5$ Applied Mathematics: Institute for Analysis and Numerics,}
\vspace{-0.1cm}\\
\small{Faculty of Mathematics and Computer Science, University of M{\"u}nster,}
\vspace{-0.1cm}\\
\small{M{\"u}nster, Germany, e-mail: \texttt{ariekert}\textcircled{\texttt{a}}\texttt{uni-muenster.de}}
\smallskip
\\
\small{$^6$ Department of Mathematics, Politecnico di Milano,}
\vspace{-0.1cm} \\
\small{Milano, Italy, e-mail: \texttt{luca.scarpa}\textcircled{\texttt{a}}\texttt{polimi.it}}
}

\begin{document}

\maketitle

\begin{abstract}
In many numerical simulations stochastic gradient descent (SGD) type optimization methods 
perform very effectively in the training of deep neural networks (DNNs) but till this day 
it remains an open problem of research to provide a mathematical convergence analysis 
which rigorously explains the success of SGD type optimization methods in the training 
of DNNs. In this work we study SGD type optimization methods in the training of 
fully-connected feedforward DNNs with rectified linear unit (ReLU) activation. 
We first establish general regularity properties for the risk functions and 
their generalized gradient functions appearing in the training of such DNNs 
and, thereafter, we investigate the plain vanilla SGD optimization method 
in the training of such DNNs under the assumption that the target function 
under consideration is a constant function. 
Specifically, we prove under the assumption that the learning rates (the step sizes of the SGD optimization method) 
are sufficiently small but not $L^1$-summable and under the assumption that 
the target function is a constant function that the expectation of the risk
of the considered SGD process converges in the training of such DNNs to zero 
as the number of SGD steps increases to infinity.
\end{abstract}

\pagebreak
\begin{samepage}
\tableofcontents
\end{samepage}

\section{Introduction}
\label{sec:intro}
In many numerical simulations stochastic gradient descent (SGD) optimization methods perform very effectively in the training of deep artificial neural networks (ANNs) but till this day it remains an open problem of research to provide a mathematical convergence analysis which rigorously explains the success of SGD optimization methods in the training of deep ANNs. 
Even though the mathematical analysis of gradient descent (GD) type optimization methods in the training of deep ANNs is still in its infancy, there are several auspicious analytical results for GD type optimization methods in the literature. In the following we mention several of such analytical results for GD type optimization methods in a very brief way and we point to the below mentioned references for further details.

We refer, for instance, to \cite{NIPS2013_7fe1f8ab, NIPS2011_40008b9a,MR701288,
MR2142598,10.5555/3042573.3042774,
JentzenKuckuckNeufeldVonWurstemberger2021,
SchmidtleRoux2013}
and the references mentioned therein for abstract convergence results for GD type optimization methods 
\emph{under convexity assumptions} on the objective function, 
by which we mean the function of which one intends to minimize in the optimization problem. 
The objective functions arising in the training of deep ANNs via GD type optimization methods 
are in nearly all situations not convex 
(see \cref{cor:conv_characterization} in \cref{ssec:risk_non-convex} 
in this article below for a characterization of convexity in the situation 
deep ANNs with the rectified linear unit (ReLU) activation) 
and in such scenarios there often exist infinitely many saddle 
and non-global local minimum points of the considered objective functions 
(cf., e.g., \cite{MR4450126_CJR2022,  MR3960812, NEURIPS2019_3fb04953, pmlr-v49-lee16, 
MR3754926, pmlr-v40-Ge15} and the references mentioned therein).

We also point, for example, to 
\cite{AbsilMahonyAndrews2005, akyildiz2021nonasymptotic, attouch:hal-00803898, MR1741189,
MR3941932, MR4138120, pmlr-v99-karimi19a, DBLP:journals/corr/KarimiNS16, MR4162089,lovas2021taming, 
patel2021stopping, wojtowytsch2021stochastic,
DBLP:journals/corr/abs-1904-06963} for \emph{abstract convergence results} 
for GD type optimization methods \emph{without convexity assumptions} on the objective function. 
Moreover, we point, for instance, to \cite{wojtowytsch2021stochastic, DBLP:journals/corr/abs-2102-09385, DBLP:journals/corr/KarimiNS16, doi:10.1137/120887795, MR4162089, attouch:hal-00803898} for convergence results 
for GD type optimization methods under \emph{Kurdyka--\L ojasiewicz type assumptions}.

We also mention convergence results for GD type optimization methods in the training of ANNs in the so-called \emph{overparametrized regime} 
in which, roughly speaking, the architecture of the considered ANNs is chosen large enough so that there are much more ANN parameters 
than input-output data pairs of the considered supervised learning problem. We refer, for example, to \cite{AroraDuHuLiWang2019,
DuLeeLiWangZhai2019, 
ZhengdaoRotskoff2020,
ZhangMartensGrosse2019,
MR4119555, jentzen2021convergence, NEURIPS2018_54fe976b 
} for convergence results in this overparametrized regime in the context of shallow ANNs and to \cite{NEURIPS2019_62dad6e2, 
pmlr-v97-allen-zhu19a, pmlr-v97-du19c, DBLP:journals/corr/abs-1904-06963,
MR4075425} for convergence results in this overparametrized regime in the context of deep ANNs.

In addition to the above mentioned results in the overparametrized regime, 
in the literature there are also several convergence results for GD type optimization methods in the training of shallow ANNs 
under \emph{specific assumptions on the target function} under consideration, 
by which we mean the function describing the relationship between the input and the output data (the factorization of the conditional expectation of the output datum given that the corresponding input datum is known). 
Specifically, we point to \cite{MR4577699_JentzenRiekert2022_JMLR} for the situation of piecewise affine linear target functions, we mention 
\cite{MR4564782_Eberleetal2023}
for the situation of piecewise polynomial target functions, and we refer to \cite{MR4438169, MR4468133} for the situation of constant target functions. 
In particular, a central contribution of this work is to extend the findings of \cite{MR4468133} from shallow ANNs with just one hidden layer to deep ANNs with an arbitrary large number of hidden layers.

Furthermore, we also point, e.g., to \cite{MR4055054} for \emph{lower bounds} for approximations errors for GD type optimizations methods and we refer, for instance, to \cite{MR4232648, MR4188518, pmlr-v99-shamir19a} for certain \emph{non-convergence results} 
for GD type optimization methods in the training of deep ANNs. 
Further references and overviews on GD type optimizations methods can, for example, also be found in
\cite{BercuFort2013}, \cite{MR3797719},
\cite{EMaWojtowytschWu2020},
\cite[Section 1.1]{MR4138120},
\cite[Section 1]{JentzenKuckuckNeufeldVonWurstemberger2021}, and \cite{DBLP:journals/corr/Ruder16}.

\def\layersep{3cm}

\begin{figure}
\centering
\begin{adjustbox}{width=\textwidth}
\begin{tikzpicture}[shorten >=1pt,->,draw=black!50, node distance=\layersep]
    \tikzstyle{neuron} =[circle, draw=black!80, 
    minimum size=17pt,line width=0.5mm, inner sep=0pt]
    \tikzstyle{input neuron} =[neuron, fill=red!50];
    \tikzstyle{output neuron} =[neuron,
    fill=violet!50];
    \tikzstyle{hidden neuron} =[neuron,
    fill=blue!50];
    \tikzstyle{annot} = [text width=9em, text centered]
    \tikzstyle{annot2} = [text width=4em, text centered]

    \foreach \name / \y in {1,...,5}
        \node[input neuron] (I-\name) at (0,-\y) {};
    \foreach \name / \y in {1,...,5}    
            \node[annot,left of=I-\name, node distance=0.7cm, align=center] () {$x_{\y} $};
    \foreach \name / \y in {1,...,8}
        \path[yshift=1.5cm]
            node[hidden neuron] (H-\name) at ( \layersep,-\y cm) {};      
    \foreach \name / \y in {1,...,7}
        \path[yshift=1cm]
            node[hidden neuron] (H2-\name) at (2*\layersep,-\y cm) {};      
    \foreach \name / \y in {1,...,9}
        \path[yshift=2cm]
            node[hidden neuron] (H3-\name) at (3*\layersep,-\y cm) {};      
    \foreach \name / \y in {1,2,3}
        \path[yshift=-1cm]
            node[output neuron](O-\name) at (4*\layersep,-\y cm) {};   
    \foreach \name / \y in {1,2,3}    
            \node[annot,right of=O-\name, node distance=1.1cm, align=center] () {$ \mN^{4, \theta }_{ \infty, \y}(x) $};
            
    \foreach \source in {1,...,5}
        \foreach \dest in {1,...,8}
            \path[>=stealth] (I-\source) edge (H-\dest);     
    \foreach \source in {1,...,8}
        \foreach \dest in {1,...,7}
            \path[>=stealth] (H-\source) edge (H2-\dest);     
    \foreach \source in {1,...,7}
        \foreach \dest in {1,...,9}
            \path[>=stealth] (H2-\source) edge (H3-\dest);
    \foreach \source in {1,...,9}
        \foreach \dest in {1,...,3}
            \path[>=stealth] (H3-\source) edge (O-\dest);

    \node[annot,above of=H-1, node distance=1.5cm, align=center] (hl) {1st hidden layer\\(2nd layer)};
    \node[annot,above of=H2-1, node distance=2cm, align=center] (hl2) {2nd hidden layer\\(3rd layer)};
    \node[annot,above of=H3-1, node distance=1cm, align=center] (hl3) {3rd hidden layer\\(4th layer)};
    \node[annot,left of=hl, align=center] {Input layer\\ (1st layer)};
    \node[annot,right of=hl3, align=center] {Output layer\\(5th layer)};
    
    \node[annot2,below of=H-8, node distance=1.5cm, align=center] (sl) {$ \ell_1=8$};
    \node[annot2,below of=H2-7, node distance=2cm, align=center] (sl2) {$ \ell_2=7$};
    \node[annot2,below of=H3-9, node distance=1cm, align=center] (sl3) {$ \ell_3=9$};
    \node[annot2,left of=sl, align=center] {$ \ell_0=5$};
    \node[annot2,right of=sl3, align=center] {$ \ell_4=3$};
\end{tikzpicture}
\end{adjustbox}
\caption{Graphical illustration of the used deep ANN architecture in the case of a simple example deep ANN with 3 hidden layers (corresponding to $L = 4$ affine linear transformations), with 5 neurons on the input layer (corresponding to $ \ell_0 = 5$), 8 neurons on the 1st hidden layer (corresponding to $ \ell_1 = 8$), 7 neurons on the 2nd hidden layer (corresponding to $ \ell_2 = 7$), 9 neurons on the 3rd hidden layer (corresponding to $ \ell_3 = 9$), and 3 neurons on the output layer (corresponding to $ \ell_4 = 3$). In this situation we have for every ANN parameter vector $ \theta \in \Reals^{ \fd } = \Reals^{213 } $ that the realization function 
$ \Reals^5 \ni x \mapsto \mN^{ 4, \theta }_{ \infty }(x) \in \Reals^3$ 
of the considered deep ANN maps the 5-dimensional input vector $x = ( x_1, x_2, x_3, x_4, x_5 ) \in [a,b]^5$ to the 3-dimensional 
output vector $ \mN^{ 4, \theta }_{ \infty }( x ) = 
( \mN^{4, \theta }_{ \infty, 1}(x), \mN^{4, \theta }_{ \infty, 2}(x), \mN^{4, \theta }_{ \infty, 3}(x)) $
.}
\label{figure_1}
\end{figure}

In this work we study, among other things, the plain vanilla SGD optimization method 
in the training of deep ANNs under the assumption that 
\emph{the target function under consideration is a constant function}. Specifically, 
one of the main results of this work, \cref{thm:sgd_conv} 
in \cref{ssec:SGD_convergence} below, proves that 
the expectation of the risk of the considered SGD process converges 
in the training of \emph{fully-connected feedforward deep ANNs with 
the rectified linear unit (ReLU) activation} to zero as the number of 
SGD steps increases to infinity. To illustrate the findings of this work in more detail, 
we present in \cref{thm:main_thm} below 
a special case of \cref{thm:sgd_conv} and 
we now add some further explanatory comments 
regarding the mathematical objects appearing in \cref{thm:main_thm}.

The natural number $ L \in \N= \{ 1, 2,3, \dots \} $ in \cref{thm:main_thm} 
is related to the number of hidden layers of 
the deep ANNs in \cref{thm:main_thm} and the sequence 
$ \ell_k \in \N $, $k \in \N_0 = \N \cup \{ 0 \} = \{ 0, 1, 2, \dots \} $, 
of natural numbers in \cref{thm:main_thm} describes 
the dimensions (the numbers of neurons) of the layers of the deep ANNs in \cref{thm:main_thm}. More formally, in \cref{thm:main_thm} we consider fully-connected feedforward deep ANNs with $L - 1$ hidden layers and, including the input and the output layer, $L + 1$ layers overall 
\begin{itemize}
\item 
with $ \ell_0 $ neurons on the input layer (with an $ \ell_0 $-dimensional input layer), 
\item 
with $ \ell_1$ neurons on the 1st hidden layer (with an $ \ell_1$-dimensional 1st hidden layer), 
\item 
with $ \ell_2$ neurons on the 2nd hidden layer (with an $ \ell_2$-dimensional 2nd hidden layer), 
\item 
$ \dots $, 
\item 
with $ \ell_{ L - 1 } $ neurons 
on the ($ L - 1 $)-th (and last) hidden layer (with an $ \ell_{L - 1} $-dimensional ($L-1$)-th hidden layer), and 
\item 
with $ \ell_L$ neurons on the output layer (with an $ \ell_L$-dimensional output layer). 
\end{itemize}
In between each of the layers there are real affine linear transformations 
described through \emph{weight matrices} and \emph{bias vectors}. 
The entries of the weight matrices are referred 
as weight parameters and the components of the bias vectors are referred to as 
bias parameters. 
Taking this into account
\begin{itemize}
\item 
there are $ \ell_1 \ell_0 $ weight parameters and $ \ell_1$ bias parameters to describe the affine linear transformation 
between the 
input layer and the 
1st hidden layer, 
\item 
there are $ \ell_2 \ell_1$ weight parameters and $ \ell_2$ bias parameters to 
describe the affine linear transformation between the 
1st hidden layer and the 
2nd hidden layer, 
\item 
$ \dots $
\item 
there are $ \ell_{ L - 1 } \ell_{ L - 2 } $ weight parameters and $ \ell_{ L - 1 } $ 
bias parameters to describe 
the affine linear transformation between the 
($L - 2$)-th hidden layer and the 
($L - 1$)-th hidden layer, and 
\item 
there are $ \ell_L\ell_{L-1} $ weight parameters and $ \ell_L$ bias parameters 
to describe the affine linear transformation between the 
($L - 1$)-th hidden layer and the 
output layer. 
\end{itemize}
Overall in \cref{thm:main_thm} we thus use 
\begin{equation} 
  \fd 
=
  \sum_{ k = 1 }^L ( \ell_k \ell_{ k - 1 } + \ell_k ) 
= 
  \sum_{ k = 1 }^L \ell_k ( \ell_{ k - 1 } + 1 )
\end{equation}
real numbers to describe the employed deep ANNs. We also refer to \cref{figure_1}
for a graphical illustration of the used deep ANN architecture in the case of a simple example deep ANN 
with 3 hidden layers (corresponding to $ 5 $ layers and $ L = 4 $ affine linear transformations)
\begin{enumerate}
\item[0.)]
with $ 5 $ neurons on the input layer (corresponding to $ \ell_0 = 5$), 
\item[1.)]
with $ 8 $ neurons on the 1st hidden layer (corresponding to $ \ell_1 = 8$), 
\item[2.)]
with $ 7 $ neurons on the 2nd hidden layer (corresponding to $ \ell_2 = 7$), 
\item[3.)]
with $ 9 $ neurons on the 3rd hidden layer (corresponding to $ \ell_3 = 9$), and 
\item[4.)]
with $ 3 $ neurons on the output layer (corresponding to $ \ell_4 = 3$). 
\end{enumerate}
The matrices $ \fw^{ \theta,k} $, $k \in \enne $, $ \theta \in \Reals^{ \fd } $, and the vectors $ \fb^{ \theta,k} $, $ k \in \enne $, $ \theta \in \Reals^{ \fd } $, in \cref{wb_thm1} in \cref{thm:main_thm} 
represent the weight matrices and the bias vectors 
and are thus used to describe the affine linear transformations in the considered deep ANN. More formally, 
for every ANN parameter vector $ \theta \in \Reals^{ \fd } $ 
and every $k \in \{ 1, \ldots, L \} $ we have that the 
$ ( \ell_k \times \ell_{ k - 1 } ) $-matrix 
\begin{equation}
  \fw^{ \theta,k} = 
  ( \fw^{ \theta,k}_{ i,j} )_{ 
    (i,j) \in \{ 1, \ldots, \ell_k \} \times \{ 1, \ldots, \ell_{k-1} \} 
  } 
\end{equation}
describes the linear part in the affine linear transformation 
between the $k$-th and the $(k+1) $-th layer (the $(k-1) $-th 
and the $k$-th hidden layer, respectively) 
and we have that the $l_k$-dimensional vector 
\begin{equation}
  \fb^{ \theta,k} = ( \fb^{ \theta,k}_1, 
  \fb^{ \theta,k}_2, \ldots, \fb^{ \theta,k}_{\ell_k} ) 
\end{equation}
describes the inhomogeneous part in the affine linear transformation between the $k$-th and the $(k+1) $-th layer (the $(k-1) $-th and the $k$-th hidden layer, respectively). 
The functions 
$ \mA^{ \theta }_k $, $ k \in \enne $, 
$ \theta \in \Reals^{ \fd } $, 
in \cref{thm:sgd_conv} 
are used to describe the affine linear transformations 
in the considered deep ANN in the sense 
that for every ANN parameter vector $ \theta \in \Reals^{ \fd } $ 
and every $ k \in \enne $ we have that 
the affine linear transformation 
between the $k$-th and the $(k+1) $-th layer (the $(k-1) $-th and the $k$-th hidden layer, respectively) is given through 
the function 
\begin{equation}
  \Reals^{ \ell_{k-1} } \ni x \mapsto A^{ \theta }_k(x) =\fb^{ k, \theta } +  \fw^{ k, \theta } x  \in \Reals^{ \ell_k} 
  .
\end{equation}
The real numbers $a \in \Reals $, $b \in (a, \infty) $ in \cref{thm:main_thm} are used to specify 
the \emph{region where the input data of the considered supervised learning problem take values in}. 
Specifically, we assume in \cref{thm:main_thm} that the input data takes values in the $ \ell_0 $-dimensional cube 
\begin{equation}
  [a,b]^{ \ell_0 } \subseteq \Reals^{ \ell_0 } 
  .
\end{equation}
The target function, which we intend to learn approximatively in \cref{thm:main_thm} and which describes the relationship between the input data and the output data, is thus a function from the $ \ell_0 $-dimensional set $[a,b]^{ \ell_0 } $ to the $ \ell_L $-dimensional set $ \Reals^{ \ell_L } $. 
We assume that the target function is a constant function which is represented by the $ \ell_L$-dimensional vector $ \xi \in \Reals^{ \ell_L } $ in \cref{thm:main_thm}. The target function in \cref{thm:main_thm} is thus given through the constant function $[a,b]^{ \ell_0 } \ni x \mapsto \xi \in \Reals^{ \ell_L } $.
In \cref{thm:main_thm} we study the training of deep ANNs with 
\emph{ReLU activation}. 
The ReLU activation function 
\begin{equation}
  \Reals \ni x \mapsto \max\{ x, 0 \} \in \Reals 
\end{equation}
fails to be differentiable and this lack of differentiability transfers 
from the activation function to the risk function. To specify the SGD optimization method, 
one thus need to specify a suitably generalized gradient function. 
We accomplish this by means of an \emph{appropriate approximation procedure} 
in which the ReLU activation function $ \Reals \ni x \mapsto \max\{ x, 0 \} \in \Reals $ 
is approximated via appropriate continuously differentiable functions 
which converge pointwise to the ReLU activation function and 
whose derivative converge pointwise to the left derivative 
of the ReLU activation function; see \cref{lim_R_thm1} below for details. 
Our approximation procedure in \cref{lim_R_thm1} 
is partially inspired by the shallow ANN case in \cite{MR4468133} 
(see also \cite{MR4438169}) 
but in the case of deep ANNs such an approximation procedure 
is a more much delicate issue since the non-differentiable 
ReLU activation function appears in the argument 
of the non-differentiable ReLU activation function 
in the case of risk functions associated to deep ANNs; 
see \cref{lem:properties2} below for further details. 
The mathematical description of the generalized gradient function 
in \cref{lim_R_thm1,def_G_thm1} in \cref{thm:main_thm} corresponds 
precisely to the standard procedure how such appropriately generalized gradients are computed in a {\sc Python} implementation in {\sc TensorFlow} (cf., e.g., \cite[Section 1 and Subsection 3.7]{MR4468133}). 
The functions 
\begin{equation}
  \fM_r \colon 
  ( \cup_{ n \in \enne } \Reals^n ) \to ( \cup_{ n \in \enne } \Reals^n ) 
\end{equation}
for $ r \in [ 1, \infty ] $ 
in \cref{thm:main_thm} specify appropriate 
\emph{multidimensional versions of the ReLU activation function} 
and its continuously differentiable approximations. 
The function $ \norm{\cdot} \colon ( \cup_{ n \in \enne } \Reals^n ) \to \Reals $ 
in \cref{thm:main_thm} is used to express the 
standard norm for tuples of real numbers. 
In particular, \nobs that for all $ n \in \enne= \{ 1, 2, 3, \dots \} $, 
$ x = ( x_1, \dots, x_n ) \in \Reals^n $ 
we have that the real number $ \norm{x} $ is nothing else but 
the standard norm of the $ n $-dimensional vector $ x $.  
The probability space $ ( \Omega, \F, \P) $ in \cref{thm:main_thm} 
is the probability space on which the input data is defined and 
the i.i.d.\ random variables $ X^{ n, m } : \Omega \to [a,b]^{ \ell_0 } $, $ n, m \in \enne_0 $, 
in \cref{thm:main_thm} describe the \emph{input data of the considered supervised learning problem}. 
The sequence of natural numbers $ M_n \in \N $, $n \in \N_0 $, 
in \cref{thm:main_thm} specifies the size of the \emph{mini-batches} 
in the SGD optimization method and the stochastic process 
$ \Theta = ( \Theta_n )_{ n \in \enne_0 } \colon \enne_0 \times \Omega \to \Reals^{ \fd } $ 
in \cref{thm:main_thm} specifies the SGD process. 
In particular, we \nobs for every SGD step $ n \in \N_0 $ 
that the random variable $ \Theta_n $ (the SGD process at time $ n $) 
and the input data random variables 
$ X^{ n, 1 }, X^{ n, 2 }, \dots, X^{ n, M_n } $ 
are used to compute the random variable $ \Theta_{ n + 1 } $ 
(the SGD process at time $ n + 1 $). 
The sequence of non-negative real numbers 
$ \gamma_n \in [0, \infty) $, $ n \in \enne_0 $, 
in \cref{thm:main_thm} describes the 
\emph{learning rates} 
(the \emph{step sizes}) for the SGD optimization method. 
In \cref{thm:main_thm} we assume -- in dependence of 
the size of the norm of the initial value of 
the SGD process, of the architecture of the 
considered deep ANNs, and of the target function -- that 
the learning rates are sufficiently small but also that 
the learning rates are not summable.

Under the above outlined hypotheses 
we prove in \cref{it:main_thm_3} 
in \cref{thm:main_thm} that 
the expected risk of the SGD process converges 
to zero as the number of SGD steps increases to infinity. 
We now present the precise statement of 
\cref{thm:main_thm}.

\begin{samepage}
\begin{theorem}
\label{thm:main_thm}
Let 
$ L, \fd \in \N $,
$ 
  ( \ell_k )_{ k \in \N_0 } \subseteq \N
$,
$ a \in \Reals $, 
$ b \in (a, \infty) $, 
$ \xi \in \Reals^{ \ell_L} $
satisfy $ \fd = \sum_{ k = 1 }^L \ell_k ( \ell_{k-1} + 1 ) $,
for every $ \theta = ( \theta_1, \ldots, \theta_{ \fd } ) \in \Rr^{ \fd } $
let 
$ 
  \fw^{ k, \theta } 
  = 
  ( \fw^{ k, \theta }_{ i,j} )_{ 
    (i,j) \in \{ 1, \ldots, \ell_k \} \times \{ 1, \ldots, \ell_{k-1} \} 
  }
  \in \Reals^{ \ell_k \times \ell_{ k - 1 } } 
$, $ k \in \N $, 
and 
$ 
  \fb^{ k, \theta } = ( \fb^{ k, \theta }_i)_{ i \in \{ 1, \ldots, \ell_k \} } 
  \in \Reals^{ \ell_k } 
$,
$ k \in \N $, 
satisfy for all  
$ k \in \{ 1, \ldots, L\} $, $ i \in \{ 1, \ldots, \ell_k \} $,
$ j \in \{ 1, \ldots, \ell_{k-1} \} $ that
\begin{equation}
\label{wb_thm1}
  \fw^{ k, \theta }_{ i,j} =
  \theta_{ (i-1)\ell_{k-1} + j+\sum_{h= 1 }^{ k-1} \ell_h( \ell_{h-1} + 1)}
\qqandqq
  \fb^{ k, \theta }_{ i} =
  \theta_{\ell_k\ell_{k-1} + i+\sum_{h= 1 }^{ k-1} \ell_h( \ell_{h-1} + 1)} 
  ,
\end{equation}
for every $ k \in \N $, $ \theta \in \Reals^{ \fd } $
let 
$ \mA_k^{ \theta } \colon \Reals^{ \ell_{k-1} } \to \Reals^{ \ell_k} $
satisfy 
for all $ x \in \Reals^{ \ell_{k-1} } $
that $ \mA_k^{ \theta }( x ) = \fb^{ k, \theta } + \fw^{ k, \theta } x $,
let 
$ \R_r \colon \Rr \to \Rr $, 
$ r \in [1, \infty] $,
satisfy for all 
$ r \in [1, \infty) $, 
$ x \in ( - \infty, 2^{ - 1 } r^{ - 1 } ] $, 
$ 
  y \in \Rr
$, 
$
  z \in [ r^{ - 1 }, \infty ) 
$
that 
\begin{equation}
\label{lim_R_thm1}
  \R_r \in C^1( \Reals, \Reals ) ,
\qquad
  \R_r(x) = 0,
\qquad 
  0 \leq \R_r(y) \leq \R_{ \infty }( y ) 
  = 
  \max\{ y, 0 \}
  ,
\qqandqq
  \R_r(z) = z
  ,
\end{equation}
assume 
$
  \sup_{ r \in [1, \infty) }
  \sup_{ x \in \Reals } 
  | ( \R_r )'( x ) | < \infty 
$,
let $ \norm{\cdot} \colon( \cup_{ n \in \enne } \Reals^n ) \to\Rr $ 
and
$ \mathfrak{M}_r \colon( \cup_{ n \in \Reals } \Reals^n ) \to ( \cup_{ n \in \enne } \Reals^n ) $, 
$ r \in [1, \infty] $, 
satisfy for all $ r \in [1, \infty] $, 
$ n \in \N $,
$ x = (x_1, \ldots, x_n ) \in \Rr^n $
that 
$ \norm{x} = ( \sum_{ i= 1 }^n | x_i |^2 )^{ 1 / 2 } $ 
and
$ \mathfrak{M}_r( x ) = ( \R_r(x_1), \ldots, \R_r( x_n ) ) $,
let $ ( \Omega, \mathcal{F}, \P) $ be a probability space, 
let $ X^{n,m} \colon \Omega \to [a,b]^{ \ell_0 } $, $ n, m \in \N_0 $, 
be i.i.d.\ random variables,
let $ ( M_n )_{ n \in \N_0 } \subseteq \N $,
for every $ r \in [1, \infty] $ 
let $ \mN^{ k, \theta }_r \colon \Reals^{ \ell_0 } \to \Reals^{ \ell_L} $, 
$ \theta \in \Reals^{ \fd } $,
$ k \in \enne $,
and $ \mathfrak{L}^n_r \colon  \Reals^{ \fd} \times \Omega  \to \Reals $, 
$ n \in \N_0 $, 
satisfy 
for all $ n \in \N_0 $,
$ \theta \in \Reals^{ \fd} $, $ k \in \N $, 
$ \omega \in \Omega $
that 
\begin{equation}
\label{def_N_thm1}
  \mN^{ 1, \theta }_r( x ) = \mA^{ \theta }_1( x ) 
  ,
\qquad
  \mN^{ k+1, \theta }_r( x )
  =
  \mA_{ k + 1 }^{ \theta }( 
    \mathfrak{M}_{ r^{ 1 / k } }( 
      \mN^{ k, \theta }_r( x ) 
    )
  ) ,
\end{equation}
and
$ 
  \mathfrak{L}^n_r( \theta , \omega ) = 
  \tfrac{ 1 }{ M_n } \textstyle\sum_{m= 1 }^{M_n} 
  \norm{ 
    \mN^{ L , \theta }_r( X^{ n, m }( \omega ) ) - \xi 
  }^2 
$,
let $ \mL \colon \Reals^{ \fd } \to \Reals $
satisfy for all
$ \theta \in \Reals^{ \fd } $
that 
$ 
  \mL( \theta ) 
  = 
  \E[ 
    \norm{ \mN^{ L ,  \theta }_{ \infty }( X^{ 0, 0 } ) - \xi }^2
  ] 
$,
for every $ n \in \N_0 $
let 
$ 
  \mathfrak{G}^n \colon \Reals^{ \fd } \times \Omega \to \Reals^{ \fd } 
$  
satisfy for all
$ \theta \in \Reals^{ \fd } $, 
$ 
  \omega \in \{ w \in \Omega \colon ( ( \nabla_\theta \mathfrak{L}^n_r ) ( \theta , w ) )_{ r \in [1, \infty) } 
  \text{ is } \allowbreak\text{convergent} \} 
$ 
that 
\begin{equation}
\label{def_G_thm1}
 \textstyle
 \mathfrak{G}^n ( \theta , \omega ) = \lim_{r \to \infty } ( \nabla_\theta \mathfrak{L}^n_r ) ( \theta, \omega )
 ,
\end{equation}
let $ \Theta = ( \Theta_n )_{n \in \N_0 } \colon \N_0 \times \Omega \to \Reals^{ \fd } $
be a stochastic process, 
let $ ( \gamma_n )_{n \in \N_0 } \subseteq [0, \infty) $,
assume that $ \Theta_0 $ and $ ( X^{n,m} )_{ (n,m) \in ( \N_0 ) ^2} $ are independent,
and
assume for all $ n \in \N_0 $ that
$ \Theta_{ n + 1 } = \Theta_n - \gamma_n \mathfrak{G}^n( \Theta_n ) $,
$(4 L \fd \max\{\abs{a}, \abs{b}, \norm{\xi}, 1 \} )^{2 L } \gamma_n \leq ( \norm{\Theta_0 } + 1)^{-2L} $, 
and $ \sum_{k=0}^\infty \gamma_k = \infty$.
Then
\begin{enumerate} [label=(\roman*)]
	\item\label{it:main_thm_1} 
	there exists $ \mathfrak{C} \in \Reals $ such that $ \P ( \sup_{n \in \N_0 }  \norm{ \Theta_n } \leq \mathfrak{C} ) = 1$,
	\item\label{it:main_thm_2} 
	it holds that $ \P( \limsup_{n \to \infty } \mL  ( \Theta_n ) = 0 ) = 1 $, and
	\item\label{it:main_thm_3} 
	it holds that 
	$ 
	  \limsup_{n \to \infty } \E [ \mL( \Theta_n ) ] = 0
	$.
\end{enumerate}
\end{theorem}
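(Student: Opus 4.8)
The plan is to make the expected risk $\E[\mL(\Theta_n)]$ the central object and to show it satisfies a one-step recursion which, because the target $\xi$ is \emph{constant}, both decreases in proportion to the risk and carries a stochastic error that is again proportional to the risk; the non-summability $\sum_k\gamma_k=\infty$ then forces the risk to $0$. Write $\mathcal G:=\nabla_\theta\mL$ for the generalized gradient of the population risk, let $\mathcal F_n$ be the $\sigma$-algebra generated by $\Theta_0$ and $(X^{k,m})_{k<n,\,m}$, and split, using that $\xi$ is constant,
\begin{equation*}
  \mL(\theta)=\norm{y(\theta)-\xi}^2+V(\theta),\qquad y(\theta)=\E\!\left[\mN^{L,\theta}_{\infty}(X^{0,0})\right],\quad V(\theta)=\E\!\left[\norm{\mN^{L,\theta}_{\infty}(X^{0,0})-y(\theta)}^2\right].
\end{equation*}
Because every per-sample gradient factors through the residual $\mN^{L,\theta}_{\infty}(X^{n,m})-\xi$ times layerwise Jacobians that are bounded on bounded sets, the regularity results established earlier yield, on any ball, the unbiasedness identity $\E[\mathfrak G^n(\theta)\mid\mathcal F_n]=\mathcal G(\theta)$ and a \emph{multiplicative} second-moment bound $\E[\norm{\mathfrak G^n(\theta)}^2\mid\mathcal F_n]\le K\mL(\theta)$; this last feature is special to the constant-target setting. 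To obtain \cref{it:main_thm_1} I would first use the polynomial a priori bound on $\mathfrak G^n$ (whose growth is precisely matched by the factor $(4L\fd\max\{\abs a,\abs b,\norm\xi,1\})^{2L}$ in the hypotheses) and argue inductively that the smallness condition on $\gamma_n$ keeps the iterates inside a fixed deterministic ball, giving an a.s.\ finite radius $\mathfrak C$ with $\sup_n\norm{\Theta_n}\le\mathfrak C$; all constants below are then uniform over this ball.

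The structural heart is a gradient-domination (Polyak--\L ojasiewicz-type) inequality for the constant-target risk: on the ball there is $c>0$ with $\norm{\mathcal G(\theta)}^2\ge c\,\mL(\theta)$. I would read it off from the last layer. The last-bias gradient is $\mathcal G_{\fb^{L}}(\theta)=2(y(\theta)-\xi)$, so $\norm{\mathcal G}^2\ge4\norm{y-\xi}^2$ already controls the first summand of $\mL$. For the variance summand, put $z(X)=\fM_\infty(\mN^{L-1,\theta}_\infty(X))$, so that $\mN^{L,\theta}_\infty(X)-y=\fw^{L,\theta}(z(X)-\E[z(X)])$ and $\mathcal G_{\fw^{L}}(\theta)=2\,\fw^{L,\theta}\mathrm{Cov}(z)+2(y-\xi)\E[z]^{\!\top}$; pairing $\mathcal G_{\fw^{L}}$ with $\fw^{L,\theta}$ recovers $2V(\theta)$ up to a cross term absorbed into $\norm{y-\xi}$. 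In particular at any critical point ($\mathcal G(\theta)=0$) one gets $y=\xi$ and $\fw^{L,\theta}\mathrm{Cov}(z)=0$, hence $V(\theta)=0$ and $\mL(\theta)=0$: for constant targets there are no critical points of positive risk, which is the qualitative content underlying the inequality.

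Granting \cref{it:main_thm_1} and the domination inequality, I would combine a descent estimate for $\mL$ along $\Theta_{n+1}=\Theta_n-\gamma_n\mathfrak G^n(\Theta_n)$ (valid on the ball, where $\mathcal G$ is suitably Lipschitz) with unbiasedness and the multiplicative bound to get, for $\gamma_n$ small,
\begin{equation*}
  \E[\mL(\Theta_{n+1})\mid\mathcal F_n]\le\mL(\Theta_n)-\gamma_n\norm{\mathcal G(\Theta_n)}^2+K\gamma_n^2\mL(\Theta_n)\le\bigl(1-\tfrac{c}{2}\gamma_n\bigr)\mL(\Theta_n).
\end{equation*}
Thus $(\mL(\Theta_n))_n$ is a nonnegative supermartingale and converges a.s.; taking expectations and iterating gives $\E[\mL(\Theta_n)]\le\exp\!\bigl(-\tfrac c2\sum_{k<n}\gamma_k\bigr)\E[\mL(\Theta_0)]\to0$ since $\sum_k\gamma_k=\infty$, which is \cref{it:main_thm_3}. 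By Fatou the a.s.\ limit of the supermartingale has zero expectation, hence is $0$ a.s., and with $\mL\ge0$ this gives \cref{it:main_thm_2}; alternatively \cref{it:main_thm_3} follows from \cref{it:main_thm_1,it:main_thm_2} by bounded convergence, $\mL$ being bounded on the ball.

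The step I expect to be hardest is the uniform domination inequality for \emph{deep} networks. The clean last-layer computation tacitly uses differentiation and a chain rule, whereas for ReLU the gradient exists only as the $\mathscr R_r$-limit through nested non-differentiable activations; making the identities for $\mathcal G_{\fb^L}$, $\mathcal G_{\fw^L}$ and the multiplicative second-moment bound rigorous, and above all securing a constant $c$ that does \emph{not} degenerate as $\mathrm{Cov}(z)$ loses rank across the ball, is precisely the delicate nested-ReLU issue that the approximation procedure is built to resolve.
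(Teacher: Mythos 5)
Your proposal takes a genuinely different route from the paper, but its structural heart --- the uniform gradient-domination inequality $\norm{\mG(\theta)}^2\geq c\,\mL(\theta)$ on the invariant ball --- is a genuine gap, and I do not believe it can be closed. What the paper actually proves (see \cref{cor:liap}) is the \emph{exact identity} $\langle(\nabla V)(\theta),\mG(\theta)\rangle=4L\,\mL_{\infty}(\theta)$ for the specific coercive Lyapunov function $V$ in \cref{V_c}; by Cauchy--Schwarz this only yields $\norm{\mG(\theta)}\geq 4L\,\mL_{\infty}(\theta)/\norm{(\nabla V)(\theta)}$, i.e.\ a lower bound on $\norm{\mG}^2$ that is \emph{quadratic} in the risk on bounded sets, not linear. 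Your qualitative observation that critical points have zero risk is correct and is exactly what this identity encodes, but upgrading ``no critical points of positive risk on a compact set'' to a uniform Polyak--\L ojasiewicz constant is false in general (the ratio $\norm{\mG}^2/\mL$ can degenerate to $0$ without $\mG$ vanishing), and the paper's own \cref{thm:GF_conv} --- which establishes only the rate $\mL_{\infty}(\Theta_t)\leq C/t$ for gradient flow rather than exponential decay --- is strong evidence that no such PL inequality holds here. Consequently your conclusion $\E[\mL(\Theta_n)]\leq\exp(-\tfrac{c}{2}\sum_{k<n}\gamma_k)\,\E[\mL(\Theta_0)]$ is too strong; the paper obtains \cref{it:main_thm_2,it:main_thm_3} without any rate, via $\sum_n\gamma_n\E[\fL^n_\infty(\Theta_n)]<\infty$ (from telescoping $V$) combined with a subsequence/contradiction argument using the local Lipschitz continuity of $\mL$ from \cref{lem:loc_lip}.

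Two further steps of your plan would also fail as written. First, the descent estimate $\E[\mL(\Theta_{n+1})\mid\mathcal F_n]\leq\mL(\Theta_n)-\gamma_n\norm{\mG(\Theta_n)}^2+K\gamma_n^2\mL(\Theta_n)$ requires a smoothness (Lipschitz-gradient) property of $\mL$ along the update segment; for deep ReLU networks $\mL_\infty$ is only locally Lipschitz and its generalized gradient $\mG$ is discontinuous, so ``$\mathcal G$ is suitably Lipschitz on the ball'' is not available. The paper sidesteps this by applying the one-step Taylor expansion to the \emph{quadratic} function $V$ (see \cref{lem:liap2,cor:liap3,cor:Theta_diff_sgd2}), for which the expansion is exact. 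Second, your item~\cref{it:main_thm_1} cannot be obtained from monotonicity of the risk alone, because the sublevel sets of $\mL$ are unbounded (rescaling weights across layers leaves the realization invariant); you need a coercive quantity that decreases along the iteration, which is precisely what \cref{it:ineq} of \cref{prop:liap1} ($\tfrac12\norm{\theta}^2-2L^2\norm{f(0)}^2\leq V(\theta)$) together with the inductive monotonicity of $V(\Theta_n)$ supplies. Your unbiasedness claim and the multiplicative second-moment bound $\E[\norm{\fG^n(\theta)}^2]\leq K\mL(\theta)$ are, by contrast, correct and correspond to \cref{prop:L_exp,lem:G_upper}.
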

\end{samepage}
\cref{thm:main_thm} above is a direct consequence
of \cref{cor:sgd_conv} in \cref{ssec:SGD_convergence}. \cref{cor:sgd_conv}, in turn, 
follows from \cref{thm:sgd_conv} (one of the main results of this work).
A central argument in 
the proofs of 
\cref{thm:main_thm}
and 
\cref{thm:sgd_conv}, respectively, 
is to exploit the fact 
in the setup of \cref{thm:main_thm} 
we have that the function 
\begin{equation}
\label{eq:Lyapunov_intro}
\textstyle
  \Reals^{ \fd } 
  \ni \theta 
  \mapsto 
  \bigl[
    \sum_{ k = 1 }^L
    (
      k \| \fb^{ k, \theta } \|^2 
      +
      \sum_{ i = 1 }^{ \ell_k } 
      \sum_{ j = 1 }^{ \ell_{ k - 1 } }
      |
        \fw^{ k, \theta }_{ i, j } |^2 
      )
  \bigr]
  - 2 L 
  \langle 
    f(0), \fb^{ L, \theta } 
  \rangle 
  \in
  \Reals 
\end{equation}
(see \cref{V_c} in \cref{main_setting} in \cref{ssec:framework} below)
where 
$
  \langle 
    \cdot , \cdot 
  \rangle 
$
denotes the standard scalar of two 
vectors in $ \Reals^{ \ell_L } $ 
serves as a \emph{Lyapunov function} for 
the dynamics of the SGD processes (and also for 
the gradient flow (GF) trajectories 
and the deterministic GD processes); 
see 
\cref{cor:liap} in \cref{ssec:GF_lyapunov} 
and 
\cref{cor:GF_det_ito} in \cref{ssec:GF_weak_chain_lyapunov}
for GF trajectories, 
see 
\cref{cor:V_theta_diff_const}
in 
\cref{ssec:GD_upper}
for GD processes, 
and see 
\cref{cor:Theta_diff_sgd2}
and 
\cref{lem:Theta_diff_sgd}
in
\cref{ssec:SGD_upper}
for SGD processes. The proposal of the Lyapunov function 
in \cref{eq:Lyapunov_intro} 
is inspired by the ideas in our previous articles 
\cite{MR4438169,MR4468133} 
which introduce and analyze, 
up to an additive constant, 
the function in 
\cref{eq:Lyapunov_intro}
in the special case of shallow ANNs 
with $ 1 $-dimensional output layers 
($ L = 2 $, $ \ell_L = 1 $)
where 
\cref{eq:Lyapunov_intro}
reduces to the function 
\begin{equation}
\label{eq:Lyapunov_intro_shallow}
\textstyle
  \Reals^{ \fd } 
  \ni \theta 
  \mapsto
  \| \theta \|^2
  + 
  | \fb^{ 2, \theta } |^2
  - 4 f(0) \fb^{ 2, \theta } 
  \in \Reals 
\end{equation}
(see \cite[(4) in Section~1]{MR4438169}
and \cite[Setting~2.1]{MR4468133}). 
In this work we propose the Lyapunov function in 
\cref{eq:Lyapunov_intro}, 
which is somehow the deep counterpart of \cref{eq:Lyapunov_intro_shallow}, 
we verify that the function in 
\cref{eq:Lyapunov_intro} 
is indeed a Lyapunov function for the dynamics 
of GF trajectories 
(see \cref{cor:liap} and \cref{cor:GF_det_ito}), 
deterministic GD processes (see \cref{cor:V_theta_diff_const}), 
and SGD processes (see \cref{cor:Theta_diff_sgd2} and \cref{lem:Theta_diff_sgd}), 
and, finally, we exploit this 
to establish \cref{thm:main_thm}.

\cref{thm:main_thm} demonstrates 
in the training of deep ANNs with ReLU activation
under the key assumption 
that the target function 
of the considered supervised learning problem 
coincides with the constant function 
\begin{equation}
\label{eq:key_assumption_constant_function}
  [a,b]^{ \ell_0 } \ni x \mapsto \xi \in \Reals^{ \ell_L } ,
\end{equation}
where $ \xi \in \Reals^{ \ell_L } $ is an arbitrary vector in $ \Reals^{ \ell_L } $, 
that 
we have that 
the risk of the SGD process 
$
  \mL( \Theta_n )
$
converges 
in the $ \P $-almost sure (see \cref{it:main_thm_2} in \cref{thm:main_thm}) 
and the $ L^1( \P; \Reals ) $-sense 
(see \cref{it:main_thm_3} in \cref{thm:main_thm})
to zero 
as the number of SGD steps $ n \in \N $ 
increases to infinity. 
A natural question that arises from this convergence result is whether this key assumption,  
that the target function agrees 
with the constant function in \cref{eq:key_assumption_constant_function},  
can be omitted. 
In general, this is not possible because 
without the assumption in \cref{eq:key_assumption_constant_function} 
the conclusion of \cref{thm:main_thm} is, in general, not valid anymore. 
Indeed, 
in many situations when the target function 
is not constant, there exist non-global local minimum points in 
the risk landscape whose domain of attraction has a strictly positive 
probability and this, in turn, disproves the conclusion of \cref{thm:main_thm} 
(cf., e.g., \cite{MR4232648}).

However, one may wonder whether it is possible to extend \cref{thm:main_thm} 
to more general functions with a modified conclusion in which convergence 
of the risk of the SGD method to zero is asserted 
as both the number of SGD training steps $ n $ 
and the width/depth of the considered ANNs converge to infinity. 
Such a statement could under suitable assumptions 
very well be valid for more general target functions. 
The arguments employed in this work to prove 
\cref{thm:main_thm} seem, however, to be only of limited use 
for establishing such a modified conclusion for more general target functions 
and, in general, it remains a fundamental open problem of research 
in the area of mathematics of machine learning to prove (or disprove) 
such a modified conclusion for more general target functions. 
For this open problem of research the arguments in this work, 
in particular, the Lyapunov function in \cref{eq:Lyapunov_intro}, 
appear to be only of limited use as the function 
in \cref{eq:Lyapunov_intro}, in general, seems not to be 
a Lyapunov function in the case where the target 
function is not constant anymore (cf.\ \cite[Proposition~4.1]{MR4473797}). 
However, in the regime where the risk of the ANN parameters 
is strictly larger than the best constant approximation 
of the target function 
$
  \inf_{ c \in \Reals }
  \int_{ [a,b]^{ \ell_0 } }
  (
    f(x)
    -
    c
  )^2
  \mu( \d x )
$, 
we expect that the function in \cref{eq:Lyapunov_intro} 
still serves as a Lyapunov function even in the case 
of general non-constant target functions; 
cf.\ \cite[Proposition~4.1]{MR4473797} and \cref{prop:liap2}. 
This could most likely be used to establish a priori bounds for 
GD optimization methods 
in the regime 
\begin{equation}
\textstyle 
  \{ 
    \theta \in \Reals^{ \fd } 
    \colon
    \mL( \theta )
    >
    \inf_{ c \in \Reals }
    \int_{ [a,b]^{ \ell_0 } }
    (
      f(x)
      -
      c
    )^2
    \mu( \d x )
  \} 
  .
\end{equation}
similar as in \cite[Corollary~4.3]{MR4473797} in the case of shallow ANNs. 
Although such a priori bounds appear to be verifiable 
for general non-constant target functions using the function in \cref{eq:Lyapunov_intro}, 
such a priori bounds do, of course, not 
establish the above outlined modified conclusion 
of \cref{thm:main_thm} in the case of non-constant target functions, 
which remains an open problem of future research.

The remainder of this article is structured in the following way.
In \cref{sec:risk} we establish several fundamental regularity properties 
and representation results for the risk function and its generalized gradient function. 
While \cref{thm:main_thm} is restricted to the situation where the target function 
is a constant function, the findings in \cref{sec:risk} 
are proved for \emph{general measurable target functions} 
and are -- beyond their employment in the proof of \cref{thm:main_thm} -- 
of general use in the mathematical analysis 
of the training of deep ReLU ANNs. 
In \cref{sec:grad_fl} we use 
the main findings from \cref{sec:risk} to prove in the case 
where the target function is a constant function that 
the risks of GF processes converge with convergence rate $ 1 $ to zero. 
In \cref{sec:grad_des} we employ some of the results from \cref{sec:risk,sec:grad_fl} 
to establish that the risks of GD processes converge to zero provided that 
the target function is a constant function and that the learning rates 
(the step sizes) of the considered GD processes are not $ L^1 $-summable 
but sufficiently small. 
Finally, in \cref{sec:stoch_grad_des} we extend 
the findings from \cref{sec:grad_des} and prove 
that the expectations of the risks of SGD processes converge to zero 
provided that the target function is a constant function 
and that the learning rates are not $ L^1 $-summable but sufficiently small. 
\cref{thm:main_thm} above is a direct consequence of \cref{cor:sgd_conv} 
in \cref{sec:stoch_grad_des}.

\section{Properties of the risk function associated to deep artificial neural networks (ANNs)}
\label{sec:risk}

In this section we establish several fundamental regularity properties 
and representation results for the risk function and its generalized gradient function. 
While \cref{thm:main_thm} in the introductory section above is restricted 
to the situation where the target function 
is a constant function, the findings in this section 
are proved for \emph{general measurable target functions} 
and are -- beyond their deployment in the proof of \cref{thm:main_thm} above -- 
of general use in the mathematical analysis 
of the training of deep ReLU ANNs.

In \cref{main_setting} in \cref{ssec:framework} we present 
our mathematical framework to introduce, among other things, 
the number $ L \in \N $ of affine linear transformations 
in the considered deep ANNs, 
the dimensions $ \ell_0, \ell_1, \dots, \ell_L \in \N $ of the layers 
of the considered deep ANNs (the number of neurons on the layers of the considered 
deep ANNs), 
the number 
\begin{equation}
\textstyle 
  \fd = \sum_{ k = 1 }^L \ell_k ( \ell_{ k - 1 } + 1 ) \in \N 
\end{equation}
of real parameters of the considered deep ANNs, 
the realization functions 
$
  [a,b]^{ \ell_0 } \ni x \mapsto 
  \mN^{ L, \theta }_{ \infty }( x ) \in \Rr^{ \ell_L }
$, 
$
  \theta = ( \theta_1, \dots, \theta_{ \fd } ) \in \Rr^{ \fd } 
$, 
of the considered deep ReLU ANNs 
(see \cref{N}), 
the risk function 
$
  \mL_{ \infty } \colon \Rr^{ \fd } \to \Rr 
$ 
(see \cref{L_r}), 
the generalized gradient function 
$
  \mG = ( \mG_1, \dots, \mG_{ \fd } ) \colon \Rr^{ \fd } \to \Rr^{ \fd }
$
associated to the risk function,  
and the Lyapunov-type function 
$ V \colon \Reals^{ \fd } \to \Reals $ 
for the mathematical analysis of GD type optimization methods 
in the training of deep ReLU ANNs (see \cref{V_c}). 
In \cref{main_setting} we thus consider ReLU ANNs 
with $ L - 1 $ hidden layers
and, including the input and the output layer, 
with 
$ L + 1 $ layers overall.

In \cref{prop:G} 
in \cref{ssec:gen_grad_risk_functions} below 
(one of the main results of this section) 
we establish 
in \cref{G_w,G_b} an \emph{explicit representation for the generalized 
gradient function} 
$
  \mG \colon \Rr^{ \fd } \to \Rr^{ \fd }
$
of the risk function 
$
  \mL_{ \infty } \colon \Rr^{ \fd } \to \Rr 
$.
Our proof 
of \cref{prop:G} 
employs 
the approximation result for realization functions 
of deep ReLU ANNs 
in \cref{lem:properties2}
in \cref{ssec:properties}, 
the representation result for gradients 
of suitable approximations of the realization functions 
of deep ReLU ANNs 
in \cref{lem:N_der} in \cref{ssec:properties}, 
the a priori bound result in \cref{lem:der:unif:bounded}
in \cref{ssec:gen_grad_risk_functions}, 
and 
the elementary integrability result 
for the target function 
$
  f \colon [a,b]^{ \ell_0 } \to \Rr 
$
in \cref{lem:f:integrable} 
in \cref{ssec:gen_grad_risk_functions}.

Our proof of the approximation result in \cref{lem:properties2}, 
in turn, 
uses 
the elementary auxiliary result in \cref{lem:properties1} in \cref{ssec:properties}
and 
the elementary convergence rate result for 
suitable approximations of the ReLU activation function 
$ \Rr \ni x \mapsto \max\{ x, 0 \} \in \Rr $
in \cref{lem:approx_rectifier}
in \cref{ssec:properties}. 
In the elementary result 
in \cref{lem:rr:explicit}
in \cref{ssec:properties}
we also explicitly construct examples 
for such approximations of the ReLU activation function 
$ \Rr \ni x \mapsto \max\{ x, 0 \} \in \Rr $
(cf.\ \cref{lim_R0} in \cref{main_setting}).

In \cref{prop:G_upper_estimate} 
in \cref{ssec:upper_estimates} below 
(another main result of this section) 
we establish in \cref{G_upper_estimate} 
an \emph{explicit polynomial growth estimate 
for the generalized gradient function} 
$
  \mG = ( \mG_1, \dots, \mG_{ \fd } ) \colon \Rr^{ \fd } \to \Rr^{ \fd }
$
of the risk function 
$
  \mL_{ \infty } \colon \Rr^{ \fd } \to \Rr 
$. 
As a consequence of \cref{G_upper_estimate}, 
we show in \cref{cor:G_up} 
in \cref{ssec:upper_estimates} 
that the generalized gradient function 
$
  \mG \colon \Rr^{ \fd } \to \Rr^{ \fd }
$
is locally bounded. 
Our proof of \cref{cor:G_up} employs, 
beside \cref{G_upper_estimate}, 
also the well-known local Lipschitz continuity result 
in \cref{lem:loc_lip} in \cref{ssec:local_lipschitz}. 
Our proof of 
\cref{prop:G_upper_estimate}, in turn, 
makes use of the explicit representation 
of the generalized gradient function 
$
  \mG \colon \Rr^{ \fd } \to \Rr^{ \fd }
$
in \cref{G_w,G_b} in \cref{prop:G}.

In \cref{cor:conv_characterization} 
in \cref{ssec:risk_non-convex} 
we establish 
that the risk function 
$
  \mL_{ \infty } \colon \Rr^{ \fd } \to \Rr 
$
in the training of deep ReLU ANNs 
(see \cref{L_r}) 
is convex \emph{if and only if} 
the product of the total mass 
$ \mu( [a,b]^{ \ell_0 } ) $
and the number $ L - 1 $ of hidden layers 
vanishes 
\begin{equation}
  ( L - 1 ) 
  \mu( 
    [a,b]^{ \ell_0 } 
  ) 
  = 0
  .
\end{equation}
\cref{cor:conv_characterization} 
hence proves that, 
except of the degenerate situation 
where the underlying measure $ \mu \colon \mathcal{B}( [a,b]^{ \ell_0 } ) \to [0,\infty] $ 
is the zero measure 
or where the considered ANNs  
just describe affine linear transformations 
without any hidden layer, 
it holds for every arbitrary measurable target function 
$ f \colon [a,b]^{ \ell_0 } \to \Rr^{ \ell_L } $
that the risk function 
$
  \mL_{ \infty } \colon \Rr^{ \fd } \to \Rr 
$
is not convex. 
Our proof of \cref{cor:conv_characterization} 
employs 
the essentially well-known convexity result in \cref{cor:convexity_0}
in \cref{ssec:risk_convex}
and 
the basic non-convexity result 
in \cref{cor:risk_not_convex} in \cref{ssec:risk_non-convex}. 
Our proof of \cref{cor:convexity_0} uses 
the elementary characterization result for affine linear functions 
in \cref{lem:aff_lin} 
in \cref{ssec:risk_convex}
and the well-known convexity result 
in \cref{prop:outer_convexity}
in \cref{ssec:risk_convex}. 
In \cref{lem:convexity} in \cref{ssec:risk_convex} 
we recall the fact that convex functions from 
$ \Rr^{ \fd } $ to $ \Rr $ 
do not admit any no-global local minimum point 
(see \cref{no_non-global_local_minima}).

Many of the results in this section 
extend the results in \cite[Section~2]{MR4468133} 
from shallow ReLU ANNs with just one hidden layer 
to deep ReLU ANNs 
with an arbitrarily large number of hidden layers. 
In particular, 
\cref{lem:properties2} in \cref{ssec:properties} 
extends \cite[Proposition 2.2]{MR4468133},
\cref{prop:G} in \cref{ssec:gen_grad_risk_functions} 
extends \cite[Proposition 2.3]{MR4468133},
\cref{lem:loc_lip} in \cref{ssec:local_lipschitz} 
extends \cite[Lemma 2.4]{MR4468133},
\cref{prop:G_upper_estimate} in \cref{ssec:upper_estimates} 
extends \cite[Lemma 2.5]{MR4468133},
and \cref{cor:G_up} in \cref{ssec:upper_estimates} extends \cite[Corollary 2.6]{MR4468133}.

\subsection{Mathematical framework for deep ANNs with ReLU activation}
\label{ssec:framework}

\begin{setting}
\label{main_setting}
Let 
$ L, \fd \in \N $,
$ ( \ell_k)_{k\in \N_0 } \subseteq \N $,
$ a, \mathbf{a} \in \Reals $, $ b \in (a, \infty) $, 
$ \A \in (0,\infty) $, 
$ \B \in ( \A, \infty) $ 
satisfy $ \fd = \sum_{ k = 1 }^L \ell_k( \ell_{k-1} + 1) $
and $ \mathbf{a} =\max\{\abs{a}, \abs{b},1\} $,
for every $ \theta = ( \theta_1, \ldots, \theta_{ \fd }) \in \Rr^{ \fd } $ let $ \fw^{ k, \theta } = ( \fw^{ k, \theta }_{ i,j} )_{ (i,j) \in \{ 1, \ldots, \ell_k \} \times \{ 1, \ldots, \ell_{k-1} \} }
$ $ \in \Reals^{ \ell_k \times \ell_{k-1} } $, $ k \in \N $, 
and $ \fb^{ k, \theta } = ( \fb^{ k, \theta }_1, \ldots, \fb^{ k, \theta }_{\ell_k} )
\in \Reals^{ \ell_k} $,
$ k \in \N $, 
satisfy for all
$ k \in \{ 1, \ldots, L\} $, $ i \in \{ 1, \ldots, \ell_k \} $,
$ j \in \{ 1, \ldots, \ell_{k-1} \} $ that
\begin{equation}
\label{wb}
  \fw^{ k, \theta }_{ i, j } 
  =
  \theta_{ (i-1)\ell_{k-1} + j+\sum_{h= 1 }^{ k-1} \ell_h( \ell_{h-1} + 1)}
\qqandqq
  \fb^{ k, \theta }_i =
  \theta_{\ell_k\ell_{k-1} + i+\sum_{h= 1 }^{ k-1} \ell_h( \ell_{h-1} + 1)} 
  \, ,
\end{equation}
for every $ k \in \N $, $ \theta \in \Reals^{ \fd } $
let $ \mA_k^\theta = ( \mA_{k,1}^\theta, \ldots, \mA_{k, \ell_k}^\theta)
\colon \Reals^{ \ell_{k-1} } \to \Reals^{ \ell_k} $
satisfy 
for all $ x \in \Rr^{ \ell_{ k - 1 } } $ that 
$ 
  \mA_k^{ \theta }( x ) 
  =
  \fb^{ k, \theta } + \fw^{ k, \theta } x 
$,
let 
$ \R_r \colon \Rr \to \Rr $, 
$ r \in [1, \infty] $,
satisfy for all 
$ r \in [1, \infty) $, 
$ x \in ( - \infty, \A r^{ - 1 } ] $, 
$ 
  y \in \Rr
$, 
$
  z \in [ \B r^{ - 1 }, \infty ) 
$
that 
\begin{equation}
\label{lim_R0}
  \R_r \in C^1( \Reals, \Reals ) ,
\qquad
  \R_r(x) = 0,
\qquad 
  0 \leq \R_r(y) \leq \R_{ \infty }( y ) 
  = 
  \max\{ y, 0 \}
  ,
\qqandqq
  \R_r(z) = z
  ,
\end{equation}
assume 
$
  \sup_{ r \in [1, \infty) }
  \sup_{ x \in \Reals } 
  | ( \R_r )'( x ) | < \infty 
$,
let 
$
  \norm{\cdot} \colon ( \cup_{ n \in \enne } \Reals^n ) \to \Rr 
$,
$
  \ip{\cdot}{\cdot} \colon( \cup_{ n \in \enne }( \Reals^n\times \Reals^n )) \to\Reals
$, 
and
$
  \mathfrak{M}_r 
  \colon( \cup_{ n \in \enne } \Reals^n ) \to ( \cup_{ n \in \enne } \Reals^n )
$, 
$ r \in [1, \infty]
$, 
satisfy for all $ r \in [1, \infty] $, $ n \in \N $,
$ x = (x_1, \ldots, x_n ) $, $y= (y_1, \ldots,y_n ) \in \Rr^n $
that 
\begin{equation}\label{norm_scalar_M}
  \textstyle{
  \norm{x} = ( \sum_{ i= 1 }^n|x_{ i}|^2)^{ 1/2},
  \qquad\ip{x}{y} =\sum_{ i= 1 }^nx_iy_i, 
  \qquad\text{and}
  \qquad\mathfrak{M}_r(x)= ( \R_r(x_1), \ldots, \R_r(x_n ))},
\end{equation}
for every  $ \theta \in \Rr^{ \fd } $
let $ \mN^{ k, \theta }_r = ( \mN^{ k, \theta }_{r,1}, \ldots, \mN^{ k, \theta }_{r, \ell_k} )
\colon \Rr^{ \ell_0 } \to \Rr^{ \ell_k} $, 
$ r \in [1, \infty]$, $ k \in \N $,
and 
$ \mX^{ k, \theta }_{ i} \subseteq\Rr^{ \ell_0 } $, $ k, i \in \N $,
satisfy 
for all $ r \in [1, \infty] $, $ k \in \N $,
$ i \in \{ 1, \ldots, \ell_k \} $, $ x \in \Reals^{ \ell_0 } $
that
\begin{equation}
\label{N}
\begin{split}
&
  \mN^{ 1, \theta }_r(x) = \mA^\theta_1(x) ,
\qquad
  \mN^{ k + 1, \theta }_r( x ) 
  =
  \mA_{ k + 1}^{ \theta }(
    \mathfrak{M}_{ r^{ 1 / k } }(
      \mN^{ k, \theta }_r( x )
    )
  ),
\end{split}
\end{equation}
and 
$
  \mX^{ k, \theta }_i = 
  \{ 
    y \in [a,b]^{ \ell_0 } \colon
    \mN^{ k, \theta }_{ \infty, i }( y ) > 0 
  \} 
$,
let $ \mu \colon \mathcal{B}( [a,b]^{ \ell_0 } ) \to [0, \infty] $ be a measure,
let $ \mathfrak{m} \in \Reals $ satisfy
$ \mathfrak{m} =\mu([a,b]^{ \ell_0 } ) $,
let $ f = (f_1, \ldots, f_{\ell_L}) \colon [a,b]^{ \ell_0 } \to \Reals^{ \ell_L} $ be measurable,
for every $ r \in [1, \infty] $ 
let $ \mL_r\colon \Reals^{ \fd }\to \Reals $ satisfy 
for all $ \theta \in \Reals^{ \fd } $
that 
\begin{equation}
\label{L_r}
  \textstyle{
  \mL_r( \theta)=\int_{[a,b]^{ \ell_0 } }
  \|\mN_r^{L, \theta }(x)-f(x)\|^2\, \mu( \d x),}
\end{equation}
let $ \mG = ( \mG_1, \dots, \mG_{ \fd } ) \colon \Reals^{ \fd } \to \Reals^{ \fd } $ 
satisfy
for all $ \theta \in \{\vartheta \in \Reals^{ \fd }\colon(( \nabla\mL_r)( \vartheta))_{ r \in [1,\infty) }
\text{ is convergent} \} $ that
$ \mG( \theta)=\lim_{r\to \infty }( \nabla\mL_r)( \theta) $, and
let $V \colon \Reals^{ \fd }\to\Reals $ satisfy for all $ \theta \in \Reals^{ \fd } $  that
\begin{equation}
\label{V_c}
\textstyle
  V( \theta ) = 
  \bigl[
    \sum_{ k = 1 }^L
    (
      k \| \fb^{ k, \theta } \|^2 
      +
      \sum_{ i = 1 }^{ \ell_k } 
      \sum_{ j = 1 }^{ \ell_{ k - 1 } }
      |
        \fw^{ k, \theta }_{ i, j } |^2 
      )
  \bigr]
  - 2 L 
  \langle 
    f(0), \fb^{ L, \theta } 
  \rangle 
  .
\end{equation}
\end{setting}

\subsection{Approximations of the realization functions of the considered deep ANNs}
\label{ssec:properties}

\begin{lemma}
\label{lem:properties1}
Assume \cref{main_setting} and let 
$ \theta \in \Rr^{ \fd } $, $ r \in [1, \infty] $.
Then
\begin{enumerate}[(i)]
\item 
it holds for all $ k \in \enne $, $ i \in \{ 1, \ldots, \ell_k \} $,  
$ x = (x_1, \ldots, x_{\ell_{k-1} } ) \in \Rr^{ \ell_{k-1} } $
that
\begin{equation}
\label{A}
  \mA^\theta_{k,i}(x)=\fb^{ k, \theta }_{ i} + 
  \sum_{ j = 1 }^{ \ell_{k-1} } \fw^{ k, \theta }_{ i,j} x_{j},
\end{equation}
\item
\label{it:N_long1} 
it holds for all $ i \in \{ 1, \ldots, \ell_1\} $,
$ x = (x_1, \ldots, x_{\ell_0 } ) \in \Rr^{ \ell_0 } $ that
\begin{equation}
\label{N_long1}
  \mN^{ 1, \theta }_{r,i}(x)=\fb^{ 1, \theta }_{ i} + 
  \sum_{ j = 1 }^{ \ell_0 } \fw^{ 1, \theta }_{ i,j} x_{j},
\end{equation}
and
\item
\label{it:N_long2}
it holds for all $ k \in \N $, $ i \in \{ 1, \ldots, \ell_{k+1} \} $,
$ x \in \Rr^{ \ell_0 } $ that
\begin{equation}
\label{N_long2}
  \mN^{ k+1, \theta }_{r,i}(x)=\fb^{ k+1, \theta }_{ i} + 
  \sum_{ j = 1 }^{ \ell_k} \fw^{ k+1, \theta }_{ i,j} 
  \R_{r^{ 1/k} }( \mN^{ k, \theta }_{r,j}(x))
  .
\end{equation}
\end{enumerate}
\end{lemma}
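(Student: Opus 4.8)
The plan is to verify all three identities by directly unwinding the definitions collected in \cref{main_setting}; since no estimates or limits enter, the argument is entirely computational. First I would establish \cref{A}: by the definition of $\mA_k^\theta$ in \cref{main_setting} we have $\mA_k^\theta(x) = \fb^{k,\theta} + \fw^{k,\theta} x$ for every $x \in \Rr^{\ell_{k-1}}$, where $\fw^{k,\theta} \in \Reals^{\ell_k \times \ell_{k-1}}$ and $\fb^{k,\theta} \in \Reals^{\ell_k}$, so reading off the $i$-th coordinate of this vector identity and expanding the matrix--vector product via the standard rule $(\fw^{k,\theta} x)_i = \sum_{j=1}^{\ell_{k-1}} \fw^{k,\theta}_{i,j} x_j$ yields exactly $\mA^\theta_{k,i}(x) = \fb^{k,\theta}_i + \sum_{j=1}^{\ell_{k-1}} \fw^{k,\theta}_{i,j} x_j$.

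Next I would deduce \cref{N_long1} as the base case: the recursion in \cref{N} gives $\mN^{1,\theta}_r = \mA_1^\theta$, so specializing the already-proved \cref{A} to $k=1$ and evaluating at $x \in \Rr^{\ell_0}$ immediately delivers the asserted formula for $\mN^{1,\theta}_{r,i}(x)$. For \cref{N_long2} I would invoke the recursion $\mN^{k+1,\theta}_r(x) = \mA_{k+1}^\theta(\mathfrak{M}_{r^{1/k}}(\mN^{k,\theta}_r(x)))$ from \cref{N} and apply \cref{A} with $k$ replaced by $k+1$ to the argument $\mathfrak{M}_{r^{1/k}}(\mN^{k,\theta}_r(x)) \in \Rr^{\ell_k}$; its $i$-th coordinate then equals $\fb^{k+1,\theta}_i + \sum_{j=1}^{\ell_k} \fw^{k+1,\theta}_{i,j} [\mathfrak{M}_{r^{1/k}}(\mN^{k,\theta}_r(x))]_j$. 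Because the definition of $\mathfrak{M}_r$ in \cref{norm_scalar_M} prescribes that the $j$-th coordinate of $\mathfrak{M}_{r^{1/k}}(y)$ is $\R_{r^{1/k}}(y_j)$, substituting $y = \mN^{k,\theta}_r(x)$ so that $y_j = \mN^{k,\theta}_{r,j}(x)$ converts this into precisely \cref{N_long2}.

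Since each of the three claims reduces to a single substitution, there is no genuine obstacle here; the only points demanding mild care are correctly tracking the index shift $k \mapsto k+1$ when applying \cref{A} inside the recursion and ensuring that the approximation level $r^{1/k}$ carried by $\mathfrak{M}_{r^{1/k}}$ is the one dictated by \cref{N}, so that the subscript on $\R$ in \cref{N_long2} comes out as $r^{1/k}$ rather than $r$.
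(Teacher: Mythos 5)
Your proposal is correct and follows essentially the same route as the paper, which likewise establishes all three identities by simply unwinding the definition of $\mA_k^{\theta}$ together with the recursion in \cref{N}; you merely spell out the coordinate-wise bookkeeping (matrix--vector product, the action of $\mathfrak{M}_{r^{1/k}}$, and the index shift $k\mapsto k+1$) that the paper leaves implicit.
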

\begin{proof}[Proof of \cref{lem:properties1}]
  \Nobs that \cref{N} and
  the assumption that 
  for all $ k \in \enne $, $ x \in \Reals^{ \ell_{k-1} } $
  it holds that
  $ \mA_k^\theta(x)=\fb^{ k, \theta } + \fw^{ k, \theta }x$
  establish
  \cref{A}, \cref{N_long1}, and \cref{N_long2}.
  The proof of \cref{lem:properties1} is thus complete.
\end{proof}

\begin{lemma}[Approximations of the rectifier function]
\label{lem:approx_rectifier}
Assume \cref{main_setting}. 
Then 
\begin{enumerate}[(i)]
\item 
\label{item:approx_rectifier:item_i}
it holds for all $ x \in \Rr $ that
\begin{equation}
\label{lim_R}
  \limsup\nolimits_{ r \to \infty 
  }( 
    | \R_r(x) - \R_{ \infty }(x) |
    +
    | ( \R_r )'(x) - \indicator{ (0, \infty) }( x ) | 
  ) = 0 
\end{equation}
and 
\item 
\label{item:approx_rectifier:item_ii}
it holds for all 
$ r \in [1, \infty) $, $ x \in \Rr $
that 
$
  | \R_r( x ) - \R_{ \infty }( x ) | 
  \leq \B r^{ - 1 } 
$. 
\end{enumerate}
\end{lemma}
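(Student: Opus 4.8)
The plan is to establish part~(ii) first and then to deduce part~(i) from it together with an elementary pointwise analysis of the derivatives. For part~(ii) I would fix $r \in [1, \infty)$ and split $\Rr$ into the three regions delineated by the thresholds $\A r^{-1}$ and $\B r^{-1}$, noting that $0 < \A r^{-1} < \B r^{-1}$ since $0 < \A < \B$ and $r \geq 1$. On $(-\infty, \A r^{-1}]$ the defining relation \cref{lim_R0} gives $\R_r(x) = 0$, so that $|\R_r(x) - \R_{\infty}(x)| = \max\{x, 0\} \leq \A r^{-1} \leq \B r^{-1}$; on $[\B r^{-1}, \infty)$ \cref{lim_R0} gives $\R_r(x) = x = \max\{x,0\} = \R_{\infty}(x)$, so the difference is $0$; and on the intermediate interval $(\A r^{-1}, \B r^{-1})$, where $x > \A r^{-1} > 0$ and hence $\R_{\infty}(x) = x$, the sandwich bound $0 \leq \R_r(x) \leq \R_{\infty}(x) = x$ from \cref{lim_R0} yields $|\R_r(x) - \R_{\infty}(x)| = x - \R_r(x) \leq x < \B r^{-1}$. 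In every case the bound $\B r^{-1}$ holds, which proves part~(ii).

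For part~(i) the first summand is immediate from part~(ii), since $|\R_r(x) - \R_{\infty}(x)| \leq \B r^{-1} \to 0$ as $r \to \infty$. For the derivative summand I would fix $x \in \Rr$ and argue according to the position of $x$ relative to the shrinking transition interval $(\A r^{-1}, \B r^{-1})$. If $x \leq 0$, then $x < \A r^{-1}$ for every $r \geq 1$, so $\R_r$ is identically zero on a neighbourhood of $x$ and hence $(\R_r)'(x) = 0 = \indicator{(0,\infty)}(x)$. If $x > 0$, then for all $r > \B x^{-1}$ we have $\B r^{-1} < x$, so $\R_r$ coincides with the identity on a neighbourhood of $x$ and thus $(\R_r)'(x) = 1 = \indicator{(0,\infty)}(x)$. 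In either case the derivative term is eventually exactly $0$, so its limit superior vanishes, and adding the two summands gives the claim.

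The arguments are essentially elementary, and the only point requiring care is the bookkeeping of which flat region contains a given $x$ as $r \to \infty$. The crucial observations are that the transition interval $(\A r^{-1}, \B r^{-1})$ collapses to a subset of $(0,\infty)$ that never contains $0$, so that $x = 0$ always sits strictly inside the region where $\R_r$ vanishes, and that for $x \neq 0$ the point $x$ eventually escapes the transition interval into a region where $\R_r$ agrees \emph{exactly} (not merely approximately) with either $0$ or the identity. This exactness is precisely what forces the derivatives to converge, even though no information about $(\R_r)'$ on the transition interval is used; in particular the uniform derivative bound assumed in \cref{main_setting} is not needed for this lemma.
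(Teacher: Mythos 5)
Your proposal is correct and follows essentially the same route as the paper's proof: both establish the quantitative bound in part~(ii) by noting that $\R_r$ agrees exactly with $\R_\infty$ outside $(0,\B r^{-1})$ and is squeezed between $0$ and $y\le\B r^{-1}$ inside, and both obtain the derivative convergence in part~(i) from the fact that for each fixed $x$ the function $\R_r$ eventually coincides with $0$ (for $x\le 0$) or with the identity (for $x>0$) on a neighbourhood of $x$. The only differences are the order of the two parts and a slightly finer decomposition of the real line, neither of which changes the substance of the argument.
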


\begin{proof}[Proof of \cref{lem:approx_rectifier}] 
\Nobs that \cref{lim_R0} ensures that for all 
$ r \in [1, \infty) $, $ x \in (-\infty,0] $ it holds that 
$
  \R_r( x ) 
  = 0 
  =
  \max\{ x, 0 \} 
  = \R_{ \infty }( x )
$. 
\Hence for all $ r \in [1, \infty) $, $ x \in (-\infty,0] $ that
\begin{equation}
\label{eq:approx_rectifier:negative_x}
  | \R_r(x) - \R_{ \infty }(x) |
  +
  | ( \R_r )'(x) - \indicator{ (0, \infty) }( x ) | 
  = 0 .
\end{equation} 
\Moreover \cref{lim_R0} proves that 
for all $ x \in (0, \infty) $ 
there exists $ R \in [1, \infty) $ 
such that for all $ r \in [R, \infty) $, 
$ y \in ( \nicefrac{ x }{ 2 } , \infty ) $
it holds that
$
  \R_r( y ) = \R_{ \infty }( y )
$. 
\Hence for all $ x \in (0, \infty) $ that
\begin{equation}
\textstyle
  \limsup_{ r \to \infty }
  \bigl(
    | \R_r(x) - \R_{ \infty }(x) |
    +
    | ( \R_r )'(x) - ( \R_{ \infty } )'(x) | 
  \bigr)
  = 0 .
\end{equation} 
Combining this with \cref{eq:approx_rectifier:negative_x} 
establishes \cref{item:approx_rectifier:item_i}. 
\Nobs that \cref{lim_R0} shows that for all 
$ r \in [1, \infty) $, 
$ y \in ( 0, \B r^{ - 1 } ) $ 
it holds that 
\begin{equation}
\label{eq:approx_rectifier:middle_y}
  | \R_r( y ) - \R_{ \infty }( y ) |
  =
  \R_{ \infty }( y ) - \R_r( y )
  \leq 
  \R_{ \infty }( y ) 
  \leq 
  y 
  \leq 
  \B r^{ - 1 }
  .
\end{equation}
\Moreover \cref{lim_R0} proves that 
for all $ r \in [1, \infty) $, $ x \in (-\infty,0] \cup [ \B r^{ - 1 }, \infty) $
it holds that 
$
  \R_r( x ) = \R_{ \infty }( x )
$. 
Combining this with \cref{eq:approx_rectifier:middle_y} 
establishes \cref{item:approx_rectifier:item_ii}. 
The proof of \cref{lem:approx_rectifier} is thus complete. 
\end{proof}

\begin{lemma}[Examples of approximations of the rectifier function] 
\label{lem:rr:explicit}
	Let $\A \in (0, \infty )$,
	$\B \in (\A , \infty )$,
	$\eta \in C^1 ( \Rr , \Rr )$ satisfy for all $x \in (- \infty , 0 ]$,
	$y \in \Rr $,
	$z \in [1 , \infty )$
	that 
	$ \eta( x ) = 0 \le \eta( y ) \le 1 = \eta( z ) $
	and for every $r \in [1, \infty )$ 
	let $\R_r \colon \Rr \to \Rr$
	satisfy for all
	$x \in \Rr$
	that
	$ \R_r( x ) = \max\cu{ x, 0 } \eta( \frac{ r x - \A }{ \B - \A } ) $.
	Then
	\begin{enumerate} [(i)]
		\item \label{lem:rr:explicit:item1} it holds for all $r \in [1, \infty )$,
		$x \in (- \infty , \A r^{-1} ]$ that $\R_r ( x ) = 0$,
		\item \label{lem:rr:explicit:item1b}
		it holds for all $r \in [1, \infty )$,
		$x \in [\B r^{-1}, \infty )$
		that $\R_r ( x ) = x$, 
		\item \label{lem:rr:explicit:item2} it holds for all $r \in [1, \infty )$,
		$x \in \Rr$ that $0 \le \R_r ( x ) \le \max \cu{x , 0 }$,
		\item \label{lem:rr:explicit:item3} it holds for all $r \in [1, \infty )$
		that $\R_r \in C^1 ( \Rr , \Rr)$,
		and
		\item \label{lem:rr:explicit:item4} it holds that $\sup_{r \in [1, \infty )}  \sup_{x \in \Rr } \abs{ ( \R_r)' ( x ) } < \infty $.
	\end{enumerate}
\end{lemma}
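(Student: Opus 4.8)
The plan is to handle all five assertions by analysing the position of $x$ relative to the two thresholds $\A r^{-1}$ and $\B r^{-1}$, exploiting that the affine map $x \mapsto \frac{rx-\A}{\B-\A}$ (which I abbreviate by $g_r$) carries $(-\infty,\A r^{-1}]$ into $(-\infty,0]$ and $[\B r^{-1},\infty)$ into $[1,\infty)$ --- exactly the two regions on which the hypotheses force $\eta$ to equal $0$ and $1$, respectively.

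First I would verify the two boundary identities. For $x \le \A r^{-1}$ one has $g_r(x) \le 0$, hence $\eta(g_r(x)) = 0$ and thus $\R_r(x) = 0$; for $x \ge \B r^{-1}$ one has $g_r(x) \ge 1$, hence $\eta(g_r(x)) = 1$, and since $\B r^{-1} > 0$ forces $x > 0$ we get $\max\cu{x,0} = x$ and therefore $\R_r(x) = x$. The pointwise sandwich $0 \le \R_r(x) \le \max\cu{x,0}$ then follows at once from $0 \le \eta \le 1$ together with $\max\cu{x,0} \ge 0$.

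The $C^1$ regularity is the step I expect to be the main obstacle, because $\max\cu{x,0}$ has a kink at the origin. The decisive observation is that $g_r(0) = \frac{-\A}{\B-\A} < 0$, so this kink lies strictly inside the region where $\eta \circ g_r$ vanishes. Concretely, $\R_r$ is identically $0$ on the open interval $(-\infty,\A r^{-1})$, while on $(0,\infty)$ it coincides with the product $x\,\eta(g_r(x))$ of two $C^1$ functions and is hence $C^1$ there. Since $\A r^{-1} > 0$, these two open intervals already cover $\Rr$, so by locality of differentiability $\R_r \in C^1(\Rr,\Rr)$, with $(\R_r)'(x) = 0$ for $x < \A r^{-1}$ and $(\R_r)'(x) = \eta(g_r(x)) + \frac{rx}{\B-\A}\,\eta'(g_r(x))$ for $x > 0$.

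For the uniform derivative bound I would first record that $\eta'$ vanishes on $(-\infty,0] \cup [1,\infty)$ (the constant pieces of $\eta$, together with continuity of $\eta'$ at the endpoints), so $\eta'$ is supported on the compact set $[0,1]$ and is therefore bounded by some $C = \sup_{y\in\Rr}\abs{\eta'(y)} < \infty$. Using the derivative formula above, $(\R_r)'$ equals $0$ for $x \le \A r^{-1}$ and equals $1$ for $x \ge \B r^{-1}$, so the only nontrivial window is $x \in (\A r^{-1},\B r^{-1})$; there $\abs{\eta(g_r(x))} \le 1$ and, using $0 < x < \B r^{-1} = \B/r$, the second summand obeys $\abs{\frac{rx}{\B-\A}\eta'(g_r(x))} \le \frac{r}{\B-\A}\cdot\frac{\B}{r}\cdot C = \frac{\B C}{\B-\A}$. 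This gives $\abs{(\R_r)'(x)} \le 1 + \frac{\B C}{\B-\A}$ for all $x \in \Rr$ and all $r \in [1,\infty)$, which is precisely the claimed finiteness. The crucial point is the cancellation: the factor $r$ produced by the chain rule is exactly absorbed by the $r^{-1}$ smallness of $x$ on the shrinking active window $(\A r^{-1},\B r^{-1})$, rendering the bound independent of $r$.
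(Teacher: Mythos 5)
Your proposal is correct and follows essentially the same route as the paper: verify the two threshold identities from the constancy of $\eta$ on $(-\infty,0]$ and $[1,\infty)$, deduce the sandwich from $0\le\eta\le 1$, obtain $C^1$ regularity by covering $\Rr$ with the region where $\R_r$ vanishes and the region $(0,\infty)$ where it is a product of $C^1$ functions, and bound the derivative on the window $(0,\B r^{-1}]$ via the chain rule, with the factor $r$ cancelled by $x\le \B r^{-1}$. Your explicit remark that $\eta'$ vanishes off $[0,1]$ and is hence bounded is a small detail the paper leaves implicit, but otherwise the arguments coincide.
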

\begin{proof}[Proof of \cref{lem:rr:explicit}]
\Nobs that the assumption that for all $ x \in ( - \infty, 0] $ 
it holds that $ \eta( x ) = 0 $ 
establishes \cref{lem:rr:explicit:item1}. 
\Nobs that the assumption that for all $ x \in [1,\infty) $
it holds that $ \eta( x ) = 1 $
proves \cref{lem:rr:explicit:item1b}. 
\Nobs the assumption that for all $ x \in \Rr $ it holds that 
$ 0 \leq \eta( x ) \leq 1 $ 
establishes \cref{lem:rr:explicit:item2}.
\Nobs that \cref{lem:rr:explicit:item1},
the fact that for all $ r \in [1, \infty) $,
$ x \in (0, \infty) $ 
it holds that
$
  \R_r( x ) = x \eta( \frac{ r x - \A }{ \B - \A } ) 
$,
and the assumption that $ \eta \in C^1( \Rr, \Rr) $
establish \cref{lem:rr:explicit:item3}. 
\Nobs that the chain rule implies for all $ r \in [1, \infty) $,
$ x \in ( 0, \B r^{ - 1 }] $ that
\begin{equation}
  \abs{ ( \R_r )'( x ) }
= 
  \bigl|
    \eta( \tfrac{ r x - \A }{ \B - \A } )
    + 
    x 
    \bigl[
      \tfrac{ r }{ \B - \A } 
    \bigr]
    \eta'(
      \tfrac{ r x - \A }{ \B - \A } 
    )
  \bigr|
\le 
  1 + 
  \bigl[
    \tfrac{ \B }{ \B - \A } 
  \bigr]
  \bigl[ 
    \sup\nolimits_{ y \in \Rr } 
    \abs{ \eta'( y ) } 
  \bigr] 
  .
\end{equation}
Combining this with \cref{lem:rr:explicit:item1,lem:rr:explicit:item1b}
proves that for all $ r \in [1, \infty ) $, $ x \in \Rr $
it holds that
\begin{equation} 
  \abs{
    (\R_r)'( x ) 
  }
\le 
  \max\cu[\big]{ 
    0, 1, 
    1 
    + 
    \bigl[
      \tfrac{ \B }{ \B - \A } 
    \bigr] 
    \bigl[ 
      \sup\nolimits_{ y \in \Rr }\abs{ \eta'( y ) }
    \bigr] 
  } 
  \leq 
    1 
    + 
    \bigl[
      \tfrac{ \B }{ \B - \A } 
    \bigr] 
    \bigl[ 
      \sup\nolimits_{ y \in \Rr }\abs{ \eta'( y ) }
    \bigr] 
  < \infty 
  .
\end{equation} 
This establishes \cref{lem:rr:explicit:item4}.
The proof of \cref{lem:rr:explicit} is thus complete.
\end{proof}

\begin{prop}[Approximations of realization functions 
of deep ReLU ANNs]
\label{lem:properties2}
Assume \cref{main_setting}, 
for every $ r \in [1, \infty] $, $ k \in \N $ 
let 
$
  \mM_{ r, k } \colon \Rr^{ \ell_k } \to \Rr^{ \ell_k } 
$ 
satisfy for all
$ x \in \Rr^{ \ell_k } $ 
that 
$
  \mM_{ r, k }( x ) = \mathfrak{M}_r( x ) 
$,
and let $ k \in \cu{1, \ldots, L } $, $ \theta \in \Rr^{ \fd } $.
Then
  \begin{enumerate}[(i)]
  \item\label{it:N_short} 
  it holds for all $ r \in [1, \infty] $ that 
\begin{equation}
\label{N_short}
\begin{split}
  \mathcal N_r^{ k, \theta } =
  \begin{cases}
  \mA_k^\theta &\colon k=1 \\[1ex]
  \mA_k^\theta \circ
  \mM_{r^{ 1/(k-1)},k-1} \circ\mA_{k-1}^\theta \circ
  \ldots
  \circ 
  \mM_{ r^{ 1 / 2 }, 1 }
  \circ \mA_2^{ \theta } \circ \mM_{r,1} \circ \mA_1^{ \theta }
  & \colon k > 1 ,
  \end{cases}
\end{split}
\end{equation}
\item
\label{it:lim_aux0} 
it holds 
that 
\begin{equation}
\label{lim_aux0}
\begin{split}
&
\textstyle
  \sup_{ r \in [1,\infty) }
  \sup_{ x \in \Rr^{ \ell_0 } }
  \max_{
    i \in \{ 1, \dots, \ell_k \} 
  }
  \bigl(
    r^{ 1 / k }
    \abs{ 
      \R_{ r^{ 1 / k } }(
        \mN^{ k, \theta }_{ r, i }( x ) 
      )
      -
      \R_{ \infty }( \mN^{ k, \theta }_{ \infty, i }( x ) )
    }
  \bigr)
\\
&
\textstyle
  \leq 
  \B 
  \bigl[
    \sum_{ j = 0 }^{ k - 1 }
    \bigl(
      \sup_{
        r \in [1,\infty) 
      }
      \sup_{
        x \in \Rr 
      }
      |
        ( \R_r )'( x )
      |
    \bigr)^j
    \bigl(
      \sum_{ v = 1 }^{ \fd }
      | \theta_{ \fd } |
    \bigr)^j
  \bigr]
  < \infty
  ,
\end{split}
\end{equation}
\item
\label{it:lim_aux0_B} 
it holds that 
\begin{equation}
\label{lim_aux0_B}
\begin{split}
&
\textstyle
  \sup_{ r \in [1,\infty) }
  \sup_{ x \in \Rr^{ \ell_0 } }
  \max_{
    i \in \{ 1, \dots, \ell_k \} 
  }
  \bigl(
    r^{ 1 / ( \max\{ k - 1, 1 \} ) }
    \abs{ 
      \mN^{ k, \theta }_{ r, i }( x ) 
      -
      \mN^{ k, \theta }_{ \infty, i }( x ) 
    }
  \bigr)
\\
&
\textstyle
  \leq 
  \B
  \bigl[
    \sum_{ j = 1 }^{ k }
    \bigl(
      \sup_{
        r \in [1,\infty) 
      }
      \sup_{
        x \in \Rr 
      }
      |
        ( \R_r )'( x )
      |
    \bigr)^{ j - 1 }
    \bigl(
      \sum_{ v = 1 }^{ \fd }
      | \theta_{ \fd } |
    \bigr)^j
  \bigr]
  \indicator{
    (1,\infty)
  }( k )
  < \infty
  ,
\end{split}
\end{equation}
\item
\label{it:lim_aux1} 
it holds for all 
$ i \in \{ 1, \ldots, \ell_k \} $, $ x \in \Reals^{ \ell_0 } $ that 
\begin{equation}
\label{lim_aux1}
  \limsup\nolimits_{ r \to \infty }
  \bigl(
    |
      \mN^{ k, \theta }_{ r, i }( x ) 
      -
      \mN^{ k, \theta }_{ \infty, i }( x )
    | 
    +
    |
      \R_{ r^{ 1 / k } }( \mN^{ k, \theta }_{ r, i }( x ) )
      -
      \R_{ \infty }( \mN^{ k, \theta }_{ \infty, i }( x ) )
    | 
  \bigr) = 0
  ,
\end{equation}
and
\item
\label{it:lim_aux2} 
it holds for all 
$ i \in \{ 1, \ldots, \ell_k \} $, $ x \in \Reals^{ \ell_0 } $ that 
\begin{equation}
\label{lim_aux2}
  \limsup\nolimits_{ r \to \infty }
  |
    ( \R_{ r^{ 1 / k } } )'( \mN^{ k, \theta }_{ r, i }( x ) )
    -
    \indicator{ \mX^{ k, \theta }_i }( x ) 
  | = 0
  .
\end{equation}
\end{enumerate}
\end{prop}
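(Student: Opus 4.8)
The plan is to prove the five assertions in the order \cref{it:N_short}, \cref{it:lim_aux1}, then the quantitative pair \cref{it:lim_aux0,it:lim_aux0_B} by a single coupled induction, and finally \cref{it:lim_aux2}. First I would establish the composition formula in \cref{it:N_short} by a routine induction on $k$ that merely unwinds the recursion \cref{N}, observing that the $\mathfrak{M}$-factor sitting after $\mA_{j}^{\theta}$ carries the index $r^{1/j}$, exactly as prescribed. Next I would prove the pointwise convergence \cref{it:lim_aux1} by induction on $k$. For $k=1$ the realization $\mN^{1,\theta}_r=\mA^\theta_1$ is independent of $r$, so the first summand vanishes and the second is covered by the pointwise convergence $\R_r\to\R_\infty$ from \cref{item:approx_rectifier:item_i} in \cref{lem:approx_rectifier}. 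For the inductive step I would use \cref{N_long2}: the hypothesis gives $\R_{r^{1/k}}(\mN^{k,\theta}_{r,j}(x))\to\R_\infty(\mN^{k,\theta}_{\infty,j}(x))$, and feeding this through the affine map yields $\mN^{k+1,\theta}_{r,i}(x)\to\mN^{k+1,\theta}_{\infty,i}(x)$. The genuinely new point --- the nested activation flagged in the introduction --- is to then pass to $\R_{r^{1/(k+1)}}(\mN^{k+1,\theta}_{r,i}(x))$, where \emph{both} the index and the argument vary; I would dispatch this by a case split on the sign of the limit $\beta=\mN^{k+1,\theta}_{\infty,i}(x)$, using that if $\beta>0$ the argument eventually exceeds $\B r^{-1/(k+1)}$ (so $\R$ is the identity), if $\beta<0$ it eventually lies below $\A r^{-1/(k+1)}$ (so $\R$ returns $0$), and if $\beta=0$ the squeeze $0\le\R_s(\alpha)\le\max\{\alpha,0\}$ from \cref{lim_R0} forces convergence to $0$.

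The quantitative estimates \cref{it:lim_aux0,it:lim_aux0_B} I would prove together by induction on $k$, writing $C=\sup_{r\in[1,\infty)}\sup_{x\in\Rr}|(\R_r)'(x)|$ (finite by hypothesis) and $S=\sum_{v=1}^{\fd}|\theta_v|$, and using the weight bound $\sum_{j}|\fw^{k,\theta}_{i,j}|\le S$ (the weights of a single layer being distinct coordinates of $\theta$). The two claims interlock: \cref{it:lim_aux0} at level $k$ follows from \cref{it:lim_aux0_B} at level $k$ via the splitting
\begin{equation*}
  |\R_{r^{1/k}}(\mN^{k,\theta}_{r,i}(x))-\R_\infty(\mN^{k,\theta}_{\infty,i}(x))|
  \le C\,|\mN^{k,\theta}_{r,i}(x)-\mN^{k,\theta}_{\infty,i}(x)|+\B r^{-1/k},
\end{equation*}
where the first term is bounded by the mean value theorem and the uniform bound $C$, and the second by \cref{item:approx_rectifier:item_ii} in \cref{lem:approx_rectifier}; meanwhile \cref{it:lim_aux0_B} at level $k$ follows from \cref{it:lim_aux0} at level $k-1$ by applying \cref{N_long2} and the weight bound. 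The base case $k=1$ of \cref{it:lim_aux0} yields exactly $\B$ (the first term vanishing since $\mN^{1,\theta}_r=\mN^{1,\theta}_\infty$), and the factor $\indicator{(1,\infty)}(k)$ renders \cref{it:lim_aux0_B} trivial at $k=1$.

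The crux of the whole argument --- and the step I expect to be most delicate --- is the exponent bookkeeping that prevents the induction from blowing up: when feeding the rate $r^{-1/(k-1)}$ supplied by \cref{it:lim_aux0_B} into the prefactor $r^{1/k}$ of \cref{it:lim_aux0}, one relies on
\begin{equation*}
  r^{1/k-1/(k-1)}=r^{-1/(k(k-1))}\le 1\qquad\text{for all }r\in[1,\infty),\ k\ge 2,
\end{equation*}
so that no positive power of $r$ survives and the geometric sums $\B\sum_{j=0}^{k-1}(CS)^j$ and $\B\sum_{j=1}^{k}C^{j-1}S^j$ close up. This same gap $\tfrac{1}{k-1}>\tfrac1k$ drives the final claim \cref{it:lim_aux2}. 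There I would again split on the sign of $\beta=\mN^{k,\theta}_{\infty,i}(x)$: for $\beta>0$ (resp.\ $\beta<0$) the argument eventually lies above $\B r^{-1/k}$ (resp.\ below $\A r^{-1/k}$), where $(\R_{r^{1/k}})'$ equals $1$ (resp.\ $0$), matching $\indicator{\mX^{k,\theta}_i}(x)$. The only subtle case is the boundary $\beta=0$, where $\indicator{\mX^{k,\theta}_i}(x)=0$: here I would invoke \cref{it:lim_aux0_B}, which gives $|\mN^{k,\theta}_{r,i}(x)|\lesssim\B r^{-1/(k-1)}$, and compare it against the threshold $\A r^{-1/k}$; since $r^{-1/(k-1)}/r^{-1/k}=r^{-1/(k(k-1))}\to 0$, the argument eventually falls below $\A r^{-1/k}$ and the derivative is exactly $0$ (for $k=1$ the argument is the constant $\beta=0<\A r^{-1}$, so the derivative is $0$ outright), as required.
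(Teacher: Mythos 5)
Your proposal is correct and follows essentially the same route as the paper: the same interlocking recursion $e_{K+1,r}\le CS\,e_{K,r}+\B r^{-1/(K+1)}$ for the quantities in \cref{it:lim_aux0,it:lim_aux0_B}, the same exponent bookkeeping $r^{-1/(K-j)}\le r^{-1/K}$ to close the geometric sums, and the same three-way sign split (with the $r^{-1/(k-1)}$ versus $\A r^{-1/k}$ comparison in the boundary case) for \cref{it:lim_aux2}. The only cosmetic difference is that you prove \cref{it:lim_aux1} by a separate soft pointwise induction, whereas the paper simply reads it off from \cref{it:lim_aux0,it:lim_aux0_B}.
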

\begin{proof}[Proof of \cref{lem:properties2}]
Throughout this proof let 
$ \mathbf{L} \in \Rr $
satisfy
\begin{equation}
\label{eq:def_L_Lipschitz_constant_ReLU}
\textstyle
  \mathbf{L} = 
  \sup_{
    r \in [1,\infty)
  }
  \sup_{
    x \in \Rr
  }
  | ( \R_r )'( x ) | ,
\end{equation}
let $ c \in \Rr $ satisfy 
$
  c = \mathbf{L} [ \sum_{ j = 1 }^{ \fd } | \theta_j | ]
$, 
and for every $ K \in \cu{1, \ldots, L } $, $ r \in [1,\infty] $ 
let 
$
  e_{ K, r } \in \Rr
$ 
satisfy 
\begin{equation}
\label{eq:eK_def}
\textstyle
  e_{ K, r } 
  =
  \sup_{ x \in \Rr^{ \ell_0 } }
  \max_{
    i \in \{ 1, \dots, \ell_K \}
  }
    |
      \R_{ r^{ 1 / K } }( 
        \mN^{ K, \theta }_{ r, i }( x ) 
      )
      -
      \R_{ \infty }( 
        \mN^{ K, \theta }_{ \infty, i }( x ) 
      )
    |
  .
\end{equation}
\Nobs that \cref{eq:def_L_Lipschitz_constant_ReLU} 
and the fundamental theorem of calculus ensure that 
for all $ r \in [1,\infty) $, $ x, y \in \Rr $ 
it holds that
\begin{equation} 
\label{eq:Lipschitz_ReLU_approx}
  | 
    \R_r( x ) - \R_r( y )
  |
  \leq 
  \mathbf{L} | x - y | .
\end{equation}
\Moreover \cref{N} establishes \cref{it:N_short}. 
\Nobs that 
\cref{lem:approx_rectifier} 
assures that 
for all $ r \in [ 1, \infty ) $ it holds that
\begin{equation}
\label{eq:rec_approximation_quantitative}
\textstyle
  \sup_{ y \in \Rr }
  | \R_r( y ) - \R_{ \infty }( y ) | \leq \B r^{ - 1 } 
  .
\end{equation}
\Cref{it:N_long1} of \cref{lem:properties1} 
\hence ensures that 
for all $ r \in [1, \infty) $,
$ i \in \{ 1, \ldots, \ell_1\} $, 
$ x \in \Reals^{ \ell_0 } $
it holds that
\begin{equation}
  \abs{
    \R_r( \mN^{ 1, \theta }_{ r, i }( x ) )
    -
    \R_{ \infty }( \mN^{ 1, \theta }_{ \infty, i }( x ) )
  } 
  =
  \abs{
    \R_r( \mN^{ 1, \theta }_{ \infty, i }( x ) )
    -
    \R_{ \infty }( \mN^{ 1, \theta }_{ \infty, i }( x ) )
  } 
  \leq
  \B r^{ - 1 }
  .
\end{equation}
Combining this with \cref{eq:eK_def} proves for all $ r \in [1,\infty) $ that
\begin{equation}
\label{eq:e1_estimate}
  e_{ 1, r }
  \leq 
  \B r^{ - 1 }
  .
\end{equation}
\Moreover \cref{it:N_long2} of \cref{lem:properties1} 
and \cref{eq:Lipschitz_ReLU_approx}
ensure that for all
$ r \in [1, \infty) $, 
$ K \in  \N \cap [1 , L ) $, 
$ i \in \{ 1, \ldots, \ell_{ K + 1 } \} $,
$ x \in \Reals^{ \ell_0 } $ 
it holds that
\begin{equation}
\label{R_mvt}
\begin{split}
  &
  \bigl|
    \R_{ r^{ 1 / (K+1) } 
    }( 
      \mN^{ K + 1, \theta }_{ r, i }( x ) 
    ) 
    -
    \R_{ r^{ 1 / (K+1) } }( 
      \mN^{ K + 1, \theta }_{ \infty, i }( x )
    )
  \bigr|
\leq 
  \mathbf{L}
  | 
    \mN^{ K + 1, \theta }_{ r, i }( x ) -
    \mN^{ K + 1, \theta }_{ \infty, i }( x ) 
  |
\\
&= 
\textstyle
 \mathbf{L}
  \bigl|
    \sum_{ j = 1 }^{ \ell_K } 
    \fw^{ K + 1, \theta }_{ i, j }
    \bigl( 
      \R_{ r^{ 1 / K } }( 
        \mN^{K, \theta }_{ r, j }( x ) 
      )
      -
      \R_{ \infty }( \mN^{ K, \theta }_{ \infty, j }( x ) )
    \bigr)
  \bigr|
\\
& \leq 
\textstyle
  \mathbf{L}
  \bigl[
    \sum_{ j = 1 }^{ \ell_K } 
    |
      \fw^{ K + 1, \theta }_{ i, j }
    |
    |
      \R_{ r^{ 1 / K } }( 
        \mN^{K, \theta }_{ r, j }( x ) 
      )
      -
      \R_{ \infty }( \mN^{ K, \theta }_{ \infty, j }( x ) )
    |
  \bigr]
\\
& \leq 
\textstyle
  \mathbf{L}
  \bigl[
    \sum_{ j = 1 }^{ \ell_K } 
    |
      \fw^{ K + 1, \theta }_{ i, j }
    |
  \bigr]
  \bigl[
    \max_{ 
      j \in \{ 1, 2, \dots, \ell_K \}
    }
    |
      \R_{ r^{ 1 / K } }( 
        \mN^{ K, \theta }_{ r, j }( x ) 
      )
      -
      \R_{ \infty }( \mN^{ K, \theta }_{ \infty, j }( x ) )
    |
  \bigr]
\\
& \leq 
\textstyle
  \mathbf{L}
  \bigl[
    \sum_{ j = 1 }^{ \fd } 
    | \theta_j |
  \bigr]
  \bigl[
    \max_{ 
      j \in \{ 1, 2, \dots, \ell_K \}
    }
    |
      \R_{ r^{ 1 / K } }( 
        \mN^{ K, \theta }_{ r, j }( x ) 
      )
      -
      \R_{ \infty }( \mN^{ K, \theta }_{ \infty, j }( x ) )
    |
  \bigr]
=
  \mathbf{L}
  \bigl[
    \sum_{ j = 1 }^{ \fd } 
    | \theta_j |
  \bigr]
  e_{ K, r }
  .
\end{split}
\end{equation}
Combining this with \cref{eq:rec_approximation_quantitative}  
ensures that for all
$ r \in [1,\infty) $, 
$ K \in \N \cap [1 , L ) $, 
$ i \in \{ 1, \ldots, \ell_{K+1} \} $,
$ x \in \Reals^{ \ell_0 } $
it holds that 
\begin{equation}
\label{induction}
\begin{split}
&
  \bigl|
    \R_{ r^{ 1 / (K+1) } }( 
      \mN^{ K + 1, \theta }_{ r, i }( x ) 
    )
    -
    \R_{ \infty }( 
      \mN^{ K + 1, \theta }_{ \infty, i }( x ) 
    )
  \bigr|
\\
&
  \leq 
    \big|\R_{r^{ 1/(K+1)} }( \mN^{K+1, \theta }_{r,i}(x))-\R_{r^{ 1/(K+1)} }( \mN^{K+1, \theta }_{ \infty, i}(x))\big|
\\
&
\quad
  +
    \bigl|
      \R_{ r^{ 1 / (K+1) } }( 
        \mN^{ K + 1, \theta }_{ \infty, i }( x ) 
      )
      -
      \R_{ \infty }( 
        \mN^{ K + 1, \theta }_{ \infty, i }( x ) 
      )
    \bigr| 
\\
&
\textstyle
  \leq
  \mathbf{L}
  \bigl[
    \sum_{ j = 1 }^{ \fd } 
    | \theta_j |
  \bigr]
  e_{ K, r }
  + 
  \B r^{ - 1 / (K + 1) } 
  =
  e_{ K, r } c
  + 
  \B r^{ - 1 / (K + 1) } 
  .
\end{split}
\end{equation}
\Hence for all $ r \in [1,\infty) $, $ K \in \N \cap [1 , L ) $ that 
$
  e_{ K + 1, r }
  \leq
  e_{ K, r } c 
  + 
  \B r^{ - 1 / (K + 1) } 
$. 
This shows for all $ r \in [1,\infty) $, $ K \in \N \cap ( 1 , L ] $ 
that 
\begin{equation}
\textstyle
\begin{split}
  e_{ K, r }
& 
\textstyle
  \leq
  c 
  \bigl[ e_{ K - 1, r } \bigr]
  + 
  \B r^{ - 1 / K }
\leq 
  c
  \bigl[
    e_{ K - 2, r } c 
    +
    \B r^{ - 1 / ( K - 1 ) }
  \bigr]
  + 
  \B r^{ - 1 / K }
\\ 
& 
\textstyle
=
  c^2
  e_{ K - 2, r }
  +
  \sum_{ j = 0 }^1
  \B
  c^j
  r^{ - 1 / ( K - j ) }
\leq
  \ldots
\leq
  c^{ K - 1 }
  e_{ K - ( K - 1 ), r }
  +
  \sum_{ j = 0 }^{ K - 2 }
  \B
  c^j
  r^{ - 1 / ( K - j ) }
\\
&
\textstyle 
  =
  c^{ K - 1 }
  e_{ 1, r }
  +
  \sum_{ j = 0 }^{ K - 2 }
  \B
  c^j
  r^{ - 1 / ( K - j ) }
  .
\end{split}
\end{equation}
Combining this 
with \cref{eq:e1_estimate}
demonstrates 
for all $ r \in [1,\infty) $, $ K \in \N \cap [ 1 , L ] $ that 
\begin{equation}
\begin{split}
\textstyle
  e_{ K, r }
& 
\textstyle
  \leq 
  c^{ K - 1 }
  e_{ 1, r }
  +
  \sum_{ j = 0 }^{ K - 2 }
  \B
  c^j
  r^{ - 1 / ( K - j ) }
\leq 
  c^{ K - 1 }
  \B r^{ - 1 }
  +
  \sum_{ j = 0 }^{ K - 2 }
  \B
  c^j
  r^{ - 1 / ( K - j ) }
\\ &
\textstyle
=
  \sum_{ j = 0 }^{ K - 1 }
  \B
  c^j
  r^{ - 1 / ( K - j ) }
\leq 
  \bigl[
    \sum_{ j = 0 }^{ K - 1 }
    \B
    c^j
  \bigr]
  r^{ 
    \max\{ 
      - 1 / K
      ,
      - 1 / ( K - 1 )
      ,
      \dots 
      ,
      - 1
    \}
  }
=
  \bigl[
    \sum_{ j = 0 }^{ K - 1 }
    c^j
  \bigr]
  \B
  r^{ 
    - 1 / K
  }
  .
\end{split}
\end{equation}
This establishes \cref{it:lim_aux0}. 
\Nobs that \cref{wb,N,eq:eK_def} ensure that 
for all 
$ r \in [1,\infty) $, 
$ K \in \N \cap ( 1 , L ] $, 
$ i \in \{ 1, 2, \dots, \ell_K \} $, 
$ x \in \Rr^{ \ell_0 } $
it holds that 
\begin{equation}
\begin{split}
&
\textstyle
  |
    \mN^{ K, \theta }_{ r, i }( x ) 
    -
    \mN^{ K, \theta }_{ \infty, i }( x ) 
  |
=
  \bigl|
    \sum_{ j = 1 }^{ \ell_{ K - 1 } }
    \fw^{ K, \theta }_{ i, j }
    \bigl(  
      \R_{ r^{ 1 / (K-1) } }(
        \mN^{ K - 1, \theta }_{ r, j }( x ) 
      )
      -
      \R_{ \infty }(
        \mN^{ K - 1, \theta }_{ \infty, j }( x ) 
      )
    \bigr)
  \bigr|
\\ & 
\textstyle
\leq
  \sum_{ j = 1 }^{ \ell_{ K - 1 } }
  \bigl(  
    \bigl|
      \fw^{ K, \theta }_{ i, j }
    \bigr|
    \bigl|
      \R_{ r^{ 1 / (K-1) } }(
        \mN^{ K - 1, \theta }_{ r, j }( x ) 
      )
      -
      \R_{ \infty }(
        \mN^{ K - 1, \theta }_{ \infty, j }( x ) 
      )
    \bigr|
  \bigr)
\\ & 
\textstyle
\leq
  \bigl(  
    \sum_{ j = 1 }^{ \ell_{ K - 1 } }
    \bigl|
      \fw^{ K, \theta }_{ i, j }
    \bigr|
  \bigr)
  \bigl(
    \max_{
      j \in \{ 1, 2, \dots, \ell_{ K - 1 } \} 
    }
    \bigl|
      \R_{ r^{ 1 / (K-1) } }(
        \mN^{ K - 1, \theta }_{ r, j }( x ) 
      )
      -
      \R_{ \infty }(
        \mN^{ K - 1, \theta }_{ \infty, j }( x ) 
      )
    \bigr|
  \bigr)
\\ & 
\textstyle
\leq
  \bigl(  
    \sum_{ j = 1 }^{ \fd }
    | \theta_j |
  \bigr)
  \,
  e_{ K - 1, r }
  .
\end{split}
\end{equation}
\Hence for all 
$ K \in \N \cap ( 1 , L ] $ 
that 
\begin{equation}
\begin{split}
&
\textstyle
  \sup_{ r \in [1,\infty) }
  \sup_{ x \in \Rr^{ \ell_0 } }
  \max_{ i \in \{ 1, 2, \dots, \ell_K \} }
  \bigl(
    r^{ 
      1 / ( \max\{ K - 1, 1 \} )
    }
    |
      \mN^{ K, \theta }_{ r, i }( x ) 
      -
      \mN^{ K, \theta }_{ \infty, i }( x ) 
    |
  \bigr)
\\ &
\textstyle
\leq
  \bigl(  
    \sum_{ j = 1 }^{ \fd }
    | \theta_j |
  \bigr)
  \bigl[
    \sup_{ r \in [1,\infty) }
    \bigl(
      r^{ 
        1 / ( \max\{ K - 1, 1 \} )
      }
      e_{ K - 1, r }
    \bigr)
  \bigr]
=
  \bigl(  
    \sum_{ j = 1 }^{ \fd }
    | \theta_j |
  \bigr)
  \bigl[
    \sup_{ r \in [1,\infty) }
    \bigl(
      r^{ 
        1 / ( K - 1 )
      }
      e_{ K - 1, r }
    \bigr)
  \bigr]
  .
\end{split}
\end{equation}
Combining this with \cref{N}, 
\cref{eq:eK_def}, 
and \cref{it:lim_aux0} 
establishes \cref{it:lim_aux0_B}. 
\Nobs that \cref{it:lim_aux0,it:lim_aux0_B} 
prove \cref{it:lim_aux1}. 
It thus remains to prove \cref{it:lim_aux2}. 
For this \nobs that 
\cref{it:lim_aux1} 
assures 
for all 
$ i \in \{ 1, \dots, \ell_k \} $, 
$ x \in \Rr^{ \ell_0 } $ 
with 
$
  \mN^{ k, \theta }_{ \infty, i }( x ) < 0
$
that there exists 
$ R \in [1,\infty) $ 
such that for all 
$ r \in [R,\infty) $
it holds that
$
  \mN^{ k, \theta }_{ r, i }( x ) < 0
$. 
Combining this with \cref{lim_R0} demonstrates 
for all 
$ i \in \{ 1, \dots, \ell_k \} $, 
$ x \in \Rr^{ \ell_0 } $ 
with 
$
  \mN^{ k, \theta }_{ \infty, i }( x ) < 0
$
that there exists 
$ R \in [1,\infty) $
such that for all 
$ r \in [R,\infty) $
it holds that 
\begin{equation}
\label{conv_case1}
  ( \R_{ r^{ 1 / k } } )'(
    \mN^{ k, \theta }_{ r, i }( x )
  ) 
  = 0 
  =
  \indicator{ (0, \infty) } \bigl( 
    \mN^{ k, \theta }_{ \infty, i }( x )
  \bigr)
  =
  \indicator{ \mX^{ k, \theta }_i }( x ) 
  .
\end{equation}
\Moreover 
\cref{it:lim_aux1} shows 
for all 
$ i \in \{ 1, \dots, \ell_k \} $, 
$ x \in \Rr^{ \ell_0 } $ 
with 
$
  \mN^{ k, \theta }_{ \infty, i }( x ) > 0
$
that there exists 
$ R \in [1,\infty) $ 
such that for all 
$ r \in [R,\infty) $
it holds that
$
  \mN^{ k, \theta }_{ r, i }( x ) > \B r^{ - 1 / k }
$. 
Combining this with \cref{lim_R0} demonstrates 
for all 
$ i \in \{ 1, \dots, \ell_k \} $, 
$ x \in \Rr^{ \ell_0 } $ 
with 
$
  \mN^{ k, \theta }_{ \infty, i }( x ) > 0
$
that there exists 
$ R \in [1,\infty) $
such that for all 
$ r \in [R,\infty) $
it holds that 
\begin{equation}
\label{conv_case2}
  ( \R_{ r^{ 1 / k } } )'(
    \mN^{ k, \theta }_{ r, i }( x )
  ) 
  = 1 
  =
  \indicator{ (0, \infty) } \bigl( 
    \mN^{ k, \theta }_{ \infty, i }( x )
  \bigr)
  =
  \indicator{ \mX^{ k, \theta }_i }( x ) 
  .
\end{equation}
\Moreover 
\cref{it:lim_aux0_B} 
assures that 
for all 
$ i \in \{ 1, 2, \dots, \ell_k \} $,
$ x \in \Rr^{ \ell_0 } $
it holds that 
\begin{equation}
\begin{split}
&
\textstyle
  \sup_{ r \in [1,\infty) }
  \bigl(
    r^{ 
      ( 1 + \indicator{ \{ 1 \} }( k ) ) / ( k - \indicator{ (1,\infty) }( k ) ) 
    }
    |
      \mN^{ k, \theta }_{ r, i }( x )
      -
      \mN^{ k, \theta }_{ \infty, i }( x )
    |
  \bigr)
\\ & 
=
\textstyle
  \sup_{ r \in [1,\infty) }
  \bigl(
    r^{ 
      ( 1 + \indicator{ \{ 1 \} }( k ) ) / ( \max\{ k - 1, 1 \} ) 
    }
    |
      \mN^{ k, \theta }_{ r, i }( x )
      -
      \mN^{ k, \theta }_{ \infty, i }( x )
    |
  \bigr)
  < \infty 
  .
\end{split}
\end{equation}
This and the fact that 
$
  -
  ( 1 + \indicator{ \{ 1 \} }( k ) ) / ( k - \indicator{ (1,\infty) }( k ) )
  <
  -
  1 / k
$
assure 
for all $ i \in \{ 1, 2, \dots, \ell_k \} $, 
$ x \in \Rr^{ \ell_0 } $
with 
$ \mN^{ k, \theta }_{ \infty, i }( x ) = 0 $ 
that there exist 
$ \mathfrak{C}, R \in [1,\infty) $ 
such that for all 
$ r \in [R,\infty) $
it holds that
\begin{equation}
\label{eq:conv_case3_abs_estimate}
\begin{split}
  \abs{ \mN^{ k, \theta }_{ r, i }( x ) }
  =
  \abs{ 
    \mN^{ k, \theta }_{ r, i }( x )
    -
    \mN^{ k, \theta }_{ \infty, i }( x )
  }
  \leq 
  \mathfrak{C} 
  \bigl[
    r^{ 
      - ( 1 + \indicator{ \{ 1 \} }( k ) ) 
      / ( k - \indicator{ (1,\infty) }( k ) ) 
    }
  \bigr]
  < \A r^{ - 1 / k }
  .
\end{split}
\end{equation}
\Moreover \cref{lim_R0} assures 
that for all 
$ r \in [1,\infty) $, 
$ y \in \Rr $
with 
$
  | y | < \A r^{ - 1 / k } 
$
it holds that 
$
  ( \R_{ r^{ - 1 / k } } )'( y )
  = 0
$. 
Combining this with \cref{eq:conv_case3_abs_estimate} 
proves for all 
$ i \in \{ 1, 2, \dots, \ell_k \} $, 
$ x \in \Rr^{ \ell_0 } $ 
with 
$ \mN^{ k, \theta }_{ \infty, i }( x ) = 0 $ 
that there exists 
$ R \in [1,\infty) $ 
such that for all $ r \in [ R, \infty) $ 
it holds that
\begin{equation}
\label{conv_case3}
\textstyle
  ( \R_{ r^{ - 1 / k } } )'(
    \mN^{ k, \theta }_{ r, i }( x )
  ) 
  = 
  0 
  =
  \indicator{ (0, \infty) }\bigl( 
    \mN^{ k, \theta }_{ \infty, i }( x )
  \bigr)
  =
  \indicator{ \mX^{ k, \theta }_i }( x ) 
  .
\end{equation}
Combining this, \cref{conv_case1}, and \cref{conv_case2} establishes \cref{it:lim_aux2}. 
The proof of \cref{lem:properties2} is thus complete.
\end{proof}

\begin{lemma}
\label{lem:N_der}
  Assume \cref{main_setting},
  for every $k \in \N_0$ let $\diml_k \in \N_0$ satisfy $\diml_k = \sum_{n=1}^k \ell_n ( \ell_{n-1} + 1 )$,
   and let
  $ x = (x_1, \ldots, x_{\ell_0 } ) \in [a,b]^{ \ell_0 } $, $ r\in [1 , \infty ) $.   
  Then
  \begin{enumerate}[label=(\roman*)]
  \item 
  \label{it:N_C1}
  it holds for all 
  $ k \in \{ 1, \ldots, L\} $,
  $ i \in \{ 1, \ldots, \ell_k \} $
  that
  $ \Reals^{ \fd }\ni\theta \mapsto \mN^{ k, \theta }_{r,i}(x) \in \Reals^{ \fd } $
  is differentiable,
  \item
  \label{it:N_der1}
  it holds for all 
  $K \in \cu{1, \ldots, L }$,
  $ k \in \{ 1, \ldots, K\} $,
  $ i \in \{ 1, \ldots, \ell_k \} $,
  $ j \in \{ 1, \ldots, \ell_{k-1} \} $,
  $ h \in \{ 1, \ldots, \ell_K \} $,
   $ \theta = ( \theta_1, \ldots, \theta_{ \fd }) \in \Reals^{ \fd } $
  that
  \begin{equation}
  \label{N_der1}
  \begin{multlined}
    \frac{ \partial }{ \partial\theta_{ (i-1) \ell_{k-1} + j + \diml_{k-1}} 
    }
    \bigl( 
      \mN_{r, h }^{K, \theta }(x)
    \bigr)
\\
    = 
    \sum_{
      \substack{v_k,v_{k+1}, \ldots,v_K \in \enne, \\ \forall w\in \enne\cap[k, K ]\colon v_w\leq\ell_w} 
    }
    \Bigl[
      \R_{r^{1/ ( \max \cu{k-1 , 1 } ) } } ( \mN^{ \max\{k-1,1\}, \theta }_{r,j}(x))\indicator{ (1, K ]}(k)+    
      x_j\indicator{\{1\} }(k)
    \Bigr]
\\
    \cdot
    \Bigl[ \indicator{ \{ i \} }( v_k ) \Bigr]
    \Bigl[\indicator{ \{ h \} }( v_K ) \Bigr]
    \Bigl[
      \textstyle{\prod}_{n=k+1}^K
      \big( \fw^{n, \theta }_{v_n,v_{n-1} } 
      \big[( \R_{r^{1/(n-1)}})'( \mN^{n-1, \theta }_{r,v_{n -1} }(x))\big]
      \big)
    \Bigr]
    ,
  \end{multlined}  
  \end{equation}
  and
  \item
  \label{it:N_der2}
  it holds for all 
  $K \in \cu{1, \ldots, L }$,
  $ k \in \{ 1, \ldots, K\} $,
  $ i \in \{ 1, \ldots, \ell_k \} $,
  $ h \in \{ 1, \ldots, \ell_K \} $,
   $ \theta = ( \theta_1, \ldots, \theta_{ \fd }) \in \Reals^{ \fd } $
  that
    \begin{equation}
  \label{N_der2}
  \begin{split}
  &
    \frac{ \partial }{ \partial \theta_{ \ell_k \ell_{ k - 1 } + i + \diml_{ k - 1 } } 
    }
    \bigl( 
      \mN_{ r, h }^{ K, \theta }( x ) 
    \bigr)
\\
    &= \sum_{\substack{v_k,v_{k+1}, \ldots,v_K \in \enne, \\ \forall w\in \enne\cap[k, K ]\colon v_w\leq\ell_w} }
    \Big[\indicator{\{ i \} }(v_k)\Big]
    \Big[\indicator{\{ h \} }(v_K)\Big]
    \Big[\textstyle{\prod}_{n=k+1}^K
    \big( \fw^{n, \theta }_{v_n,v_{n-1} } 
    \big[( \R_{r^{1/(n-1)}})'( \mN^{n-1, \theta }_{r,v_{n-1} }(x))\big]
    \big)\Big].
  \end{split}  
  \end{equation}
  \end{enumerate}
\end{lemma}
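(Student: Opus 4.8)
The plan is to prove the three assertions by induction, treating differentiability first and then deriving the explicit backpropagation-type formulas. For \cref{it:N_C1} I would induct on $k$. The base case $k=1$ is immediate from \cref{N_long1}, since $\theta \mapsto \mN^{1,\theta}_{r,i}(x) = \fb^{1,\theta}_i + \sum_{j=1}^{\ell_0}\fw^{1,\theta}_{i,j}x_j$ is affine in $\theta$. For the inductive step, \cref{N_long2} expresses $\mN^{k+1,\theta}_{r,i}(x)$ as a finite sum of products of the coordinate $\fw^{k+1,\theta}_{i,j}$ of $\theta$ with $\R_{r^{1/k}}(\mN^{k,\theta}_{r,j}(x))$; since $\R_{r^{1/k}}\in C^1(\Reals,\Reals)$ by \cref{lim_R0} (note $r^{1/k}\in[1,\infty)$) and $\theta\mapsto\mN^{k,\theta}_{r,j}(x)$ is differentiable by the induction hypothesis, the chain rule yields differentiability of $\theta\mapsto\mN^{k+1,\theta}_{r,i}(x)$.

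For \cref{it:N_der1,it:N_der2} I would induct on the top index $K$, proving the formula simultaneously for all admissible $k\in\cu{1,\dots,K}$. In the base case $K=1$ only $k=1$ is admissible, and differentiating the affine expression in \cref{N_long1} gives $x_j\indicator{\{i\}}(h)$ for the weight $\fw^{1,\theta}_{i,j}$ and $\indicator{\{i\}}(h)$ for the bias $\fb^{1,\theta}_i$. These match the claimed right-hand sides, in which the empty product equals $1$ and the sum over $v_1$ collapses $\indicator{\{i\}}(v_1)\indicator{\{h\}}(v_1)$ to $\indicator{\{i\}}(h)$. For the step $K\to K+1$ I would differentiate the recursion $\mN^{K+1,\theta}_{r,h}(x)=\fb^{K+1,\theta}_h+\sum_{j'=1}^{\ell_K}\fw^{K+1,\theta}_{h,j'}\R_{r^{1/K}}(\mN^{K,\theta}_{r,j'}(x))$ from \cref{N_long2}, distinguishing two cases according to the layer $k$ containing the differentiation variable $\theta_p$. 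When $k=K+1$ the parameter appears explicitly and does not influence $\mN^{K,\theta}_{r,j'}(x)$ (which depends only on layers $1,\dots,K$), so the derivative equals $\indicator{\{i\}}(h)\,\R_{r^{1/K}}(\mN^{K,\theta}_{r,j}(x))$ for the weight and $\indicator{\{i\}}(h)$ for the bias; these agree with \cref{N_der1,N_der2} for $k=K+1$, where the product $\prod_{n=K+2}^{K+1}$ is empty and the bracketed prefactor in \cref{N_der1} reduces, via $\max\{K,1\}=K$ and $\indicator{(1,K+1]}(K+1)=1$, to $\R_{r^{1/K}}(\mN^{K,\theta}_{r,j}(x))$.

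When $k\le K$ the factors $\fb^{K+1,\theta}_h$ and $\fw^{K+1,\theta}_{h,j'}$ are independent of $\theta_p$, so the chain rule gives
\[
  \frac{\partial}{\partial\theta_p}\mN^{K+1,\theta}_{r,h}(x)
  =\sum_{j'=1}^{\ell_K}\fw^{K+1,\theta}_{h,j'}\,(\R_{r^{1/K}})'(\mN^{K,\theta}_{r,j'}(x))\,\frac{\partial}{\partial\theta_p}\mN^{K,\theta}_{r,j'}(x),
\]
into which I substitute the induction hypothesis (\cref{N_der1} or \cref{N_der2} at level $K$) for $\tfrac{\partial}{\partial\theta_p}\mN^{K,\theta}_{r,j'}(x)$. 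The hypothesis supplies the factor $\indicator{\{j'\}}(v_K)$, so summing over $j'$ merely relabels $j'=v_K$; the newly produced factor $\fw^{K+1,\theta}_{h,v_K}(\R_{r^{1/K}})'(\mN^{K,\theta}_{r,v_K}(x))$ is exactly the $n=K+1$ term of $\prod_{n=k+1}^{K+1}$ with $v_{K+1}=h$, and appending a summation over $v_{K+1}\le\ell_{K+1}$ with the indicator $\indicator{\{h\}}(v_{K+1})$ (which forces $v_{K+1}=h$) recovers the $(K+1)$-level formula. The bracketed prefactor in \cref{N_der1} is unchanged because $\indicator{(1,K]}(k)=\indicator{(1,K+1]}(k)$ for $k\le K$, and the bias case is identical except that this prefactor is absent.

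The main obstacle is the bookkeeping in this last relabeling step: one must verify carefully that the chain-rule expansion over intermediate neurons coincides with the claimed sum over directed paths $v_k\to v_{k+1}\to\dots\to v_{K+1}$, with the activation-derivative factors $(\R_{r^{1/(n-1)}})'(\mN^{n-1,\theta}_{r,v_{n-1}}(x))$ placed correctly along the path and the boundary indicators $\indicator{\{i\}}(v_k)$ and $\indicator{\{h\}}(v_{K+1})$ pinning the endpoints. Once the path-sum structure is matched, the remaining manipulations are routine applications of the product and chain rules, and the finiteness of each layer guarantees that termwise differentiation of the sums is valid.
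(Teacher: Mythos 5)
Your proposal is correct and follows essentially the same route as the paper: item (i) from the explicit layer formulas of \cref{lem:properties1} together with $\R_r\in C^1$, and items (ii)--(iii) by induction on $K$, with the base case read off from the affine expression \cref{N_long1} and the induction step obtained by differentiating the recursion \cref{N_long2}, splitting into the cases $k=K+1$ (parameter appears explicitly) and $k\le K$ (chain rule plus the induction hypothesis, with the sum over the intermediate neuron index relabeled as $v_K$ and the endpoint indicator $\indicator{\{h\}}(v_{K+1})$ appended). The bookkeeping you flag as the main obstacle is exactly what the paper's displayed computations carry out, and your treatment of the empty products and the prefactor indicators matches theirs.
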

\begin{proof}[Proof of \cref{lem:N_der}]
\Nobs that \cref{it:N_long1,it:N_long2} of \cref{lem:properties1}
and the assumption that $ \R_r \in C^1( \Reals, \Reals) $
establish \cref{it:N_C1}. 
We now prove \cref{N_der1} and \cref{N_der2}
by induction on $ K \in \{ 1, \dots, L \} $. 
\Nobs that \cref{it:N_long1}
of \cref{lem:properties1}
implies that
for all 
$ i \in \{ 1, \ldots, \ell_1 \} $, 
$ j \in \{ 1, \ldots, \ell_0 \} $,
$ h \in \{ 1, \ldots, \ell_1 \} $,
$ \theta = ( \theta_1, \ldots, \theta_{ \fd }) \in \Reals^{ \fd } $
it holds that
\begin{equation}
    \frac{ \partial }{ \partial \theta_{ (i-1) \ell_0 + j } } 
    \bigl( 
      \mN_{ r, h }^{ 1, \theta }( x ) 
    \bigr)
    = x_j \indicator{ \{ h \} }( i )
  \qqandqq
    \frac{ \partial }{ \partial \theta_{ \ell_0 \ell_1 + i } }
    \bigl( 
      \mN_{r, h }^{ 1, \theta }( x )
    \bigr)
    = \indicator{ \{ h \} }( i )
    .
\end{equation}
This establishes \cref{N_der1} and \cref{N_der2} 
in the base case $ K = 1 $.
For the induction step
let $ K \in \N \cap [ 1, L ) $ 
satisfy for all 
$ k \in \{ 1, \ldots, K \} $, 
$ i \in \{ 1, \ldots, \ell_k \} $, 
$ j \in \{ 1, \ldots, \ell_{k-1} \} $,
$ h \in \{ 1, \ldots, \ell_K\} $,
$ \theta = ( \theta_1, \ldots, \theta_{ \fd } ) \in \Reals^{ \fd } $
that
\begin{equation}
\label{N_der1_ind}
\begin{split}
&
  \frac{\partial}{\partial\theta_{ (i-1)\ell_{k-1} + j+ \diml_{k-1} } }
  \bigl( \mN_{ r, h }^{ K, \theta }( x ) \bigr)
\\
    &= \sum_{\substack{v_k,v_{k+1}, \ldots,v_K\in \enne, \\ \forall w\in \enne\cap[k,K]\colon v_w\leq\ell_w} }
    \Big[\R_{r ^{1/(\max \cu{k-1 , 1 } ) } } ( \mN^{ \max\{k-1,1\}, \theta }_{r,j}(x)) \indicator{ (1,K]}(k)
    +x_j \indicator{\{1\} }(k)
    \Big] \\
    &\quad \cdot
    \Big[\indicator{\{ i \} }(v_k)\Big]
    \Big[\indicator{\{ h \} }(v_K)\Big]
    \Big[\textstyle{\prod}_{n=k+1}^K
    \big( \fw^{n, \theta }_{v_n,v_{n-1} } 
    \big[( \R_{r^{1/(n-1)}})'( \mN^{n-1, \theta }_{r,v_{n-1} }(x))\big]
    \big)\Big]
\end{split} 
\end{equation}
and
\begin{equation}
\label{N_der2_ind}
\begin{split}
    &
    \frac{ \partial }{ \partial\theta_{ \ell_k \ell_{ k - 1 } + i + \diml_{ k - 1 } } }
    \bigl( 
      \mN_{ r, h }^{ K, \theta }( x ) 
    \bigr)
  \\
    &= \sum_{\substack{v_k,v_{k+1}, \ldots,v_K\in \enne, \\ \forall w\in \enne\cap[k,K]\colon v_w\leq\ell_w)} }
    \Big[\indicator{\{ i \} }(v_k)\Big]
    \Big[\indicator{\{ h \} }(v_K)\Big]
    \Big[\textstyle{\prod}_{n=k+1}^K
    \big( \fw^{n, \theta }_{v_n,v_{n-1} } 
    \big[( \R_{r^{1/(n-1)}})'( \mN^{n-1, \theta }_{r,v_{n-1} }(x))\big]
    \big)\Big]
  .
\end{split}  
\end{equation}
\Nobs that \cref{it:N_long2} of \cref{lem:properties1}
and \cref{N_der1_ind}
demonstrate that 
for all
$ k \in \{ 1, \ldots, K \} $,
$ i \in \{ 1, \ldots, \ell_k \} $, $ j \in \{ 1, \ldots, \ell_{k-1} \} $,
$ h \in \{ 1, \ldots, \ell_{K+1} \} $,
$ \theta = ( \theta_1, \ldots, \theta_{ \fd }) \in \Reals^{ \fd } $
it holds that
\begin{equation}
\begin{split}
    &\frac{\partial}{\partial\theta_{ (i-1)\ell_{k-1} + j+ \diml_{k-1} } }
    \bigl( \mN_{r, h }^{K+1, \theta }(x) \bigr)
  \\
    &
    =
    \frac{ 
      \partial 
    }{ 
      \partial\theta_{ (i-1) \ell_{ k - 1 } + j + \diml_{ k - 1 } } 
    }
    \biggl( 
      \fb^{ K + 1, \theta }_h 
      + 
      \sum_{ \mathscr{i} = 1 }^{ \ell_K } 
      \fw^{ K+1, \theta }_{ h, \mathscr{i} } 
      \R_{ r^{ 1 / K } }( 
        \mN^{K, \theta }_{ r, \mathscr{i} }( x )
      )
    \biggr)
\\
&
  =
  \sum_{ \mathscr{i} = 1 }^{ \ell_{K} }
  \Biggl[
    \fw^{ K + 1, \theta }_{ h, \mathscr{i} }
    \bigl[
      ( 
        \R_{ r^{ 1 / K } } )'( \mN^{ K, \theta }_{ r, \mathscr{i} }( x ) 
      )
    \bigr]
    \biggl( 
      \frac{ \partial }{ \partial\theta_{ (i-1) \ell_{ k - 1 } + j + \diml_{k-1} } 
      }
      \bigl( 
        \mN_{ r, \mathscr{i} }^{ K, \theta }( x ) 
      \bigr)
    \biggr)
  \Biggr]
\\  
&
  = 
  \sum_{ \mathscr{i} = 1 }^{ \ell_K }
  \Biggl[
    \fw^{ K+1, \theta }_{ h, \mathscr{i} }
    \bigl[
      ( \R_{ r^{ 1 / K } } )'( \mN^{ K, \theta }_{ r, \mathscr{i} }( x ) )
    \bigr]
\\
&
  \quad\cdot
  \sum_{
    \substack{ v_k, v_{k+1}, \ldots, v_K \in \enne, 
    \\ 
    \forall w \in \enne\cap[k,K]\colon v_w\leq\ell_w} }
    \Big[\R_{r^{1/(\max \cu{k-1 , 1 } ) } }( \mN^{ \max\{k-1,1\}, \theta }_{r, j }(x))\indicator{ (1,K]}(k)
    +x_{j} \indicator{\{1\} }(k)
    \Big] \\
    &\quad\cdot
    \Bigl[ \indicator{ \{ i \} }( v_k ) \Bigr]
    \Bigl[ \indicator{ \{ \mathscr{i} \} }( v_K ) \Bigr]
    \Big[\textstyle{\prod}_{n=k+1}^K
    \big( \fw^{n, \theta }_{v_n,v_{n-1} } 
    \big[( \R_{r^{1/ ( n - 1 ) }} )'( \mN^{n-1, \theta }_{r,v_{n-1} }(x))\big]
    \big)\Big] \Bigg] \\
    &=\sum_{\substack{v_k,v_{k+1}, \ldots,v_{K+1} \in \enne, \\\forall w\in \enne\cap[k,K+1]\colon v_w\leq\ell_w} }
    \Big[\R_{r^{1/(\max \cu{k-1 , 1 } ) } }( \mN^{ \max\{k-1,1\}, \theta }_{r,j}(x)) \indicator{ (1,K]}(k)
    +x_j \indicator{\{1\} }(k)
    \Big]    
    \\
    &\quad\cdot
    \Big[\indicator{\{ i \} }(v_k )\Big]
    \Big[\indicator{\{ h  \} }(v_{K+1} )\Big]    
    \Big[\textstyle{\prod}_{n = k + 1}^{K+1}
    \big( \fw^{n, \theta }_{v_ n,v_{n-1} } \big[( \R_{r^{1/ ( n - 1 ) } } ) ' ( \mN^{n-1, \theta }_{r,v_{n-1} }(x))\big]
    \big)\Big]\bigg]
  .
\end{split}
\end{equation}
\Moreover \cref{it:N_long2} of \cref{lem:properties1} 
ensures that for all 
$ i \in \{ 1, \dots, \ell_{K+1} \} $, 
$ j \in \{ 1, \dots, \ell_K \} $,
$ h  \in \{ 1, \dots, \ell_{K+1} \} $, 
$ \theta = ( \theta_1, \dots, \theta_{ \fd } ) \in \Reals^{ \fd } $
it holds that
\begin{equation}
\begin{split}
    \frac{\partial}{\partial\theta_{ (i-1)\ell_{K} + j + \diml_K } }( \mN_{r,  h  }^{K+1, \theta }(x))
    &= \frac{\partial}{\partial\theta_{ (i-1)\ell_{K} + j + \diml_K } }
    \biggl( 
      \fb^{ K + 1, \theta }_h 
      + 
      \sum_{ \mathscr{i} = 1 }^{ \ell_K } 
      \fw^{ K+1, \theta }_{ h, \mathscr{i} } 
      \R_{ r^{ 1 / K } }( \mN^{ K, \theta }_{ r, \mathscr{i} }( x ) )
    \biggr)
\\
&
  =
  \R_{ r^{ 1 / K } }( \mN^{ K, \theta }_{ r, j }( x ) ) 
  \indicator{ \{ h \} }( i ) .
\end{split}
\end{equation}
\Moreover \cref{it:N_long2} of \cref{lem:properties1} and \cref{N_der2_ind}
demonstrate that 
for all
$ k \in \{ 1, \ldots,K\} $, 
$ i \in \{ 1, \ldots, \ell_k \} $, 
$  h  \in \{ 1, \ldots, \ell_{K+1} \} $,
$ \theta = ( \theta_1, \ldots, \theta_{ \fd } ) \in \Reals^{ \fd } $
it holds that
\begin{equation}
\begin{split}
&
  \frac{ \partial }{ \partial\theta_{ \ell_k \ell_{ k - 1 } + i + \diml_{ k - 1 } } 
  }\bigl( 
    \mN_{ r , h }^{ K + 1, \theta }( x ) 
  \bigr)
  = 
  \frac{ \partial }{ \partial\theta_{ \ell_k \ell_{ k - 1 } + i + \diml_{ k - 1 } } }
  \biggl( 
    \fb^{ K + 1, \theta }_h 
    + 
    \sum_{ 
      \mathscr{i} = 1 
    }^{ 
      \ell_K 
    } 
    \fw^{ K + 1, \theta }_{ h, \mathscr{i} } 
    \R_{ r^{ 1 / K } }( \mN^{ K, \theta }_{ r, \mathscr{i} }( x ) )
  \biggr)
\\
    &=\sum_{ \mathscr{i} = 1 }^{ \ell_{K} }
    \fw^{K+1, \theta }_{ h, \mathscr{i} }
    \big[( \R_{r^{1/K}} )'( \mN^{K, \theta }_{ r, \mathscr{i} }(x))\big]
    \bigg( \frac{\partial}{\partial\theta_{\ell_k\ell_{k-1} + i +\diml_{k-1} } }( \mN_{ r, \mathscr{i} }^{K, \theta }(x))\bigg)\\  
    &= \sum_{ i_{K} = 1 }^{ \ell_{K} }
    \Biggl[ 
      \fw^{K+1, \theta }_{ h, \mathscr{i} }
      \bigl[
        ( \R_{ r^{ 1 / K } } )'( \mN^{ K, \theta }_{ r, \mathscr{i} }( x ) ) 
      \bigr]
    \sum_{
      \substack{ v_k, v_{k+1}, \dots, v_K \in \enne, 
      \\
      \forall w \in \enne \cap [k,K] \colon v_w \leq \ell_w } 
    }
    \Bigl[
      \indicator{ \{ i \} }( v_k ) 
    \Bigr]
    \Bigl[
      \indicator{ \{ \mathscr{i} \} }( v_K ) 
    \Bigr]
\\
    &\quad\cdot \Big[\textstyle{\prod}_{n=k+1}^K
    \big( \fw^{n, \theta }_{v_n,v_{n-1} } \big[( \R_{r^{1/(n - 1 ) } } )'( \mN^{n-1, \theta }_{r,v_{n-1} }(x))\big]
    \big)\Big]
    \Bigg] \\
    &=
    \sum_{\substack{v_k,v_{k+1}, \ldots,v_{K+1} \in \enne, \\\forall w\in \enne\cap[k,K+1]\colon v_w\leq\ell_w} }
    \Big[\indicator{\{ i \} }(v_k )\Big]
    \Big[\indicator{\{ h  \} }(v_{K+1} )\Big]
    \Big[\textstyle{\prod}_{n=k+1}^{K+1}
    \fw^{n, \theta }_{v_n,v_{n-1} } 
    \big[( \R_{r^{1 / ( n - 1 ) } } )'( \mN^{n-1, \theta }_{r,v_{n-1} }(x))\big]
    \Big] .
\end{split}
\end{equation}
\Moreover \cref{it:N_long2} of \cref{lem:properties1} shows for all
$ i \in \{ 1, \dots, \ell_{K+1} \} $, 
$ h \in \{ 1, \dots, \ell_{K+1} \} $,
$ \theta = ( \theta_1, \ldots, \theta_{ \fd } ) \in \Reals^{ \fd } $
that
\begin{equation}
\begin{split}
  \frac{ \partial }{ 
    \partial \theta_{ \ell_{ K + 1 } \ell_K + i + \diml_K } 
  }( 
    \mN_{ r, h }^{ K + 1, \theta }( x ) 
  )
&
  = \frac{\partial}{\partial\theta_{\ell_{K+1} \ell_{K} + i+\diml_K } }
    \bigg( \fb^{K+1, \theta }_ h  + \sum_{ \mathscr{i} = 1 }^{ \ell_K} \fw^{K+1, \theta }_{ h, \mathscr{i} } \R_{r^{1 / K } }( \mN^{K, \theta }_{ r, \mathscr{i} }(x))\bigg)\\
&
  = \indicator{ \{ h \} }( i ) .
\end{split}
\end{equation}  
Induction thus establishes \cref{N_der1,N_der2}. 
The proof of \cref{lem:N_der} is thus complete.
\end{proof}

\subsection{Explicit representations for the generalized gradients of the risk function}
\label{ssec:gen_grad_risk_functions}

\begin{lemma} \label{lem:der:unif:bounded}
Assume \cref{main_setting}
and let $ K \subseteq \Rr^{ \fd } $ be compact.
Then
\begin{enumerate} [(i)]
\item 
\label{lem:unif:bound:item1} 
it holds for all $ x \in [0, \infty) $ that
\begin{equation} 
  \sup\nolimits_{ r \in [ 1, \infty ) }
  \sup\nolimits_{ y \in [ -x, x ] }
  \bigl(  
    \abs{ \R_r ( y ) } + \abs{ ( \R_r ) ' ( y ) } 
  \bigr) < \infty ,
\end{equation}
\item 
\label{lem:unif:bound:item2} 
it holds for all $ k \in \cu{ 1, \dots, L } $ that
\begin{equation} 
\label{lem:unif:bound:item2:eq}
  \sup\nolimits_{ \theta \in K }
  \sup\nolimits_{ r \in [ 1, \infty) }
  \sup\nolimits_{ i \in \{ 1, \dots, \ell_k \} }
  \sup\nolimits_{ x \in [a,b]^{ \ell_0 } }
  \abs{ \mN^{ k, \theta }_{ r, i }( x ) }
  < \infty ,
\end{equation} 
\item 
\label{lem:unif:bound:item3} 
it holds for all $ k \in \cu{ 1, \dots, L } $ that
\begin{equation}
  \sup\nolimits_{ \theta \in K }
  \sup\nolimits_{ r, s, t \in [ 1, \infty ) }
  \sup\nolimits_{ i \in \{ 1, \dots, \ell_k \} }  
  \sup\nolimits_{ x \in [a,b]^{ \ell_0 } }
  \pr[\big]{ 
      \abs{ \R_s( \mN^{ k, \theta }_{ r, i }( x ) ) 
    }
    +
    \abs{
      ( \R_t )'( \mN^{ k, \theta }_{ r, i }( x ) ) 
    } 
  } < \infty ,
\end{equation}
and
\item 
\label{lem:unif:bound:item4} 
it holds that
\begin{equation}
  \sup\nolimits_{ \theta \in K }
  \sup\nolimits_{ r \in [1 , \infty) }
  \sup\nolimits_{ i \in \{ 1, \ldots, \ell_L \} }
  \sup\nolimits_{j \in \cu{1, \ldots, \fd } }
  \sup\nolimits_{ x \in [a,b]^{ \ell_0 } }
  \bigl| 
    \tfrac{ \partial }{ \partial \theta_j } 
    \pr[\big]{ \mN^{ L, \theta }_{ r, i }( x ) } 
  \bigr| 
  < \infty .
\end{equation} 
\end{enumerate}
\end{lemma}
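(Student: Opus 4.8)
The plan is to establish the four bounds in the order stated, since each later item rests on the earlier ones, with the compactness of $K$ and the uniform-in-$r$ control of the approximating activations $\R_r$ doing all the work. For \cref{lem:unif:bound:item1} I would simply combine the two structural facts already available: the inequality $0 \le \R_r(y) \le \max\{y,0\}$ from \cref{lim_R0} gives $\abs{\R_r(y)} \le \abs{y} \le x$ for every $y \in [-x,x]$ and every $r \in [1,\infty)$, while the standing hypothesis $\sup_{r \in [1,\infty)}\sup_{y \in \Rr}\abs{(\R_r)'(y)} < \infty$ in \cref{main_setting} bounds the derivative term uniformly in $r$ and $y$; adding the two yields a finite bound depending only on $x$.

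For \cref{lem:unif:bound:item2} I would argue by induction on $k$ using the layerwise recursion \cref{N_long1,N_long2} from \cref{lem:properties1}. Setting $\fC = \sup_{\theta \in K}\sum_{j=1}^{\fd}\abs{\theta_j}$, which is finite since $K$ is compact, the base case $k=1$ follows from $\mN^{1,\theta}_{r,i}(x) = \fb^{1,\theta}_i + \sum_j \fw^{1,\theta}_{i,j}x_j$ together with $\abs{x_j} \le \mathbf{a}$ on $[a,b]^{\ell_0}$. In the induction step I would exploit $\abs{\R_{r^{1/k}}(\mN^{k,\theta}_{r,j}(x))} \le \abs{\mN^{k,\theta}_{r,j}(x)}$ (again from \cref{lim_R0}) so that the layer-$k$ bound propagates through the affine map in \cref{N_long2} to layer $k+1$ with constants depending only on $\fC$ and $\mathbf{a}$, and crucially not on $r$. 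Item \cref{lem:unif:bound:item3} then falls out by feeding \cref{lem:unif:bound:item2} into \cref{lem:unif:bound:item1}: every value $\mN^{k,\theta}_{r,i}(x)$ lies in a fixed compact interval $[-C,C]$ uniformly over $\theta \in K$, $r \in [1,\infty)$, $i$, and $x$, so applying \cref{lem:unif:bound:item1} with this $C$ bounds both $\abs{\R_s(\mN^{k,\theta}_{r,i}(x))}$ and $\abs{(\R_t)'(\mN^{k,\theta}_{r,i}(x))}$ uniformly in the remaining parameters $s,t \in [1,\infty)$.

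For the final and most laborious item \cref{lem:unif:bound:item4} I would invoke the explicit derivative representations \cref{N_der1,N_der2} of \cref{lem:N_der}. Each summand there is a product of three kinds of factors: a leading factor which is either a coordinate $x_j$ (bounded by $\mathbf{a}$) or a term of the form $\R_{r^{1/(\max\{k-1,1\})}}(\mN^{\max\{k-1,1\},\theta}_{r,j}(x))$ (bounded via \cref{lem:unif:bound:item3}); two indicator functions bounded by $1$; and a product of factors $\fw^{n,\theta}_{v_n,v_{n-1}}(\R_{r^{1/(n-1)}})'(\mN^{n-1,\theta}_{r,v_{n-1}}(x))$, in which the weights are bounded by $\fC$ over $K$ and the derivative factors are bounded by \cref{lem:unif:bound:item3}. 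Since the number of multi-indices $(v_k,\ldots,v_K)$ is at most $\prod_{w} \ell_w$, a constant fixed by the architecture and independent of $\theta$, $r$, and $x$, the entire sum is uniformly bounded.

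The main obstacle is organizational rather than conceptual: every estimate must be uniform in $r \in [1,\infty)$, and this uniformity rests entirely on the two $r$-independent facts $0 \le \R_r \le \R_{\infty}$ and $\sup_{r,x}\abs{(\R_r)'(x)} < \infty$ supplied by \cref{main_setting}. Once one checks carefully that neither the propagation in \cref{lem:unif:bound:item2} nor the bounding of the finitely many product terms arising from \cref{N_der1,N_der2} in \cref{lem:unif:bound:item4} introduces any hidden $r$-dependence, all four claims follow.
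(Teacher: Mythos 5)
Your proposal is correct and follows essentially the same route as the paper: item~(i) from the standing assumptions on $\R_r$, item~(ii) by induction on $k$ via the layerwise recursion of \cref{lem:properties1} with a compactness bound on $K$, item~(iii) by combining (i) and (ii), and item~(iv) by bounding each factor in the explicit derivative formulas of \cref{lem:N_der}. The only (immaterial) difference is in item~(i), where you bound $\abs{\R_r(y)}$ directly by $\max\{y,0\}\le\abs{y}$ using \cref{lim_R0}, whereas the paper derives the same bound from $\R_r(0)=0$ and the fundamental theorem of calculus.
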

\begin{proof}[Proof of \cref{lem:der:unif:bounded}]
\Nobs that the fundamental theorem of 
calculus and the assumption that for all $ r \in [ 1, \infty ) $ it holds that
$ \R_r \in C^1( \Reals, \Reals ) $ ensure that for all
$ r \in [ 1, \infty ) $, $ x \in \Reals $
we have that
$ 
  \R_r(x) 
  = \R_r(0) + \int_0^x ( \R_r )'( y ) \, \d y 
  = \int_0^x( \R_r)'(y)\, \d y 
$.
Combining this with the assumption that
$ 
  \sup_{ r \in [ 1, \infty ) } \sup_{ y \in \Rr } 
  \abs{ ( \R_r )'( y ) } < \infty 
$ 
shows that for all $ x \in [0, \infty) $
it holds that 
$ 
  \sup_{ r \in [ 1, \infty) } 
  \sup_{ y \in [-x,x] } 
  \abs{ \R_r( y ) } < \infty 
$.
This establishes \cref{lem:unif:bound:item1}.
We now prove \cref{lem:unif:bound:item2:eq} 
by induction on $ k \in \cu{ 1, \ldots, L } $.
\Nobs that the assumption that $ K $ is compact
ensures that there exists $ \fC \in \Rr $
which satisfies that 
\begin{equation}
\label{eq:sup_smaller_than_fC}
\textstyle 
  \sup_{ \theta \in K } \norm{ \theta } < \fC 
  .
\end{equation}
\Nobs that \cref{eq:sup_smaller_than_fC} 
and \cref{it:N_long1} of \cref{lem:properties1}
demonstrate for all $ \theta \in K $, $ r\in [1 , \infty ) $,
$ i \in \{ 1, \dots, \ell_1 \} $, $ x \in [a,b]^{ \ell_0 } $
that
\begin{equation}
\textstyle
  \abs{ \mN^{ 1, \theta }_{ r, i }( x ) }
  \leq 
  \abs{ \fb^{ 1, \theta }_i } 
  + 
  \sum_{ j = 1 }^{ \ell_0 } 
  \abs{ \fw^{ 1, \theta }_{ i, j } }
  \abs{ x_j }
  \leq 
  \mathbf{a} ( \ell_0 + 1 ) \fC 
  . 
\end{equation}
This shows \cref{lem:unif:bound:item2:eq} in the base case $ k = 1 $.
For the induction step let $ k \in \N \cap [1, L) $ satisfy that
\begin{equation}
\label{lem:unif:bound:item2:eq:ind}
  \sup\nolimits_{ \theta \in K }
  \sup\nolimits_{ r \in [ 1, \infty) }
  \sup\nolimits_{ i \in \{ 1, \dots, \ell_k \} }
  \sup\nolimits_{ x \in [a,b]^{ \ell_0 } }
  \abs{ \mN^{ k, \theta }_{ r, i }( x ) }
  < \infty .
\end{equation}
\Nobs that 
\cref{eq:sup_smaller_than_fC}
and 
\cref{it:N_long2} of \cref{lem:properties1}
imply for all $ \theta \in K$,
$ r \in [ 1, \infty) $,
$ i \in \{ 1, \ldots, \ell_{ k + 1 } \} $,
$ x \in [a,b]^{ \ell_0 } $
that
\begin{equation}
\textstyle
  \abs{ \mN^{ k+1, \theta }_{ r, i }( x ) }
  \leq 
  \abs{ \fb^{ k+1, \theta }_i }
  +
  \sum_{ j = 1 }^{ \ell_k } 
  \abs{ \fw^{ k+1, \theta }_{ i, j } } 
  \abs{ \R_r( \mN^{ k, \theta }_{ r, j }( x ) ) }
  \leq  
  \fC 
  \bigl(
    1 
    + 
    \ell_k 
    \max_{ j \in \cu{ 1, \dots, \ell_k } } 
    \abs{ 
      \R_r( \mN^{ k, \theta }_{ r, j }( x ) ) 
    } 
  \bigr)
  .
\end{equation}
This, \cref{lem:unif:bound:item1}, and \cref{lem:unif:bound:item2:eq:ind}
demonstrate that 
\begin{equation}
  \sup\nolimits_{ \theta \in K }
  \sup\nolimits_{ r \in [ 1, \infty) }
  \sup\nolimits_{ i \in \{ 1, \ldots, \ell_{k+1} \} }
  \sup\nolimits_{x\in [a,b]^{ \ell_0 } }
  \abs{ \mN^{ k+1, \theta }_{ r, i }( x ) }
  < \infty .
\end{equation}
Induction thus establishes \cref{lem:unif:bound:item2:eq}. 
This completes the proof of \cref{lem:unif:bound:item2}.
\Nobs that \cref{lem:unif:bound:item1,lem:unif:bound:item2}
prove \cref{lem:unif:bound:item3}.
\Nobs that \cref{lem:unif:bound:item3}
and \cref{lem:N_der} establish \cref{lem:unif:bound:item4}.
The proof of \cref{lem:der:unif:bounded} is thus complete.
\end{proof}

\begin{lemma}[Integrability properties of the target function]
\label{lem:f:integrable}
Assume \cref{main_setting}. Then
\begin{equation}
\label{lem:f:integrable:eqclaim}
  \int_{ [a,b]^{ \ell_0 } } \norm{ f(x) }^2 \, \mu( \d x ) 
  + 
  \int_{ [a,b]^{ \ell_0 } } \norm{ f(x) } \, \mu( \d x ) < \infty 
  .
\end{equation}
\end{lemma}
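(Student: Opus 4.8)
The plan is to extract the finiteness of $\int_{[a,b]^{\ell_0}} \norm{f(x)}^2 \, \mu(\d x)$ from the fact that the risk function $\mL_\infty$ is, by the standing assumptions of \cref{main_setting}, a well-defined $\Reals$-valued function, and then to deduce the finiteness of the first-order integral by an elementary pointwise estimate together with the finiteness of the measure $\mu$.

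First I would evaluate the realization function at the zero parameter vector. For $\theta = 0 \in \Reals^{\fd}$ all weight matrices $\fw^{k,0}$ and all bias vectors $\fb^{k,0}$ vanish, so \cref{N} yields by a trivial induction on $k$ that $\mN^{k,0}_\infty(x) = 0$ for every $k \in \{1, \dots, L\}$ and every $x \in \Reals^{\ell_0}$; indeed $\mN^{1,0}_\infty(x) = \mA_1^0(x) = \fb^{1,0} + \fw^{1,0} x = 0$, and the affine map $\mA_{k+1}^0$ sends any input to $\fb^{k+1,0} = 0$. Substituting $\theta = 0$ and the identity $\mN^{L,0}_\infty \equiv 0$ into the definition \cref{L_r} of the risk then gives $\mL_\infty(0) = \int_{[a,b]^{\ell_0}} \norm{f(x)}^2 \, \mu(\d x)$. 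Since \cref{main_setting} stipulates $\mL_\infty \colon \Reals^{\fd} \to \Reals$, the value $\mL_\infty(0)$ is a finite real number, and hence $\int_{[a,b]^{\ell_0}} \norm{f(x)}^2 \, \mu(\d x) < \infty$.

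It then remains to control the first-order integral. I would use the elementary inequality $t \le 1 + t^2$, valid for all $t \in [0,\infty)$ (because $1 + t^2 - t = (t - \tfrac12)^2 + \tfrac34 \ge 0$), applied pointwise with $t = \norm{f(x)}$, to obtain $\norm{f(x)} \le 1 + \norm{f(x)}^2$ for every $x \in [a,b]^{\ell_0}$. Integrating this bound against $\mu$ and using that $\mu([a,b]^{\ell_0}) = \mathfrak{m} \in \Reals$ is finite yields $\int_{[a,b]^{\ell_0}} \norm{f(x)} \, \mu(\d x) \le \mathfrak{m} + \int_{[a,b]^{\ell_0}} \norm{f(x)}^2 \, \mu(\d x) < \infty$. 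Adding the two finite integrals then establishes \cref{lem:f:integrable:eqclaim}. There is no genuine analytic obstacle here; the only point requiring care is the observation that the finiteness of $\int \norm{f}^2 \, \d\mu$ is not an additional hypothesis but is already encoded in the standing assumption that $\mL_\infty$ takes values in $\Reals$, which is precisely why evaluating the risk at the zero network is the natural first move.
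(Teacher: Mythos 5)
Your proof is correct and follows essentially the same route as the paper: both arguments obtain $\int_{[a,b]^{\ell_0}} \norm{f(x)}^2 \, \mu(\d x) = \mL_{\infty}(0) < \infty$ by evaluating the risk at the zero parameter vector (where the realization function vanishes), and then pass from square-integrability to integrability using the finiteness of $\mathfrak{m}$. The only cosmetic difference is in the second step, where the paper invokes H\"older's inequality while you use the pointwise bound $t \leq 1 + t^2$; both are equally valid on the finite measure space $([a,b]^{\ell_0}, \mu)$.
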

\begin{proof}[Proof of \cref{lem:f:integrable}]
\Nobs that \cref{L_r}, 
the fact that $ \mL_{ \infty }( 0 ) \in \Rr $,
and the fact that for all $ x \in \Rr^{ \ell_0 } $
it holds that
$ \mN^{ L, 0 }_{ \infty }( x ) = 0 $
assure that
\begin{equation}
\label{lem:f:integrable:eq1}
\textstyle 
  \int_{ [a,b]^{ \ell_0 } } 
  \norm{ f(x) }^2 \, \mu( \d x) 
  = \mL_{ \infty }( 0 ) < \infty .
\end{equation}
H\"{o}lder's inequality 
and the fact that $ \mathfrak{m} = \mu( [ a, b ]^{ \ell_0 } ) \in \Rr $
hence show that
\begin{equation}
\begin{split}
\textstyle 
  \int_{ [a,b]^{ \ell_0 } } \norm{ f(x) } \, \mu( \d x )
& 
\textstyle 
  \leq 
  \left[ 
    \int_{ [a,b]^{ \ell_0 } } 1 \, \mu( \d x )
  \right]^{ \nicefrac{ 1 }{ 2 } }
  \left[ 
    \int_{ [a,b]^{ \ell_0 } } \norm{ f(x) }^2 \, \mu( \d x )
  \right]^{ \nicefrac{ 1 }{ 2 } }
\\ & 
\textstyle 
  =
  \sqrt{ \mathfrak{m} }
  \left[ 
    \int_{ [a,b]^{ \ell_0 } } \norm{ f(x) }^2 \, \mu( \d x )
  \right]^{ \nicefrac{ 1 }{ 2 } }
  < \infty 
  .
\end{split}
\end{equation}
	Combining this with \cref{lem:f:integrable:eq1} establishes \cref{lem:f:integrable:eqclaim}.
	The proof of \cref{lem:f:integrable} is thus complete.
\end{proof}

\begin{theorem}
\label{prop:G}
Assume \cref{main_setting}, 
for every $ k \in \enne_0 $ 
let $ \diml_k \in \enne_0 $ satisfy 
$ \diml_k = \sum_{ n = 1 }^k \ell_n ( \ell_{ n - 1 } + 1 ) $,
and let $ \theta = ( \theta_1, \ldots, \theta_{ \fd } ) \in \Rr^{ \fd } $. 
Then
\begin{enumerate}[label=(\roman*)]
\item\label{G1}
it holds for all $ r \in [ 1, \infty ) $ 
that $ \mL_r \in C^1( \Reals^{ \fd }, \Reals ) $
\item\label{G2'}
it holds for all $ r \in [ 1, \infty ) $, $ k \in \{ 1, \ldots, L \} $, 
$ i \in \{ 1, \ldots, \ell_k \} $,
$ j \in \{ 1, \ldots, \ell_{k-1} \} $ that
\begin{equation}\label{der_w}
\begin{split}
& 
  \pr*{  \frac{\partial\mL_r}
   {\partial\theta_{ (i-1)\ell_{k-1} + j +\diml_{k-1} } }
    } ( \theta)
\\
&
  =
  \sum_{\substack{v_k,v_{k+1},  
  \ldots,v_L\in \enne, \\\forall w\in \enne
  \cap[k, L]\colon v_w\leq\ell_w} }
  \int_{[a,b]^{ \ell_0 } }2\,
  \Big[
  \R_{r^{1/ ( \max \cu{k - 1 , 1 } ) } } ( \mN^{ \max\{k-1,1\}, \theta }_{r,j}(x))
  \indicator{ (1, L]}(k)
  +x_j \indicator{\{1\} }(k)
  \Big]\\
  &\quad\cdot
  \Big[ \indicator{\{ i \} }(v_k)\Big]
  \Big[ \mN_{r,v_L}^{L, \theta }(x)-f_{v_L}(x)\Big] 
  \Big[
  \textstyle{\prod}_{n={k+1} }^{L} \big(
  \fw^{n, \theta }_{v_n, v_{n-1} }
  \big[( \R_{r^{1/(n - 1 ) } } ) ' ( \mN^{n-1, \theta }_{r,v_{n-1} }(x))\big]
  \big)
  \Big]\, \mu( \d x),
\end{split}
\end{equation}
\item\label{G2''}
   it holds for all $ r\in [1 , \infty)  $, $ k \in \{ 1, \ldots, L\} $, 
  $ i \in \{ 1, \ldots, \ell_k \} $ that
  \begin{equation}\label{der_b}
  \begin{split}
  & \pr*{  \frac{\partial\mL_r}
  {\partial\theta_{\ell_k\ell_{k-1} + i+ \diml_{k-1} } }
   } ( \theta)
  =
  \sum_{\substack{v_k,v_{k+1},  
  \ldots,v_L\in \enne, \\\forall w\in \enne
  \cap[k, L]\colon v_w\leq\ell_w} }
  \int_{[a,b]^{ \ell_0 } }2\,
  \Big[ \indicator{\{ i \} }(v_k)\Big]
  \\
  &\quad\cdot
  \Big[ \mN_{r,v_L}^{L, \theta }(x)-f_{v_L}(x)\Big]
  \Big[
  \textstyle{\prod}_{n={k+1} }^{L} \big(
  \fw^{n, \theta }_{v_n, v_{n-1} }
  \big[( \R_{r^{1 / ( n - 1 ) } } )'( \mN^{n-1, \theta }_{r,v_{n-1} }(x))\big]
  \big)
  \Big]\, \mu( \d x),
\end{split}
\end{equation}
\item \label{G3}
it holds that $ \limsup\nolimits_{r\to \infty }  \pr*{  \abs{ \mL_r( \theta)-\mL_{ \infty }( \theta) } +
 \|( \nabla\mL_r)( \theta)-\mG( \theta)\|  }  =0$,
 \item  \label{G4'}
 it holds for all $ k \in \{ 1, \ldots, L\} $,
  $ i \in \{ 1, \ldots, \ell_k \} $, 
  $ j \in \{ 1, \ldots, \ell_{k-1} \} $ that
  \begin{equation}\label{G_w}
  \begin{split}
  &\mG_{ (i-1)\ell_{k-1} + j + \diml_{k - 1 } }( \theta)\\
  &=
  \sum_{\substack{v_k,v_{k+1},  
  \ldots,v_L\in \enne, \\\forall w\in \enne
  \cap[k, L]\colon v_w\leq\ell_w} }
  \int_{[a,b]^{ \ell_0 } }2\,
  \Big[
  \R_{ \infty }( \mN^{ \max\{k-1,1\}, \theta }_{ \infty,j}(x))
  \indicator{ (1, L]}(k)
  +x_j \indicator{\{1\} }(k)
  \Big]\\
  &\quad\cdot
  \Big[ \indicator{\{ i \} }(v_k)\Big]
  \Big[ \mN_{ \infty,v_L}^{L, \theta }(x)-f_{v_L}(x)\Big] 
  \Big[
  \textstyle{\prod}_{n={k+1} }^{L} \big(
  \fw^{n, \theta }_{v_n, v_{n-1} }
  \indicator{\mX^{n-1, \theta }_{v_{n-1} }}(x)
  \big)
  \Big]\, \mu( \d x),
\end{split}
\end{equation}
and
 \item  \label{G4''}
 it holds for all $ k \in \{ 1, \ldots, L\} $,
  $ i \in \{ 1, \ldots, \ell_k \} $ that
  \begin{equation}\label{G_b}
  \begin{split}
    &\mG_{\ell_k\ell_{k-1} + i+ \diml_{k-1 } }( \theta)
    =\sum_{\substack{v_k,v_{k+1},  
  \ldots,v_L\in \enne, \\\forall w\in \enne
  \cap[k, L]\colon v_w\leq\ell_w} }
  \int_{[a,b]^{ \ell_0 } }2\,
  \Big[ \indicator{\{ i \} }(v_k)\Big]
   \\
  &\quad\cdot
  \Big[ \mN_{ \infty,v_L}^{L, \theta }(x)-f_{v_L}(x)\Big]
  \Big[
  \textstyle{\prod}_{n={k+1} }^{L} \big(
  \fw^{n, \theta }_{v_n, v_{n-1} }
  \indicator{\mX^{n-1, \theta }_{v_{n-1} }}(x)
  \big)
  \Big]\, \mu( \d x).
\end{split}
\end{equation}
\end{enumerate}
\end{theorem}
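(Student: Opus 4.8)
The plan is to first establish, for finite $ r $, the $ C^1 $ property and the explicit derivative formulas \cref{der_w} and \cref{der_b} by differentiating the risk $ \mL_r $ under the integral sign, and then to pass to the limit $ r \to \infty $ via dominated convergence in order to obtain the convergence assertion and the explicit formulas \cref{G_w} and \cref{G_b} for $ \mG $. Throughout, the pointwise (in $ x $) convergence statements are supplied by \cref{lem:properties2}, while the integrable dominating majorants come from \cref{lem:der:unif:bounded} and \cref{lem:f:integrable}.

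For the assertions at finite $ r $, I would fix $ r \in [1,\infty) $ and write the integrand as $ \norm{ \mN^{L,\theta}_r(x) - f(x) }^2 = \sum_{v_L=1}^{\ell_L} ( \mN^{L,\theta}_{r,v_L}(x) - f_{v_L}(x) )^2 $. By \cref{it:N_C1} of \cref{lem:N_der}, for each fixed $ x $ this is differentiable in $ \theta $, and the chain rule gives, for each coordinate index $ m $, the partial derivative $ 2 \sum_{v_L} ( \mN^{L,\theta}_{r,v_L}(x) - f_{v_L}(x) ) \tfrac{\partial}{\partial\theta_m} \mN^{L,\theta}_{r,v_L}(x) $. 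Substituting the explicit expressions \cref{N_der1} and \cref{N_der2} from \cref{lem:N_der} (with $ K = L $) then reproduces exactly the bracketed integrands in \cref{der_w} and \cref{der_b}. To move the derivative inside the integral I would invoke the dominated-convergence form of Leibniz's rule: over any compact neighborhood of $ \theta $, \cref{lem:der:unif:bounded} bounds the network values, the rectifier values and their derivatives, and the $ \theta $-derivatives of $ \mN^{L,\cdot}_{r,i} $ uniformly, so the partial-derivative integrand is dominated by a constant multiple of $ 1 + \norm{ f(x) } $, which is $ \mu $-integrable by \cref{lem:f:integrable}. This yields \cref{der_w} and \cref{der_b}; the continuity of $ \theta \mapsto (\nabla\mL_r)(\theta) $, and hence the $ C^1 $ claim, follows from the continuity in $ \theta $ of the same integrand together with a further dominated-convergence argument.

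For the limit $ r \to \infty $, the convergence $ \mL_r(\theta) \to \mL_\infty(\theta) $ follows from \cref{lim_aux1} applied with $ k = L $ (pointwise convergence of the integrand) together with domination by a constant multiple of $ 1 + \norm{ f(x) }^2 $ and \cref{lem:f:integrable}. For the gradient I would examine the integrands in \cref{der_w} and \cref{der_b} factor by factor: the rectifier-value prefactor converges to $ \R_\infty( \mN^{\max\{k-1,1\},\theta}_{\infty,j}(x) ) $ by \cref{lim_aux1}, the residual factor $ \mN^{L,\theta}_{r,v_L}(x) - f_{v_L}(x) $ converges to $ \mN^{L,\theta}_{\infty,v_L}(x) - f_{v_L}(x) $ by the same estimate, and each derivative factor $ ( \R_{r^{1/(n-1)}} )'( \mN^{n-1,\theta}_{r,v_{n-1}}(x) ) $ converges to $ \indicator{ \mX^{n-1,\theta}_{v_{n-1}} }(x) $ by \cref{lim_aux2}. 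Since every factor is uniformly bounded, their products converge pointwise to the integrands of \cref{G_w} and \cref{G_b}, and the same majorant $ 1 + \norm{ f(x) } $ justifies dominated convergence. This shows that $ (\nabla\mL_r)(\theta) $ converges, so $ \theta $ lies in the set on which $ \mG $ is defined and $ \mG(\theta) $ equals this limit; consequently the convergence assertion and both formulas \cref{G_w} and \cref{G_b} follow simultaneously.

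The step I expect to be most delicate is the pointwise convergence of the rectifier-derivative factors to the indicators $ \indicator{ \mX^{n-1,\theta}_{v_{n-1}} } $. On the boundary set $ \{ x : \mN^{n-1,\theta}_{\infty,v_{n-1}}(x) = 0 \} $ the naive argument fails, and one must use the refined decay rate in \cref{lim_aux0_B} to guarantee that $ \abs{ \mN^{n-1,\theta}_{r,v_{n-1}}(x) } $ falls below the threshold $ \A r^{-1/(n-1)} $ beneath which $ ( \R_{r^{1/(n-1)}} )' $ vanishes. This delicacy, however, is already handled inside \cref{lem:properties2}, so here it suffices to invoke \cref{lim_aux2} directly; the remaining effort is the bookkeeping of assembling the correct integrable dominating functions for each of the several dominated-convergence applications.
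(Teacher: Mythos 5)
Your proposal is correct and follows essentially the same route as the paper's proof: differentiation under the integral via the chain rule and \cref{lem:N_der}, with domination supplied by \cref{lem:der:unif:bounded} and \cref{lem:f:integrable} for the finite-$r$ assertions, and then dominated convergence together with \cref{it:lim_aux1,it:lim_aux2} of \cref{lem:properties2} for the passage to $r\to\infty$. Your observation that the boundary-set subtlety for the rectifier-derivative factors is already absorbed into \cref{lem:properties2} is exactly right, and your remark that convergence of $(\nabla\mL_r)(\theta)$ places $\theta$ in the domain on which $\mG$ is defined makes explicit a point the paper leaves implicit.
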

\begin{proof}[Proof of \cref{prop:G}]
\Nobs that \cref{lem:N_der}
and the chain rule show that for all 
$ r \in [1, \infty) $, $ i \in \cu{ 1, \dots, \fd } $,
$ x \in [a,b]^{ \ell_0 } $
it holds that 
$
  \Rr^{ \fd } \ni \vartheta 
  \mapsto \norm{ \mN_r^{ L, \vartheta }( x ) - f (x ) }^2 \in \Rr 
$
is differentiable at $ \theta $ and that
\begin{equation} 
\textstyle 
  \frac{ \partial }{ \partial \theta_i } 
  \| 
    \mN_r^{ L, \theta }( x ) - f( x ) 
  \|^2 
  = 
  2 
  \sum_{ j = 1 }^{ \ell_L } 
  \bigl(
    ( \mN_{ r, j }^{ L, \theta }( x ) - f_j( x ) ) 
    \tfrac{ \partial }{ \partial \theta_i } 
    \bigl[
      \mN_{ r, j }^{ L, \theta }( x ) 
    \bigr] 
  \bigr)
  .
\end{equation}
Combining this with \cref{lem:f:integrable},
\cref{lem:der:unif:bounded},
\cref{lem:N_der},
and the dominated convergence theorem establishes \cref{G1,G2',G2''}. 
\Nobs that the dominated convergence theorem 
and the fact that 
for all $ x \in [a,b]^{ \ell_0 } $ it holds that
$ 
  \limsup_{ r \to \infty }
  |
    ( \mN^{ L, \theta }_r(x) - f(x) )
    - 
    ( \mN^{ L, \theta }_{ \infty }( x ) - f(x) )
  |
  = 0
$
ensure that
\begin{equation}
\label{L_lim}
  \limsup\nolimits_{ r \to \infty } 
  |
    \mL_r( \theta )
    - \mL_{ \infty }( \theta ) 
  | = 0 .
\end{equation}
\Moreover \cref{it:lim_aux1,it:lim_aux2} of \cref{lem:properties2}
demonstrate that for all
$ k \in \{ 1, \ldots, L \} $,
$ i \in \{ 1, \ldots, \ell_k \} $, $ j \in \{ 1, \ldots, \ell_{k-1} \} $,
$ x \in [ a, b ]^{ \ell_0 } $
it holds that
\begin{equation}
\label{limit1}
\begin{split}
  & \lim_{r\to \infty } \Biggl( \sum_{\substack{v_k,v_{k+1},  
  \ldots,v_L\in \enne, \\\forall w\in \enne
  \cap[k, L]\colon v_w\leq\ell_w} }
  \Big[
  \R_{r^{1 / ( \max \cu{k - 1 , 1 } ) } } ( \mN^{ \max\{k-1,1\}, \theta }_{r,j}(x))
  \indicator{ (1, L]}(k)
  +x_j \indicator{\{1\} }(k)
  \Big]\\
  &\quad\cdot
  \Big[ \indicator{\{ i \} }(v_k)\Big]
  \Big[ \mN_{r,v_L}^{L, \theta }(x)-f_{v_L}(x)\Big] 
  \Big[
  \textstyle{\prod}_{n={k+1} }^{L} \big(
  \fw^{n, \theta }_{v_n, v_{n-1} }
  \big[( \R_{r^{1 / ( n - 1 ) } } ) ' ( \mN^{n-1, \theta }_{r,v_{n-1} }(x))\big]
  \big)
  \Big]\Biggr)\\
  &=\sum_{\substack{v_k,v_{k+1},  
  \ldots,v_L\in \enne, \\\forall w\in \enne
  \cap[k, L]\colon v_w\leq\ell_w} }
  \Big[
  \R_{ \infty }( \mN^{ \max\{k-1,1\}, \theta }_{ \infty,j}(x))
  \indicator{ (1, L]}(k)
  +x_j \indicator{\{1\} }(k)
  \Big]\\
  &\quad\cdot
  \Big[ \indicator{\{ i \} }(v_k)\Big]
  \Big[ \mN_{ \infty,v_L}^{L, \theta }(x)-f_{v_L}(x)\Big] 
  \Big[
  \textstyle{\prod}_{n={k+1} }^{L} \big(
  \fw^{n, \theta }_{v_n, v_{n-1} }
  \indicator{\mX^{n-1, \theta }_{n-1} }(x)
  \big)
  \Big]
\end{split}
\end{equation}
and
\begin{equation}
\label{limit2}
\begin{split}
  & \lim_{ r \to \infty } 
  \Biggl( \sum_{\substack{v_k,v_{k+1},  
  \ldots,v_L\in \enne, \\\forall w\in \enne
  \cap[k, L]\colon v_w\leq\ell_w} }
  \Big[ \indicator{\{ i \} }(v_k)\Big]
  \Big[ \mN_{r,v_L}^{L, \theta }(x)-f_{v_L}(x)\Big] \\
  &\quad\cdot
  \Big[
  \textstyle{\prod}_{n={k+1} }^{L} \big(
  \fw^{n, \theta }_{v_n, v_{n-1} }
  \big[( \R_{ r ^{1 / ( n - 1 ) } } )'
  ( \mN^{n-1, \theta }_{r,v_{n-1} }(x))\big]
  \big)
  \Big]\Biggr)\\
  &=\sum_{\substack{v_k,v_{k+1},  
  \ldots,v_L\in \enne, \\\forall w\in \enne
  \cap[k, L]\colon v_w\leq\ell_w} }
  \Big[ \indicator{ \{ i \} }(v_k)\Big]
  \Big[ \mN_{ \infty,v_L}^{L, \theta }(x)-f_{v_L}(x)\Big] 
  \Big[
  \textstyle{\prod}_{n={k+1} }^{L} \big(
  \fw^{n, \theta }_{v_n, v_{n-1} }
  \indicator{ \mX^{n-1, \theta }_{n-1} }(x)
  \big)
  \Big]
  \end{split}
  \end{equation}
Combining 
\cref{lem:der:unif:bounded}, 
\cref{lem:f:integrable},
\cref{L_lim}, 
\cref{limit1}, \cref{limit2},
and the dominated convergence theorem
establishes \cref{G3,G4',G4''}.
The proof of \cref{prop:G} is thus complete.
\end{proof}

\subsection{Local Lipschitz continuity properties of the risk function}
\label{ssec:local_lipschitz}

\begin{lemma}
\label{lem:loc_lip}
Assume \cref{main_setting} and let $ \compactset \subseteq \Reals^{ \fd } $ be compact.
Then there exists $ \mathscr{L} \in \Reals $ which satisfies 
for all $ \theta, \vartheta \in \compactset $ that
\begin{equation}
\label{loc_lip}
  \abs{ 
    \mL_{ \infty }( \theta ) - \mL_{ \infty }( \vartheta ) 
  }
  +
  \sup\nolimits_{
    x \in [a,b]^{ \ell_0 } 
  } 
  \norm{ 
    \mN_{ \infty }^{ L, \theta }( x )
    -
    \mN_{ \infty }^{ L, \vartheta }( x ) 
  }
\leq
  \mathscr{L} 
  \norm{ \theta - \vartheta }
  .
\end{equation}
\end{lemma}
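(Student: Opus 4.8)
The plan is to establish the two summands on the left-hand side of \cref{loc_lip} separately, treating the realization-function bound first and then deducing the risk bound from it. The key structural facts are that the ReLU activation $\R_\infty = \max\{\cdot,0\}$ is globally $1$-Lipschitz, so that $\mathfrak{M}_\infty$ satisfies $\norm{\mathfrak{M}_\infty(u) - \mathfrak{M}_\infty(v)} \leq \norm{u - v}$ for all $u, v$ of matching dimension, and that the entries of the weight matrices $\fw^{k,\theta}$ and the bias vectors $\fb^{k,\theta}$ are individual coordinates of $\theta$ (see \cref{wb}), so that $\norm{\fw^{k,\theta} - \fw^{k,\vartheta}} + \norm{\fb^{k,\theta} - \fb^{k,\vartheta}} \leq 2\norm{\theta - \vartheta}$. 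Since $\compactset$ is compact, \cref{lem:der:unif:bounded} supplies finite constants bounding $\sup_{\theta \in \compactset}\norm{\theta}$ (hence the weights) and bounding $\sup_{\theta \in \compactset} \sup_{i} \sup_{x \in [a,b]^{\ell_0}} \abs{\mN^{k,\theta}_{\infty,i}(x)}$ for every layer $k$.

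First I would prove by induction on $k \in \{1, \ldots, L\}$ that there exists $C_k \in \Reals$ with $\sup_{x \in [a,b]^{\ell_0}} \norm{\mN^{k,\theta}_\infty(x) - \mN^{k,\vartheta}_\infty(x)} \leq C_k \norm{\theta - \vartheta}$ for all $\theta, \vartheta \in \compactset$. In the base case $k = 1$ one uses $\mN^{1,\theta}_\infty(x) = \fb^{1,\theta} + \fw^{1,\theta}x$ together with $\norm{x} \leq \sqrt{\ell_0}\,\mathbf{a}$ on $[a,b]^{\ell_0}$. For the induction step one expands $\mN^{k+1,\theta}_\infty(x) - \mN^{k+1,\vartheta}_\infty(x)$ via \cref{N} and inserts the cross term $\fw^{k+1,\vartheta}\mathfrak{M}_\infty(\mN^{k,\theta}_\infty(x))$, yielding three contributions: a bias difference $\fb^{k+1,\theta} - \fb^{k+1,\vartheta}$, a weight difference $(\fw^{k+1,\theta} - \fw^{k+1,\vartheta})\mathfrak{M}_\infty(\mN^{k,\theta}_\infty(x))$, and a propagated error $\fw^{k+1,\vartheta}[\mathfrak{M}_\infty(\mN^{k,\theta}_\infty(x)) - \mathfrak{M}_\infty(\mN^{k,\vartheta}_\infty(x))]$. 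The first is controlled by $\norm{\theta - \vartheta}$; the second by $\norm{\theta - \vartheta}$ times the bound on $\norm{\mathfrak{M}_\infty(\mN^{k,\theta}_\infty(x))}$ from \cref{lem:der:unif:bounded}; and the third by the boundedness of $\norm{\fw^{k+1,\vartheta}}$, the $1$-Lipschitz property of $\mathfrak{M}_\infty$, and the induction hypothesis $C_k \norm{\theta - \vartheta}$. Taking $k = L$ bounds the second summand of \cref{loc_lip}.

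Next I would bound $\abs{\mL_\infty(\theta) - \mL_\infty(\vartheta)}$ using \cref{L_r} and the identity $\norm{u}^2 - \norm{v}^2 = \ip{u - v}{u + v}$ with $u = \mN^{L,\theta}_\infty(x) - f(x)$ and $v = \mN^{L,\vartheta}_\infty(x) - f(x)$. By Cauchy--Schwarz the integrand is at most $\norm{\mN^{L,\theta}_\infty(x) - \mN^{L,\vartheta}_\infty(x)}\,(\norm{\mN^{L,\theta}_\infty(x)} + \norm{\mN^{L,\vartheta}_\infty(x)} + 2\norm{f(x)})$. The first factor is $\leq C_L \norm{\theta - \vartheta}$ by the previous step, and the realization norms are bounded uniformly over $\compactset$ and $x$ by \cref{lem:der:unif:bounded}; integrating against $\mu$ and using that $\mathfrak{m} = \mu([a,b]^{\ell_0}) < \infty$ together with $\int_{[a,b]^{\ell_0}}\norm{f(x)}\,\mu(\d x) < \infty$ from \cref{lem:f:integrable} produces a bound of the form $\mathscr{L}_0 \norm{\theta - \vartheta}$. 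Adding the two estimates and setting $\mathscr{L} = \mathscr{L}_0 + C_L$ then completes the argument.

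The main obstacle is the induction step for the realization function: one must keep the two distinct roles of the weights carefully apart -- the weight difference $\fw^{k+1,\theta} - \fw^{k+1,\vartheta}$ multiplied by a bounded activation, versus the fixed weight $\fw^{k+1,\vartheta}$ amplifying the propagated error from layer $k$ -- and it is precisely the uniform boundedness of the intermediate realizations from \cref{lem:der:unif:bounded} together with the $1$-Lipschitz continuity of $\mathfrak{M}_\infty$ that let both be absorbed into a single constant $C_{k+1}$. Everything else is routine bookkeeping with the finite measure $\mu$ and the integrability of $f$.
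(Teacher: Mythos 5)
Your proposal is correct, and it diverges from the paper's proof in the first half: where you establish the Lipschitz estimate for $\theta \mapsto \mN^{L,\theta}_{\infty}$ by a self-contained layer-wise induction (telescoping the difference at each layer into a bias difference, a weight difference multiplied by a bounded activation, and a fixed weight amplifying the propagated error), the paper simply imports this estimate from an external result, Beck et al.\ \cite[Theorem~2.1]{MR4440215_BJK2022}, which gives the explicit constant $L\mathbf{a}[\prod_{p=0}^{L-1}(\ell_p+1)]\sqrt{\ell_L}\max\{1,\norm{\theta},\norm{\vartheta}\}$ before compactness is invoked to make it uniform. Your route is longer but elementary and keeps the argument within the paper; the paper's route is shorter at the cost of an external citation. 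The second half of your argument --- the identity $\norm{u}^2-\norm{v}^2=\ip{u-v}{u+v}$, Cauchy--Schwarz, uniform boundedness of the realizations over $\compactset\times[a,b]^{\ell_0}$, finiteness of $\mathfrak{m}$, and \cref{lem:f:integrable} --- is essentially identical to the paper's, which handles the boundedness of $\norm{\mN^{L,\theta}_{\infty}(y)}$ via $\mN^{L,0}_{\infty}=0$ and the already-established realization Lipschitz bound rather than citing \cref{lem:der:unif:bounded}. One small imprecision: \cref{lem:der:unif:bounded} as stated takes suprema over $r\in[1,\infty)$ only, so the uniform bound on $\abs{\mN^{k,\theta}_{\infty,i}(x)}$ that you invoke is not literally covered by it; it follows either by repeating the same induction with $\abs{\R_{\infty}(y)}\leq\abs{y}$ or by joint continuity of $(\theta,x)\mapsto\mN^{k,\theta}_{\infty}(x)$ on the compact set $\compactset\times[a,b]^{\ell_0}$, so this is a citation nit rather than a gap.
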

\begin{proof}[Proof of \cref{lem:loc_lip}]
\Nobs that, e.g., Beck et al.~\cite[Theorem 2.1]{MR4440215_BJK2022}
(applied with 
$ d \curvearrowleft \fd $, 
$ 
  l = (l_0, l_1, \ldots, l_L) \allowbreak \curvearrowleft ( \ell_0, \ell_1, \ldots, \ell_L) 
$
in the notation of \cite[Theorem 2.1]{MR4440215_BJK2022}) 
implies that for all $ \theta, \vartheta \in \compactset $
it holds that
\begin{equation}
\label{N_loc_lip1}
  \sup\nolimits_{ x \in [a,b]^{ \ell_0 } } 
  \norm{ 
    \mN_{ \infty }^{ L, \theta }( x ) - 
    \mN_{ \infty }^{ L, \vartheta }( x ) 
  }
\leq 
  L \mathbf{a} 
  \bigl[
    \textstyle\prod\nolimits_{ p = 0 }^{ L - 1 } ( \ell_p + 1 ) 
  \bigr]
  \sqrt{ \ell_L } 
  \bigl( 
    \max\{ 1, \norm{\vartheta}, \norm{\theta } \} 
  \bigr) 
  \norm{ \theta - \vartheta } .
\end{equation} 
\Moreover the fact that $ \compactset $ is compact
ensures that there exists $ \kappa \in [1, \infty) $
which satisfies for all $ \theta \in \compactset $ that
\begin{equation}
\label{phi_bd_on_cpt}
  \norm{ \theta } 
  \leq \kappa 
  < \infty 
  .  
\end{equation}
\Nobs that \cref{N_loc_lip1,phi_bd_on_cpt} 
demonstrate that 
there exists $ \mathscr{L} \in \Reals $ which satisfies
for all $ \theta, \vartheta \in \compactset $ that
\begin{equation}
\label{N_loc_lip2}
  \sup\nolimits_{ x \in [a,b]^{ \ell_0 } } 
  \norm{
    \mN_{ \infty }^{ L, \theta }( x ) 
    - 
    \mN_{ \infty }^{ L, \vartheta }( x ) 
  }
  \leq
  \mathscr{L} \norm{ \theta - \vartheta } .
\end{equation}
Combining this with the Cauchy-Schwarz inequality 
shows for all $ \theta, \vartheta \in \compactset $ that
\begin{equation}
\label{L_loc_lip1}
\begin{split}
&
  \abs{ \mL_{ \infty }( \theta ) - \mL_{ \infty }( \vartheta ) } 
\\
&
  = 
  \abs*{ 
    \br[\bigg]{ 
      \int_{ [a,b]^{ \ell_0 } 
      } 
      \norm{
        \mN_{ \infty }^{ L, \theta }( x ) - f(x) 
      }^2
      \,
      \mu( \d x)  } 
      - 
      \br[\bigg]{ 
        \int_{ [a,b]^{ \ell_0 } } 
        \norm{\mN_{ \infty }^{L, \vartheta}(x)-f(x)}^2 \, \mu( \d x ) } } 
\\
&
  \leq \int_{[a,b]^{ \ell_0 } } \abs[\big]{ \norm{\mN_{ \infty }^{L, \theta }(x)-f(x)}^2 
    -\norm{\mN_{ \infty }^{L, \vartheta}(x)-f(x)}^2 } \, \mu( \d x) 
\\
&
  = \int_{[a,b]^{ \ell_0 } } \abs[\big]{ \ip{\mN_{ \infty }^{L, \theta }(x)-\mN_{ \infty }^{L, \vartheta}(x) }{ \mN_{ \infty }^{L, \theta }(x)+\mN_{ \infty }^{L, \vartheta}(x)-2f(x)} } 
  \, \mu( \d x ) 
\\
& 
  \le \int_{[a,b]^{ \ell_0 } } \norm{\mN_{ \infty }^{L, \theta }(x)-\mN_{ \infty }^{L, \vartheta}(x)} 
  \norm{\mN_{ \infty }^{L, \theta }(x)+\mN_{ \infty }^{L, \vartheta}(x)-2f(x)} 
  \, \mu( \d x ) 
\\
&
  \leq\mathscr{L} \norm{\theta-\vartheta}
    \br[\bigg]{ \int_{[a,b]^{ \ell_0 } } 
    \norm{\mN_{ \infty }^{L, \theta }(x)+\mN_{ \infty }^{L, \vartheta}(x)-2f(x)} 
  \, \mu( \d x ) } .
\end{split}
\end{equation}
  This, \cref{phi_bd_on_cpt}, 
  \cref{N_loc_lip2},
  and the fact that for all $ x \in [a,b]^{ \ell_0 } $
  it holds that $ \mN_{ \infty }^{L,0}(x)=0$ 
  imply that for all $ \theta, \vartheta \in \compactset $
  we have that
  \begin{equation}
  \label{L_loc_lip2}
  \begin{split}
    &\abs{\mL_{ \infty }( \theta)-\mL_{ \infty }( \vartheta)}
    \leq\mathscr{L} \norm{\theta-\vartheta}
    \br[\bigg]{ \int_{[a,b]^{ \ell_0 } } 
    \norm{\mN_{ \infty }^{L, \theta }(x)+\mN_{ \infty }^{L, \vartheta}(x)} + \norm{2f(x)} \, \mu( \d x) } \\
    &\leq
    \mathscr{L} \norm{\theta-\vartheta}
    \br[\bigg]{ \mathfrak{m}
    \pr*{ \sup\nolimits_{y \in [a,b]^{ \ell_0 } }
    [\norm{\mN_{ \infty }^{L, \theta }(y)}
    +\norm{\mN_{ \infty }^{L, \vartheta}(y)}] }
    +2\int_{[a,b]^{ \ell_0 } } \norm{f(x)} \, \mu( \d x) }
    \\
    &=
    \mathscr{L} \norm{\theta-\vartheta}
    \bigg[ \mathfrak{m}
    \pr*{ \sup\nolimits_{y \in [a,b]^{ \ell_0 } }
    [\norm{\mN_{ \infty }^{L, \theta }(y)-\mN^{L,0}(y)}
    +\norm{\mN_{ \infty }^{L, \vartheta}(y)-\mN^{L,0}(y)}] }\\
    &\quad
    +2\int_{[a,b]^{ \ell_0 } } \norm{f(x)} \, \mu( \d x)\bigg] \\
    &\leq
    \mathscr{L} \norm{\theta-\vartheta}
    \bigg[\mathfrak{m}
    \big(
    \mathscr{L} \norm{\theta }
    +\mathscr{L} \norm{\vartheta} \big)
    +2 \int_{[a,b]^{ \ell_0 } } \norm{f(x)} \, \mu( \d x) \bigg]\\
    &\leq
    2 \mathscr{L}
    \bigg[\kappa \mathscr{L} \mathfrak{m}
    +\int_{[a,b]^{ \ell_0 } } \norm{f(x)} \, \mu( \d x) \bigg]
    \norm{\theta-\vartheta}.
\end{split}
\end{equation}
Combining this, \cref{lem:f:integrable}, and \cref{N_loc_lip2} 
establishes \cref{loc_lip}.
The proof of \cref{lem:loc_lip} is thus complete.
\end{proof}

\subsection{Upper estimates for the norm of the 
generalized gradients of the risk function}
\label{ssec:upper_estimates}

\begin{theorem}
\label{prop:G_upper_estimate}
Assume \cref{main_setting}, 
let $ \theta \in \Reals^{ \fd } $,
for every $ k \in \N $,
$ i \in \cu{ 1, \dots, \ell_k } $
let $ Q_{ k, i } \in \Rr $ satisfy 
$
  Q_{ k, i } = 
  \abs{ \fb^{ k, \theta }_i }^2 
  + 
  \sum_{ j = 1 }^{ \ell_{ k - 1 } } 
  \abs{ \fw^{ k, \theta }_{ i, j } }^2 
$,
and let $ \mQ_k \in \Rr $,
$ k \in \N_0 $,
satisfy for all
$ k \in \N $
that
$ \mQ_0 = 1 $
and 
$ \mQ_k = 1 + \sum_{ i = 1 }^{ \ell_k } Q_{ k, i }  
$. 
Then
\begin{enumerate}[label=(\roman*)]
\item
\label{it:R_up}
it holds for all 
$ k \in \cu{ 1, \dots, L } $, 
$ i \in \{ 1, \dots, \ell_k \} $, 
$ x \in [a,b]^{ \ell_0 } $
that
\begin{equation}
\label{R_up}
  \abs{
    \R_{ \infty }( \mN^{ k, \theta }_{ \infty, i }( x ) ) 
  }^2
\leq
  \abs{ \mN^{ k, \theta }_{ \infty, i }( x ) }^2
\leq 
  \mathbf{a}^2 
  Q_{ k, i } 
  \bigl[ 
    \textstyle\prod_{ p = 0 }^{ k - 1 } 
    \pr[\big]{ 
      ( \ell_p + 1 ) \mQ_p 
    }
  \bigr] 
  ,
\end{equation}
\item 
  \label{it:Q_bound}
  it holds for all $K \in \cu{1, \ldots, L }$ that $\prod_{p=0}^K \mQ_p \le ( \norm{\theta}^2 + 1 ) ^K$,
  \item
  \label{it:sum_w_up}
  it holds for all 
  $K\in \{ 1, \ldots, L\} $, $ k \in \{ 1, \ldots,K\} $,
  $ i \in \{ 1, \ldots, \ell_k \} $
  that 
    \begin{equation}
    \label{sum_w_up}
     \sum_{\substack{v_k,v_{k+1}, \ldots,v_K\in \enne, \\\forall w\in \enne\cap[k,K]\colon v_w\leq\ell_w} }
    \Big[\indicator{ \{ i \} }(v_k)\Big]
    \Big[\textstyle{\prod}_{n={k+1} }^{K}
    \abs{\fw^{n, \theta }_{v_n, v_{n-1} }}^2
    \Big]
    \leq \norm{\theta }^{2(K-k)},   
    \end{equation} 
  and
  \item
  \label{it:G_upper_estimate}
  it holds that
  \begin{equation}\label{G_upper_estimate}
   \norm{ \mG ( \theta ) } ^2
   \le 4 L  \mathfrak{m} \mathbf{a}^2  \br[\big]{ \textstyle\prod_{p=0}^L ( \ell_p + 1 ) } 
   ( \norm{\theta}^2 +1)^{L - 1 } 
   \mL_{ \infty } ( \theta ) .
  \end{equation}
  \end{enumerate}
\end{theorem}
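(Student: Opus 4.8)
The plan is to prove the four items successively, feeding each into the next, and to deduce the main estimate \cref{G_upper_estimate} from the explicit gradient representations in \cref{prop:G}. For \cref{R_up} the first inequality is immediate from $\R_{\infty}(y)=\max\{y,0\}$, and I would prove the second by induction on $k$. In the base case $k=1$, \cref{it:N_long1} of \cref{lem:properties1} gives $\mN^{1,\theta}_{\infty,i}(x)=\fb^{1,\theta}_i+\sum_{j=1}^{\ell_0}\fw^{1,\theta}_{i,j}x_j$, so Cauchy--Schwarz together with $|x_j|\le\mathbf{a}$ and $\mathbf{a}\ge1$ (whence $1+\ell_0\mathbf{a}^2\le(\ell_0+1)\mathbf{a}^2$) yields the claim. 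For the induction step, \cref{it:N_long2} writes $\mN^{k+1,\theta}_{\infty,i}(x)=\fb^{k+1,\theta}_i+\sum_{j=1}^{\ell_k}\fw^{k+1,\theta}_{i,j}\R_{\infty}(\mN^{k,\theta}_{\infty,j}(x))$, and Cauchy--Schwarz bounds its square by $Q_{k+1,i}(1+\sum_{j}|\R_{\infty}(\mN^{k,\theta}_{\infty,j}(x))|^2)$; the first inequality, the induction hypothesis, and $\sum_j Q_{k,j}=\mQ_k-1$ then give $1+\sum_j|\mN^{k,\theta}_{\infty,j}(x)|^2\le\mathbf{a}^2\mQ_k\prod_{p=0}^{k-1}((\ell_p+1)\mQ_p)$ (here $\mathbf{a}^2\prod\ge1$ absorbs the additive $1$), and since the target for step $k+1$ carries an extra harmless factor $\ell_k+1\ge1$, this closes the induction.

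Item \cref{it:Q_bound} is elementary: if $\theta^{(k)}$ denotes the block of layer-$k$ parameters, then $\mQ_k=1+\|\theta^{(k)}\|^2\le1+\|\theta\|^2$ because $\|\theta^{(k)}\|^2\le\|\theta\|^2$, and multiplying these $K$ factors (with $\mQ_0=1$) gives $\prod_{p=0}^K\mQ_p\le(\|\theta\|^2+1)^K$. For \cref{sum_w_up} I would fix $k,i$ (the indicator forces $v_k=i$) and induct on $K\ge k$: peeling off the outermost index via the bound $\sum_{v_K=1}^{\ell_K}|\fw^{K,\theta}_{v_K,v_{K-1}}|^2\le\|\theta\|^2$ (the squared norm of one column of the weight matrix $\fw^{K,\theta}$ is at most $\|\theta\|^2$) reduces the path-sum with top index $K$ to the one with top index $K-1$, and since the base case $K=k$ is the empty product $1$, iteration yields the bound $\|\theta\|^{2(K-k)}$.

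For the main estimate \cref{G_upper_estimate} I start from \cref{G_w,G_b}. Fixing a layer $k$, I abbreviate the backpropagation factor $D_{k,i}(x)=\sum_{v_{k+1},\dots,v_L}[\mN^{L,\theta}_{\infty,v_L}(x)-f_{v_L}(x)]\prod_{n=k+1}^{L}(\fw^{n,\theta}_{v_n,v_{n-1}}\indicator{\mX^{n-1,\theta}_{v_{n-1}}}(x))$ (with $v_k=i$) and the input factor $y_{k,j}(x)$, equal to $\R_{\infty}(\mN^{k-1,\theta}_{\infty,j}(x))$ when $k>1$ and to $x_j$ when $k=1$, so that the weight- and bias-components are $\int 2\,y_{k,j}D_{k,i}\,\mu(\d x)$ and $\int 2\,D_{k,i}\,\mu(\d x)$. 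Applying Cauchy--Schwarz in $L^2(\mu)$ (total mass $\mathfrak{m}$) to each component, summing the squares over the two parameter types of layer $k$, and \emph{merging} the bias term's constant $1$ with the weight term's $\sum_j|y_{k,j}|^2$ gives $\sum_{i,j}|\int 2 y_{k,j}D_{k,i}\,\mu(\d x)|^2+\sum_i|\int 2 D_{k,i}\,\mu(\d x)|^2\le 4\mathfrak{m}\int(\sum_i|D_{k,i}|^2)(1+\sum_j|y_{k,j}|^2)\,\mu(\d x)$; this merging is what keeps the final constant at $4L$ rather than $8L$. The bracket $1+\sum_j|y_{k,j}(x)|^2$ is precisely the quantity bounded en route to \cref{it:R_up}, namely by $\mathbf{a}^2\prod_{p=0}^{k-1}((\ell_p+1)\mQ_p)$.

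It then remains to control $\int\sum_i|D_{k,i}|^2\,\mu(\d x)$. Splitting off the terminal index $v_L$ by Cauchy--Schwarz gives $|D_{k,i}(x)|^2\le\|\mN^{L,\theta}_{\infty}(x)-f(x)\|^2\sum_{v_L}|C_{k,i,v_L}(x)|^2$, where $C_{k,i,v_L}$ is the remaining weight/indicator path-sum; bounding the indicators by $1$, applying Cauchy--Schwarz over the free inner indices (which costs the factor $\prod_{n=k+1}^{L-1}\ell_n$), summing over $i$ and $v_L$, and invoking \cref{it:sum_w_up} yields $\int\sum_i|D_{k,i}|^2\,\mu(\d x)\le(\prod_{n=k}^{L-1}\ell_n)\|\theta\|^{2(L-k)}\mL_{\infty}(\theta)$. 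Assembling the two displays, the layer-$k$ contribution is at most $4\mathfrak{m}\mathbf{a}^2(\prod_{p=0}^{k-1}((\ell_p+1)\mQ_p))(\prod_{n=k}^{L-1}\ell_n)\|\theta\|^{2(L-k)}\mL_{\infty}(\theta)$; now \cref{it:Q_bound} bounds $\prod_{p=0}^{k-1}\mQ_p\,\|\theta\|^{2(L-k)}\le(\|\theta\|^2+1)^{L-1}$, the dimension factors telescope into $\prod_{p=0}^{L-1}(\ell_p+1)\le\prod_{p=0}^{L}(\ell_p+1)$, and summing over $k\in\{1,\dots,L\}$ produces the factor $L$ and the claimed bound \cref{G_upper_estimate}. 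I expect the principal obstacle to be the bookkeeping in this final assembly: tracking exactly which Cauchy--Schwarz step contributes which dimension factor, and confirming that the weight/bias merging together with the telescoping keeps the constant at precisely $4L\mathfrak{m}\mathbf{a}^2\prod_{p=0}^L(\ell_p+1)$.
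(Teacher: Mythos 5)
Your proposal is correct and follows essentially the same route as the paper: induction for \cref{it:R_up,it:sum_w_up}, the same elementary observation for \cref{it:Q_bound}, and for \cref{it:G_upper_estimate} the same layer-by-layer decomposition built on the representations \cref{G_w,G_b}, a Cauchy--Schwarz/Jensen step in $L^2(\mu)$, \cref{it:R_up} for the input factors, and \cref{it:sum_w_up} for the weight path-sums. The only (harmless) deviations are organizational -- you split off the residual via Cauchy--Schwarz over the output index $v_L$ pointwise, whereas the paper first extracts the $x$-independent path-sum $W_{k,i}$ of absolute weights and then applies Jensen -- and both yield the stated constant.
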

\begin{proof}[Proof of \cref{prop:G_upper_estimate}]
We first prove \cref{it:R_up} by induction on 
$ k \in \cu{ 1, \ldots, L } $. 
\Nobs that \cref{N}, 
the fact that for all 
$ x \in \Reals $ it holds that $ \abs{ \R_{ \infty } ( x ) } = | \max\{ x, 0 \} | \leq \abs{x} $,
and the Cauchy-Schwarz inequality
ensure that
for all
$ i \in \{ 1, \ldots, \ell_1 \} $,
$ x = (x_1, \dots, x_{\ell_0 } ) \in [a,b]^{ \ell_0 } $ 
it holds that
\begin{equation}
\label{R_1_up}
\begin{split}
&
  \abs{
    \R_{ \infty }( \mN^{ 1, \theta }_{ \infty, i }( x ) ) 
  }^2
  \le 
  \abs{ \mN^{ 1, \theta }_{ \infty, i }( x ) }^2
  = 
  \bigl| 
    \fb^{ 1, \theta }_i + 
    \textstyle\sum_{ j = 1 }^{ \ell_0 } \fw^{ 1, \theta }_{ i, j } x_i 
  \bigr|^2 
\\
& 
  \le 
  ( \ell_0 + 1 ) 
  \bigl( 
    \abs{ \fb^{ 1, \theta }_i }^2 
    + 
    \textstyle\sum_{ j = 1 }^{ \ell_0 } 
    \abs{ \fw^{ 1, \theta }_{ i, j } }^2 
    \abs{ x_j }^2 
  \bigr)
  \le 
  \mathbf{a}^2 
  ( \ell_0 + 1 ) 
  \bigl( 
    \abs{ \fb^{ 1, \theta }_i }^2 
    + 
    \textstyle\sum_{ j = 1 }^{ \ell_0 } 
    \abs{ \fw^{ 1, \theta }_{ i, j } }^2 
  \bigr)
  =
  \mathbf{a}^2 
  ( \ell_0 + 1 ) Q_{ 1, i } 
  .
\end{split}
\end{equation}
This establishes \cref{it:R_up} in the base case $ k = 1 $.
For the induction step let $ k \in \N \cap [ 1, L) $ 
satisfy for all 
$ i \in \cu{ 1, \ldots, \ell_k } $, $ x \in [a,b]^{ \ell_0 } $
that
\begin{equation}
\label{eq:r:up:inductstep}
  \abs{
    \R_{ \infty }( 
      \mN^{ k, \theta }_{ \infty, i }( x ) 
    ) 
  }^2
\le 
  \abs{ 
    \mN^{ k, \theta }_{ \infty, i }( x ) 
  }^2
\le 
  \mathbf{a}^2 Q_{ k, i } 
  \bigl[ 
    \textstyle\prod_{ p = 0 }^{ k - 1 }  
    \pr[\big]{
      ( \ell_p + 1 ) \mQ_p 
    }
  \bigr]
  .
\end{equation}
\Nobs that \cref{eq:r:up:inductstep}, 
the fact that for all 
$ x \in \Reals $ it holds that $ \abs{ \R_{ \infty } ( x ) } = | \max\{ x, 0 \} | \leq \abs{x} $, 
the fact that for all $ p \in \N_0 $ it holds that $ \mQ_p \geq 1 $, 
and the Cauchy-Schwarz inequality demonstrate that 
for all $ i \in \cu{ 1, \ldots, \ell_{ k + 1 } } $, 
$ x = ( x_1, \ldots, x_{ \ell_0 } ) \in [a,b]^{ \ell_0 } $ 
it holds that
\begin{equation}
\begin{split}
&
  \abs{
    \R_{ \infty }( 
      \mN^{ k + 1, \theta }_{ \infty, i }( x ) 
    ) 
  }^2
\\ &
  \le 
  \abs{ 
    \mN^{ k + 1, \theta }_{ \infty, i }( x ) 
  }^2
= 
  \bigl|
    \fb^{ k + 1, \theta }_i
    + 
    \textstyle\sum_{ j = 1 }^{ \ell_k } 
    \fw^{ k + 1, \theta }_{ i, j } 
    \R_{ \infty }( \mN^{ k, \theta }_{ \infty, j }( x ) ) 
  \bigr|^2 
\\
& 
  \le 
  ( \ell_k + 1 ) 
  \pr*{ 
    \abs{ \fb^{ k + 1, \theta }_i }^2 
    + 
    \textstyle\sum_{ j = 1 }^{ \ell_k } 
    \abs{ \fw^{ k + 1, \theta }_{ i, j } }^2 
    \abs{ 
      \R_{ \infty }( \mN^{ k, \theta }_{ \infty, j }( x ) ) 
    }^2 
  } 
\\
& 
  \le 
  ( \ell_k + 1 ) 
  \pr*{ 
    \abs{ \fb^{ k + 1, \theta }_i }^2 
    +
    \textstyle\sum_{ j = 1 }^{ \ell_k } 
    \bigl[
      \abs{ \fw^{ k + 1 , \theta }_{ i, j } }^2 
      \mathbf{a}^2 
      Q_{ k, j } 
      \bigl[
        \textstyle\prod_{ p = 0 }^{ k - 1 } 
        \pr[\big]{
          ( \ell_p + 1 ) \mQ_p 
        } 
      \bigr]
    \bigr] 
  } 
\\
& 
  \le 
  ( \ell_k + 1 ) 
  \mathbf{a}^2 
  \bigl[ 
    \textstyle\prod_{ p = 0 }^{ k - 1 } 
    \pr[\big]{
      ( \ell_p + 1 ) \mQ_p 
    }
  \bigr]
  \bigl(
    \abs{ 
      \fb^{ k + 1, \theta }_i 
    }^2 
    + 
    \textstyle\sum_{ j = 1 }^{ \ell_k } 
    \abs{ 
      \fw^{ k + 1, \theta }_{ i, j } 
    }^2
  \bigr)
  \bigl(
    1 + \textstyle\sum_{ j = 1 }^{ \ell_k } Q_{ k, j } 
  \bigr)
\\
&  
  =
  \mathbf{a}^2 
  \bigl[ 
    \textstyle\prod_{ p = 0 }^k 
    \pr[\big]{
      ( \ell_p + 1 ) \mQ_p 
    }
  \bigr]
  \bigl(
    \abs{ 
      \fb^{ k + 1, \theta }_i 
    }^2 
    + 
    \textstyle\sum_{ j = 1 }^{ \ell_k } 
    \abs{ 
      \fw^{ k + 1, \theta }_{ i, j } 
    }^2
  \bigr)
  = 
  \mathbf{a}^2 
  Q_{ k + 1, i }
  \bigl[
    \textstyle\prod_{ p = 0 }^k
    \pr[\big]{
      ( \ell_p + 1 ) \mQ_p 
    } 
  \bigr] 
  .
\end{split}
\end{equation} 
Induction thus establishes \cref{it:R_up}.
\Nobs that the fact that for all $ k \in \cu{ 1, \ldots, L } $ 
it holds that $ \mQ_0 = 1 $ 
and
\begin{equation}
\textstyle
  \mQ_k 
  = 
  1 + 
  \sum_{ i = 1 }^{ \ell_k }
  Q_{ k, i }
  =
  1 
  + 
  \textstyle\sum_{ i = 1 }^{ \ell_k } 
  \bigl(
    \abs{ \fb^{ k, \theta }_i }^2 
    + 
    \textstyle\sum_{ j = 1 }^{ \ell_{ k - 1 } } 
    \abs{ \fw^{ k, \theta } _{ i, j } }^2 
  \bigr)
  \le 
  1 + \norm{ \theta }^2
\end{equation}
establishes \cref{it:Q_bound}.
We now prove \cref{it:sum_w_up} by induction on 
$ K \in \cu{ 1, \dots, L } $. 
\Nobs that the fact that for all $ i \in \{ 1, \dots, \ell_K \} $
it holds that
$
  \sum_{ v = 1 }^{ \ell_K } 
  \indicator{ \{ i \} }( v )
  = 1 
$
establishes \cref{sum_w_up} in the base case $ K = 1 $.
For the induction step let 
$  
  K \in \enne \cap [ 1 , L ) 
$ 
satisfy for all $ k \in \{ 1, \dots, K \} $, $ i \in \{ 1, \dots, \ell_k \} $ 
that
\begin{equation}
\label{sum_w_up_K}
  \sum_{
    \substack{ 
      v_k, v_{ k + 1 }, \dots, v_K \in \enne, \\
      \forall w \in \enne \cap [k, K] \colon v_w \leq \ell_w 
    } 
  }
  \Bigl[
    \indicator{  \{ i \} }( v_k ) 
  \Bigr]
  \Bigl[
    \textstyle{ \prod }_{ n = k + 1 }^K
    \abs{
      \fw^{ n, \theta }_{ v_n, v_{ n - 1 } } 
    }^2
  \Bigr]
  \leq 
  \norm{ \theta }^{ 2 (K - k) } 
  .   
\end{equation}
\Nobs that \cref{sum_w_up_K} implies that for all
$ k \in \{ 1, \dots, K + 1 \} $, 
$ i \in \{ 1, \dots, \ell_k \} $
it holds that 
\begin{equation}
\begin{split}
&
  \sum_{ 
    \substack{ 
      v_k, v_{ k + 1 }, \dots, v_{ K + 1 } \in \enne, \\ 
      \forall w \in \enne \cap [k,K+1] \colon v_w \leq \ell_w 
    } 
  }
    \Bigl[
      \indicator{  \{ i \} }( v_k )
    \Bigr]
    \Bigl[
      \textstyle{ \prod }_{ n = k + 1 }^{ K + 1 }
      \abs{ 
        \fw^{ n, \theta }_{ v_n, v_{ n - 1 } } 
      }^2
    \Bigr]
\\
&
  =
  \br*{
    \sum_{
      \substack{
        v_k, v_{ k + 1 }, \dots, v_{ K } 
        \in \enne, 
      \\
        \forall w \in \enne \cap [k,K] \colon v_w \leq \ell_w 
      } 
    }
    \Bigl[
      \indicator{ \{ i \} }( v_k ) 
    \Bigr]
    \Bigl[
      \textstyle{ \prod }_{ n = k + 1 }^K
      \abs{
        \fw^{ n, \theta }_{ v_n, v_{ n - 1 } } 
      }^2
    \Bigr]
}
  \br*{
    \sum\limits_{ v_{ K + 1 } = 1 }^{ \ell_{ K + 1 } }
    \abs{
      \fw^{ K + 1, \theta }_{ v_{ K + 1 }, v_K } 
    }^2
}
\\
&
  \leq
  \br*{
    \sum_{
      \substack{ 
        v_k, v_{ k + 1 }, \dots, v_{ K } 
        \in \enne, 
      \\
        \forall w \in \enne \cap [k,K] \colon v_w \leq \ell_w 
      } 
    }
    \Bigl[
      \indicator{  \{ i \} }( v_k ) 
    \Bigr] 
    \Bigl[
      \textstyle{ \prod }_{ n = k + 1 }^K
      \abs{
        \fw^{ n, \theta }_{ v_n, v_{ n - 1 } } 
      }^2
    \Bigr] }
  \norm{ \theta }^2 
  \leq \norm{ \theta }^{ 2 ( K + 1 - k ) } 
  .
\end{split}
\end{equation}
Induction thus establishes \cref{it:sum_w_up}.
It thus remains to prove \cref{it:G_upper_estimate}. 
For this 
assume without loss of generality that $ \mathfrak{m} > 0 $,
for every $ k \in \N_0 $ let $ \diml_k \in \N_0 $
satisfy $ \diml_k = \sum_{ n = 1 }^k \ell_n ( \ell_{ n - 1 } + 1 ) $, 
and 
for every $ k \in \cu{ 1, \dots, L } $, $ i \in \cu{ 1, \dots, \ell_k } $ 
let 
$
  W_{ k, i } \in \Rr 
$
satisfy
\begin{equation}
  W_{ k, i } = 
  \sum_{
    \substack{ 
      v_k, v_{ k + 1 }, \dots, v_L \in \enne, 
    \\
      \forall w \in \enne \cap [k, L] \colon v_w \leq \ell_w 
    } 
  }
  \Bigl[
    \indicator{  \{ i \} }( v_k ) 
  \Bigr]
  \Bigl[
    \textstyle{ \prod }_{ n = k + 1 }^L
    \abs{
      \fw^{ n, \theta }_{ v_n, v_{ n - 1 } } 
    }
  \Bigr]
  .
\end{equation}
\Nobs that \cref{G4''} of \cref{prop:G} proves 
that for all $ k \in \{ 1, \dots, L \} $, 
$ i \in \{ 1, \dots, \ell_k \} $ it holds that 
\begin{equation}
\label{G_up1a}
\begin{split}
&
  \abs{
    \mG_{ \ell_k \ell_{ k - 1 } + i + \diml_{ k - 1 } }( \theta ) 
  }^2
\\
&
  =
  \Biggl[ 
    \sum_{ 
      \substack{ 
        v_k, v_{ k + 1 }, \dots, v_L \in \enne, 
      \\
        \forall \, w \in \enne \cap [k, L] \colon 
        v_w \leq \ell_w 
      } 
    }
    \int_{ [ a, b ]^{ \ell_0 } }
    2 \,
    \Bigl[ 
      \indicator{  \{ i \} }( v_k )
    \Bigr]
    \Bigl[ 
      \mN_{ \infty, v_L }^{ L, \theta }( x ) - f_{ v_L }( x )
    \Bigr]
\\
&
\quad\cdot
    \Bigl[
      \textstyle{ \prod }_{ n = k + 1 }^L 
      \bigl(
        \fw^{ n, \theta }_{ v_n, v_{ n - 1 } }
        \indicator{  
          \mX^{ n - 1, \theta }_{ v_{n-1} } 
        }( x )
      \bigr)
    \Bigr] \, \mu( \d x) 
  \Biggr]^2
\\
&
  \leq 
  4 
  \Biggl[
    \int_{ [a,b]^{ \ell_0 } }
    \norm{
      \mN^{ L, \theta }_{ \infty }( x ) - f( x ) 
    }
    \Biggl[
      \sum_{
        \substack{ 
          v_k, v_{k+1}, \dots, v_L \in \enne, 
        \\
          \forall \, w \in \enne \cap [k, L] \colon 
          v_w \leq \ell_w 
        } 
      }
      \Bigl[
        \indicator{  \{ i \} }( v_k )
      \Bigr]
      \Bigl[ 
        \textstyle{ \prod }_{ n = k + 1 }^L
        \abs{ \fw^{ n, \theta }_{ v_n, v_{n-1} } }
      \Bigr]
    \Biggr] \, 
    \mu( \d x ) 
  \Biggr]^2
\\
&
  =
  4 
  \biggl[
    W_{ k, i }
    \displaystyle 
    \int_{
      [a,b]^{ \ell_0 } 
    }
    \norm{
      \mN^{ L, \theta }_{ \infty }( x ) - f( x ) 
    }
    \, \mu( \d x ) 
  \biggr]^2
  .
\end{split}
\end{equation}
Jensen's inequality \hence shows that 
for all $ k \in \{ 1, \dots, L \} $, 
$ i \in \{ 1, \dots, \ell_k \} $ 
it holds that
\begin{equation}
\label{G_up1b}
\begin{split}
&
  \abs{
    \mG_{ \ell_k \ell_{k-1} + i + \diml_{ k - 1 } }( \theta ) 
  }^2
\leq
  4 \mathfrak{m}^2 ( W_{ k, i } )^2
  \Biggl[
    \displaystyle\frac{ 1 }{ \mathfrak{m} }
    \int_{ [a,b]^{ \ell_0 } }
    \norm{ \mN^{ L, \theta }_{ \infty }( x ) - f( x ) }
    \, \mu( \d x )
  \Biggr]^2
\\
&
  \leq
  4 \mathfrak{m}^2 
  ( W_{ k, i } )^2
  \biggl[ 
    \displaystyle\frac{ 1 }{ \mathfrak{m} }
    \int_{ [ a, b ]^{ \ell_0 } }
    \norm{ 
      \mN^{ L, \theta }_{ \infty }( x ) - f( x ) 
    }^2
    \, \mu( \d x )
  \biggr]
    =
    4\mathfrak{m}
     ( W_{k,i} )^2
  \mL_{ \infty }( \theta ) 
  .
\end{split}
\end{equation}
\Moreover \cref{G4'} of \cref{prop:G} and
Jensen's inequality imply 
for all $ k \in \{ 1, \dots, L \} $,
$ i \in \{ 1, \dots, \ell_k \} $, 
$ j \in \{ 1, \dots, \ell_{k-1} \} $ that
\begin{equation}  
\label{G_up2a}
\begin{split}
&
  \abs{
    \mG_{ 
      (i - 1) \ell_{ k - 1 } + j + \diml_{ k - 1 } 
    }( \theta )  
  }^2
\\
&
  =
  \Biggl[ 
    \sum_{
      \substack{ 
        v_k, v_{k+1}, \dots, v_L \in \enne, 
      \\ 
        \forall w \in \enne \cap [k, L] \colon 
        v_w \leq \ell_w 
      } 
    }
    \int_{
      [a,b]^{ \ell_0 } 
    }
    2 \,
    \Bigl[
      \R_{ \infty }( 
        \mN^{ \max\{ k - 1, 1 \}, \theta }_{ \infty, j }( x ) 
      )
      \indicator{  (1, L] }( k )
      + x_j \indicator{  \{ 1 \} }( k )
    \Bigr]
\\
&  
  \quad\cdot
    \Bigl[ 
      \indicator{  \{ i \} }( v_k )
    \Bigr]
    \Bigl[ 
      \mN_{ \infty, v_L }^{ L, \theta }( x ) - f_{ v_L }( x )
    \Bigr] 
    \Bigl[
      \textstyle{ \prod }_{ n = k + 1 }^L 
      \bigl(
        \fw^{ n, \theta }_{ v_n, v_{n-1} }
        \indicator{  \mX^{ n - 1, \theta }_{ v_{n-1} } }( x )
      \bigr)
    \Bigr] \, \mu( \d x)
  \Biggr]^2
\\
&
  \leq
  4 
  \Biggl[ 
    \int_{ [a,b]^{ \ell_0 } }
      \Bigl[
        \R_{ \infty }( 
          \mN^{ \max\{ k - 1, 1 \}, \theta }_{ \infty, j }( x ) 
        )
        \indicator{  (1, L] }( k )
        + x_j \indicator{  \{ 1 \} }( k )
      \Bigr]
      \norm{ \mN_{ \infty }^{ L, \theta }( x ) - f(x) } 
      W_{ k, i } 
    \, \mu( \d x ) 
  \Biggr]^2
\\
&
  =
  4 \mathfrak{m}^2
  \Biggl[  
    W_{ k, i }
    \biggl[ 
      \frac{ 1 }{ \mathfrak{m} }
      \int_{ [a,b]^{ \ell_0 } } 
      \,
        \bigl[
          \R_{ \infty }( 
            \mN^{ \max\{ k - 1, 1 \}, \theta }_{ \infty, j }( x ) 
          )
          \indicator{  (1, L] }( k )
          +
          x_j \indicator{  \{ 1 \} }( k )
        \bigr]
        \norm{
          \mN_{ \infty }^{ L, \theta }( x ) - f( x ) 
        }
      \, \mu( \d x ) 
    \biggr]
  \Biggr]^2
\\
&
  \leq
  4 \mathfrak{m}^2
  ( W_{ k, i } )^2
  \biggl[
    \frac{ 1 }{ \mathfrak{m} }
    \int_{ [a,b]^{ \ell_0 } }
    \bigl[
      \R_{ \infty }( 
        \mN^{ \max\{ k - 1, 1 \}, \theta }_{ \infty, j }( x ) 
      )
      \indicator{  (1, L] }( k )
      +
      x_j \indicator{  \{ 1 \} }( k )
    \bigr]^2
    \norm{ 
      \mN_{ \infty }^{ L, \theta }( x ) - f( x ) 
    }^2
     \, \mu( \d x) 
  \biggr]
  .
\end{split}
\end{equation}
This ensures for all 
$ i \in \cu{ 1, \dots, \ell_1 } $, 
$ j \in \cu{ 1, \dots, \ell_0 } $
that
\begin{equation}
\label{G_w1up}
  \abs{
    \mG_{ ( i - 1 ) \ell_0 + j }( \theta ) 
  }^2 
  \le 
  4 \mathfrak{m} \mathbf{a}^2 
  ( W_{ 1, i } )^2 
  \mL_{ \infty }( \theta ) 
  .
\end{equation}
\Moreover \cref{G_up2a,it:R_up} demonstrate for all
$ k \in \N \cap ( 1 , L ] $,
$ i \in \{ 1, \ldots, \ell_k \} $, 
$ j \in \{ 1, \ldots, \ell_{k-1} \} $
that
\begin{equation}
\label{G_up2b}
\begin{split}
&
  \abs{
    \mG_{ (i - 1) \ell_{ k - 1 } + j + \diml_{ k - 1 } 
    }( \theta ) 
  }^2
\\ &
\textstyle 
  \leq
  4 \mathfrak{m}
  ( W_{ k, i } )^2
  \Bigl[ 
    \int_{ 
      [a,b]^{ \ell_0 } 
    }
    \mathbf{a}^2
    Q_{ k - 1, j } 
    \bigl[
      \textstyle{ \prod }_{ p = 0 }^{ k - 2 } 
      ( \ell_p + 1 ) \mQ_p
    \bigr]
    \norm{ 
      \mN_{ \infty }^{ L, \theta }( x ) - f( x ) 
    }^2
    \mu( \d x )
  \Bigr] 
\\
&
  \leq
  4 \mathfrak{m}
  ( W_{ k, i } )^2
  \mathbf{a}^2
  Q_{ k - 1, j } 
  \bigl[
    \textstyle{ \prod }_{ p = 0 }^{ k - 2 }
    ( \ell_p + 1 ) \mQ_p
  \bigr]
  \mL_{ \infty }( \theta ) 
  .
\end{split}
\end{equation}
\Moreover the Cauchy-Schwarz inequality and 
\cref{it:sum_w_up} imply 
for all $ k \in \{ 1, \dots, L\} $, $ i \in \{ 1, \dots, \ell_k \} $ that 
\begin{equation}
\label{sum_w_sq}
  ( W_{ k, i } )^2
\leq 
  \bigl[
    \textstyle{ \prod }_{ p = k + 1 }^L 
    \ell_p
  \bigr]
  \Biggl[
    \sum\nolimits_{ 
      \substack{ 
        v_k, v_{k+1}, \dots, v_L \in 
        \enne, 
      \\
        \forall w \in \enne \cap [k, L] \colon 
        v_w \leq \ell_w 
      } 
    }
    \Bigl[
      \indicator{  \{ i \} }( v_k ) 
    \Bigr]
    \Bigl[
      \textstyle{ \prod }_{ n = k + 1 }^L
      \abs{
        \fw^{ n, \theta }_{ v_n, v_{n-1} } 
      }^2
    \Bigr]
  \Biggr]
\le 
  \bigl[
    \textstyle{ \prod }_{ p = k + 1 }^L 
    \ell_p
  \bigr] 
  \norm{ \theta }^{ 2 ( L - k ) } 
  .
\end{equation}
Combining this with \cref{G_up1b} and
\cref{G_w1up} assures that
\begin{equation} 
\label{eq:G_up_k=1}
\begin{split}
&
  \textstyle\sum\limits_{ i = 1 }^{ \ell_1 } 
  \br*{
    \abs{
      \mG_{ \ell_1 \ell_0 + i }( \theta ) 
    }^2 
    + 
    \sum\limits_{ j = 1 }^{ \ell_0 } 
    \abs{ \mG_{ (i - 1) \ell_0 + j }( \theta ) }^2 
  } 
  \le  
  \sum\limits_{ i = 1 }^{ \ell_1 } 
  \Biggl[
    \biggl[
      4 \mathfrak{m} ( W_{ 1, i } )^2 
      + 
      \sum\limits_{ j = 1 }^{ \ell_0 } 
      4 \mathfrak{m} \mathbf{a}^2 ( W_{ 1, i } )^2 
    \biggr]
    \mL_{ \infty }( \theta ) 
  \Biggr]
\\
&
  \le 4 \mathfrak{m} 
  \bigl[ 
    \textstyle\prod_{ p = 2 }^L \ell_p 
  \bigr] 
  \norm{ \theta }^{ 2 ( L - 1 ) } 
  ( \ell_1 + \mathbf{a}^2 \ell_1 \ell_0 ) 
  \mL_{ \infty }( \theta )
\le 
  4 \mathfrak{m} \mathbf{a}^2 
  \bigl[ 
    \textstyle\prod_{ p = 0 }^L ( \ell_p + 1 ) 
  \bigr]
  \norm{ \theta }^{ 2 ( L - 1 ) } 
  \mL_{ \infty }( \theta ).
\end{split} 
\end{equation}
\Moreover \cref{G_up1b}, \cref{G_up2b}, and \cref{sum_w_sq} prove that
\begin{equation}
\label{G_ue_sum1}
\begin{split}
& 
  \textstyle\sum\limits_{ k = 2 }^L 
  \sum\limits_{ i = 1 }^{ \ell_k }
  \br*{ 
    \abs{ 
      \mG_{ 
        \ell_k \ell_{ k - 1 } + i + \diml_{ k - 1 } 
      }( \theta ) 
    }^2
    +
    \sum\limits_{ j = 1 }^{ \ell_{ k - 1 } }
    \abs{
      \mG_{ (i - 1) \ell_{ k - 1 } + j + \diml_{ k - 1 } }( \theta ) 
    }^2 
  } 
\\
&
  \leq
  4 
  \mathfrak{m} 
  \Biggl[
    \textstyle\sum\limits_{ k = 2 }^L 
    \br*{ 
      \sum\limits_{ i = 1 }^{ \ell_k } 
      \pr*{  
        ( W_{ k, i } )^2 
        +
        \mathbf{a}^2 \sum\limits_{ j = 1 }^{ \ell_{ k - 1 } } 
        ( W_{ k, i } )^2
        Q_{ k - 1, j } 
        \bigl[
          \textstyle{ \prod }_{ p = 0 }^{ k - 2 }
          ( \ell_p + 1 ) \mQ_p
        \bigr] 
      }  
    } 
  \Biggr]
  \mL_{ \infty }( \theta )
\\ 
&
  = 
  4 
  \mathfrak{m} 
  \Biggl[
    \textstyle\sum\limits_{ k = 2 }^L 
    \biggl[ 
      \sum\limits_{ i = 1 }^{ \ell_k } 
      \Bigl(
        ( W_{ k, i } )^2 
        \Bigl(
          1
          +
          \mathbf{a}^2 
          (
            \mQ_{ k - 1 } - 1
          )
          \bigl[
            \textstyle{ \prod }_{ p = 0 }^{ k - 2 }
            ( \ell_p + 1 ) \mQ_p
          \bigr] 
        \Bigr)
      \Bigr)
    \biggr]
  \Biggr]
  \mL_{ \infty }( \theta )
\\
& 
  \leq 
  4 \mathfrak{m} 
  \biggl[ 
    \textstyle\sum\limits_{ k = 2 }^L 
    \br*{ 
      \bigl[ 
        \textstyle\prod_{ p = k + 1 }^L \ell_p 
      \bigr]
      \norm{ \theta }^{ 2 ( L - k ) } 
      \ell_k 
      \bigl( 
        1 
        + 
        \mathbf{a}^2 ( \mQ_{ k - 1 } - 1 ) 
        \bigl[ 
          \prod_{ p = 0 }^{ k - 2 } 
          \bigl( 
            ( \ell_p + 1 ) \mQ_p 
          \bigr)
        \bigr]
      \bigr)
    } 
    \mL_{ \infty }( \theta ) 
  \biggr]
\\
& 
  \leq 
  4 \mathbf{a}^2 \mathfrak{m} 
  \biggl[ 
    \textstyle\sum\limits_{ k = 2 }^L 
    \br*{ 
      \bigl[ 
        \textstyle\prod_{ p = k }^L \ell_p 
      \bigr]
      \norm{ \theta }^{ 2 ( L - k ) } 
      \bigl( 
        1 
        + 
        ( \mQ_{ k - 1 } - 1 ) 
        \bigl[ 
          \prod_{ p = 0 }^{ k - 2 } 
          ( \ell_p + 1 ) 
        \bigr]
        \bigl[ 
          \prod_{ p = 0 }^{ k - 2 } 
          \mQ_p 
        \bigr]
      \bigr)
    } 
  \biggr]
  \mL_{ \infty }( \theta ) 
  .
\end{split}
\end{equation}
This, \cref{it:Q_bound}, 
and the fact that for all 
$ k \in \N $ it holds that 
$
  1 \leq \prod_{ p = 0 }^{ k - 2 } \mQ_p 
$
show that 
\begin{equation}
\begin{split}
&
  \textstyle\sum\limits_{ k = 2 }^L 
  \sum\limits_{ i = 1 }^{ \ell_k }
  \br*{ 
    \abs{ 
      \mG_{ 
        \ell_k \ell_{ k - 1 } + i + \diml_{ k - 1 } 
      }( \theta ) 
    }^2
    +
    \sum\limits_{ j = 1 }^{ \ell_{ k - 1 } }
    \abs{
      \mG_{ (i - 1) \ell_{ k - 1 } + j + \diml_{ k - 1 } }( \theta ) 
    }^2 
  } 
\\
& 
  \leq 
  4 \mathbf{a}^2 \mathfrak{m} 
  \biggl[ 
    \textstyle\sum\limits_{ k = 2 }^L 
    \br*{ 
      \bigl[ 
        \textstyle\prod_{ p = k }^L \ell_p 
      \bigr]
      \norm{ \theta }^{ 2 ( L - k ) } 
      \bigl[ 
        \prod_{ p = 0 }^{ k - 2 } 
        ( \ell_p + 1 ) 
      \bigr]
      \Bigl( 
        1 
        + 
        ( \mQ_{ k - 1 } - 1 ) 
        \bigl[ 
          \prod_{ p = 0 }^{ k - 2 } 
          \mQ_p 
        \bigr]
      \Bigr)
    } 
  \biggr]
  \mL_{ \infty }( \theta ) 
\\
& 
  \leq 
  4 \mathbf{a}^2 \mathfrak{m} 
  \bigl[ 
    \textstyle\prod_{ p = 0 }^L 
    ( \ell_p + 1 ) 
  \bigr] 
  \biggl[
    \textstyle\sum\limits_{ k = 2 }^L 
    \bigl(
      \norm{ \theta }^{ 2 ( L - k ) } 
      \bigl[
        \prod_{ p = 0 }^{ k - 1 } \mQ_p 
      \bigr]
    \bigr)
  \biggr]
  \mL_{ \infty }( \theta) 
\\
& 
  \le 
  4 \mathbf{a}^2 \mathfrak{m} 
  \bigl[
    \textstyle\prod_{ p = 0 }^L ( \ell_p + 1 ) 
  \bigr]
  \biggl[ 
    \textstyle\sum\limits_{ k = 2 }^L 
    \bigl(
      \norm{ \theta }^{ 2 ( L - k ) } 
      ( \norm{ \theta }^2 + 1 )^{ k - 1 } 
    \bigr)
  \biggr] 
  \mL_{ \infty }( \theta )
\\
& 
  \leq 
  4 \mathbf{a}^2 
  \mathfrak{m} ( L - 1 ) 
  \bigl[ 
    \textstyle\prod_{ p = 0 }^L 
    ( \ell_p + 1 ) 
  \bigr] 
  ( \norm{\theta}^2 + 1 )^{ L - 1 } 
  \mL_{ \infty }( \theta ) 
  .
\end{split}
\end{equation}
Combining this with \cref{eq:G_up_k=1} ensures that 
\begin{equation}
\label{G_ue_sum2}
\begin{split}
  \norm{ \mG( \theta ) }^2
& 
  = 
  \textstyle\sum\limits_{ k = 1 }^L 
  \sum\limits_{ i = 1 }^{ \ell_k }
  \biggl[ 
    \abs{ 
      \mG_{ \ell_k \ell_{ k - 1 } + i + \diml_{ k - 1 } }( \theta ) 
    }^2
    +
    \sum\limits_{ j = 1 }^{ \ell_{ k - 1 } }
    \abs{
      \mG_{ ( i - 1 ) \ell_{ k - 1 } + j + \diml_{ k - 1 } }( \theta ) 
    }^2 
  \biggr] 
\\
&
  \leq 
  4 \mathfrak{m} \mathbf{a}^2 
  \bigl[ 
    \textstyle\prod_{ p = 0 }^L 
    ( \ell_p + 1 ) 
  \bigr] 
  \norm{ \theta }^{ 2 ( L - 1 ) } 
  \mL_{ \infty }( \theta ) 
\\
& 
  \quad 
  + 4 \mathbf{a}^2 \mathfrak{m} ( L - 1 ) 
  \bigl[ 
    \textstyle\prod_{ p = 0 }^L ( \ell_p + 1 ) 
  \bigr] 
  ( \norm{ \theta }^2 + 1 )^{ L - 1 } 
  \mL_{ \infty }( \theta ) 
\\
& 
  \le 4 \mathbf{a}^2 \mathfrak{m} L 
  \bigl[ 
    \textstyle\prod_{ p = 0 }^L ( \ell_p + 1 ) 
  \bigr] 
  ( \norm{ \theta }^2 + 1 )^{ L - 1 } 
  \mL_{ \infty }( \theta ) .
\end{split}
\end{equation}
  This establishes \cref{it:G_upper_estimate}.
  The proof of \cref{prop:G_upper_estimate} is thus complete.
\end{proof}

\begin{cor}
\label{cor:G_up}
  Assume \cref{main_setting} and let
  $K\subseteq\Reals^{ \fd } $ be compact.
  Then
  \begin{equation}\label{sup_G_on_K}
  \sup\nolimits_{\theta \in K} \norm{\mG( \theta)}<\infty.
  \end{equation}
\end{cor}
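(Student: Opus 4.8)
The plan is to obtain \cref{sup_G_on_K} as an immediate consequence of the polynomial growth estimate for the generalized gradient established in \cref{it:G_upper_estimate} of \cref{prop:G_upper_estimate} together with the local Lipschitz continuity of the risk function provided by \cref{lem:loc_lip}. Indeed, \cref{G_upper_estimate} asserts that for all $\theta\in\Reals^{\fd}$ one has
\begin{equation}
  \norm{\mG(\theta)}^2
  \leq
  4 L \mathfrak{m}\mathbf{a}^2\br[\big]{\textstyle\prod_{p=0}^L(\ell_p+1)}
  (\norm{\theta}^2+1)^{L-1}\,\mL_{\infty}(\theta),
\end{equation}
so the right-hand side depends on $\theta$ only through the two factors $(\norm{\theta}^2+1)^{L-1}$ and $\mL_{\infty}(\theta)$, while the remaining prefactor is a finite constant determined solely by $L$, $\mathfrak{m}$, $\mathbf{a}$, and the layer dimensions $(\ell_p)_{p\in\{0,\dots,L\}}$. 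It therefore suffices to bound these two $\theta$-dependent factors uniformly over $K$.

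For the first factor I would use that, since $K\subseteq\Reals^{\fd}$ is compact and hence bounded, there exists $\kappa\in[0,\infty)$ with $\sup_{\theta\in K}\norm{\theta}\leq\kappa$, whence $\sup_{\theta\in K}(\norm{\theta}^2+1)^{L-1}\leq(\kappa^2+1)^{L-1}<\infty$. For the second factor I would invoke \cref{lem:loc_lip} applied with the compact set equal to $K$: this yields a constant $\mathscr{L}\in\Reals$ with $\abs{\mL_{\infty}(\theta)-\mL_{\infty}(\vartheta)}\leq\mathscr{L}\norm{\theta-\vartheta}$ for all $\theta,\vartheta\in K$, so that, fixing any reference point $\vartheta\in K$ and using $\norm{\theta-\vartheta}\leq 2\kappa$, the risk $\mL_{\infty}$ is bounded on $K$, i.e.\ $\sup_{\theta\in K}\mL_{\infty}(\theta)<\infty$. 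Combining the two bounds with the displayed estimate then gives $\sup_{\theta\in K}\norm{\mG(\theta)}^2<\infty$, and taking square roots establishes \cref{sup_G_on_K}.

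I do not expect any genuine obstacle here, as the entire analytic substance is already carried by \cref{prop:G_upper_estimate} and \cref{lem:loc_lip}; the corollary is essentially the observation that a polynomial in $\norm{\theta}$ times the risk is controlled on compact sets. The only points meriting a brief verification are that $\mG$ is well defined on all of $\Reals^{\fd}$ (which follows from \cref{G3} of \cref{prop:G}, guaranteeing convergence of $(\nabla\mL_r)(\theta)$ as $r\to\infty$ for every $\theta$) and, should one wish to be pedantic, that the claim is trivial when $K=\emptyset$, so one may assume $K\neq\emptyset$ when fixing the reference point $\vartheta$.
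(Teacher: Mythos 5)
Your proposal is correct and follows essentially the same route as the paper: the paper's proof likewise combines the local Lipschitz continuity of $\mL_{\infty}$ from \cref{lem:loc_lip} (to bound $\sup_{\theta\in K}\mL_{\infty}(\theta)$) with the polynomial growth estimate in \cref{it:G_upper_estimate} of \cref{prop:G_upper_estimate}. Your additional remarks on bounding $(\norm{\theta}^2+1)^{L-1}$ via compactness and on the well-definedness of $\mG$ are correct but routine.
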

\begin{proof}[Proof of \cref{cor:G_up}]
  \Nobs that \cref{lem:loc_lip} and 
  the assumption that $K$ is compact demonstrate
  that $ \sup_{\theta \in K} \mL_{ \infty }( \theta)<\infty$.
  \Cref{it:G_upper_estimate} of \cref{prop:G_upper_estimate} \hence 
  establishes
  \cref{sup_G_on_K}.
  The proof of \cref{cor:G_up} 
  is thus complete.
\end{proof}

\subsection{Convexity properties of the risk function}
\label{ssec:risk_convex}

\begin{prop}[Non-existence of non-global local minima for convex objective functions]
\label{lem:convexity}
  Let $ \fd\in \enne $ and let
  $ \mL\colon \Reals^{ \fd }\to\Reals $ satisfy 
  for all $ \theta, \vartheta \in \Reals^{ \fd } $,
  $ \lambda\in [0,1]$ that
  $ \mL( \lambda\theta+(1-\lambda)\vartheta)
    \leq \lambda\mL( \theta)+(1-\lambda)\mL( \vartheta) $.
  Then 
  \begin{equation}\label{no_non-global_local_minima}
  \big\{\theta \in \Reals^{ \fd }\colon 
  \big( \exists\,
   \varepsilon\in (0, \infty)\colon
  \inf\nolimits_{\vartheta \in \{\varphi\in \Reals^{ \fd }\colon
  \norm{\varphi-\theta }
  \leq\varepsilon\} }
  \mL( \vartheta) = \mL( \theta)
  >\inf\nolimits_{\vartheta \in \Reals^{ \fd } } \mL( \vartheta)\big) \big\}
  =\emptyset.
  \end{equation}
\end{prop}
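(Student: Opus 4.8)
The plan is to argue by contradiction, exploiting the line segment connecting a putative non-global local minimum to a point of strictly smaller value. First I would suppose, for the sake of contradiction, that the set in \cref{no_non-global_local_minima} is nonempty, and fix an element $\theta \in \Reals^{\fd}$ together with an $\eps \in (0,\infty)$ satisfying $\inf_{\vartheta \in \{\varphi \in \Reals^{\fd} : \norm{\varphi - \theta} \le \eps\}} \mL(\vartheta) = \mL(\theta) > \inf_{\vartheta \in \Reals^{\fd}} \mL(\vartheta)$. The strict inequality $\mL(\theta) > \inf_{\vartheta \in \Reals^{\fd}} \mL(\vartheta)$ guarantees the existence of some $\psi \in \Reals^{\fd}$ with $\mL(\psi) < \mL(\theta)$; in particular $\psi \neq \theta$, so that $\norm{\psi - \theta} > 0$.

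Next I would interpolate along the segment from $\theta$ to $\psi$. For $\lambda \in [0,1]$ I set $\theta_\lambda = \lambda \psi + (1 - \lambda)\theta$ and observe that $\norm{\theta_\lambda - \theta} = \lambda \norm{\psi - \theta}$. Choosing $\lambda \in (0,1]$ small enough that $\lambda \norm{\psi - \theta} \le \eps$ — for instance $\lambda = \min\{1, \eps \, \norm{\psi - \theta}^{-1}\}$ — forces $\theta_\lambda$ into the closed ball $\{\varphi \in \Reals^{\fd} : \norm{\varphi - \theta} \le \eps\}$. The convexity hypothesis then yields $\mL(\theta_\lambda) \le \lambda \mL(\psi) + (1 - \lambda)\mL(\theta)$, and since $\mL(\psi) < \mL(\theta)$ and $\lambda > 0$, the right-hand side is strictly smaller than $\mL(\theta)$, whence $\mL(\theta_\lambda) < \mL(\theta)$.

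This produces a point $\theta_\lambda$ lying in the ball with $\mL(\theta_\lambda) < \mL(\theta)$, which contradicts the identity $\inf_{\vartheta \in \{\varphi : \norm{\varphi - \theta} \le \eps\}} \mL(\vartheta) = \mL(\theta)$; hence the set in \cref{no_non-global_local_minima} must be empty. The argument is entirely elementary and I do not expect a genuine obstacle: the only points requiring a little care are ensuring $\psi \neq \theta$, so that the interpolation parameter can be taken strictly positive while keeping $\theta_\lambda$ inside the ball, and verifying that the strict inequality survives the convex combination, both of which follow immediately from $\mL(\psi) < \mL(\theta)$.
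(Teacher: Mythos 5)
Your proof is correct and follows essentially the same route as the paper's: argue by contradiction, pick a point of strictly smaller value, move a short distance along the connecting segment so as to stay in the $\varepsilon$-ball, and use convexity to contradict the local-minimum property. If anything, your explicit choice $\lambda = \min\{1, \varepsilon \norm{\psi - \theta}^{-1}\}$ makes the "stay inside the ball" step more transparent than the paper's version, which leaves it implicit.
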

\begin{proof}[Proof of \cref{lem:convexity}]
  We prove \cref{no_non-global_local_minima}
  by contradiction.
  For this let $ \theta, \vartheta \in \Reals^{ \fd } $,
  $ \varepsilon\in (0, \infty) $ satisfy
  \begin{equation}\label{theta_local_minimum}
  \inf\nolimits_{\varphi\in \{\phi\in \Reals^{ \fd }\colon
  \norm{\phi-\theta }
  \leq\varepsilon\} }
  \mL( \varphi) = \mL( \theta)
  >\mL( \vartheta).
  \end{equation}
  \Nobs that \cref{theta_local_minimum} and the 
  fact that for all 
  $ \lambda\in [0,1]$ it holds that
  $ \mL( \lambda\vartheta+(1-\lambda)\theta)
    \leq \lambda\mL( \vartheta)+(1-\lambda)\mL( \theta) $
  imply that
  for all $ \lambda\in (0,1) $
  it holds that
  \begin{equation}\label{not_local_minima}
    \mL( \lambda\vartheta+(1-\lambda)\theta)
    \leq \lambda\mL( \vartheta)+(1-\lambda)\mL( \theta)
    <\lambda\mL( \theta)+(1-\lambda)\mL( \theta)
    =\mL( \theta).
  \end{equation}
  Combing 
  this and \cref{theta_local_minimum}
  with the fact that 
  $ \inf\nolimits_{\varphi\in \{\phi\in \Reals^{ \fd }\colon
  \norm{\phi-\theta }
  \leq\varepsilon\} }
  \mL( \varphi) = 
  \inf\nolimits_{\varphi\in \{\phi\in \Reals^{ \fd }\colon
  \norm{\phi}
  \leq\varepsilon\} }
  \mL( \theta+\varphi) $
  shows that for all
  $ \lambda\in (0,1) $
  it holds that
  \begin{equation}
    \mL( \theta+\lambda( \vartheta-\theta))
    < \inf\nolimits_{\varphi\in \{\phi\in \Reals^{ \fd }\colon
  \norm{\phi}
  \leq\varepsilon\} }
  \mL( \theta+\varphi).
  \end{equation}
  This contradiction establishes 
  \cref{no_non-global_local_minima}.
  The proof of \cref{lem:convexity}
  is thus complete.
\end{proof}

\begin{lemma}[Characterization of affine linearity]
\label{lem:aff_lin}
   Let $V$ and $W$ be $ \Reals $-vector spaces and let 
   $ \varphi\colon V\to W$
   be a function. 
   Then the following three statements are equivalent:
   \begin{enumerate}[label=(\roman*)]
     \item\label{it:conv_lin1} 
     It holds for all 
     $ \lambda\in [0,1]$, $v,w\in V$ that 
       $ \varphi( \lambda v+(1-\lambda)w)
       =\lambda\varphi(v)+(1-\lambda)\varphi(w) $.
     \item\label{it:conv_lin2} 
     It holds for all 
     $ \lambda\in (0,1) $, $v,w\in V$ that
       $ \varphi( \lambda v+(1-\lambda)w)
       =\lambda\varphi(v)+(1-\lambda)\varphi(w) $.
     \item\label{it:aff_lin} 
     It holds for all 
     $ \lambda\in \Reals $, $v,w\in V$ that
     $ \varphi( \lambda v+w)-\varphi(0)
     =\lambda( \varphi(v)-\varphi(0))
     +( \varphi(w)-\varphi(0)) $.
   \end{enumerate}
\end{lemma}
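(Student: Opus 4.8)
The plan is to establish the equivalence by proving the cyclic chain of implications \cref{it:conv_lin1} $\Rightarrow$ \cref{it:conv_lin2} $\Rightarrow$ \cref{it:aff_lin} $\Rightarrow$ \cref{it:conv_lin1}. The implication \cref{it:conv_lin1} $\Rightarrow$ \cref{it:conv_lin2} is immediate, since $(0,1) \subseteq [0,1]$, so the universally quantified identity in \cref{it:conv_lin1} specializes directly to the one in \cref{it:conv_lin2}. The two remaining implications are best handled by passing to the centred function $\psi \colon V \to W$ given by $\psi(v) = \varphi(v) - \varphi(0)$, which satisfies $\psi(0) = 0$ and turns the affine condition \cref{it:aff_lin} into the statement that $\psi$ is $\Reals$-linear, that is, $\psi(\lambda v + w) = \lambda \psi(v) + \psi(w)$ for all $\lambda \in \Reals$ and $v, w \in V$.

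For the implication \cref{it:conv_lin2} $\Rightarrow$ \cref{it:aff_lin}, I would first note that subtracting the constant $\varphi(0)$ from both sides of the identity in \cref{it:conv_lin2} (using $\lambda + (1-\lambda) = 1$) shows that $\psi(\lambda v + (1-\lambda) w) = \lambda \psi(v) + (1-\lambda) \psi(w)$ for all $\lambda \in (0,1)$ and $v, w \in V$. From here I would bootstrap to full linearity in several short steps. Taking $w = 0$, $\lambda = \tfrac12$, and $v$ replaced by $2v$ yields $\psi(2v) = 2\psi(v)$; combining this with the case $\lambda = \tfrac12$ applied to the points $2v$ and $2w$ gives additivity $\psi(v+w) = \psi(v) + \psi(w)$. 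The case $w = 0$ gives $\psi(\lambda v) = \lambda \psi(v)$ for $\lambda \in (0,1)$, which extends to $\lambda > 1$ by applying the latter with $\lambda^{-1} \in (0,1)$ to the vector $\lambda v$, and to $\lambda < 0$ via $\psi(-v) = -\psi(v)$, which itself follows from additivity together with $\psi(0) = 0$. Assembling homogeneity with additivity then yields $\psi(\lambda v + w) = \lambda \psi(v) + \psi(w)$ for all $\lambda \in \Reals$, which is precisely \cref{it:aff_lin}.

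The implication \cref{it:aff_lin} $\Rightarrow$ \cref{it:conv_lin1} is then a direct computation with the linear map $\psi$: for $\lambda \in [0,1]$ I would write $\psi(\lambda v + (1-\lambda) w) = \lambda \psi(v) + \psi((1-\lambda) w)$ using the additive-homogeneous identity, then $\psi((1-\lambda) w) = (1-\lambda)\psi(w)$ using homogeneity, so that re-adding $\varphi(0)$ recovers $\varphi(\lambda v + (1-\lambda) w) = \lambda \varphi(v) + (1-\lambda)\varphi(w)$. The only genuinely delicate part is the middle implication \cref{it:conv_lin2} $\Rightarrow$ \cref{it:aff_lin}: the hypothesis supplies the linear-combination identity only for coefficients in the open interval $(0,1)$, and the main obstacle is to leverage this restricted information to obtain unrestricted additivity and $\Reals$-homogeneity (in particular for scalars $\geq 1$ and for negative scalars), which is exactly what the chain of reciprocal- and reflection-type substitutions sketched above is designed to accomplish.
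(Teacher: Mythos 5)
Your proposal is correct and follows essentially the same route as the paper's proof: the trivial equivalence of the closed- and open-interval conditions, and then the same midpoint-doubling, reciprocal-scaling, and reflection substitutions to bootstrap the convex-combination identity up to full affine linearity. Working with the centred function $\psi(v)=\varphi(v)-\varphi(0)$ and arranging the implications as a cycle rather than as two separate equivalences are purely cosmetic differences; the underlying computations coincide with those in the paper.
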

\begin{proof}[Proof of \cref{lem:aff_lin}]
  \Nobs that the fact that for
  all $ \lambda\in \{0,1\} $,
  $v,w\in V$ 
  it  holds that 
  $ \varphi( \lambda v+(1-\lambda)w)
       =\lambda\varphi(v)+(1-\lambda)\varphi(w) $
  establishes that
  (\cref{it:conv_lin1}
  $ \leftrightarrow$ \cref{it:conv_lin2}).
  We now prove that (\cref{it:aff_lin}
  $ \rightarrow$ \cref{it:conv_lin1}). 
  \Nobs that \cref{it:aff_lin} ensures that
  for all $ \lambda\in [0,1]$, $v,w\in V$
  it holds that
  \begin{equation}
  \begin{split}
    &\varphi( \lambda v+(1-\lambda)w)
    =\lambda( \varphi(v)-\varphi(0))+\varphi((1-\lambda)w)\\
    &=\lambda( \varphi(v)-\varphi(0))+(1-\lambda)( \varphi(w)-\varphi(0))+\varphi(0)
     =\lambda\varphi(v)+(1-\lambda)\varphi(w).
  \end{split}
  \end{equation}
  This establishes that (\cref{it:aff_lin} $ \rightarrow$ \cref{it:conv_lin1}).
  We now prove that (\cref{it:conv_lin1}
  $ \rightarrow$ \cref{it:aff_lin}).
  \Nobs that \cref{it:conv_lin1} 
  implies that
  for all  
  $ \lambda\in [0,1]$,
  $v\in V$ it holds that
  \begin{equation}\label{phi_factor1}
    \varphi( \lambda v)
    =\varphi( \lambda v+(1-\lambda)0)
    =\lambda( \varphi(v))+(1-\lambda)\varphi(0)
    =\lambda( \varphi(v)-\varphi(0))+\varphi(0).
  \end{equation}
  \Moreover 
  \cref{it:conv_lin1} 
  shows that for all
  $ \lambda\in (1, \infty) $, $v\in V$
  it holds that 
  \begin{equation}\label{phi_factor2a}
    \textstyle{
    \lambda\varphi(v)
    =\lambda\varphi( \frac{1}{\lambda} \lambda v
    +(1-\frac{1}{\lambda} )0)
    =\lambda\frac{1}{\lambda} \varphi( \lambda v)
    +\lambda(1-\frac{1}{\lambda} )\varphi(0)
    =\varphi( \lambda v)-(1-\lambda)\varphi(0).}
  \end{equation}
  This and \cref{phi_factor1} 
  prove that for all $ \lambda\in [0, \infty) $, $v\in V$
  it holds that
  \begin{equation}\label{phi_factor2}
    \varphi( \lambda v)
    =\lambda( \varphi(v)-\varphi(0))+\varphi(0).
  \end{equation}
  Combining this with \cref{it:conv_lin1}
  ensures that for all 
  $v\in V$
  it holds that
  \begin{equation}\label{phi_0 }
  \begin{split}    
    &0=\varphi(0)-\varphi(0)
    =\textstyle{\varphi( \frac{1}{2}2v-\frac{1}{2}2v)-\varphi(0)
    =\frac{1}{2} \varphi(2v)+\frac{1}{2} \varphi(-2v)-\varphi(0)} \\
    &=\textstyle{
    \frac{1}{2}2( \varphi(v)-\varphi(0))+\frac{1}{2} \varphi(0)
    +\frac{1}{2}2( \varphi(-v)-\varphi(0))+\frac{1}{2} \varphi(0)
    -\varphi(0)
    =\varphi(v)+\varphi(-v)-2\varphi(0)}.
  \end{split}     
  \end{equation}
  This
  and \cref{phi_factor2} show that for all
  $ \lambda\in (-\infty,0) $, $v\in V$
  it holds that
  \begin{equation}\label{phi_factor3}
  \begin{split}
    &\varphi( \lambda v)-\varphi(0)
    =-\varphi(-\lambda v)+\varphi(0)
    =-( \varphi(-\lambda v)-\varphi(0))\\
    &=-(-\lambda( \varphi(v)-\varphi(0))
    +\varphi(0)-\varphi(0))
    =\lambda( \varphi(v)-\varphi(0)).
  \end{split}
  \end{equation} 
  Combining this with \cref{phi_factor2} 
  proves that 
  for all $ \lambda\in \Reals $, $v\in V$
  it holds that
  \begin{equation}\label{phi_factor4}
    \varphi( \lambda v)=\lambda( \varphi(v)-\varphi(0))+\varphi(0).
  \end{equation}
  \Hence 
  for all $v,w\in V$
  that
  \begin{equation}
  \begin{split}
    &\varphi(v+w)
    =\textstyle{\varphi( \frac{1}{2}2v+\frac{1}{2}2w)
    =\frac{1}{2} \varphi(2v)+\frac{1}{2} \varphi(2w)} \\
    &=\textstyle{\frac{1}{2}2( \varphi(v)-\varphi(0))+\frac{1}{2} \varphi(0)+\frac{1}{2}2( \varphi(w)-\varphi(0))+\frac{1}{2} \varphi(0)
    =\varphi(v)+\varphi(w)-\varphi(0).}
  \end{split}
  \end{equation}  
  This and \cref{phi_factor4} 
  demonstrate that for all 
  $ \lambda\in \Reals $, $v,w\in V$
  it holds that
  \begin{equation}
  \begin{split}
    &\varphi( \lambda v +w)-\varphi(0)
    =\varphi( \lambda v)+\varphi(w)-2\varphi(0)
    =\lambda( \varphi(v)-\varphi(0))+\varphi(0)+\varphi(w)-2\varphi(0)\\
    &=\lambda( \varphi(v)-\varphi(0))+( \varphi(w)-\varphi(0)).
  \end{split}  
  \end{equation}
  This establishes that (\cref{it:conv_lin1}
  $ \rightarrow$ \cref{it:aff_lin}).
  The proof of \cref{lem:aff_lin}
  is thus complete.
\end{proof}

\begin{proposition}[Convexity with respect to the last affine linear transformation]
\label{prop:outer_convexity}
Assume \cref{main_setting},
let $ \delta \in \N_0 $ satisfy 
$
  \delta = 
  \sum_{ k = 1 }^{ L - 1 } 
  \ell_k ( \ell_{ k - 1 } + 1 ) 
$,
let 
$ 
  \theta = 
  \allowbreak 
  ( \theta_j )_{ 
    j \in \enne 
    \cap [ 0, \delta ] 
  }
    \colon \enne \cap [ 0, \delta ]
  \to \Reals 
$
be a function,
and let 
$ \mathbb{L} \colon \Reals^{ \ell_L( \ell_{L-1} + 1)} \to \Reals $
  satisfy for all
  $v= (v_1, \ldots,v_{\ell_L( \ell_{L-1} + 1)} ) \in \Reals^{ \ell_L( \ell_{L-1} + 1)} $
  that
  \begin{equation}\label{bbL}
  \mathbb{L}(v)=
    \mL_{ \infty }\big( 
      \theta_1, \theta_2, \ldots, \theta_{ \delta },
      v_1, v_2, \ldots, v_{\ell_L( \ell_{L-1} + 1)} 
    \big) .
  \end{equation}
  Then 
  it holds for all 
  $v, w \in \Reals^{ \ell_L( \ell_{L-1} + 1)} $,
  $ \lambda\in [0,1]$
that
\begin{equation}
\label{eq:convexity_to_proof}
  \mathbb{L}( \lambda v+(1-\lambda)w)
  \leq \lambda\mathbb{L}(v)+(1-\lambda)\mathbb{L}(w)
  .
\end{equation}
\end{proposition}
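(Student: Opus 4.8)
The plan is to exploit the fact that the free variable $v=(v_1,\dots,v_{\ell_L(\ell_{L-1}+1)})$ encodes precisely the weight matrix $\fw^{L,\cdot}$ and the bias vector $\fb^{L,\cdot}$ of the final (the $L$-th) affine linear transformation, while the fixed coordinates $\theta_1,\dots,\theta_\delta$ encode all the parameters of the first $L-1$ affine transformations. Since the last affine map is applied \emph{after} every nonlinearity, the realization $v\mapsto\mN^{L,\cdot}_{\infty}(x)$ is, for each fixed input $x$, an affine linear function of $v$. As the squared Euclidean norm is convex, precomposing it with this affine map and then integrating against $\mu$ yields a convex function of $v$, which is exactly \cref{eq:convexity_to_proof}.

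First I would fix $v,w\in\Reals^{\ell_L(\ell_{L-1}+1)}$ and $\lambda\in[0,1]$ and, for $u\in\Reals^{\ell_L(\ell_{L-1}+1)}$, abbreviate $\vartheta_u=(\theta_1,\dots,\theta_\delta,u_1,\dots,u_{\ell_L(\ell_{L-1}+1)})\in\Reals^{\fd}$. Using $\delta=\diml_{L-1}=\sum_{k=1}^{L-1}\ell_k(\ell_{k-1}+1)$ together with the index identities \cref{wb}, I would observe that for every $k\in\{1,\dots,L-1\}$ the matrices $\fw^{k,\vartheta_u}$ and vectors $\fb^{k,\vartheta_u}$ depend only on $(\theta_1,\dots,\theta_\delta)$ and hence agree for $u\in\{v,w,\lambda v+(1-\lambda)w\}$. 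By \cref{N} this forces $\mN^{L-1,\vartheta_v}_{\infty}(x)=\mN^{L-1,\vartheta_w}_{\infty}(x)=\mN^{L-1,\vartheta_{\lambda v+(1-\lambda)w}}_{\infty}(x)$ for all $x\in[a,b]^{\ell_0}$ (when $L\ge 2$), so I may write $z(x)=\mathfrak{M}_{\infty}(\mN^{L-1,\vartheta_v}_{\infty}(x))\in\Reals^{\ell_{L-1}}$ for a vector $z(x)$ that is independent of the choice of last-layer parameters; in the degenerate case $L=1$ one has $\delta=0$ and the same role is played by $z(x)=x$.

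Next I would invoke \cref{N} to write $\mN^{L,\vartheta_u}_{\infty}(x)=\mA_L^{\vartheta_u}(z(x))=\fb^{L,\vartheta_u}+\fw^{L,\vartheta_u}z(x)$, and note once more from \cref{wb} that $u\mapsto\fw^{L,\vartheta_u}$ and $u\mapsto\fb^{L,\vartheta_u}$ are linear. Hence $u\mapsto\mN^{L,\vartheta_u}_{\infty}(x)$ is affine linear, and in particular $\mN^{L,\vartheta_{\lambda v+(1-\lambda)w}}_{\infty}(x)=\lambda\,\mN^{L,\vartheta_v}_{\infty}(x)+(1-\lambda)\,\mN^{L,\vartheta_w}_{\infty}(x)$. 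Combining this with convexity of $\Reals^{\ell_L}\ni y\mapsto\norm{y-f(x)}^2$ gives the pointwise estimate $\norm{\mN^{L,\vartheta_{\lambda v+(1-\lambda)w}}_{\infty}(x)-f(x)}^2\le\lambda\norm{\mN^{L,\vartheta_v}_{\infty}(x)-f(x)}^2+(1-\lambda)\norm{\mN^{L,\vartheta_w}_{\infty}(x)-f(x)}^2$ for every $x\in[a,b]^{\ell_0}$. Integrating this inequality over $[a,b]^{\ell_0}$ against $\mu$ and recalling \cref{L_r} and \cref{bbL} then establishes \cref{eq:convexity_to_proof}.

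The only genuinely delicate point is the index bookkeeping in the first step, namely confirming via \cref{wb} that the $\delta$ fixed coordinates are exactly the parameters of the first $L-1$ affine transformations and that $v$ feeds solely into the last one; once this is settled, the rest is the standard observation that precomposition of a convex function with an affine map is convex and that integration preserves convexity, and is therefore routine.
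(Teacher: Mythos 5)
Your proposal is correct and follows essentially the same route as the paper's proof: fix the parameters of the first $L-1$ affine transformations so that the pre-activation input to the last layer is independent of $v$, observe via \cref{wb} and \cref{N} that $v\mapsto\mN^{L,\psi(v)}_{\infty}(x)$ is affine linear, and then combine convexity of the squared norm with integration against $\mu$. The only cosmetic difference is that the paper routes the affine-linearity step through \cref{lem:aff_lin} and splits the final estimate into a triangle inequality followed by convexity of the square, whereas you invoke convexity of $y\mapsto\norm{y-f(x)}^2$ directly; both are equivalent.
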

\begin{proof}[Proof of \cref{prop:outer_convexity}]
  Throughout this proof
  let $ \psi= ( \psi_1, \ldots, \psi_{\fd} )\colon \Reals^{ \ell_L( \ell_L+1)} \to \Reals^{ \fd } $
  satisfy for all $v= (v_1, \ldots,$
  $v_{\ell_L( \ell_{L-1} + 1)} )
  \in \Reals^{ \ell_L( \ell_{L-1} + 1)} $
  that
  \begin{equation}\label{psi}
  \psi(v) = 
    ( 
      \theta_1, \theta_2, \ldots, \theta_{ \delta },
      v_1, v_2, \ldots, v_{ \ell_L ( \ell_{ L - 1 } + 1 ) } 
    )
  \end{equation}
  and
  for every $v\in \Reals^{ \ell_L( \ell_{L-1} + 1)} $
  let $ \mathscr{N}^{v} \colon \Reals^{ \ell_0 } \to \Reals^{ \ell_k} $
  satisfy for all
  $ x \in \Reals^{ \ell_0 } $
  that
  \begin{equation}\label{scrN}
  \mathscr{N}^{v}(x)
  =\mN_{ \infty }^{L, \psi(v)}(x).
  \end{equation}
  \Nobs that 
  \cref{lem:aff_lin},
  \cref{wb}, and
  \cref{psi}
  ensure that for all
  $v,w \in \Reals^{ \ell_L( \ell_{L-1} + 1)} $, 
  $ \lambda\in [0,1]$,
  $ x \in \Reals^{ \ell_{L-1} } $
  it holds that
  \begin{equation}\label{wb_lincomb}
  \begin{split}
    &\fb^{L, \psi( \lambda v +(1-\lambda)w)} + \fw^{L, \psi( \lambda v +(1-\lambda)w)}x
    =\fb^{L, \lambda\psi(v) +(1-\lambda)\psi(w)} + \fw^{L, \lambda\psi(v) +(1-\lambda)\psi(w)}x\\
    &=\lambda\fb^{L, \psi(v)} + 
    (1-\lambda)\fb^{L, \psi(w)}
    +\lambda\fw^{L, \psi(v)}x+
    (1-\lambda)\fw^{L, \psi(w)}x\\
    &=\lambda\big[\fb^{L, \psi(v)}
    +\fw^{L, \psi(v)}x\big]
    +(1-\lambda)\big[\fb^{L, \psi(w)}
    +\fw^{L, \psi(w)}x\big].
  \end{split}
  \end{equation}  
  Next \nobs that \cref{N}
  shows that for all   
  $v \in \Reals^{ \ell_L( \ell_{L-1} + 1)} $, 
  $ x \in \Reals^{ \ell_0 } $
  it holds that
  \begin{equation}\label{N_lincomb1}
    \mN_{ \infty }^{L, \psi(v)}(x)
    = \begin{cases}
      \fb^{L, \psi(v)} + \fw^{L, \psi(v)}x &\colon L=1 \\[1 ex]
      \fb^{L, \psi(v)} + \fw^{L, \psi(v)} \big( \mathfrak{M}_{ \infty }( \mN^{L-1, \psi(v)}_{ \infty }(x))\big) &\colon L>1.
    \end{cases}
  \end{equation}  
  \Hence that for all   
  $v,w \in \Reals^{ \ell_L( \ell_{L-1} + 1)} $, 
  $ \lambda\in [0,1]$,
  $ x \in \Reals^{ \ell_0 } $
  it holds that
  \begin{equation}\label{N_lincomb2a}
  \begin{split}
    &\mN_{ \infty }^{L, \psi( \lambda v +(1-\lambda)w)}(x)\\
    &= \begin{cases}
      \fb^{L, \psi( \lambda v +(1-\lambda)w)} + \fw^{L, \psi( \lambda v +(1-\lambda)w)}x &\colon L=1 \\[1 ex]
      \fb^{L, \psi( \lambda v +(1-\lambda)w)} + \fw^{L, \psi( \lambda v +(1-\lambda)w)} \big( \mathfrak{M}_{ \infty }( \mN^{L-1, \psi( \lambda v +(1-\lambda)w)}_{ \infty }(x))\big) &\colon L>1.
    \end{cases} \\
  \end{split}
  \end{equation}
  Combining this with \cref{wb_lincomb}
  implies that for all 
  $v,w \in \Reals^{ \ell_L( \ell_{L-1} + 1)} $, 
  $ \lambda\in [0,1]$,
  $ x \in \Reals^{ \ell_0 } $
  it holds that
   \begin{equation}\label{N_lincomb2b}
  \begin{split}
    &\mN_{ \infty }^{L, \psi( \lambda v +(1-\lambda)w)}(x)\\
    &= \begin{cases}
    \lambda\big[\fb^{L, \psi(v)}
    +\fw^{L, \psi(v)}x\big]
    +(1-\lambda)\big[\fb^{L, \psi(w)}
    +\fw^{L, \psi(w)}x\big] 
    & \colon L=1 \\[1 ex]
      \lambda\big[\fb^{L, \psi(v)}
    +\fw^{L, \psi(v)} \big( \mathfrak{M}_{ \infty }( \mN^{L-1, \psi( \lambda v +(1-\lambda)w)}_{ \infty }(x))\big)\big]\\
    +(1-\lambda)\big[\fb^{L, \psi(w)}
    +\fw^{L, \psi(w)} \big( \mathfrak{M}_{ \infty }( \mN^{L-1, \psi( \lambda v +(1-\lambda)w)}_{ \infty }(x))\big)\big] 
    & \colon L>1.
    \end{cases}
  \end{split}
  \end{equation}
  \Cref{it:N_short} of 
  \cref{lem:properties2},
  \cref{wb}, and
  the fact that for all 
  $ i \in \enne\cap [1, \sum_{k= 1 }^{L-1} \ell_k( \ell_{k-1} + 1)]$,
  $v,w\in \Reals^{ \ell_L( \ell_{L-1} + 1)} $ 
  it holds that
  $ \psi_i(v)=\psi_i(w) $
  \hence imply 
  that for all   
  $v,w \in \Reals^{ \ell_L( \ell_{L-1} + 1)} $, 
  $ \lambda\in [0,1]$,
  $ x \in \Reals^{ \ell_0 } $
  it holds that
  \begin{equation}\label{N_lincomb2c}
  \begin{split}  
    \mN_{ \infty }^{L, \psi( \lambda v +(1-\lambda)w)}(x)
    = \begin{cases}
    \lambda\big[\fb^{L, \psi(v)}
    +\fw^{L, \psi(v)}x\big]
    +(1-\lambda)\big[\fb^{L, \psi(w)}
    +\fw^{L, \psi(w)}x\big] 
     & \colon L=1 \\[1 ex]
      \lambda\big[\fb^{L, \psi(v)}
    +\fw^{L, \psi(v)} \big( \mathfrak{M}_{ \infty }( \mN^{L-1, \psi(v)}_{ \infty }(x))\big)\big]\\
    +(1-\lambda)\big[\fb^{L, \psi(w)}
    +\fw^{L, \psi(w)} \big( \mathfrak{M}_{ \infty }( \mN^{L-1, \psi(w)}_{ \infty }(x))\big)\big] 
    & \colon L>1.
    \end{cases}
  \end{split}
  \end{equation}
  Moreover, \nobs that
  \cref{N_lincomb1}
  ensures that for all   
  $v,w \in \Reals^{ \ell_L( \ell_{L-1} + 1)} $, 
  $ \lambda\in [0,1]$,
  $ x \in \Reals^{ \ell_0 } $
  it holds that
  \begin{equation}\label{N_lincomb3}
  \begin{split}  
    &\lambda\, \mN_{ \infty }^{L, \psi(v)}(x)
    +(1-\lambda)\mN_{ \infty }^{L, \psi(w)}(x)\\
    &= \begin{cases}
    \lambda\big[\fb^{L, \psi(v)}
    +\fw^{L, \psi(v)}x\big]
    +(1-\lambda)\big[\fb^{L, \psi(w)}
    +\fw^{L, \psi(w)}x\big] 
     & \colon L=1 \\[1 ex]
      \lambda\big[\fb^{L, \psi(v)}
    +\fw^{L, \psi(v)} \big( \mathfrak{M}_{ \infty }( \mN^{L-1, \psi(v)}_{ \infty }(x))\big)\big]\\
    +(1-\lambda)\big[\fb^{L, \psi(w)}
    +\fw^{L, \psi(w)} \big( \mathfrak{M}_{ \infty }( \mN^{L-1, \psi(w)}_{ \infty }(x))\big)\big] 
    & \colon L>1.
    \end{cases}
  \end{split}
  \end{equation}  
  Combining this with
  \cref{N_lincomb2c}
  demonstrates that for all
  $v,w \in \Reals^{ \ell_L( \ell_{L-1} + 1)} $, 
  $ \lambda\in [0,1]$,
  $ x \in \Reals^{ \ell_0 } $
  it holds that
  \begin{equation}
    \mN_{ \infty }^{L, \psi( \lambda v +(1-\lambda)w)}(x)
    = \lambda\, \mN_{ \infty }^{L, \psi(v)}(x)
    +(1-\lambda)\mN_{ \infty }^{L, \psi(w)}(x).
  \end{equation}
  This and \cref{scrN} 
  show
  that for all
  $v,w \in \Reals^{ \ell_L( \ell_{L-1} + 1)} $, 
  $ \lambda\in [0,1]$,
  $ x \in \Reals^{ \ell_0 } $ it holds that
  \begin{equation}
  \begin{split} \label{N2}
    \mathscr{N}^{ \lambda v +(1-\lambda)w}(x)
    &=\mN_{ \infty }^{L, \psi( \lambda v +(1-\lambda)w)}(x)
     =\lambda\, \mN_{ \infty }^{L, \psi(v)}(x)
    +(1-\lambda)\mN_{ \infty }^{L, \psi(w)}(x)\\
    &=\lambda\, \mathscr{N}^{v}(x)
    +(1-\lambda)\mathscr{N}^{w}(x).
  \end{split}
  \end{equation}
  Next \nobs that \cref{L_r},
  \cref{bbL}, \cref{psi},
  and \cref{scrN} ensure that 
  for all $v\in \Reals^{ \ell_L( \ell_{L-1} + 1)} $
  it holds that
  \begin{equation}
    \mathbb{L}(v)
    =\mL_{ \infty }( \psi(v))
    =\int_{[a,b]^{ \ell_0 } }
  \|\mN_{ \infty }^{L, \psi(v)}(x)-f(x)\|^2\, \mu( \d x)
    =\int_{[a,b]^{ \ell_0 } } \norm{\mathscr{N}^{v}(x)-f(x)}^2\, \mu( \d x).
  \end{equation}  
  Combining this, \cref{N2}, and
  the fact that for all $ \lambda\in [0,1]$, 
  $x,y \in \Reals $
  it holds that $ ( \lambda x+(1-\lambda)y)^2
  \leq \lambda x^2+(1-\lambda)y^2$
  shows that for all
  $v,w\in \Reals^{ \ell_L( \ell_{L-1} + 1)} $, 
  $ \lambda\in [0,1]$ it holds that
  \begin{equation}
  \begin{split}
    &\mathbb{L}( \lambda v+(1-\lambda) w)
    =\int_{[a,b]^{ \ell_0 } } \norm{\mathscr{N}^{ \lambda v +(1-\lambda)w}(x)-f(x)}^2\, \mu( \d x)\\
    &=\int_{[a,b]^{ \ell_0 } } \norm{
    \lambda( \mathscr{N}^{v}(x)-f(x))
    +(1-\lambda)( \mathscr{N}^{w}(x)-f(x))}^2\, \mu( \d x)\\
    &\leq\int_{[a,b]^{ \ell_0 } } \Big[\lambda\norm{
    \mathscr{N}^{v}(x)-f(x)}
    +(1-\lambda)\norm{\mathscr{N}^{w}(x)-f(x)} \Big]^2\, \mu( \d x)\\
    &\leq\int_{[a,b]^{ \ell_0 } } \Big[\lambda\norm{
    \mathscr{N}^{v}(x)-f(x)}^2
    +(1-\lambda)\norm{\mathscr{N}^{w}(x)-f(x)}^2\Big]\, \mu( \d x)\\
  &= \lambda\bigg[\int_{[a,b]^{ \ell_0 } } \norm{
    \mathscr{N}^{v}(x)-f(x)}^2\, \mu( \d x)\bigg]
  +(1-\lambda) \bigg[\int_{[a,b]^{ \ell_0 } }
  \norm{\mathscr{N}^{w}(x)
  -f(x)}^2\, \mu( \d x)\bigg]\\
  &= \lambda \mathbb{L}(v)+(1-\lambda)\mathbb{L}(w).
  \end{split}
\end{equation}  
This establishes \cref{eq:convexity_to_proof}.
The proof of \cref{prop:outer_convexity} is thus complete.
\end{proof}

\begin{cor}[Convexity of the risk function]
\label{cor:convexity_0}
  Assume \cref{main_setting} and assume
  $(L-1)\mathfrak{m} =0$.
  Then 
  it holds for all 
  $ \theta, \vartheta \in \Reals^{ \fd } $,
  $ \lambda\in [0,1]$ that
  \begin{equation}\label{L_convex}
  \mL_{ \infty }( \lambda\theta+(1-\lambda)\vartheta)
  \leq \lambda\mL_{ \infty }( \theta)+(1-\lambda)\mL_{ \infty }( \vartheta).
  \end{equation}
\end{cor}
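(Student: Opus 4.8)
The plan is to split the argument into the two exhaustive cases dictated by the hypothesis $(L-1)\mathfrak{m}=0$. Since $L\in\N$ ensures $L-1\in\N_0$, the product $(L-1)\mathfrak{m}$ vanishes if and only if $\mathfrak{m}=0$ or $L=1$, and I would record this dichotomy at the very start and then handle each alternative on its own.

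In the degenerate case $\mathfrak{m}=\mu([a,b]^{\ell_0})=0$ the measure $\mu$ is the zero measure. Hence \cref{L_r} yields for every $\theta\in\Reals^{\fd}$ that $\mL_{\infty}(\theta)=\int_{[a,b]^{\ell_0}}\|\mN^{L,\theta}_{\infty}(x)-f(x)\|^2\,\mu(\d x)=0$, so that $\mL_{\infty}$ is identically zero. The claimed inequality \cref{L_convex} then reduces to $0\le 0$ and holds trivially for all $\theta,\vartheta\in\Reals^{\fd}$ and $\lambda\in[0,1]$.

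In the remaining case $L=1$ I would invoke \cref{prop:outer_convexity} with $\delta=\sum_{k=1}^{L-1}\ell_k(\ell_{k-1}+1)=0$ (the empty sum). Because $\enne\cap[0,\delta]=\enne\cap[0,0]=\emptyset$, the prescribed fixed parameters form the empty function, while $\ell_L(\ell_{L-1}+1)=\ell_1(\ell_0+1)=\fd$. Consequently the auxiliary function $\mathbb{L}$ introduced in \cref{bbL} satisfies $\mathbb{L}(v)=\mL_{\infty}(v)$ for all $v\in\Reals^{\fd}$, and the conclusion \cref{eq:convexity_to_proof} of \cref{prop:outer_convexity} is precisely the desired estimate \cref{L_convex}.

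The heavy lifting has already been carried out in \cref{prop:outer_convexity}, which establishes convexity of $\mL_{\infty}$ with respect to the parameters of the last affine linear transformation; the only mild obstacle here is the bookkeeping that makes this applicable, namely verifying that for $L=1$ all $\fd$ parameters are exactly these last-layer parameters (equivalently $\delta=0$) so that $\mathbb{L}$ and $\mL_{\infty}$ coincide. Once the empty-sum and empty-function conventions are checked, both cases close immediately and the proof is complete.
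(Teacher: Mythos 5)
Your proposal is correct and follows essentially the same route as the paper: the paper also splits into the cases $\mathfrak{m}=0$ (where $\mL_{\infty}$ vanishes identically) and $\mathfrak{m}\neq 0$ (where the hypothesis forces $L=1$ and \cref{prop:outer_convexity} applies). Your additional bookkeeping that for $L=1$ one has $\delta=0$ and hence $\mathbb{L}=\mL_{\infty}$ is a detail the paper leaves implicit, but it is exactly the right justification for invoking \cref{prop:outer_convexity}.
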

\begin{proof}[Proof of \cref{cor:convexity_0}]
  Throughout this proof we distinguish between 
  the case $ \mathfrak{m} =0$
  and the case $ \mathfrak{m} \neq 0$.
  We first prove \cref{L_convex} 
  in the case 
  \begin{equation}\label{m=0}
    \mathfrak{m} =0.
  \end{equation}
  \Nobs that  \cref{L_r} and \cref{m=0} ensure
  that for all $ \theta \in \Reals^{ \fd } $
  it holds that
  \begin{equation}
    \mL_{ \infty }( \theta)
    = \int_{[a,b]^{ \ell_0 } } \norm{\mN^{L, \theta }_{ \infty }(x)-\xi}^2\, \mu( \d x)
    = 0.
  \end{equation} 
  \Hence that for all 
  $ \theta, \vartheta \in \Reals^{ \fd } $,
  $ \lambda\in [0,1]$
  it holds that
  \begin{equation}
    \mL_{ \infty }( \lambda\theta +(1-\lambda)\vartheta)
    =0
    = \lambda\mL_{ \infty }( \theta)
    +(1-\lambda)\mL_{ \infty }( \vartheta).
  \end{equation}
  This establishes \cref{L_convex} in the case 
  $ \mathfrak{m} =0$.
  Next we prove \cref{L_convex} in the case 
  \begin{equation}\label{m!=0}
    \mathfrak{m} \neq 0.
  \end{equation} 
  \Nobs that \cref{m!=0} and 
  the assumption that $(L-1)\mathfrak{m} =0$
  imply that $L=1$.
  \cref{prop:outer_convexity}
  \hence demonstrates that for all 
  $ \theta, \vartheta \in \Reals^{ \fd } $,
  $ \lambda\in [0,1]$
  it holds that
  \begin{equation}
    \mL_{ \infty }( \lambda\theta+(1-\lambda)\vartheta)
    \leq \lambda \mL_{ \infty }( \theta)+(1-\lambda)\mL_{ \infty }( \vartheta).
  \end{equation}
  This establishes \cref{L_convex} in the case 
  $ \mathfrak{m} \neq 0$.
  The proof of \cref{cor:convexity_0}
  is thus complete.
\end{proof}

\subsection{Non-convexity properties of the risk function}
\label{ssec:risk_non-convex}

\begin{prop}[Arithmetical averages in the argument of the risk function]
\label{lem:L_equality}
Assume \cref{main_setting}, 
assume $ L > 1 $, 
let $ \xi= ( \xi_1, \ldots, \xi_{\ell_L} ) \in \Reals^{ \ell_L} $, 
let $ ( \alpha_{ i,j,k} )_{ (i,j,k) \in ( \enne_0)^3} \subseteq \Reals $
satisfy for all 
$ i, j \in \enne_0 $, $ k \in \N $ that
$ 
  \alpha_{ i, j, k } = 
  i \ell_{ k - 1 } + j 
  + \sum_{ h = 1 }^{ k - 1 } \ell_h ( \ell_{ h - 1 } + 1 ) 
$,
  and 
  let $ \theta = ( \theta_1, \ldots, \theta_{ \fd }) $,
  $ \vartheta= ( \vartheta_1, \ldots, \vartheta_{ \fd })
  \in \Reals^{ \fd } $ satisfy  
  for all 
  $ i \in \{ 1, \ldots, \ell_L\} $,
  $ j \in \{ 1, \ldots, \fd\} \backslash 
  ( \cup_{k= 1 }^{ \ell_L} \{\alpha_{\ell_{L-1},1, L-1},
  \alpha_{0,1, L}, 
  \alpha_{\ell_L,k, L} \} ) $  
  that
  \begin{equation}\label{alpha1}
  \begin{gathered}
    \theta_{\alpha_{\ell_{L-1},1, L-1} }
    =\vartheta_{\alpha_{0,1, L} }
    =1,
    \qquad
   \theta_{\alpha_{\ell_L,i, L} } 
   = \vartheta_{\alpha_{\ell_L,i, L} } 
   =\xi_{ i}, \\
    \text{and } \qquad
    \theta_{\alpha_{0,1, L} }
     =\vartheta_{\alpha_{\ell_{L-1},1, L-1} }
     =\theta_j=\vartheta_j=0.
     \end{gathered}
   \end{equation} 
Then  
$
  \mL_{ \infty }( \theta ) = \mL_{ \infty }( \vartheta )
$
and
\begin{equation}\label{L_equality}
    \mL_{ \infty }\bigg( \frac{\theta+\vartheta}{2} \bigg)
  = \bigg[\frac{\mL_{ \infty }( \theta)+\mL_{ \infty }( \vartheta)}{2} \bigg]
    +\frac{\mathfrak{m} }{16}
    +\frac{1}{2} \bigg[ \xi_1 \mathfrak{m}
    -\int_{[a,b]^{ \ell_0 } }f_1(x)\, \mu( \d x)\bigg].
  \end{equation}
\end{prop}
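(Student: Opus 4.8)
The plan is to compute the three realization functions $\mN^{L,\theta}_{\infty}$, $\mN^{L,\vartheta}_{\infty}$, and $\mN^{L,(\theta+\vartheta)/2}_{\infty}$ explicitly and then expand the risk via \cref{L_r}. First I would translate the index conditions in \cref{alpha1} into conditions on weights and biases using \cref{wb}: matching the index formulas shows that $\alpha_{0,1,L}$ is the coordinate carrying $\fw^{L,\cdot}_{1,1}$, that $\alpha_{\ell_L,i,L}$ is the coordinate carrying $\fb^{L,\cdot}_i$, and that $\alpha_{\ell_{L-1},1,L-1}$ is the coordinate carrying $\fb^{L-1,\cdot}_1$. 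Hence \cref{alpha1} says that $\theta$ and $\vartheta$ vanish in every coordinate except that $\fb^{L-1,\theta}_1 = 1$ and $\fb^{L,\theta} = \xi$ for $\theta$, while $\fw^{L,\vartheta}_{1,1} = 1$ and $\fb^{L,\vartheta} = \xi$ for $\vartheta$ (in particular $\fw^{L,\theta}_{1,1} = 0$ and $\fb^{L-1,\vartheta}_1 = 0$). Consequently the average $\eta := \tfrac12(\theta+\vartheta)$ has as its only nonzero entries $\fb^{L-1,\eta}_1 = \tfrac12$, $\fw^{L,\eta}_{1,1} = \tfrac12$, and $\fb^{L,\eta} = \xi$.

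Second I would propagate these parameters through the network using the recursion \cref{N} (equivalently the entrywise formulas in \cref{lem:properties1}). Since all weight matrices and all biases in layers $1,\ldots,L-2$ vanish for each of $\theta,\vartheta,\eta$, one obtains $\mN^{k,\cdot}_{\infty}(x) = 0$ for every $x \in [a,b]^{\ell_0}$ and every $k \le L-2$. For $\theta$ the surviving bias then gives $\mN^{L-1,\theta}_{\infty}(x) = (1,0,\ldots,0)$, but because $\fw^{L,\theta} = 0$ the last affine map returns the constant $\mN^{L,\theta}_{\infty}(x) = \xi$; for $\vartheta$ one has $\mN^{L-1,\vartheta}_{\infty}(x) = 0$, and since $\mathfrak{M}_{\infty}(0) = 0$ the last map again returns $\mN^{L,\vartheta}_{\infty}(x) = \xi$. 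By \cref{L_r} both risks therefore coincide, $\mL_{\infty}(\theta) = \mL_{\infty}(\vartheta) = \int_{[a,b]^{\ell_0}} \|\xi - f(x)\|^2 \, \mu(\d x)$.

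The crux is the realization of $\eta$, where the ReLU nonlinearity produces the non-convex term. Here $\mN^{L-1,\eta}_{\infty}(x) = (\tfrac12,0,\ldots,0)$, and because $\tfrac12 > 0$ we have $\mathfrak{M}_{\infty}((\tfrac12,0,\ldots,0)) = (\tfrac12,0,\ldots,0)$, i.e.\ the activation acts as the identity on this positive coordinate rather than truncating it. The only nonzero entry $\fw^{L,\eta}_{1,1} = \tfrac12$ then multiplies this surviving $\tfrac12$, so the two ``halves'' combine multiplicatively and yield $\mN^{L,\eta}_{\infty}(x) = \xi + (\tfrac14,0,\ldots,0)$. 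This multiplicative interaction through the ReLU is the entire mechanism behind the failure of convexity, and verifying it carefully (in particular that $\mathfrak{M}_{\infty}$ does not annihilate the positive intermediate value $\tfrac12$) is the main point of the argument; the layerwise computations themselves are the only real bookkeeping obstacle.

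Finally I would substitute into \cref{L_r}. Writing $f = (f_1,\ldots,f_{\ell_L})$ and expanding pointwise gives $\|\mN^{L,\eta}_{\infty}(x) - f(x)\|^2 - \|\xi - f(x)\|^2 = ((\xi_1 + \tfrac14) - f_1(x))^2 - (\xi_1 - f_1(x))^2 = \tfrac12(\xi_1 - f_1(x)) + \tfrac1{16}$. Integrating against $\mu$, using $\mathfrak{m} = \mu([a,b]^{\ell_0})$ together with the identity $\mL_{\infty}(\theta) = \int_{[a,b]^{\ell_0}} \|\xi - f(x)\|^2 \, \mu(\d x)$ from the second step, yields $\mL_{\infty}(\eta) = \mL_{\infty}(\theta) + \tfrac{\mathfrak{m}}{16} + \tfrac12\big[\xi_1 \mathfrak{m} - \int_{[a,b]^{\ell_0}} f_1(x)\,\mu(\d x)\big]$. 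Replacing $\mL_{\infty}(\theta)$ by $\tfrac12(\mL_{\infty}(\theta) + \mL_{\infty}(\vartheta))$, which is legitimate since the two are equal, gives exactly \cref{L_equality}, completing the proof.
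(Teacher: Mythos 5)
Your proposal is correct and follows essentially the same route as the paper's proof: identify via \cref{wb} which coordinates of $\theta$, $\vartheta$, and their average carry the nonzero weights and biases, propagate through the layers to obtain $\mN^{L,\theta}_{\infty}(x)=\mN^{L,\vartheta}_{\infty}(x)=\xi$ and $\mN^{L,(\theta+\vartheta)/2}_{\infty}(x)=\xi+(\tfrac14,0,\dots,0)$, and then expand the square in \cref{L_r}. The only difference is cosmetic: the paper compresses the layerwise bookkeeping into citations of \cref{lem:properties1}, whereas you spell it out explicitly.
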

\begin{proof}[Proof of \cref{lem:L_equality}]
\Nobs that \cref{it:N_long1,it:N_long2} of \cref{lem:properties1}, 
and \cref{alpha1} ensure that 
for all $ x \in [a,b]^{ \ell_0 } $
it holds that
$ 
  \mN^{L, \theta }_{ \infty }(x)
  =\mN^{L, \vartheta}_{ \infty }(x)
  =\xi
$.
This and \cref{L_r} imply that 
\begin{equation}
\label{L_theta}
  \mL_{ \infty }( \theta ) = \mL_{ \infty }( \vartheta )
  =
  \int\nolimits_{ [a,b]^{ \ell_0 } } 
  \norm{ \xi - f(x) }^2 
  \, \mu( \d x) .
\end{equation}
\Moreover \cref{it:N_long1,it:N_long2} of \cref{lem:properties1} 
and \cref{alpha1} demonstrate that
for all $ x \in [a,b]^{ \ell_0 } $
it holds that 
\begin{equation}
\label{L_theta_vartheta1}
  \mN^{ L, ( \theta + \vartheta ) / 2 }_{ \infty, 1 }( x )
  =
  \textstyle{ \frac{ 1 }{ 2 } } 
  \max\big\{ \frac{ 1 }{ 2 }, 0 \big\} + \xi_1
  = \xi_1 + \frac{ 1 }{ 4 }
  .
\end{equation}
\Moreover \cref{it:N_long1,it:N_long2} of \cref{lem:properties1} 
and \cref{alpha1} ensure that
for all $ i \in \enne \cap (1, \ell_L ] $, 
$ x \in [a,b]^{ \ell_0 } $
it holds that 
$ 
  \mN^{ L, (\theta + \vartheta) / 2 }_{ \infty, i }( x ) = \xi_i 
$.
Combining this with \cref{L_theta} and \cref{L_theta_vartheta1} 
shows that  
\begin{equation}
\begin{split}
  \mL_{ \infty }\bigg( 
    \frac{ \theta +\vartheta }{ 2 } 
  \bigg)
&
  =
  \int_{ [a,b]^{ \ell_0 } } 
  \biggl[
    \bigl( \xi_1 + \tfrac{1}{4} - f_1(x) \bigr)^2 
    +
    \textstyle
    \sum\limits_{ i = 2 }^{ \ell_L } 
    \bigl( 
      \xi_i - f_i(x) 
    \bigr)^2 
  \biggr]
  \, \mu( \d x )
\\
&
  =
  \biggl[
    \int_{ [a,b]^{ \ell_0 } } 
    \norm{ \xi - f(x) }^2 \, 
    \mu( \d x )
  \biggr]
  +
  \biggl[
    \int_{ [a,b]^{ \ell_0 } } 
    \tfrac{ 1 }{ 16 } \, 
    \mu( \d x)
  \biggr]
  +
  \biggl[
    \int_{ [a,b]^{ \ell_0 } } 
    \tfrac{ ( \xi_1 - f_1(x) ) }{ 2 } 
    \, \mu( \d x )
  \biggr]
\\
&
  = 
  \bigg[
    \frac{
      \mL_{ \infty }( \theta ) + 
      \mL_{ \infty }( \vartheta ) 
    }{ 2 } 
  \bigg]
  + \frac{ \mathfrak{m} }{ 16 }
  + 
  \frac{ 1 }{ 2 } 
  \bigg[ 
    \xi_1 \mathfrak{m}
    - \int_{ [a,b]^{ \ell_0 } } f_1(x) \, \mu( \d x ) 
  \bigg]
  .
\end{split}
\end{equation}  
This establishes \cref{L_equality}. 
The proof of \cref{lem:L_equality} is thus complete.
\end{proof}

\begin{cor}[Non-convexity of the risk function]
\label{cor:risk_not_convex}
Assume \cref{main_setting}, 
assume $ (L-1) \mathfrak{m} \neq 0 $,
let 
$ 
  \xi= ( \xi_1, \ldots, \xi_{\ell_L} ) \in \Reals^{ \ell_L} 
$ 
satisfy 
$ 
  \xi_{1} =\mathfrak{m}^{-1} \int_{[a,b]^{ \ell_0 } }f_1(x)\, \mu( \d x) 
$, 
  let $ ( \alpha_{ i,j,k} )_{ (i,j,k) \in ( \enne_0)^3} \subseteq\Reals $
  satisfy for all 
  $i,j\in \enne_0 $, $ k \in \enne $ that
  $ \alpha_{ i,j,k} =i\ell_{k-1} + j+\sum_{h= 1 }^{ k-1} \ell_h( \ell_{h-1} + 1) $,
  and 
  let $ \theta = ( \theta_1, \ldots, \theta_{ \fd }) $,
  $ \vartheta= ( \vartheta_1, \ldots, \vartheta_{ \fd })
  \in \Reals^{ \fd } $ satisfy 
  for all 
  $ i \in \{ 1, \ldots, \ell_L\} $,
  $ j \in \{ 1, \ldots, \fd\} \backslash 
  ( \cup_{k= 1 }^{ \ell_L} \{\alpha_{\ell_{L-1},1, L-1},
  \alpha_{0,1, L}, 
  \alpha_{\ell_L,k, L} \} ) $  
  that
  \begin{equation}\label{alpha2}
  \begin{gathered}
    \theta_{\alpha_{\ell_{L-1},1, L-1} }
    =\vartheta_{\alpha_{0,1, L} }
    =1,
    \qquad
   \theta_{\alpha_{\ell_L,i, L} } 
   = \vartheta_{\alpha_{\ell_L,i, L} } 
   =\xi_{ i}, \\
    \text{and } \qquad
    \theta_{\alpha_{0,1, L} }
     =\vartheta_{\alpha_{\ell_{L-1},1, L-1} }
     =\theta_j=\vartheta_j=0.
     \end{gathered}
   \end{equation} 
  Then 
  \begin{equation}\label{risk_not_convex1}
    \mL_{ \infty }\bigg( \frac{\theta+\vartheta}{2} \bigg)
    =\bigg[\frac{\mL_{ \infty }( \theta)+\mL_{ \infty }( \vartheta)}{2} \bigg]+\frac{\mathfrak{m} }{16}
  > \frac{\mL_{ \infty }( \theta)+\mL_{ \infty }( \vartheta)}{2}.
  \end{equation}
\end{cor}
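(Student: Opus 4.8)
The plan is to recognize that this corollary is an immediate specialization of \cref{lem:L_equality}, so that essentially all of the analytic content has already been established. First I would verify that the hypotheses of \cref{lem:L_equality} are in force. The parameter vectors $\theta$ and $\vartheta$ prescribed by \cref{alpha2} are defined by exactly the same relations as those prescribed by \cref{alpha1}, and the standing assumption $(L-1)\mathfrak{m} \neq 0$, together with the facts that $\mathfrak{m} = \mu([a,b]^{\ell_0}) \in [0,\infty)$ and $L \in \N$, forces simultaneously $L > 1$ and $\mathfrak{m} > 0$. Hence \cref{lem:L_equality} applies verbatim to the present $\theta$ and $\vartheta$.

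Next I would invoke \cref{lem:L_equality} to obtain the identity \cref{L_equality}, namely that $\mL_{\infty}(\tfrac{\theta+\vartheta}{2})$ equals $[\tfrac{\mL_{\infty}(\theta)+\mL_{\infty}(\vartheta)}{2}] + \tfrac{\mathfrak{m}}{16} + \tfrac{1}{2}[\xi_1 \mathfrak{m} - \int_{[a,b]^{\ell_0}} f_1(x)\, \mu(\d x)]$. The key observation is that the corollary's additional hypothesis $\xi_1 = \mathfrak{m}^{-1} \int_{[a,b]^{\ell_0}} f_1(x)\, \mu(\d x)$ is chosen precisely to annihilate the final correction term: since $\mathfrak{m} > 0$, multiplying this defining relation through by $\mathfrak{m}$ yields $\xi_1 \mathfrak{m} = \int_{[a,b]^{\ell_0}} f_1(x)\, \mu(\d x)$, whence $\tfrac{1}{2}[\xi_1 \mathfrak{m} - \int_{[a,b]^{\ell_0}} f_1(x)\, \mu(\d x)] = 0$. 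Substituting this back into \cref{L_equality} produces the equality asserted in \cref{risk_not_convex1}.

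Finally, for the strict inequality I would simply exploit the positivity $\mathfrak{m} > 0$ established in the first step: the surviving additive term $\tfrac{\mathfrak{m}}{16}$ is then strictly positive, which immediately gives $\mL_{\infty}(\tfrac{\theta+\vartheta}{2}) > \tfrac{\mL_{\infty}(\theta)+\mL_{\infty}(\vartheta)}{2}$ and completes the proof. There is no genuine obstacle in this argument; the only mild subtlety worth recording is the elementary bookkeeping that $(L-1)\mathfrak{m} \neq 0$ guarantees \emph{both} the applicability of \cref{lem:L_equality} (which requires $L > 1$) \emph{and} the strict positivity of $\mathfrak{m}$ needed for the final inequality. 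All the substantive work — the explicit evaluation of the realization function at the averaged parameter and the resulting identity — has been carried out already in \cref{lem:L_equality}, and this corollary merely reads off the conclusion under the distinguished choice of $\xi_1$.
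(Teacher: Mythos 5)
Your proposal is correct and follows essentially the same route as the paper's own proof: apply \cref{lem:L_equality}, observe that the hypothesis $\xi_1 = \mathfrak{m}^{-1}\int_{[a,b]^{\ell_0}} f_1(x)\,\mu(\d x)$ cancels the final correction term in \cref{L_equality}, and conclude the strict inequality from $\mathfrak{m} \neq 0$. Your additional bookkeeping that $(L-1)\mathfrak{m}\neq 0$ simultaneously yields $L>1$ and $\mathfrak{m}>0$ is a slightly more explicit justification than the paper gives, but the argument is the same.
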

\begin{proof}[Proof of \cref{cor:risk_not_convex}]
  \Nobs that \cref{lem:L_equality}
  and the assumption that 
  $ \xi_{1}
  =\mathfrak{m}^{-1} \int_{[a,b]^{ \ell_0 } }f_1(x)\, \mu( \d x) $
  demonstrate that
  \begin{equation}
  \begin{split}
    \mL_{ \infty }\bigg( \frac{\theta+\vartheta}{2} \bigg)
    &=\bigg[\frac{\mL_{ \infty }( \theta)+\mL_{ \infty }( \vartheta)}{2} \bigg]
    +\frac{\mathfrak{m} }{16}
    +\frac{1}{2} \bigg[ \xi_1 \mathfrak{m}
    -\int_{[a,b]^{ \ell_0 } }f_1(x)\mu( \d x)\bigg]\\
    &=\bigg[\frac{\mL_{ \infty }( \theta)+\mL_{ \infty }( \vartheta)}{2} \bigg]+\frac{\mathfrak{m} }{16}.
  \end{split}
  \end{equation}  
  The fact that $ \mathfrak{m} \neq0$ \hence
  implies that 
  \begin{equation}
  \mL_{ \infty }\bigg( \frac{\theta+\vartheta}{2} \bigg)
  > \frac{\mL_{ \infty }( \theta)+\mL_{ \infty }( \vartheta)}{2}.
  \end{equation}
  The proof of \cref{cor:risk_not_convex}
  is thus complete.
\end{proof}

\begin{cor}[Characterization of convexity of the risk function]
\label{cor:conv_characterization}
Assume~\cref{main_setting}.
Then it holds that 
$ (L-1) \mathfrak{m} = 0 $
if and only if
it holds for all $ \theta, \vartheta \in \Reals^{ \fd } $, $ \lambda\in [0,1]$ that
\begin{equation}
\label{L_convex2}
  \mL_{ \infty }( \lambda \theta + ( 1 - \lambda) \vartheta )
  \leq \lambda \mL_{ \infty }( \theta )
  + (1 - \lambda) \mL_{ \infty }( \vartheta )
  .
\end{equation}
\end{cor}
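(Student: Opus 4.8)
The plan is to establish the biconditional by proving each implication separately, invoking the two corollaries that have been prepared for exactly this purpose. For the forward implication I would assume $(L-1)\mathfrak{m} = 0$ and simply quote \cref{cor:convexity_0}, which under this hypothesis directly yields \cref{L_convex2} for all $\theta, \vartheta \in \Reals^{\fd}$ and all $\lambda \in [0,1]$. Thus this direction requires no further argument.

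For the reverse implication I would argue by contraposition: assuming $(L-1)\mathfrak{m} \neq 0$, I would exhibit parameter vectors together with a value of $\lambda$ for which \cref{L_convex2} fails. Since $(L-1)\mathfrak{m}\neq 0$ forces $\mathfrak{m} \neq 0$, the real number $\mathfrak{m}^{-1}\int_{[a,b]^{\ell_0}} f_1(x)\,\mu(\d x)$ is well defined, so I may fix a vector $\xi = (\xi_1, \dots, \xi_{\ell_L}) \in \Reals^{\ell_L}$ with $\xi_1 = \mathfrak{m}^{-1}\int_{[a,b]^{\ell_0}} f_1(x)\,\mu(\d x)$ and then define $\theta, \vartheta \in \Reals^{\fd}$ through the explicit formulas in \cref{alpha2}. \cref{cor:risk_not_convex} then applies verbatim and produces $\mL_{\infty}(\tfrac{\theta+\vartheta}{2}) > \tfrac{1}{2}(\mL_{\infty}(\theta) + \mL_{\infty}(\vartheta))$.

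The final step is to observe that this strict inequality is precisely the negation of \cref{L_convex2} at the particular choice $\lambda = \tfrac{1}{2}$: if \cref{L_convex2} held for all $\theta, \vartheta, \lambda$, then in particular $\mL_{\infty}(\tfrac{1}{2}\theta + \tfrac{1}{2}\vartheta) \leq \tfrac{1}{2}\mL_{\infty}(\theta) + \tfrac{1}{2}\mL_{\infty}(\vartheta)$, contradicting what \cref{cor:risk_not_convex} just delivered. Hence convexity cannot hold, which establishes the contrapositive and thereby the reverse implication. Combining the two directions yields the asserted equivalence.

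I do not anticipate a genuine obstacle here, since the substantive analytic content has already been absorbed into \cref{cor:convexity_0} (the $L=1$ or $\mathfrak{m}=0$ convexity, ultimately resting on \cref{prop:outer_convexity}) and into \cref{cor:risk_not_convex} (the explicit non-convex configuration built on \cref{lem:L_equality}). The only point demanding a moment's care is the reverse direction's need to \emph{choose} the auxiliary vector $\xi$: the characterization in \cref{cor:conv_characterization} is stated for the fixed target function $f$ of \cref{main_setting} without reference to $\xi$, so one must be explicit that the witnesses $\theta, \vartheta$ to non-convexity are constructed from an admissible $\xi$, whose existence is guaranteed precisely because $\mathfrak{m} \neq 0$ permits division by $\mathfrak{m}$.
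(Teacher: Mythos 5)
Your proposal is correct and follows exactly the paper's route: the paper's own proof consists of the single observation that \cref{cor:convexity_0} yields the forward implication and \cref{cor:risk_not_convex} yields the reverse one. Your additional remark that the witness vector $\xi$ with $\xi_1 = \mathfrak{m}^{-1}\int_{[a,b]^{\ell_0}} f_1(x)\,\mu(\d x)$ is only well defined because $\mathfrak{m} \neq 0$ is a correct and worthwhile clarification that the paper leaves implicit.
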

\begin{proof}[Proof of \cref{cor:conv_characterization}]
\Nobs that \cref{cor:convexity_0} and \cref{cor:risk_not_convex}
establish \cref{L_convex2}. 
The proof of \cref{cor:conv_characterization} is thus complete.
\end{proof}

\section{Gradient flow (GF) processes in the training of deep ANNs}
\label{sec:grad_fl}

In this section we establish 
in \cref{thm:GF_conv} in \cref{ssec:GF_convergence} below 
in the training of deep ReLU ANNs,  
under the assumption that the target function 
$ f \colon [a,b]^{ \ell_0 } \to \Rr $ in \cref{L_r} 
is a constant function, 
that the risk of 
every solution 
$ \Theta = ( \Theta_t )_{ t \in [0,\infty) } \colon [0,\infty) \to \Rr^{ \fd } $
of the associated GF differential equation 
converges with rate $ 1 $ to zero. 
Our proof of \cref{thm:GF_conv} 
uses the elementary regularity result for the 
Lyapunov function 
$ V \colon \Rr^{ \fd } \to \Rr $ 
in \cref{prop:liap1} 
in \cref{ssec:GF_lyapunov} below
(cf.\ \cref{V_c} in \cref{main_setting} above), 
the weak chain rule for compositions of the Lyapunov function 
$ V \colon \Rr^{ \fd } \to \Rr $ 
and GF solutions in the constant target function case 
in \cref{cor:GF_det_ito} in \cref{ssec:GF_weak_chain_lyapunov} below, 
and 
the weak chain rule for compositions 
of the risk function $ \mL_{ \infty } \colon \Rr^{ \fd } \to \Rr $ 
and GF solutions 
in the general target function case 
in \cref{lem:GF_det_ito2} 
in \cref{ssec:GF_weak_chain_risk} below 
(cf.\ \cref{L_r} in \cref{main_setting}).

Our proof of the weak chain rule for compositions of $ V \colon \Rr^{ \fd } \to \Rr $ 
and GF solutions in the constant target function case in \cref{cor:GF_det_ito}, in turn, 
employs 
the weak chain rule for compositions of $ V \colon \Rr^{ \fd } \to \Rr $ 
and GF solutions in the general target function case in \cref{prop:GF_det_ito} 
in \cref{ssec:GF_weak_chain_lyapunov}. 
Our proof of the weak chain rule 
for compositions of $ V \colon \Rr^{ \fd } \to \Rr $ 
and GF solutions in the general target function case in \cref{prop:GF_det_ito} 
make use of 
the local boundedness result for the generalized gradient 
function $ \mG \colon \Rr^{ \fd } \to \Rr $
in \cref{cor:G_up} in \cref{ssec:upper_estimates} above
as well as of the identity for the scalar product of 
$
  \Rr^{ \fd } \ni \theta \mapsto ( \nabla V )( \theta ) \in \Rr^{ \fd }
$
and 
$
  \mG \colon \Rr^{ \fd } \to \Rr^{ \fd }
$
in 
\cref{prop:liap2}
in \cref{ssec:GF_lyapunov}. 
In \cref{cor:liap} in \cref{ssec:GF_lyapunov} 
we also specialize \cref{prop:liap2} 
to the constant target function case. 
In particular, \cref{cor:liap} implies that 
the scalar product of 
$
  \Rr^{ \fd } \ni \theta \mapsto ( \nabla V )( \theta ) \in \Rr^{ \fd }
$
and 
$ \mG $ 
is non-negative and, thereby, proves that $ V $ serves 
as a Lyapunov function for the considered GF processes.

Our proof of the weak chain rule for compositions 
of $ \mL_{ \infty } \colon \Rr^{ \fd } \to \Rr $ 
and GF solutions 
in the general target function case 
in \cref{lem:GF_det_ito2} 
employs \cref{prop:G} in \cref{ssec:gen_grad_risk_functions} above, 
\cref{cor:G_up} in \cref{ssec:upper_estimates}, 
and the uniform local boundedness result for the gradients 
$
  \Rr^{ \fd } \ni \theta \mapsto ( \nabla \mL_r )( \theta ) \in \Rr^{ \fd }
$, 
$
  r \in [1,\infty)
$,
of the approximated risk functions 
$
  \mL_r \colon \Rr^{ \fd } \to \Rr 
$,
$ r \in [1,\infty) $, 
in \cref{lem:grad_L_bd} in \cref{ssec:GF_weak_chain_risk} 
(cf.\ \cref{L_r} in \cref{main_setting}).

The results in this section extend the findings 
in \cite[Subsection 2.6]{MR4468133} 
and \cite[Section 3]{MR4438169} 
from shallow ReLU ANNs with just one hidden layer 
to deep ReLU ANNs with an arbitrarily large number of hidden layers. 
In particular, 
\cref{prop:liap1} in \cref{ssec:GF_lyapunov} extends \cite[Proposition 2.8]{MR4468133},
\cref{prop:liap2} in \cref{ssec:GF_lyapunov} extends \cite[Proposition 2.9]{MR4468133}, 
\cref{cor:liap} in \cref{ssec:GF_lyapunov} extends \cite[Corollary 2.10]{MR4468133}, 
\cref{prop:GF_det_ito} in \cref{ssec:GF_weak_chain_lyapunov} 
and 
\cref{cor:GF_det_ito} in \cref{ssec:GF_weak_chain_lyapunov} 
extend \cite[Lemma 3.2]{MR4438169} 
(cf.\ also \cite[Proposition 4.1]{MR4473797}), 
\cref{lem:grad_L_bd} in \cref{ssec:GF_weak_chain_risk} 
extends 
\cite[Lemma 3.4]{MR4438169},
\cref{lem:GF_det_ito2} in \cref{ssec:GF_weak_chain_risk} 
extends \cite[Lemma 3.5]{MR4438169},
and 
\cref{thm:GF_conv} in \cref{ssec:GF_convergence}
extends \cite[Theorem 3.7]{MR4438169}.

\subsection{Lyapunov type estimates for the dynamics of GF processes} 
\label{ssec:GF_lyapunov}

\begin{proposition}
  \label{prop:liap1}
  Assume \cref{main_setting} and let $ \theta \in \Reals^{ \fd } $. Then
  \begin{enumerate}[label=(\roman*)]
  \item\label{it:V_expl} it holds that 
\begin{equation}
\textstyle
\label{V_expl}
  V( \theta ) 
  =
  \sum_{ k = 1 }^L 
  \bigl(
    k 
    \bigl[ 
      \sum_{ i = 1 }^{ \ell_k }
      | \fb^{ k, \theta }_i |^2 
    \bigr] 
    + 
    \bigl[ 
      \sum_{ i = 1 }^{ \ell_k } 
      \sum_{ j = 1 }^{ \ell_{ k - 1 } }
      |
        \fw^{ k, \theta }_{ i, j } 
      |^2 
    \bigr]     
    - 2 L 
    \bigl[ 
      \sum_{ i = 1 }^{ \ell_L } 
      f_i(0) 
      \fb^{ L, \theta }_i 
    \bigr] 
  \bigr) ,
\end{equation}
  \item\label{it:ineq} it holds that
  $ \frac{1}{2} \|\theta \|^2-2L^2\|f(0)\|^2 \leq V( \theta) \leq 2L\|\theta \|^2 + L\|f(0)\|^2 $, and
  \item\label{it:grad_V} it holds that
  \begin{equation}
  \label{grad_V}
  ( \nabla V)( \theta)=
  2 \pr*{  \fw^{ 1, \theta }, \fb^{ 1, \theta },
  \fw^{2, \theta },2\fb^{2, \theta },
  \ldots,
  \fw^{L-1, \theta },(L-1)\fb^{L-1, \theta },
   \fw^{L, \theta }, L( \fb^{L, \theta }-f(0))  } \,.
  \end{equation}
  \end{enumerate}
\end{proposition}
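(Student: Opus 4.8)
The plan is to dispatch all three assertions by direct computation from the definition of $V$ in \cref{V_c}, exploiting that $V$ is a quadratic polynomial in $\theta$ and hence smooth. For \cref{it:V_expl} I would simply substitute the definitions of the Euclidean norm and the standard scalar product from \cref{norm_scalar_M} into \cref{V_c}, writing $\norm{\fb^{k,\theta}}^2=\sum_{i=1}^{\ell_k}\abs{\fb^{k,\theta}_i}^2$ and $\ip{f(0)}{\fb^{L,\theta}}=\sum_{i=1}^{\ell_L}f_i(0)\fb^{L,\theta}_i$; this yields \cref{V_expl} immediately.

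The structural fact underpinning both \cref{it:ineq} and \cref{it:grad_V} is that the parametrization \cref{wb} sets up a bijection between the coordinates $\theta_1,\ldots,\theta_{\fd}$ and the totality of weight entries $\fw^{k,\theta}_{i,j}$ and bias entries $\fb^{k,\theta}_i$, in which layer $k$ occupies a consecutive index block with the weights listed before the biases. In particular this gives the identity $\sum_{k=1}^L\bigl(\norm{\fb^{k,\theta}}^2+\sum_{i=1}^{\ell_k}\sum_{j=1}^{\ell_{k-1}}\abs{\fw^{k,\theta}_{i,j}}^2\bigr)=\norm{\theta}^2$, which I would use repeatedly.

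For \cref{it:ineq} I would first bound the purely quadratic part of \cref{V_expl} using $1\le k\le L$, so that $\norm{\theta}^2\le\sum_{k=1}^L k\norm{\fb^{k,\theta}}^2+\sum_{k=1}^L\sum_{i,j}\abs{\fw^{k,\theta}_{i,j}}^2\le L\norm{\theta}^2$, and then control the cross term $-2L\ip{f(0)}{\fb^{L,\theta}}$ by Cauchy--Schwarz followed by Young's inequality. For the lower bound the splitting $2L\norm{f(0)}\norm{\fb^{L,\theta}}\le\tfrac12\norm{\fb^{L,\theta}}^2+2L^2\norm{f(0)}^2$, combined with $\norm{\fb^{L,\theta}}^2\le\norm{\theta}^2$, gives $V(\theta)\ge\tfrac12\norm{\theta}^2-2L^2\norm{f(0)}^2$; for the upper bound the splitting $2L\norm{f(0)}\norm{\fb^{L,\theta}}\le L\norm{f(0)}^2+L\norm{\theta}^2$ gives $V(\theta)\le 2L\norm{\theta}^2+L\norm{f(0)}^2$. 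Finally, for \cref{it:grad_V} I would differentiate \cref{V_expl} coordinatewise: each weight coordinate contributes $2\fw^{k,\theta}_{i,j}$, each bias coordinate with $k<L$ contributes $2k\fb^{k,\theta}_i$, and each last-layer bias coordinate contributes $2L\fb^{L,\theta}_i-2Lf_i(0)=2L(\fb^{L,\theta}_i-f_i(0))$; reassembling these derivatives into the block order dictated by \cref{wb} and factoring out the common $2$ produces exactly \cref{grad_V}.

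No step here is analytically deep. The only place demanding genuine care is the index bookkeeping: one must verify that \cref{wb} really does partition $\{1,\ldots,\fd\}$ into the per-layer weight/bias blocks in the stated order, since both the identity reducing the quadratic part to $\norm{\theta}^2$ and the precise component ordering of $\nabla V$ in \cref{grad_V} hinge on this. I would therefore treat the bookkeeping as the main thing to get right and keep the analytic estimates (Young's inequality) as routine.
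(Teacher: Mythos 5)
Your proposal is correct and follows essentially the same route as the paper: expand $V$ coordinatewise from \cref{V_c}, identify the quadratic part with $\|\theta\|^2$ plus nonnegative excess bias terms, estimate the cross term $-2L\langle f(0),\fb^{L,\theta}\rangle$ via Cauchy--Schwarz and Young with exactly the splittings you describe, and obtain the gradient by coordinatewise differentiation. The index bookkeeping you flag is indeed the only point of care, and the paper handles it implicitly through the same identity $\sum_{k=1}^L\bigl(\|\fb^{k,\theta}\|^2+\sum_{i,j}|\fw^{k,\theta}_{i,j}|^2\bigr)=\|\theta\|^2$.
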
 
\begin{proof}[Proof of \cref{prop:liap1}]
  \Nobs that \cref{V_c}
  establishes \cref{it:V_expl}.
  Next \nobs that \cref{V_c}, the Cauchy-Schwarz
  inequality, and the Young inequality 
  demonstrate that
  \begin{equation}
  \begin{split} \label{V_ineq1}
    &V( \theta)
    =\bigg[\sum_{k= 1 }^L(
  k\|\fb^{ k, \theta } \|^2 +
  \sum_{ i= 1 }^{ \ell_k} \sum_{ j = 1 }^{ \ell_{k-1} }
  |\fw^{ k, \theta }_{ i, j}|^2 )\bigg]
  - 2L\langle f(0), \fb^{L, \theta } \rangle\\
    &=\norm{\theta }^2+\sum_{k= 1 }^L (k-1)\norm{\fb^{ k, \theta } }^2-2L\langle f(0), \fb^{L, \theta } \rangle
    \geq \norm{\theta }^2-2L\langle f(0), \fb^{L, \theta } \rangle\\
    &\geq \norm{\theta }^2-2L\norm{f(0)} \norm{\fb^{L, \theta } } 
    \geq \norm{\theta }^2-2L^2\norm{f(0)}^2-\frac{1}{2} \norm{\fb^{L, \theta } }^2
    \geq \frac{1}{2} \norm{\theta }^2-2L^2\norm{f(0)}^2.\\
  \end{split}
\end{equation}
\Moreover \cref{V_c} shows that
\begin{equation}
  \begin{split} \label{V_ineq2a}
    V( \theta)
    & =\bigg[\sum_{k= 1 }^L(
  k\|\fb^{ k, \theta } \|^2 +
  \sum_{ i= 1 }^{ \ell_k} \sum_{ j = 1 }^{ \ell_{k-1} }
  |\fw^{ k, \theta }_{ i, j}|^2 )\bigg]
  - 2L\langle f(0), \fb^{L, \theta } \rangle\\
    &=\norm{\theta }^2+\sum_{k= 1 }^L (k-1)\norm{\fb^{ k, \theta } }^2-2L\langle f(0), \fb^{L, \theta } \rangle 
    \leq \norm{\theta }^2+\sum_{k= 1 }^L (k-1)\norm{\fb^{ k, \theta } }^2+\abs{2L\langle f(0), \fb^{L, \theta } \rangle} .
  \end{split}
  \end{equation}
  The Cauchy-Schwarz
  inequality and the Young inequality
  \hence imply that
  \begin{equation}
  \begin{split} \label{V_ineq2b}
    V( \theta)
     &\leq \norm{\theta }^2+\sum_{k= 1 }^L (k-1)\norm{\fb^{ k, \theta } }^2+2L\norm{f(0)} \norm{\fb^{L, \theta } } \\
    &\leq \norm{\theta }^2+\sum_{k= 1 }^L (k-1)\norm{\fb^{ k, \theta } }^2+L\norm{f(0)}^2+L\norm{\fb^{L, \theta } }^2
    \leq 2L\norm{\theta }^2+L\norm{f(0)}^2.
  \end{split}
  \end{equation}
Combining this and \cref{V_ineq1} proves \cref{it:ineq}.
\Nobs that \cref{it:V_expl} establishes \cref{it:grad_V}.
The proof of \cref{prop:liap1} is thus complete.
\end{proof}

\begin{proposition}
\label{prop:liap2}
Assume \cref{main_setting} and let $ \theta \in \Reals^{ \fd } $.
Then
\begin{equation}
\label{eq_V}
  \ip{ ( \nabla V)( \theta)}{\mG( \theta)} 
  =
  4 L 
  \br*{ 
    \int_{ [a,b]^{ \ell_0 } } 
    \ip{
      \mN^{ L, \theta }_{ \infty }( x ) - f( x ) 
    }{
      \mN^{ L, \theta }_{ \infty }( x ) - f(0) 
    } 
    \, \mu( \d x) 
  }
  .
\end{equation}
\end{proposition}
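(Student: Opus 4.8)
The plan is to evaluate the left-hand side of \cref{eq_V} by inserting the closed-form expressions for both of its factors, namely the gradient $(\nabla V)(\theta)$ supplied by \cref{it:grad_V} of \cref{prop:liap1} and the components of the generalized gradient $\mG(\theta)$ supplied by \cref{G_w,G_b} in \cref{prop:G}. Because those formulas already express every component of $\mG(\theta)$ as a finite sum of convergent integrals, the entire argument reduces to algebra on finite sums of integrals (using only linearity of the integral), and no further analytic input is required.

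First I would isolate the backpropagated error terms. For $k\in\{1,\ldots,L\}$ and $i\in\{1,\ldots,\ell_k\}$ set
\begin{equation*}
  B^k_i(x)
  =
  \sum_{\substack{v_{k+1},\ldots,v_L\in\enne,\\ \forall w\in\enne\cap[k+1,L]\colon v_w\le\ell_w}}
  \Big[\mN^{L,\theta}_{\infty,v_L}(x)-f_{v_L}(x)\Big]
  \Big[\textstyle\prod_{n=k+1}^{L}\fw^{n,\theta}_{v_n,v_{n-1}}\indicator{\mX^{n-1,\theta}_{v_{n-1}}}(x)\Big]\Big|_{v_k=i}.
\end{equation*}
Collapsing the factor $\indicator{\{i\}}(v_k)$ then rewrites \cref{G_b} as $\mG_{\ell_k\ell_{k-1}+i+\diml_{k-1}}(\theta)=\int 2\,B^k_i(x)\,\mu(\d x)$ and \cref{G_w} as $\mG_{(i-1)\ell_{k-1}+j+\diml_{k-1}}(\theta)=\int 2\,y^{k-1}_j(x)\,B^k_i(x)\,\mu(\d x)$, where $y^{k-1}_j(x)=\R_\infty(\mN^{k-1,\theta}_{\infty,j}(x))$ for $k>1$ and $y^0_j(x)=x_j$. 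The two pointwise identities that drive everything are the layerwise affine relation $\sum_{j}\fw^{k,\theta}_{i,j}\,y^{k-1}_j(x)=\mN^{k,\theta}_{\infty,i}(x)-\fb^{k,\theta}_i$ (immediate from \cref{it:N_long1,it:N_long2} of \cref{lem:properties1}) and the backpropagation recursion $B^k_i(x)=\indicator{\mX^{k,\theta}_i}(x)\sum_{v=1}^{\ell_{k+1}}\fw^{k+1,\theta}_{v,i}\,B^{k+1}_v(x)$ for $k<L$, obtained by peeling off the $n=k+1$ factor; note also that $B^L_i(x)=\mN^{L,\theta}_{\infty,i}(x)-f_i(x)$ by the empty product.

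Reading off from \cref{it:grad_V} that the entry of $(\nabla V)(\theta)$ is $2\fw^{k,\theta}_{i,j}$ for a weight, $2k\fb^{k,\theta}_i$ for a bias with $k<L$, and $2L(\fb^{L,\theta}_i-f_i(0))$ for a bias with $k=L$, I would collect the contribution $S_k$ of the $k$-th layer to $\ip{(\nabla V)(\theta)}{\mG(\theta)}$. Writing $P_k=\int\sum_i\mN^{k,\theta}_{\infty,i}(x)B^k_i(x)\,\mu(\d x)$, $Q_k=\sum_i\fb^{k,\theta}_i\int B^k_i(x)\,\mu(\d x)$, and $R=\int\ip{f(0)}{\mN^{L,\theta}_\infty(x)-f(x)}\,\mu(\d x)$, the affine relation turns the weight part of layer $k$ into $4(P_k-Q_k)$, whence $S_k=4P_k+4(k-1)Q_k$ for $k<L$ and $S_L=4P_L+4(L-1)Q_L-4LR$. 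Feeding the recursion into $P_k$, together with the elementary identity $\mN^{k,\theta}_{\infty,i}(x)\indicator{\mX^{k,\theta}_i}(x)=\R_\infty(\mN^{k,\theta}_{\infty,i}(x))$ (valid for $x\in[a,b]^{\ell_0}$) and once more the affine relation $\sum_i\fw^{k+1,\theta}_{v,i}\R_\infty(\mN^{k,\theta}_{\infty,i}(x))=\mN^{k+1,\theta}_{\infty,v}(x)-\fb^{k+1,\theta}_v$, yields the telescoping identity $P_k=P_{k+1}-Q_{k+1}$ for $k<L$, i.e.\ $Q_k=P_k-P_{k-1}$ for $k\ge 2$.

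Finally I would sum over $k$. An Abel summation exploiting $Q_k=P_k-P_{k-1}$ collapses $\sum_{k=1}^{L}\bigl(4P_k+4(k-1)Q_k\bigr)$ to $4LP_L$, so that $\ip{(\nabla V)(\theta)}{\mG(\theta)}=\sum_{k=1}^L S_k=4LP_L-4LR$. Since $P_L=\int\ip{\mN^{L,\theta}_\infty(x)}{\mN^{L,\theta}_\infty(x)-f(x)}\,\mu(\d x)$, subtracting $4LR$ gives precisely $4L\int_{[a,b]^{\ell_0}}\ip{\mN^{L,\theta}_\infty(x)-f(0)}{\mN^{L,\theta}_\infty(x)-f(x)}\,\mu(\d x)$, which is \cref{eq_V} by symmetry of the scalar product. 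I expect the only real difficulty to be organizational: setting up the nested index sums $v_{k+1},\ldots,v_L$ so that the recursion for $B^k_i$ and the telescoping of $P_k$ and $Q_k$ are clean; once the quantities $B^k_i$ are isolated and the two affine identities are in hand, the remaining manipulations (including the Abel summation) are routine.
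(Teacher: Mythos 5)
Your proof is correct and follows essentially the same route as the paper's: both insert the explicit representations \cref{G_w,G_b} from \cref{prop:G} together with \cref{it:grad_V} of \cref{prop:liap1}, use the layerwise affine relations from \cref{lem:properties1} and the identity $z\,\indicator{(0,\infty)}(z)=\max\{z,0\}$ to peel off one layer at a time, and telescope over the layers to produce the factor $L$. Your quantities $P_k$, $Q_k$ and the Abel summation are precisely the telescoped form of the paper's inductive identity \cref{sum_ind}, which in your notation asserts $P_k+\sum_{m=k+1}^{L}Q_m=P_L$ for every $k$, so the reorganization is a matter of bookkeeping rather than of substance.
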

\begin{proof}[Proof of \cref{prop:liap2}]
Throughout this proof let $(\diml_k)_{k \in \N_0} \subseteq \N_0$
	satisfy for all $k \in \N_0$ that $\diml_k = \sum_{n=1}^k \ell_n ( \ell_{n-1} + 1 ) $.
  \Nobs that
  \cref{it:grad_V}
  of \cref{prop:liap1}
  demonstrates that
  \begin{equation}
  \label{VG}
  \begin{split}
&
    \langle( \nabla V( \theta), \mG( \theta)\rangle
    = \sum_{k= 1 }^L\sum_{ i_k= 1 }^{ \ell_k}
    2k\fb^{ k, \theta }_{ i_k} \mG_{\ell_k \ell_{k-1} + i_k + \diml_{k-1} }( \theta)\\
    &
    +\sum_{k= 1 }^L
    \sum_{ i_k= 1 }^{ \ell_k}
    \sum_{ i_{k-1} = 1 }^{ \ell_{k-1} }
    2\fw^{ k, \theta }_{ i_k,i_{k-1} } \mG_{ (i_k-1)\ell_{k-1} + i_{k-1} +  \diml_{k - 1 } }( \theta)
    -\sum_{ i_L= 1 }^{ \ell_L}2Lf_{ i_L}(0)\mG_{\ell_{L} \ell_{L-1} + i_{L} + \diml_{L - 1 } }( \theta).
  \end{split}
  \end{equation}
  Next we claim that for all 
  $ k \in \{ 1, \ldots, L\} $
  it holds that
  \begin{equation}   
  \label{sum_ind}
  \begin{split}
  &\sum_{m=k}^L
  \sum_{ i_{m} = 1 }^{ \ell_{m} }
  \fb^{m, \theta }_{ i_{m} }
  \mG_{\ell_{m} \ell_{m-1} + i_{m} + \diml_{m - 1 } }( \theta)
  +\sum_{ i_k= 1 }^{ \ell_k}
  \sum_{ i_{k-1} = 1 }^{ \ell_{k-1} }
  \fw^{ k, \theta }_{ i_k, i_{k-1} }
  \mG_{ (i_k-1)\ell_{k-1} + i_{k-1} + \diml_{k-1 } }( \theta)\\
  &=2\int_{[a,b]^{ \ell_0 } }
  \ip{\mN^{L, \theta }_{ \infty }(x)-f(x)}{\mN^{L, \theta }_{ \infty }(x)} \, \mu( \d x).
  \end{split}
  \end{equation}
  We prove \cref{sum_ind} by induction on
  $ k \in \{ 1, \ldots, L\} $.
  \Nobs that \cref{G4',G4''}
  of \cref{prop:G} and \cref{N} ensure that
  \begin{equation}
  \begin{split}
    &\sum_{ i= 1 }^{ \ell_{L} }
    \fb^{L, \theta }_{ i}
    \mG_{\ell_{L} \ell_{L-1} + i+ \diml_{L - 1 } }( \theta)
    +\sum_{ i= 1 }^{ \ell_L}
    \sum_{ j = 1 }^{ \ell_{L-1} }
    \fw^{L, \theta }_{ i, j}
    \mG_{ (i-1)\ell_{L-1} + j+ \diml_{L - 1 } }( \theta)\\
    &=\sum_{ i= 1 }^{ \ell_{L} }
    2\fb^{L, \theta }_{ i}
    \int_{[a,b]^{ \ell_0 } }( \mN^{L, \theta }_{ \infty, i}(x)-f_{ i}(x))\mu( \d x)\\
    &\quad +\sum_{ i= 1 }^{ \ell_L}
    \sum_{ j = 1 }^{ \ell_{L-1} }
    2\fw^{L, \theta }_{ i, j}
    \int_{[a,b]^{ \ell_0 } } 
    \Big[
  \R_{ \infty }( \mN^{ \max\{L-1,1\}, \theta }_{ \infty,j}(x))
  \indicator{  (1, \infty)}(L)
  +x_j \indicator{ \{1\} }(L)
  \Big]    
    ( \mN^{L, \theta }_{ \infty, i}(x)-f_{ i}(x))\mu( \d x)\\
    &=2\int_{[a,b]^{ \ell_0 } }
    \ip{\mN^{L, \theta }_{ \infty }(x)-f(x)}{\mN^{L, \theta }_{ \infty }(x)} \, \mu( \d x).
  \end{split}
  \end{equation}
  This establishes \cref{sum_ind} 
  in the base case $k=L$.
  For the induction step let $ k \in \enne\cap [2, L]$
  satisfy 
  \begin{equation}   
  \label{sum_indhyp}
  \begin{split}
    &\sum_{m=k}^L
    \sum_{ i_{m} = 1 }^{ \ell_{m} }
    \fb^{m, \theta }_{ i_{m} }
    \mG_{\ell_{m} \ell_{m-1} + i_{m} + \diml_{m - 1 } }( \theta)
    +\sum_{ i_k= 1 }^{ \ell_k}
    \sum_{ i_{k-1} = 1 }^{ \ell_{k-1} }
    \fw^{ k, \theta }_{ i_k, i_{k-1} }
    \mG_{ (i_k-1)\ell_{k-1} + i_{k-1} + \diml_{k - 1 } }( \theta)\\
    &=2\int_{[a,b]^{ \ell_0 } }
    \ip{\mN^{L, \theta }_{ \infty }(x)-f(x)}{\mN^{L, \theta }_{ \infty }(x)} \, \mu( \d x).
  \end{split}
  \end{equation}  
  \Nobs that \cref{G4',G4''}
  of \cref{prop:G} 
  show that
  \begin{equation}   
  \label{sum_k_1a}
  \begin{split}
    &\sum_{m=k-1}^L
    \sum_{ i_{m} = 1 }^{ \ell_{m} }
    \fb^{m, \theta }_{ i_{m} }
    \mG_{\ell_{m} \ell_{m-1} + i_{m} + \diml_{m - 1 } }( \theta)
    +\sum_{ i_{k-1} = 1 }^{ \ell_{k-1} }
    \sum_{ i_{k-2} = 1 }^{ \ell_{k-2} }
    \fw^{ k-1, \theta }_{ i_{k-1}, i_{k-2} }
    \mG_{ (i_{k-1}-1)\ell_{k-2} + i_{k-2} + \diml_{ k - 2 }}( \theta)\\ 
    &=
    \sum_{m=k}^L
    \sum_{ i_{m} = 1 }^{ \ell_{m} }
    \fb^{m, \theta }_{ i_{m} }
    \mG_{\ell_{m} \ell_{m-1} + i_{m} + \diml_{m - 1 } }( \theta)
     +\sum_{ i_{k-1} = 1 }^{ \ell_{k-1} }
    \fb^{ k-1, \theta }_{ i_{k-1} }
    \mG_{\ell_{k-1} \ell_{k-2} + i_{k-1} +  \diml_{ k - 2 } }( \theta)\\
    &\quad
    +\sum_{ i_{k-1} = 1 }^{ \ell_{k-1} }
    \sum_{ i_{k-2} = 1 }^{ \ell_{k-2} }
    \fw^{ k-1, \theta }_{ i_{k-1}, i_{k-2} }
    \mG_{ (i_{k-1}-1)\ell_{k-2} + i_{k-2} + \diml_{ k - 2 } }( \theta)\\
    &=
    \sum_{m=k}^L
    \sum_{ i_{m} = 1 }^{ \ell_{m} }
    \fb^{m, \theta }_{ i_{m} }
    \mG_{\ell_{m} \ell_{m-1} + i_{m} + \diml_{m - 1 } }( \theta)\\
    &\quad +2\sum_{ i_{k-1} = 1 }^{ \ell_{k-1} }
    \Bigg[\int_{[a,b]^{ \ell_0 } }
    \Bigg( \fb^{ k-1, \theta }_{ i_{k-1} }
    +\sum_{ i_{k-2} = 1 }^{ \ell_{k-2} }
    \fw^{ k-1, \theta }_{ i_{k-1}, i_{k-2} } \\
    &\quad\cdot
    \Big[\R_{ \infty }( \mN^{ \max\{k-2,1\}, \theta }_{ \infty, i_{k-2} }(x))
  \indicator{ (1, L]}(k-1)
  +x_{ i_{k-2} } \indicator{\{1\} }(k-1)\Big]\Bigg)\\
    &\quad\cdot
    \sum\nolimits_{\substack{v_{k-1},v_k, \ldots,v_L\in \enne, \\
    ( \forall m\in \enne\cap[k - 1 , L]\colon v_m\leq\ell_m)} }
    \bigg[
     \pr*{  \mN^{L, \theta }_{ \infty,v_L}(x)-f_{v_L}(x) } 
    \\
    &\quad\cdot
    \indicator{\{i_{k-1} \} }(v_{k-1} )
    \Big[\textstyle{\prod}_{q={k} }^{L} \big(
    \fw^{q, \theta }_{v_{q}, v_{q-1} }
    \big[\indicator{\mX^{q-1, \theta }_{v_{q-1} }}(x)\big]
    \big)\Big]\bigg]\Bigg]\, \mu( \d x)
    \Bigg].
    \end{split}
    \end{equation}
  \Hence that
    \begin{equation} 
    \begin{split} \label{sum_k_1b}
    &\sum_{m=k-1}^L
    \sum_{ i_{m} = 1 }^{ \ell_{m} }
    \fb^{m, \theta }_{ i_{m} }
    \mG_{\ell_{m} \ell_{m-1} + i_{m} + \diml_{m - 1 } }( \theta)
    +\sum_{ i_{k-1} = 1 }^{ \ell_{k-1} }
    \sum_{ i_{k-2} = 1 }^{ \ell_{k-2} }
    \fw^{ k-1, \theta }_{ i_{k-1}, i_{k-2} }
    \mG_{ (i_{k-1}-1)\ell_{k-2} + i_{k-2} + \diml_{k - 2 } }( \theta)\\
    &=
    \sum_{m=k}^L
    \sum_{ i_{m} = 1 }^{ \ell_{m} }
    \fb^{m, \theta }_{ i_{m} }
    \mG_{\ell_{m} \ell_{m-1} + i_{m} + \diml_{m - 1 } }( \theta)\\
    &\quad +2\sum_{ i_{k-1} = 1 }^{ \ell_{k-1} }
    \Bigg[\int_{[a,b]^{ \ell_0 } }
    \sum_{ i_k = 1 }^{ \ell_k}
    \fw^{ k, \theta }_{ i_k, i_{k-1} }
    \big[\indicator{\mX^{ k-1, \theta }_{ i_{k-1} }}(x)\big]
    \Bigg( \fb^{ k-1, \theta }_{ i_{k-1} }
    +\sum_{ i_{k-2} = 1 }^{ \ell_{k-2} }
    \fw^{ k-1, \theta }_{ i_{k-1}, i_{k-2} } \\
    &\quad    
    \Big[\R_{ \infty }( \mN^{ \max\{k-2,1\}, \theta }_{ \infty, i_{k-2} }(x))
  \indicator{ (1, L]}(k-1)
  +x_{ i_{k-2} } \indicator{ \{1\} }(k-1)\Big]
    \Bigg)\\
    &\quad\cdot
    \sum_{\substack{v_k,v_{k+1}, \ldots,v_L\in \enne, \\
    \forall w\in \enne\cap[k, L]\colon v_w\leq\ell_w} }
    \Big[\mN^{L, \theta }_{ \infty,v_L}(x)-f_{v_L}(x)\Big]
    \Big[\indicator{\{i_{k-1} \} }(v_k )\Big]\\
    &\quad\cdot
    \Big[\textstyle{\prod}_{n={k+1} }^{L} \big(
    \fw^{n, \theta }_{v_n, v_{n-1} }
    \big[\indicator{\mX^{n-1, \theta }_{v_{n-1} }}(x)\big]
    \big)\Big]\, \mu( \d x)
    \Bigg].
    \end{split}
    \end{equation}
  \Cref{G4'} 
  of \cref{prop:G}, 
  \cref{N}, and \cref{sum_indhyp} 
  \hence imply that
    \begin{equation}\label{sum_k_2}
    \begin{split}
    &\sum_{m=k-1}^L
    \sum_{ i_{m} = 1 }^{ \ell_{m} }
    \fb^{m, \theta }_{ i_{m} }
    \mG_{\ell_{m} \ell_{m-1} + i_{m} + \diml_{m - 1 } }( \theta)
    +\sum_{ i_{k-1} = 1 }^{ \ell_{k-1} }
    \sum_{ i_{k-2} = 1 }^{ \ell_{k-2} }
    \fw^{ k-1, \theta }_{ i_{k-1}, i_{k-2} }
    \mG_{ (i_{k-1}-1)\ell_{k-2} + i_{k-2} + \diml_{k - 2 } }( \theta)\\ 
    &=
    \sum_{m=k}^L
    \sum_{ i_{m} = 1 }^{ \ell_{m} }
    \fb^{m, \theta }_{ i_{m} }
    \mG_{\ell_{m} \ell_{m-1} + i_{m} + \diml_{m - 1 } }( \theta)\\
    &\quad 
    +2\sum_{ i_{k-1} = 1 }^{ \ell_{k-1} }
    \Bigg[\int_{[a,b]^{ \ell_0 } }
    \sum_{ i_k = 1 }^{ \ell_k}
    \fw^{ k, \theta }_{ i_k, i_{k-1} }
    \big[\R_{ \infty }( \mN^{ k-1, \theta }_{ \infty, i_{k-1} }(x))\big]
    \\
    &\quad\cdot
    \sum_{\substack{v_k,v_{k+1}, \ldots,v_L\in \enne, \\
    \forall w\in \enne\cap[k, L]\colon v_w\leq\ell_w} }
    \Big[\mN^{L, \theta }_{ \infty,v_L}(x)-f_{v_L}(x)\Big]
    \Big[\indicator{ \{i_{k-1} \} }(v_k)\Big]\\
    &\quad\cdot
    \Big[\textstyle{\prod}_{n={k+1} }^{L} \big(
    \fw^{n, \theta }_{v_n, v_{n-1} }
    \indicator{ \mX^{n-1, \theta }_{v_{n-1} }}(x)
    \big)\Big]\, \mu( \d x)
    \Bigg]\\
    &=
    \sum_{m=k}^L
    \sum_{ i_{m} = 1 }^{ \ell_{m} }
    \fb^{m, \theta }_{ i_{m} }
    \mG_{\ell_{m} \ell_{m-1} + i_{m} + \diml_{m - 1 } }( \theta ) +
    \sum_{ i_k= 1 }^{ \ell_k}    
    \sum_{ i_{k-1} = 1 }^{ \ell_{k-1} }
    \fw^{ k, \theta }_{ i_k,i_{k-1} }
    \mG_{ (i_k-1)\ell_{k-1} + i_{k-1} + \diml_{k - 1 } } ( \theta) \\
    &=2\int_{[a,b]^{ \ell_0 } }
    \ip{\mN^{L, \theta }_{ \infty }(x)-f(x)}{\mN^{L, \theta }_{ \infty }(x)} \, \mu( \d x).
  \end{split}
  \end{equation}
  Induction thus establishes 
  \cref{sum_ind}.
  Next \nobs that \cref{sum_ind} ensures that
  \begin{equation}
  \label{sum_sum}
  \begin{split}
    &\sum_{k= 1 }^L
    k\sum_{ i_k = 1 }^{ \ell_k}
    \fb^{ k, \theta }_{ i_k}
    \mG_{\ell_k \ell_{k-1} + i_k + \diml_{k - 1 } }( \theta)
    +\sum_{k= 1 }^L
    \sum_{ i_k= 1 }^{ \ell_k}
    \sum_{ i_{k-1} = 1 }^{ \ell_{k-1} }
    \fw^{ k, \theta }_{ i_k, i_{k-1} }
    \mG_{ (i_k-1)\ell_{k-1} + i_{k-1} + \diml_{k - 1 } }( \theta)\\
    &=2L\int_{[a,b]^{ \ell_0 } }
    \ip{\mN^{L, \theta }_{ \infty }(x)-f(x)}{\mN^{L, \theta }_{ \infty }(x)} \, \mu( \d x).
  \end{split}
  \end{equation}
  Combining this 
  and \cref{VG}
  demonstrates that
  \begin{equation}
  \begin{split}
    \langle( \nabla V( \theta), \mG( \theta)\rangle
    &= \sum_{k= 1 }^L\sum_{ i_k= 1 }^{ \ell_k}
    2k\fb^{ k, \theta }_{ i_k} \mG_{\ell_k \ell_{k-1} + i_k + \diml_{k - 1 } }( \theta)\\
    &\quad +\sum_{k= 1 }^L
    \sum_{ i_k= 1 }^{ \ell_k}
    \sum_{ i_{k-1} = 1 }^{ \ell_{k-1} }
    2\fw^{ k, \theta }_{ i_k,i_{k-1} } \mG_{ (i_k-1)\ell_{k-1} + i_{k-1} + \diml_{k - 1 } }( \theta)\\
    &\quad -\sum_{ i_L= 1 }^{ \ell_L}2Lf_{ i_L}(0)\mG_{\ell_{L} \ell_{L-1} + i_{L} + \diml_{ L - 1 } }( \theta)\\
    &= 4L\int_{[a,b]^{ \ell_0 } }
    \ip{\mN^{L, \theta }_{ \infty }(x)-f(x)}{\mN^{L, \theta }_{ \infty }(x)} \, \mu( \d x)\\
    &\quad-4L\sum_{ i_L= 1 }^{ \ell_L}
    \Bigg[\int_{[a,b]^{ \ell_0 } }f_{ i_L}(0)( \mN^{L, \theta }_{ \infty, i_L}(x)-f_{ i_L}(x))\, \mu( \d x)\Bigg]\\
    &=4L\int_{[a,b]^{ \ell_0 } }
    \ip{\mN^{L, \theta }_{ \infty }(x)-f(x)}{\mN^{L, \theta }_{ \infty }(x)-f(0)} \, \mu( \d x).
  \end{split}
  \end{equation}
  The proof of \cref{prop:liap2}
  is thus complete.
\end{proof}

\begin{cor}
\label{cor:liap}
Assume \cref{main_setting}, assume for all $ x \in [a,b]^{ \ell_0 } $ that $f(x)=f(0) $,
and let $ \theta \in \Reals^{ \fd } $. Then 
\begin{equation}
\label{liap}
  \ip{ ( \nabla V)( \theta)}{\mG( \theta)} =4L\mL_{ \infty }( \theta)
  \, .
\end{equation}
\end{cor}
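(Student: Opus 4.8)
The plan is to obtain the identity directly from \cref{prop:liap2}, which was established for an arbitrary measurable target function $f$ and which already expresses the scalar product $\ip{(\nabla V)(\theta)}{\mG(\theta)}$ as $4L$ times the integral $\int_{[a,b]^{\ell_0}} \ip{\mN^{L,\theta}_\infty(x) - f(x)}{\mN^{L,\theta}_\infty(x) - f(0)}\,\mu(\d x)$. Thus the only remaining task is to simplify this integral under the constant-target hypothesis, and no further structural analysis of $\mG$ or $V$ is needed.

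First I would invoke the standing assumption that for all $x \in [a,b]^{\ell_0}$ it holds that $f(x) = f(0)$. Substituting $f(0) = f(x)$ into the second argument of the scalar product in \cref{eq_V} turns the integrand into $\ip{\mN^{L,\theta}_\infty(x) - f(x)}{\mN^{L,\theta}_\infty(x) - f(x)} = \norm{\mN^{L,\theta}_\infty(x) - f(x)}^2$. The integral then collapses to $\int_{[a,b]^{\ell_0}} \norm{\mN^{L,\theta}_\infty(x) - f(x)}^2\,\mu(\d x)$, which by the definition of the risk function in \cref{L_r} is precisely $\mL_\infty(\theta)$. Chaining these two equalities yields $\ip{(\nabla V)(\theta)}{\mG(\theta)} = 4L\,\mL_\infty(\theta)$, as claimed.

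There is essentially no obstacle here: the corollary is a one-line specialization of \cref{prop:liap2}, and all of the analytic difficulty — the explicit representation of $\mG$ in \cref{prop:G}, the dominated-convergence passage $r \to \infty$, and the telescoping induction that recombines the layerwise gradient terms into the factor $\mN^{L,\theta}_\infty(x)$ occupying the first slot of the scalar product — has already been absorbed into the proof of \cref{prop:liap2}. The conceptual point worth emphasizing is that the constant-target assumption is exactly what forces the two slots of the scalar product to coincide; as a consequence $\ip{(\nabla V)(\theta)}{\mG(\theta)}$ equals a nonnegative multiple of the risk and is therefore itself nonnegative, which is the precise sense in which $V$ acts as a Lyapunov function for the GF, GD, and SGD dynamics in the constant-target regime.
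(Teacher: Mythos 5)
Your argument is correct and coincides with the paper's own proof: both simply specialize \cref{prop:liap2} to the constant-target case, observing that the integrand becomes $\norm{\mN^{L,\theta}_{\infty}(x)-f(x)}^2$ and hence the integral equals $\mL_{\infty}(\theta)$ by \cref{L_r}. No discrepancies to report.
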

\begin{proof}[Proof of \cref{cor:liap}]
  \Nobs that \cref{prop:liap2} and 
  the fact that for all $ r\in [1 , \infty ] $
  it holds that
  $ \mL_r( \theta)
  =\int_{[a,b]^{ \ell_0 } } \norm{\mN_r^{L, \theta }(x)-f(0)}^2\mu( \d x) $
  establish \cref{liap}.
  The proof of \cref{cor:liap} is thus complete.
\end{proof}

\subsection{Weak chain rule for compositions of Lyapunov functions and GF processes}
\label{ssec:GF_weak_chain_lyapunov}

\begin{prop}
\label{prop:GF_det_ito}
  Assume \cref{main_setting} and
  let $T\in (0, \infty) $, 
  $ \Theta\in C([0,T], \Reals^{ \fd }) $ satisfy
  for all $t\in [0,T]$ that
  $ \Theta_t=\Theta_0-\int_0^t\mG( \Theta_s)\, \d s$.
  Then it holds for all $t\in [0,T]$ that
  \begin{equation}
  \label{GF_det_ito}
  V ( \Theta_t)
  =V ( \Theta_0)
  -4L \int_0^t \int_{[a,b]^{ \ell_0 } } \ip{\mN^{L, \theta }_{ \infty }(x)-f(x)}
    {\mN^{L, \theta }_{ \infty }(x)-f(0)} \, \mu( \d x)\, \d s.
  \end{equation}
\end{prop}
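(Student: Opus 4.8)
The plan is to reduce the claim to the pointwise identity already established in \cref{prop:liap2} by means of the classical chain rule for absolutely continuous curves. First I would note that, since $\Theta \in C([0,T],\Reals^{\fd})$, the set $\{\Theta_s \colon s \in [0,T]\}$ is a compact subset of $\Reals^{\fd}$, so that \cref{cor:G_up} supplies a finite constant $\fC \in \Reals$ satisfying $\sup_{s \in [0,T]} \norm{\mG(\Theta_s)} \leq \fC$. Hence the integrand $s \mapsto \mG(\Theta_s)$ appearing in the defining equation $\Theta_t = \Theta_0 - \int_0^t \mG(\Theta_s)\,\d s$ is bounded (and measurable, as the composition of the Borel-measurable map $\mG$ with the continuous curve $\Theta$), which shows that $[0,T] \ni t \mapsto \Theta_t \in \Reals^{\fd}$ is Lipschitz continuous, thus absolutely continuous, and satisfies $\Theta_t' = -\mG(\Theta_t)$ for almost every $t \in [0,T]$.

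Next I would exploit the regularity of the Lyapunov function: by \cref{V_c} together with the explicit gradient formula in \cref{it:grad_V} of \cref{prop:liap1}, the map $V \colon \Reals^{\fd} \to \Reals$ is a polynomial and in particular continuously differentiable. The chain rule for the composition of a $C^1$ function with an absolutely continuous curve therefore shows that $[0,T] \ni t \mapsto V(\Theta_t) \in \Reals$ is absolutely continuous with $\frac{\d}{\d t} V(\Theta_t) = \ip{(\nabla V)(\Theta_t)}{\Theta_t'} = -\ip{(\nabla V)(\Theta_t)}{\mG(\Theta_t)}$ for almost every $t \in [0,T]$. The fundamental theorem of calculus for absolutely continuous functions then yields $V(\Theta_t) - V(\Theta_0) = -\int_0^t \ip{(\nabla V)(\Theta_s)}{\mG(\Theta_s)}\,\d s$ for every $t \in [0,T]$.

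Finally I would insert the pointwise identity from \cref{prop:liap2}, evaluated at $\theta = \Theta_s$, into this integral; this converts $\ip{(\nabla V)(\Theta_s)}{\mG(\Theta_s)}$ into $4L \int_{[a,b]^{\ell_0}} \ip{\mN^{L,\Theta_s}_\infty(x) - f(x)}{\mN^{L,\Theta_s}_\infty(x) - f(0)}\,\mu(\d x)$ and hence establishes \cref{GF_det_ito} after interchanging the (finite) $s$- and $x$-integrals. The main, and only mildly delicate, obstacle is the verification that $s \mapsto \mG(\Theta_s)$ is well-behaved enough (measurable and uniformly bounded on $[0,T]$) for the defining integral to make sense and for $\Theta$ to be Lipschitz; once this is in place the chain rule is entirely elementary, because the non-differentiability of the ReLU activation never enters: $V$ is a genuine polynomial, so only the regularity of the curve $\Theta$, and not that of $\mG$, is relevant to the composition $V \circ \Theta$.
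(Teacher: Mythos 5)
Your proposal is correct and follows essentially the same route as the paper: both establish boundedness of $s \mapsto \mG(\Theta_s)$ via \cref{cor:G_up} and the compactness of the range of $\Theta$, apply a weak chain rule for the $C^1$ function $V$ along the integral curve, and then substitute the identity of \cref{prop:liap2}. The only difference is that the paper outsources the chain-rule step to a cited lemma (Cheridito et al., Lemma~3.1) whereas you prove it inline via absolute continuity, which is a fully adequate substitute.
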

\begin{proof}[Proof of \cref{prop:GF_det_ito}]
\Nobs that \cref{cor:G_up} and the assumption that 
$ \Theta \in C( [0,T], \Reals^{ \fd } ) $ ensure that
$ 
  [0,T] \ni t \mapsto \mG( \Theta_t ) \in \Reals^{ \fd } 
$
is bounded.
\cref{prop:liap2} and, e.g., Cheridito et al. 
\cite[Lemma 3.1]{MR4438169}
(applied with 
$
  T \curvearrowleft T 
$, 
$
  n \curvearrowleft \fd 
$,
$
  F \curvearrowleft ( \Reals^{ \fd } \ni \theta \mapsto V( \theta) \in \Reals ) 
$,
$ 
  \vartheta \curvearrowleft 
  ( [0,T] \ni t \mapsto \mG( \Theta_t ) \in \Reals^{ \fd } ) 
$ 
in the notation of \cite[Lemma 3.1]{MR4438169})
\hence prove that for all $ t \in [0,T] $ it holds that
\begin{equation}
\begin{split}
  V( \Theta_t ) - V( \Theta_0 )
&
  = - \int_0^t \langle ( \nabla V )( \Theta_s ), \mG( \Theta_s ) \rangle \, \d s 
\\
&
  = 
  - 4 L \int_0^t \int_{ [a,b]^{ \ell_0 } } 
  \ip{
    \mN^{ L, \theta }_{ \infty }( x ) - f( x ) 
  }{ 
    \mN^{ L, \theta }_{ \infty }( x ) - f(0) 
  } \, \mu( \d x) \, \d s .
\end{split}
\end{equation}
The proof of \cref{prop:GF_det_ito} is thus complete.
\end{proof}

\begin{cor}
\label{cor:GF_det_ito}
Assume \cref{main_setting}, 
assume for all $ x \in [a,b]^{ \ell_0 } $ that
$ f(x) = f(0) $, 
and
let $ T \in (0, \infty) $,
$ \Theta \in C( [0,T], \Reals^{ \fd } ) $ 
satisfy for all $ t \in [0,T] $ that
$ 
  \Theta_t = \Theta_0 - \int_0^t \mG( \Theta_s ) \, \d s 
$.
Then it holds for all $ t \in [0,T] $ that
\begin{equation}
\label{GF_det_ito_const}
  V( \Theta_t )
  =
  V( \Theta_0 ) 
  - 4 L \int_0^t \mL_{ \infty }( \Theta_s ) \, \d s .
\end{equation}
\end{cor}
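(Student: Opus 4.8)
The plan is to derive \cref{GF_det_ito_const} as a direct specialization of the general weak chain rule in \cref{prop:GF_det_ito} to the case of a constant target function. Since the hypotheses of \cref{cor:GF_det_ito} already contain all the assumptions required by \cref{prop:GF_det_ito} (namely that $\Theta\in C([0,T],\Reals^{\fd})$ solves the integral equation $\Theta_t=\Theta_0-\int_0^t\mG(\Theta_s)\,\d s$), I would first simply invoke \cref{prop:GF_det_ito} to obtain, for all $t\in[0,T]$,
\[
  V(\Theta_t)=V(\Theta_0)-4L\int_0^t\int_{[a,b]^{\ell_0}}\ip{\mN^{L,\Theta_s}_{\infty}(x)-f(x)}{\mN^{L,\Theta_s}_{\infty}(x)-f(0)}\,\mu(\d x)\,\d s .
\]

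Next I would exploit the standing assumption that $f(x)=f(0)$ for all $x\in[a,b]^{\ell_0}$ in order to collapse the bilinear integrand: replacing $f(x)$ by $f(0)$ turns the scalar product $\ip{\mN^{L,\Theta_s}_{\infty}(x)-f(x)}{\mN^{L,\Theta_s}_{\infty}(x)-f(0)}$ into the squared norm $\norm{\mN^{L,\Theta_s}_{\infty}(x)-f(0)}^2$. The concluding step is to recognise that, again by the constancy of $f$ together with the definition of the risk function in \cref{L_r}, the spatial integral of this squared norm is exactly the risk, that is $\int_{[a,b]^{\ell_0}}\norm{\mN^{L,\Theta_s}_{\infty}(x)-f(0)}^2\,\mu(\d x)=\mL_{\infty}(\Theta_s)$; substituting this identity back then yields \cref{GF_det_ito_const}. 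A marginally slicker route that avoids the intermediate rewriting is to bypass \cref{GF_det_ito} and instead combine the abstract chain rule underlying \cref{prop:GF_det_ito} with \cref{cor:liap}: the latter already records that in the constant target case $\ip{(\nabla V)(\theta)}{\mG(\theta)}=4L\,\mL_{\infty}(\theta)$ for every $\theta\in\Reals^{\fd}$, so that $-\int_0^t\ip{(\nabla V)(\Theta_s)}{\mG(\Theta_s)}\,\d s=-4L\int_0^t\mL_{\infty}(\Theta_s)\,\d s$, which is precisely the right-hand side of \cref{GF_det_ito_const}.

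I do not anticipate any genuine obstacle, as the statement is a pure corollary: no new estimates, regularity arguments, or limiting procedures are needed. The only point requiring (minimal) care is the bookkeeping that the inner product appearing in \cref{GF_det_ito} really does reduce to $\mL_{\infty}(\Theta_s)$ once $f$ is constant, and this reduction is guaranteed by \cref{L_r} in conjunction with the hypothesis $f\equiv f(0)$.
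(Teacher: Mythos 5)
Your primary argument — invoking \cref{prop:GF_det_ito} and then using $f(x)=f(0)$ to rewrite the integrand $\ip{\mN^{L,\Theta_s}_{\infty}(x)-f(x)}{\mN^{L,\Theta_s}_{\infty}(x)-f(0)}$ as $\norm{\mN^{L,\Theta_s}_{\infty}(x)-f(x)}^2$ so that the spatial integral becomes $\mL_{\infty}(\Theta_s)$ — is exactly the paper's proof. The alternative route via \cref{cor:liap} that you sketch is also sound but unnecessary; your main plan is correct and complete as stated.
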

\begin{proof}[Proof of \cref{cor:GF_det_ito}]
\Nobs[Observe] \cref{prop:GF_det_ito} 
and the fact that 
$
  \mL_{ \infty }( \theta )
  = \int_{ [a,b]^{ \ell_0 } } 
  \norm{ 
    \mN_{ \infty }^{ L, \theta }( x ) - f( x ) 
  }^2
  \mu( \d x ) 
  = \int_{ [a,b]^{ \ell_0 } } 
  \langle 
    \mN_{ \infty }^{ L, \theta }( x ) - f( x ) 
    ,
    \mN_{ \infty }^{ L, \theta }( x ) - f( 0 ) 
  \rangle 
  \, \mu( \d x ) 
$
establish \cref{GF_det_ito_const}.
The proof of \cref{cor:GF_det_ito} is thus complete.
\end{proof}

\subsection{Weak chain rule for the risk of GF processes}
\label{ssec:GF_weak_chain_risk}

\begin{lemma}
\label{lem:grad_L_bd}
  Assume \cref{main_setting}
  and let $K\subseteq\Reals^{ \fd } $ be compact.
  Then
  \begin{equation}\label{grad_L_bd} 
  \sup\nolimits_{\theta \in K} \sup\nolimits_{r\in [1 , \infty )  } \norm{ ( \nabla\mL_r)( \theta)}
  <\infty.
  \end{equation}
\end{lemma}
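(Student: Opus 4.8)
The plan is to reduce the claim to the a priori bounds already established in \cref{lem:der:unif:bounded}, so that no new analysis through the layered composition is required. First I would recall that \cref{G1} in \cref{prop:G} guarantees $\mL_r \in C^1(\Reals^{\fd},\Reals)$ for every $r \in [1,\infty)$, so that $(\nabla\mL_r)(\theta)$ is well defined, and that the chain rule together with the differentiation-under-the-integral argument carried out in the proof of \cref{prop:G} (via \cref{lem:N_der} and dominated convergence) yields, for all $r \in [1,\infty)$, $i \in \{1,\ldots,\fd\}$, $\theta \in \Reals^{\fd}$, the representation
\begin{equation}
\Bigl( \tfrac{\partial \mL_r}{\partial \theta_i} \Bigr)(\theta)
= \int_{[a,b]^{\ell_0}} 2 \sum_{j=1}^{\ell_L} \Bigl( \mN^{L,\theta}_{r,j}(x) - f_j(x) \Bigr) \Bigl[ \tfrac{\partial}{\partial \theta_i} \bigl( \mN^{L,\theta}_{r,j}(x) \bigr) \Bigr] \mu(\d x) .
\end{equation}

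Next I would bound the three kinds of factors in this integral uniformly in $\theta \in K$, $r \in [1,\infty)$, $x \in [a,b]^{\ell_0}$, and the relevant indices. \Cref{lem:unif:bound:item2} of \cref{lem:der:unif:bounded} supplies a finite constant $C_1 \in \Reals$ bounding $\abs{\mN^{L,\theta}_{r,j}(x)}$, \cref{lem:unif:bound:item4} of \cref{lem:der:unif:bounded} supplies a finite constant $C_2 \in \Reals$ bounding the partial derivatives $\bigl| \tfrac{\partial}{\partial\theta_i}(\mN^{L,\theta}_{r,j}(x)) \bigr|$ in its full generality (all $j \in \{1,\ldots,\fd\}$), and the term $f_j(x)$ is controlled via the integrability property $\int_{[a,b]^{\ell_0}} \norm{f(x)} \, \mu(\d x) < \infty$ from \cref{lem:f:integrable}. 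Using the triangle inequality, the crude bound $\abs{f_j(x)} \leq \norm{f(x)}$, and the finiteness of $\mu$ (so that $\mathfrak{m} = \mu([a,b]^{\ell_0}) \in \Reals$), I then obtain
\begin{equation}
\Bigl| \bigl( \tfrac{\partial \mL_r}{\partial \theta_i} \bigr)(\theta) \Bigr|
\leq 2 C_2 \sum_{j=1}^{\ell_L} \int_{[a,b]^{\ell_0}} \bigl( C_1 + \abs{f_j(x)} \bigr) \mu(\d x)
\leq 2 C_2 \ell_L \Bigl( C_1 \mathfrak{m} + \int_{[a,b]^{\ell_0}} \norm{f(x)} \, \mu(\d x) \Bigr) ,
\end{equation}
whose right-hand side is finite and independent of $\theta \in K$, $r \in [1,\infty)$, and $i$.

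Finally I would sum the squares of the $\fd$ components and take the square root, concluding that $\sup_{\theta \in K} \sup_{r \in [1,\infty)} \norm{(\nabla\mL_r)(\theta)}$ is bounded by $\sqrt{\fd}$ times the uniform per-component bound above, which establishes \cref{grad_L_bd}. I do not expect a genuine obstacle here: the entire difficulty — propagating estimates through the iterated composition uniformly in the smoothing parameter $r$ and over the compact set $K$ — has already been absorbed into \cref{lem:der:unif:bounded}, so the remaining steps are the routine domination and summation sketched above, and the only point requiring care is to invoke the $r$-uniform derivative bound of \cref{lem:unif:bound:item4} together with the integrability of $f$ rather than any pointwise control of $f$.
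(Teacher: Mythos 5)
Your proof is correct, but it takes a genuinely more direct route than the paper's. The paper does not work with the compact representation $\frac{\partial \mL_r}{\partial\theta_i}(\theta)=\int_{[a,b]^{\ell_0}}2\sum_{j}(\mN^{L,\theta}_{r,j}(x)-f_j(x))\,\frac{\partial}{\partial\theta_i}(\mN^{L,\theta}_{r,j}(x))\,\mu(\d x)$; instead it starts from the fully expanded layer-wise formulas in \cref{G2',G2''} of \cref{prop:G}, introduces a constant $\mathfrak{D}$ dominating the quantities in \cref{lem:der:unif:bounded}, applies Jensen's inequality, and arrives at a bound of the form $4\mathfrak{m}\mathfrak{D}^{2L}\,[\,\cdots]^2\,\mL_r(\theta)$ involving sums of products of weights, which it then controls via \cref{it:sum_w_up} of \cref{prop:G_upper_estimate} together with a separate uniform bound $\sup_{\theta\in K}\sup_{r\in[1,\infty)}\mL_r(\theta)<\infty$. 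That route mirrors the structure of \cref{prop:G_upper_estimate} and produces an estimate proportional to $\mL_r(\theta)$, but for the mere finiteness asserted in \cref{grad_L_bd} none of that refinement is needed. Your argument short-circuits it: the differentiation-under-the-integral identity is exactly what the dominated-convergence step in the proof of \cref{prop:G} delivers, \cref{lem:unif:bound:item2,lem:unif:bound:item4} of \cref{lem:der:unif:bounded} give the required $(\theta,r,x)$-uniform constants $C_1$ and $C_2$ over the compact set $K$, \cref{lem:f:integrable} and the finiteness of $\mathfrak{m}$ handle the target function, and summing the $\fd$ squared components finishes the proof. The one point worth being explicit about in a final write-up is that you are invoking the same dominated-convergence justification for interchanging $\partial_{\theta_i}$ and $\int\cdot\,\mu(\d x)$ that underlies \cref{G1,G2',G2''}, rather than treating the displayed representation as given; since that interchange is already established in \cref{prop:G}, this is a citation issue, not a gap.
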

\begin{proof}[Proof of \cref{lem:grad_L_bd}]
	Throughout this proof assume without loss of generality that $\mathfrak{m} > 0$
	and for every $k \in \N_0$
	let $\diml_k \in \N_0$
	satisfy $\diml_k = \sum_{n=1}^k \ell_n ( \ell_{n-1} + 1 )$.
  \Nobs that
 \cref{lem:der:unif:bounded}
  ensures that there exists 
  $ \mathfrak{D} \in [1 , \infty) $ which satisfies
  for all $ k \in \{ 1, \ldots, L\} $ that
  \begin{equation}\label{D}
    \mathfrak{D}
    \geq \mathbf{a} + \sup_{\theta \in K} \sup_{r, s, t \in [1 ,\infty )  }
    \sup_{ i \in \{ 1, \ldots, \ell_k \} }
    \sup_{x\in [a,b]^{ \ell_0 } }
    \pr[\big]{ \abs{\mN^{ k, \theta }_{r,i}(x)}
    +\abs{\R_s( \mN^{ k, \theta }_{r,i}(x))}
    +\abs{( \R_t ) ' ( \mN^{ k, \theta }_{r,i}(x))} } .
  \end{equation}
  \Moreover \cref{G2''} of
  \cref{prop:G} 
  proves
  that for all $ \theta \in K$,
  $ r\in [1, \infty )  $, $ k \in \{ 1, \ldots, L\} $,
  $ i \in \{ 1, \ldots, \ell_k \} $
  it holds that
  \begin{equation}
  \label{grad_L_up1a}
  \begin{split}
    &\Big|\pr[\Big]{ \frac{\partial\mL_r}{\partial \theta_{\ell_k\ell_{k-1} + i+ \diml_{k-1} } } } ( \theta ) \Big|^2\\
    &=
    \Bigg( \sum_{\substack{v_k,v_{k+1},  
  \ldots,v_L\in \enne, \\\forall w\in \enne
  \cap[k, L]\colon v_w\leq\ell_w} }
  \int_{[a,b]^{ \ell_0 } }2\,
  \Big[ \indicator{ \{ i \} }(v_k)\Big]
  \Big[ \mN_{r,v_L}^{L, \theta }(x)-f_{v_L}(x)\Big]
  \\
  &\quad\cdot
  \Big[
  \textstyle{\prod}_{n={k+1} }^{L} \big(
  \fw^{n, \theta }_{v_n, v_{n-1} }
  \big[( \R_{r^{1 / ( n - 1 ) } } ) ' ( \mN^{n-1, \theta }_{r,v_{n-1} }(x))\big]
  \big)
  \Big]\, \mu( \d x)\Bigg)^2\\
    &\leq 
    4\mathfrak{D}^{2 L  }
    \Bigg( \int_{[a,b]^{ \ell_0 } }
    \norm{\mN^{L, \theta }_r(x)-f(x)}
    \Bigg[\sum_{\substack{v_k,v_{k+1}, \ldots,v_L\in \enne, \\
    \forall w\in \enne\cap[k, L]\colon v_w\leq\ell_w} }
    \Big[\indicator{ \{ i \} }(v_k)\Big]\Big[
    \textstyle{\prod}_{n={k+1} }^{L}
    \abs{\fw^{n, \theta }_{v_n, v_{n-1} }}
    \Big]\Bigg]\, \mu( \d x)\Bigg)^2\\
    &=
    4\mathfrak{m}^2 \mathfrak{D}^{ 2 L } \Bigg(
    \Bigg[\sum_{\substack{v_k,v_{k+1}, \ldots,v_L\in \enne, \\
    \forall w\in \enne\cap[k, L]\colon v_w\leq\ell_w} }
    \Big[\indicator{ \{ i \} }(v_k)\Big]\Big[
    \textstyle{\prod}_{n={k+1} }^{L}
    \abs{\fw^{n, \theta }_{v_n, v_{n-1} }}
    \Big]\Bigg]\\
    &\quad\cdot
    \frac{1}{\mathfrak{m} }
    \int_{[a,b]^{ \ell_0 } }
    \norm{\mN^{L, \theta }_r(x)-f(x)}
    \, \mu( \d x)\Bigg)^2.
    \end{split}
    \end{equation}
  Jensen's inequality 
  \hence shows for all 
  $ \theta \in K$,
  $ r\in [1 , \infty ) $, $ k \in \{ 1, \ldots, L\} $,
  $ i \in \{ 1, \ldots, \ell_k \} $ that
  \begin{equation}
  \label{grad_L_up1b}
  \begin{split}
    &\Big| \pr[\Big]{ \frac{\partial\mL_r}{\partial \theta_{\ell_k\ell_{k-1} + i+\diml_{k-1} } }  } ( \theta)\Big|^2\\
    &\leq
    4\mathfrak{m}^2 \mathfrak{D}^{2 L } 
    \Bigg[\sum_{\substack{v_k,v_{k+1}, \ldots,v_L\in \enne, \\
    \forall w\in \enne\cap[k, L]\colon v_w\leq\ell_w} }
    \Big[\indicator{ \{ i \} }(v_k)\Big]\Big[
    \textstyle{\prod}_{n={k+1} }^{L}
    \abs{\fw^{n, \theta }_{v_n, v_{n-1} }}
    \Big]\Bigg]^2
    \displaystyle\frac{1}{\mathfrak{m} } \int_{[a,b]^{ \ell_0 } }
    \norm{\mN^{L, \theta }_r(x)-f(x)}^2
    \, \mu( \d x)\\
    &=
    4\mathfrak{m} \mathfrak{D}^{2 L }
    \Bigg[\sum_{\substack{v_k,v_{k+1}, \ldots,v_L\in \enne, \\
    \forall w\in \enne\cap[k, L]\colon v_w\leq\ell_w} }
    \Big[\indicator{ \{ i \} }(v_k)\Big]\Big[
    \textstyle{\prod}_{n={k+1} }^{L}
    \abs{\fw^{n, \theta }_{v_n, v_{n-1} }}
    \Big]\Bigg]^2
    \mL_r( \theta).
    \end{split}
    \end{equation} 
  \Moreover \cref{G2'} of
  \cref{prop:G}
  demonstrates for all $ \theta \in K$,
  $ r\in [1 , \infty ) $,
  $ k \in \{ 1, \ldots, L\} $,
  $ i \in \{ 1, \ldots, \ell_k \} $, 
  $ j \in \{ 1, \ldots, \ell_{k-1} \} $ that
  \begin{equation}    
  \label{grad_L_up2a}
  \begin{split}
    &\Big|\pr[\Big]{ \frac{ \partial \mL_r}{\partial\theta_{ (i-1)\ell_{k-1} + j+ \diml_{k-1} } } } ( \theta)\Big|^2\\
    &=
    \Bigg( \sum_{\substack{v_k,v_{k+1},  
  \ldots,v_L\in \enne, \\\forall w\in \enne
  \cap[k, L]\colon v_w\leq\ell_w} }
  \int_{[a,b]^{ \ell_0 } }2\,
  \Big[
  \R_{r^{1 / ( \max \cu{k - 1  , 1 } ) } } ( \mN^{ \max\{k-1,1\}, \theta }_{r,j}(x))
  \indicator{ (1, L]}(k)
  +x_j \indicator{\{1\} }(k)
  \Big]\\
  &\quad\cdot
  \Big[ \indicator{\{ i \} }(v_k)\Big]
  \Big[ \mN_{r,v_L}^{L, \theta }(x)-f_{v_L}(x)\Big] 
  \Big[
  \textstyle{\prod}_{n={k+1} }^{L} \big(
  \fw^{n, \theta }_{v_n, v_{n-1} }
  \big[( \R_{r^{1 / ( n - 1 ) } } ) ' ( \mN^{n-1, \theta }_{r,v_{n-1} }(x))\big]
  \big)
  \Big]\, \mu( \d x)\Bigg)^2\\
    &\leq
    4\mathfrak{D}^{2 L } 
    \Bigg( \int_{[a,b]^{ \ell_0 } }
    \norm{\mN_r^{L, \theta }(x)-f(x)}
    \Bigg[\sum_{\substack{v_k,v_{k+1}, \ldots,v_L\in \enne, \\
    \forall w\in \enne\cap[k, L]\colon v_w\leq\ell_w} }
    \Big[\indicator{ \{ i \} }(v_k)\Big]\Big[
    \textstyle{\prod}_{n={k+1} }^{L}
    \abs{\fw^{n, \theta }_{v_n, v_{n-1} }}
    \Big]\Bigg] \, \mu( \d x)\Bigg)^2\\
    &=
    4\mathfrak{m}^2\mathfrak{D}^{2 L } 
    \Bigg( \Bigg[\sum_{\substack{v_k,v_{k+1}, \ldots,v_L\in \enne, \\
    \forall w\in \enne\cap[k, L]\colon v_w\leq\ell_w} }
    \Big[\indicator{ \{ i \} }(v_k)\Big]\Big[
    \textstyle{\prod}_{n={k+1} }^{L}
    \abs{\fw^{n, \theta }_{v_n, v_{n-1} }}
    \Big]\Bigg]\\
    &\quad\cdot 
    \frac{1}{\mathfrak{m} }
    \int_{[a,b]^{ \ell_0 } } \,
    \norm{\mN_{ \infty }^{L, \theta }(x)-f(x)}
     \, \mu( \d x)\Bigg)^2.
  \end{split}
  \end{equation}
  Jensen's inequality \hence proves that 
  for all $ \theta \in K$,
  $ r\in [1 , \infty )  $, $ k \in \{ 1, \ldots, L\} $,
  $ i \in \{ 1, \ldots, \ell_k \} $, 
  $ j \in \{ 1, \ldots, \ell_{k-1} \} $
  we have that
  \begin{equation}    
  \label{grad_L_up2b}
  \begin{split}
    &\Big| \pr[\Big]{ \frac{\partial \mL_r}{\partial\theta_{ (i-1)\ell_{k-1} + j+ \diml_{k-1} } }  } ( \theta)\Big|^2\\
    &\leq
    4\mathfrak{m}^2\mathfrak{D}^{2 L }
    \Bigg[\sum_{\substack{v_k,v_{k+1}, \ldots,v_L\in \enne, \\
    \forall w\in \enne\cap[k, L]\colon v_w\leq\ell_w} }
    \Big[\indicator{\{ i \} }(v_k)\Big]\Big[
    \textstyle{\prod}_{n={k+1} }^{L}
    \abs{\fw^{n, \theta }_{v_n, v_{n-1} }}
    \Big]\Bigg]^2
    \displaystyle \frac{1}{\mathfrak{m} } \int_{[a,b]^{ \ell_0 } }
    \norm{\mN_{ \infty }^{L, \theta }(x)-f(x)}^2
    \, \mu( \d x)\\
    &=
    4\mathfrak{m} \mathfrak{D}^{2 L }
    \Bigg[\sum_{\substack{v_k,v_{k+1}, \ldots,v_L\in \enne, \\
    ( \forall w\in \enne\cap[k, L]\colon v_w\leq\ell_w)} }
    \Big[\indicator{\{ i \} }(v_k)\Big]\Big[
    \textstyle{\prod}_{n={k+1} }^{L}
    \abs{\fw^{n, \theta }_{v_n, v_{n-1} }}
    \Big]\Bigg]^2 \mL_r( \theta).
  \end{split}
  \end{equation}
  This and
  \cref{grad_L_up1b}
  assure for all 
  $\theta \in K$,
  $ r\in [1 , \infty )  $
   that
  \begin{equation}
  \label{grad_L_upper}
  \begin{split}
    &\norm{ ( \nabla\mL_r)( \theta)}^2\\ 
    &= 
    \sum\limits_{k= 1 }^L\sum\limits_{ i= 1 }^{ \ell_k}
    \bigg[\Big| \pr[\Big]{ \frac{\partial\mL_r}{\partial\theta_{\ell_k\ell_{k-1} + i+ \diml_{k - 1 } } }  } ( \theta)\Big|^2
    +\sum\limits_{ j = 1 }^{ \ell_{k-1} }
    \Big| \pr[\Big]{ \frac{\partial\mL_r}{\partial\theta_{ (i-1)\ell_{k-1} + j+ \diml_{k - 1 } } } } ( \theta)\Big|^2\bigg]\\
    &\leq 
    4\mathfrak{m} \mathfrak{D}^{2L }  
    \Bigg[ \sum\limits_{k= 1 }^L\sum\limits_{ i= 1 }^{ \ell_k} 
    ( \ell_{k-1} + 1 ) 
    \sum_{\substack{v_k,v_{k+1}, \ldots,v_L\in \enne, \\
    \forall w\in \enne\cap[k, L]\colon v_w\leq\ell_w} }
    \Big[\indicator{ \{ i \} }(v_k)\Big]\Big[
    \textstyle{\prod}_{n={k+1} }^{L}
    \abs{\fw^{n, \theta }_{v_n, v_{n-1} }}
    \Big]\Bigg]^2
    \mL_r( \theta).
  \end{split}
  \end{equation}  
  Furthermore, \nobs that \cref{L_r}
  implies
  for all $ \theta \in K$, $ r\in [1 , \infty )  $
  that
  \begin{equation}
  \begin{split}
  \label{L_bd}
    \mL_r( \theta)
    &=\int_{[a,b]^{ \ell_0 } } \norm{\mN^{L, \theta }_r(x)
    -f(x)}^2\mu( \d x)
    \leq 2 \int_{[a,b]^{ \ell_0 } } \Big[\norm{\mN^{L, \theta }_r(x)}^2
    +\norm{f(x)}^2\Big]\mu( \d x) \\
    &\leq 2 \mathfrak{m} \br*{ \sup\nolimits_{y \in [a,b]^{ \ell_0 } }
    \norm{\mN^{L, \theta }_r(y)}^2 }
    +2\int_{[a,b]^{ \ell_0 } } \norm{f(x)}^2\mu( \d x).
  \end{split}  
  \end{equation}
  This,
  \cref{lem:f:integrable},
   and \cref{D}
  prove that
  $ \sup_{\theta \in K} \sup_{r\in [1 , \infty )  } \mL_r( \theta)
  <\infty$.
  Combining this with 
  \cref{grad_L_upper} and
  \cref{it:sum_w_up} of \cref{prop:G_upper_estimate}
  establishes \cref{grad_L_bd}.
  The proof of \cref{lem:grad_L_bd}
  is thus complete.
\end{proof}

\begin{prop}
\label{lem:GF_det_ito2}
  Assume \cref{main_setting}
  and
  let $T\in (0, \infty) $,
  $ \Theta\in C([0,T], \Reals^{ \fd }) $ satisfy 
  for all $t\in [0,T]$ that
  $ \Theta_t=\Theta_0-\int_0^t\mG( \Theta_s)\, \d s$.
  Then it holds for all $t\in [0,T]$ that
  \begin{equation}
  \label{GF_det_ito2}
    \mL_{ \infty }( \Theta_t)=\mL_{ \infty }( \Theta_0)
    -\int_0^t\norm{\mG( \Theta_s)}^2\, \d s.
  \end{equation}
\end{prop}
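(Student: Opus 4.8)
The plan is to exploit the fact that the approximated risk functions $\mL_r$, $r\in[1,\infty)$, are continuously differentiable (see \cref{G1} in \cref{prop:G}), to apply the chain rule to each $\mL_r$ along the GF trajectory, and then to pass to the limit $r\to\infty$ using the convergence $(\nabla\mL_r)(\theta)\to\mG(\theta)$ supplied by \cref{G3}. A direct chain rule for $\mL_{\infty}$ is unavailable because $\mG$ is in general discontinuous, so the whole argument must be routed through the smooth approximations and a dominated-convergence passage to the limit.

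First I would set $K=\Theta([0,T])$ and note that $K$ is compact, since $\Theta$ is continuous and $[0,T]$ is compact. By \cref{cor:G_up} the map $[0,T]\ni s\mapsto\mG(\Theta_s)\in\Reals^{\fd}$ is bounded, hence $\Theta$ is (Lipschitz and thus) absolutely continuous with $\Theta_s'=-\mG(\Theta_s)$ for almost every $s$. For each fixed $r\in[1,\infty)$, since $\mL_r\in C^1(\Reals^{\fd},\Reals)$, I would invoke the chain rule for compositions of $C^1$ functions with absolutely continuous paths (the same result from Cheridito et al.\ \cite[Lemma 3.1]{MR4438169} that was used in the proof of \cref{prop:GF_det_ito}, now applied with $F\curvearrowleft\mL_r$) to obtain, for all $t\in[0,T]$,
\begin{equation}
  \mL_r(\Theta_t)=\mL_r(\Theta_0)-\int_0^t\ip{(\nabla\mL_r)(\Theta_s)}{\mG(\Theta_s)}\,\d s.
\end{equation}

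Next I would pass to the limit $r\to\infty$ in this identity. The convergence of $\mL_r(\Theta_t)$ and of $\mL_r(\Theta_0)$ to $\mL_{\infty}(\Theta_t)$ and $\mL_{\infty}(\Theta_0)$ is immediate from the pointwise convergence $\lim_{r\to\infty}\mL_r(\theta)=\mL_{\infty}(\theta)$ recorded in \cref{G3}. For the integral term, \cref{G3} also yields $(\nabla\mL_r)(\Theta_s)\to\mG(\Theta_s)$ and hence $\ip{(\nabla\mL_r)(\Theta_s)}{\mG(\Theta_s)}\to\norm{\mG(\Theta_s)}^2$ for every $s\in[0,T]$. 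To interchange limit and integral I would establish a uniform bound: \cref{lem:grad_L_bd} applied to the compact set $K$ gives $\sup_{s\in[0,T]}\sup_{r\in[1,\infty)}\norm{(\nabla\mL_r)(\Theta_s)}<\infty$, and \cref{cor:G_up} gives $\sup_{s\in[0,T]}\norm{\mG(\Theta_s)}<\infty$, so that the Cauchy--Schwarz inequality bounds the integrand uniformly in $r$ by a constant, which is integrable on the finite interval $[0,t]$. The dominated convergence theorem then gives $\int_0^t\ip{(\nabla\mL_r)(\Theta_s)}{\mG(\Theta_s)}\,\d s\to\int_0^t\norm{\mG(\Theta_s)}^2\,\d s$, and combining the three limits establishes \cref{GF_det_ito2}.

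The main obstacle I anticipate is the justification of the limit interchange in the integral rather than the chain rule itself: one must simultaneously control $(\nabla\mL_r)(\Theta_s)$ uniformly in both $r$ and $s$ (which is exactly the content of the uniform local bound \cref{lem:grad_L_bd}) and identify the pointwise limit of the inner product as $\norm{\mG(\Theta_s)}^2$ (which relies on \cref{G3}). Care is also needed because $\mG$ is only measurable and bounded along $\Theta$; this suffices for the absolute continuity of $\Theta$ and for the applicability of the chain-rule lemma to each smooth $\mL_r$, but it is precisely the reason the argument cannot use a direct chain rule for $\mL_{\infty}$ and must instead proceed through the $C^1$ approximations $\mL_r$ and the dominated-convergence passage to the limit.
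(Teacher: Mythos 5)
Your proposal is correct and follows essentially the same route as the paper's proof: apply the weak chain rule of Cheridito et al.\ \cite[Lemma~3.1]{MR4438169} to each $ C^1 $ approximation $ \mL_r $ along the trajectory, use \cref{G3} of \cref{prop:G} for the pointwise limits, and justify the interchange of limit and integral via the uniform bounds from \cref{lem:grad_L_bd} and \cref{cor:G_up} together with the Cauchy--Schwarz inequality and the dominated convergence theorem. No substantive differences.
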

\begin{proof}[Proof of \cref{lem:GF_det_ito2}]
  \Nobs that, 
  e.g., Cheridito et al. 
  \cite[Lemma 3.1]{MR4438169}
  (applied with $T\curvearrowleft T$, $n\curvearrowleft\fd $,
  $F\curvearrowleft ( \Reals^{ \fd }\ni\theta \mapsto\mL_r( \theta) \in \Reals) $,
  $ \vartheta \curvearrowleft ([0,T]\ni t\mapsto\mG( \Theta_t) \in \Reals^{ \fd }) $ in the notation of \cite[Lemma 3.1]{MR4438169})
  implies that for all $ r\in [1 , \infty )  $, $t\in [0,T]$
  we have that
  \begin{equation}
  \label{L_theta_diff}
    \mL_r( \Theta_t)-\mL_r( \Theta_0)
    =-\int_0^t\langle( \nabla \mL_r( \Theta_s)), \mG( \Theta_s)\rangle\, \d s.
  \end{equation}
  Furthermore, \nobs that \cref{G3} of 
  \cref{prop:G} ensures that 
  for all $t\in [0,T]$ it holds that
  $ \lim_{r\to \infty } $ $ ( \mL_r( \Theta_t)-\mL_r( \Theta_0))
  =\mL_{ \infty }( \Theta_t)-\mL_{ \infty }( \Theta_0) $
  and
  $ \lim_{r\to \infty } \langle ( \nabla \mL_r)( \Theta_t), \mG( \Theta_t)\rangle
  =\langle ( \mG( \Theta_t), \mG( \Theta_t)\rangle$
  $=\norm{\mG( \Theta_t)}^2$.
  Moreover, \nobs that the assumption
  that $ \Theta\in C([0,T], \Reals^{ \fd }) $ assures that
  there exists a compact set $K\subseteq\Reals^{ \fd } $ such that
  for all $t\in [0,T]$ it holds that
  $ \Theta_t\in K$.
  This, the Cauchy-Schwarz inequality, 
  \cref{cor:G_up}, 
  \cref{G1} of \cref{prop:G}, and
  \cref{lem:grad_L_bd}
  \hence demonstrate that
  \begin{equation}
  \begin{split}
    &\sup\nolimits_{r\in [1 , \infty )  } \sup\nolimits_{t\in [0,T]}
    \abs[\big]{\langle ( \nabla\mL_r)( \Theta_t), \mG( \Theta_t)\rangle}
    \leq
    \sup\nolimits_{r\in [1 , \infty ) } \sup\nolimits_{\theta \in K}
    \abs[\big]{\langle ( \nabla\mL_r)( \theta), \mG( \theta)\rangle} \\
    &\leq
    \sup\nolimits_{r\in [1 , \infty ) } \sup\nolimits_{\theta \in K}
    \pr[\big]{ \norm{ ( \nabla\mL_r)( \theta)} \norm{\mG( \theta)} }
    <\infty.
  \end{split}
  \end{equation}
  The dominated convergence theorem
  and \cref{G3} of \cref{prop:G} \hence
  show that for all $t\in [0,T]$
  it holds that
  \begin{equation}
    \lim\limits_{r\to \infty }  \br*{ \int_0^t
    \langle( \nabla\mL_{ \infty })( \Theta_s), \mG( \Theta_s)\rangle\, \d s } 
    = \int_0^t \br*{ \lim\limits_{r\to \infty }
    \langle( \nabla\mL_{ \infty })( \Theta_s), \mG( \Theta_s)\rangle } \, \d s
    =\int_0^t \norm{\mG( \Theta_s)}^2\, \d s.  
  \end{equation}
  Combining this with
  \cref{L_theta_diff}
  establishes \cref{GF_det_ito2}.
  The proof of \cref{lem:GF_det_ito2}
  is thus complete.
\end{proof}

\subsection{Convergence analysis for GF processes}
\label{ssec:GF_convergence}

\begin{theorem}
  \label{thm:GF_conv}
  Assume \cref{main_setting}, 
  assume for all $ x \in [a,b]^{ \ell_0 } $ that
  $f(x)=f(0) $,
  and let
  $ \Theta\in C([0, \infty), \Reals^{ \fd } ) $ satisfy for all
  $t\in [0, \infty) $ that
  $ \Theta_t=\Theta_0-\int_0^t\mG( \Theta_s)\, \d s$.
  Then
  \begin{enumerate} [label=(\roman*)]
    \item\label{GF_con_it1}
    it holds that 
    $ \sup_{t\in [0, \infty)} \norm{\Theta_t}
    \leq[2V( \Theta_0)+4L^2\norm{f(0)}^2 ]^{ 1/2}
    <\infty$,
    \item\label{GF_con_it2} 
    it holds for all $t\in (0, \infty) $ that
    $ \mL_{ \infty }( \Theta_t)
    \leq\frac{1}{2t}[\norm{\Theta_0 }^2+2L\norm{f(0)}^2]
    <\infty$,
    and
    \item\label{GF_con_it3}
    it holds that
    $ \limsup_{t\to \infty } \mL_{ \infty }( \Theta_t)=0$.
  \end{enumerate}
\end{theorem}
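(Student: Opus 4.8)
The plan is to derive all three assertions from the material already assembled for the constant target function case, namely the Lyapunov decay identity in \cref{cor:GF_det_ito}, the weak chain rule for the risk in \cref{lem:GF_det_ito2}, and the two-sided estimate for $V$ in \cref{it:ineq} of \cref{prop:liap1}. The guiding idea is that $V$ serves as a coercive Lyapunov function whose rate of decay along $\Theta$ equals $4L$ times the risk, so that on the one hand $V$ is non-increasing and controls $\norm{\Theta_t}$, and on the other hand the time-integral of the risk is finite; the separate monotonicity of $t \mapsto \mL_\infty(\Theta_t)$ then upgrades this integrated bound to a pointwise $\nicefrac{1}{t}$ rate.

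To prove \cref{GF_con_it1}, I would first observe that $\mL_\infty \geq 0$, so that \cref{cor:GF_det_ito} gives $V(\Theta_t) = V(\Theta_0) - 4L\int_0^t \mL_\infty(\Theta_s)\,\d s \leq V(\Theta_0)$ for all $t \in [0,\infty)$. Combining this with the lower bound $\tfrac{1}{2}\norm{\Theta_t}^2 - 2L^2\norm{f(0)}^2 \leq V(\Theta_t)$ from \cref{it:ineq} of \cref{prop:liap1} then yields $\tfrac{1}{2}\norm{\Theta_t}^2 \leq V(\Theta_0) + 2L^2\norm{f(0)}^2$, which rearranges to the claimed uniform bound $\norm{\Theta_t} \leq [2V(\Theta_0) + 4L^2\norm{f(0)}^2]^{1/2} < \infty$.

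For \cref{GF_con_it2}, the crucial step is to invoke \cref{lem:GF_det_ito2}: since $\mL_\infty(\Theta_t) = \mL_\infty(\Theta_0) - \int_0^t \norm{\mG(\Theta_s)}^2\,\d s$ and the integrand is non-negative, the map $t \mapsto \mL_\infty(\Theta_t)$ is non-increasing. Hence for every $t \in (0,\infty)$ one has $t\,\mL_\infty(\Theta_t) \leq \int_0^t \mL_\infty(\Theta_s)\,\d s$. I would then bound the right-hand side via \cref{cor:GF_det_ito}, writing $4L\int_0^t \mL_\infty(\Theta_s)\,\d s = V(\Theta_0) - V(\Theta_t)$, and estimate $V(\Theta_0)$ from above and $V(\Theta_t)$ from below using \cref{it:ineq} of \cref{prop:liap1}; after dividing by $t$ and absorbing the constants (using $L \geq 1$) this produces $\mL_\infty(\Theta_t) \leq \tfrac{1}{2t}[\norm{\Theta_0}^2 + 2L\norm{f(0)}^2]$.

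Finally, \cref{GF_con_it3} is immediate: letting $t \to \infty$ in the estimate of \cref{GF_con_it2} and using $\mL_\infty \geq 0$ shows $\limsup_{t\to\infty}\mL_\infty(\Theta_t) = 0$. I expect the only genuinely delicate point to be the monotonicity argument in \cref{GF_con_it2}, where the conversion of the finite time-integral of the risk into a pointwise decay rate hinges on $t \mapsto \mL_\infty(\Theta_t)$ being non-increasing; this is exactly what the weak chain rule in \cref{lem:GF_det_ito2} supplies, and the remaining work is elementary bookkeeping with the constants in the $V$-estimates.
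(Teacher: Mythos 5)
Your proposal is correct and follows essentially the same route as the paper's own proof: \cref{cor:GF_det_ito} plus nonnegativity of $\mL_\infty$ gives monotonicity of $V(\Theta_t)$ and hence \cref{GF_con_it1} via \cref{it:ineq} of \cref{prop:liap1}, \cref{lem:GF_det_ito2} supplies the monotonicity of $t\mapsto\mL_\infty(\Theta_t)$ that converts the integrated risk bound into the pointwise $\nicefrac{1}{(2t)}$ estimate, and \cref{GF_con_it3} follows by letting $t\to\infty$. The bookkeeping with the constants from \cref{it:ineq} of \cref{prop:liap1} works out exactly as you indicate.
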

\begin{proof}[Proof of \cref{thm:GF_conv}]
  \Nobs that \cref{it:ineq} of 
  \cref{prop:liap1} implies that
  for all $t\in [0, \infty) $ it holds that
  $ \norm{\Theta_t} \leq
  \big[2V( \Theta_t)+4L^2\norm{f(0)}^2\big]^{ 1/2} $.
  Moreover, \nobs that \cref{cor:GF_det_ito}
  and the fact that for all $ \theta \in \Reals^{ \fd } $
  it holds that $ \mL_{ \infty }( \theta)\geq 0$
  prove that for all $t\in [0, \infty) $ we have that
  $V( \Theta_t)\leq V( \Theta_0) $.
  This establishes \cref{GF_con_it1}.
  Next \nobs that
  \cref{lem:GF_det_ito2} implies that
  $[0, \infty)\ni t\mapsto\mL_{ \infty }( \Theta_t) \in [0, \infty) $
  is non-increasing.  
  Combining this with
  \cref{cor:GF_det_ito} and 
  \cref{it:ineq} of \cref{prop:liap1}
  demonstrates that for all
  $t\in [0, \infty) $ it holds that
  \begin{equation}
  \begin{split}
    &t\mL_{ \infty }( \Theta_t)
    =\int_0^t\mL_{ \infty }( \Theta_t)\, \d s
    \leq \int_0^t\mL_{ \infty }( \Theta_s)\, \d s
    =\frac{1}{4L}[V( \Theta_0)-V( \Theta_t)]\\
    &\leq \frac{1}{4L} \Big[2L\norm{\Theta_0 }^2+L\norm{f(0)}^2
    -\frac{1}{2} \norm{\Theta_t}^2+2L^2\norm{f(0)}^2\Big]\\
    &\leq \frac{1}{4L} \Big[2L\norm{\Theta_0 }^2
    +(L+2L^2)\norm{f(0)}^2\Big]
    \leq\frac{1}{2} \norm{\Theta_0 }^2+L\norm{f(0)}^2
    <\infty. 
  \end{split}
  \end{equation}
  \Hence for all 
  $t\in (0, \infty) $ that
  \begin{equation}
  \mL_{ \infty }( \Theta_t)
  \leq\frac{1}{2t} \Big[\norm{\Theta_0 }^2+2L\norm{f(0)}^2\Big].
  \end{equation}
  This establishes \cref{GF_con_it2,GF_con_it3}.
  The proof of \cref{thm:GF_conv}
  is thus complete.
\end{proof}

\section{Gradient descent (GD) processes in the training of deep ANNs}
\label{sec:grad_des}

In this section we use some of the results from \cref{sec:risk,sec:grad_fl} above 
to establish in \cref{thm:GD_conv}
in \cref{ssec:GD_convergence} below 
that in the training of deep ReLU ANNs 
we have that the sequence of risks 
$
  \mL_{ \infty }( \Theta_n )
$, 
$ n \in \N_0 $, 
of any time-discrete GD process 
$
  \Theta = ( \Theta_n )_{ n \in \N_0 } \colon \N_0 \to \Rr^{ \fd }
$
converges to zero provided that 
the target function 
$ f \colon [a,b]^{ \ell_0 } \to \Rr^{ \ell_L } $
is a constant function and 
provided that the learning rates 
(the step sizes) 
$ 
  \gamma_n \in [0,\infty)
$, 
$ n \in \N_0 $, 
in the GD optimization method are sufficiently small 
(see \cref{gamma_ass2} below for details) 
but fail to be $ L^1 $-summable so that 
\begin{equation}
\textstyle 
  \sum_{ n = 0 }^{ \infty } \gamma_n = \infty 
  .
\end{equation}
Our proof of \cref{thm:GD_conv} 
employs 
\cref{lem:loc_lip} in \cref{ssec:local_lipschitz} above,
\cref{cor:G_up} in \cref{ssec:upper_estimates} above, 
\cref{prop:liap1} in \cref{ssec:GF_lyapunov} above, 
as well as the recursive upper estimate for the composition 
$
  V( \Theta_n )
$,
$ n \in \N_0 $, 
of the Lyapunov function $ V \colon \Rr^{ \fd } \to \Rr $ 
and the time-discrete GD process 
$
  \Theta = ( \Theta_n )_{ n \in \N_0 } \colon \N_0 \to \Rr^{ \fd }
$
in \cref{V_theta_diff2} in \cref{lem:V_theta_diff2} in \cref{ssec:GD_upper} below.

Our proof of the time-discrete Lyapunov estimate   
in \cref{V_theta_diff2} in \cref{lem:V_theta_diff2} 
is based on induction and applications of
the time-discrete Lyapunov estimate 
in \cref{cor:V_theta_diff_const}
in \cref{ssec:GD_upper}. 
The time-discrete Lyapunov estimate 
in \cref{cor:V_theta_diff_const}, in turn, 
is an immediate consequence of 
the time-discrete Lyapunov estimate 
in \cref{cor:V_theta_diff}
in \cref{ssec:GD_upper}. 
While 
\cref{thm:GD_conv}, 
\cref{lem:V_theta_diff2}, 
and 
\cref{cor:V_theta_diff_const} 
are restricted to the situation where the target function 
$ f \colon [a,b]^{ \ell_0 } \to \Rr^{ \ell_L } $ 
is a constant function, 
the Lyapunov estimate in 
\cref{cor:V_theta_diff} 
is applicable in the general case of a measurable 
target function 
$ f \colon [a,b]^{ \ell_0 } \to \Rr^{ \ell_L } $.

Our proof of \cref{cor:V_theta_diff} 
is based on an application of the one-step Lyapunov estimate 
for the GD method 
in \cref{cor:liap3}
in \cref{ssec:GD_lyapunov}. 
Our proof of \cref{cor:liap3}, in turn, 
uses the one-step Lyapunov estimate 
for the GD method 
in \cref{lem:liap2} 
in \cref{ssec:GD_lyapunov}. 
In \cref{cor:liap3}
in \cref{ssec:GD_lyapunov} 
we also specialize 
the one-step Lyapunov estimate for the GD method in \cref{lem:liap2} 
to the situation where the target function 
$
  f \colon [a,b]^{ \ell_0 } \to \Rr^{ \ell_L }
$
is a constant function. 
\cref{cor:liap3} 
is employed in our convergence analysis of the SGD method 
in \cref{sec:stoch_grad_des} below.

The results in this section extend the findings in \cite[Section~2]{MR4468133} from 
shallow ReLU ANNs with just one hidden layer to deep ReLU ANNs with an arbitrarily large number of hidden layers. 
In particular, 
\cref{lem:liap2} in \cref{ssec:GD_lyapunov} and 
\cref{cor:liap3} in \cref{ssec:GD_lyapunov}
extend \cite[Lemma 2.12]{MR4468133},
\cref{cor:V_diff} in \cref{ssec:GD_lyapunov}
extends
\cite[Corollary 2.13]{MR4468133}, 
\cref{cor:V_theta_diff} 
in \cref{ssec:GD_upper}, 
\cref{cor:V_theta_diff_const}
in \cref{ssec:GD_upper}, 
and 
\cref{lem:V_theta_diff2} in \cref{ssec:GD_upper}
extend \cite[Corollary~2.14 and Lemma~2.15]{MR4468133}, 
and \cref{thm:GD_conv} in \cref{ssec:GD_convergence}
extends \cite[Theorem 2.16]{MR4468133}.

\subsection{Lyapunov type estimates for the dynamics of GD processes}
\label{ssec:GD_lyapunov}

\begin{lemma}
\label{lem:liap2} 
Assume \cref{main_setting}
and let $ \gamma \in \Reals $, $ \theta \in \Reals^{ \fd } $. Then
\begin{equation}
\label{liap2}
\begin{split}
  V( \theta - \gamma \mG( \theta) ) - V( \theta )  
&
\textstyle
  = 
  \gamma^2 \norm{ \mG( \theta ) }^2
  +
  \gamma^2 
  \bigl[ 
    \sum_{ k = 1 }^L
    \sum_{ i = 1 }^{ \ell_k }
    ( k - 1 ) 
    \abs{
      \mG_{ 
        \ell_k \ell_{k-1} + i + \sum_{ h = 1 }^{ k - 1 } 
        \ell_h ( \ell_{h-1} + 1 ) 
      }( \theta ) 
    }^2 
  \bigr] 
\\
&
\textstyle 
\quad 
  - 4 \gamma L 
  \bigl[ 
    \int_{ [a,b]^{ \ell_0 } } 
    \ip{
      \mN^{ L, \theta }_{ \infty }( x ) - f( x ) 
    }{
      \mN^{ L, \theta }_{ \infty }( x ) - f(0)
    } \, \mu( \d x ) 
  \bigr]
\\
&
\textstyle 
  \leq
  \gamma^2 L \norm{ \mG( \theta ) }^2
  - 4 \gamma L 
  \bigl[
    \int_{ [a,b]^{ \ell_0 } } 
    \ip{
      \mN^{ L, \theta }_{ \infty }( x ) - f( x )
    }{
      \mN^{ L, \theta }_{ \infty }( x ) - f(0) 
    } 
    \, \mu( \d x ) 
  \bigr] .    
  \end{split}
  \end{equation}
\end{lemma}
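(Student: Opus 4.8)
The plan is to exploit the fact that $V$ is a quadratic polynomial in $\theta$ and to expand the shifted evaluation directly. First I would recall from \cref{it:V_expl} of \cref{prop:liap1} that $V(\theta) = \norm{\theta}^2 + \sum_{k=1}^L (k-1)\norm{\fb^{k,\theta}}^2 - 2L\langle f(0),\fb^{L,\theta}\rangle$, and observe that the maps $\theta\mapsto\fw^{k,\theta}$ and $\theta\mapsto\fb^{k,\theta}$ are linear coordinate projections (cf.\ \cref{wb}). Consequently, the weight and bias entries of $\theta-\gamma\mG(\theta)$ are obtained from those of $\theta$ by subtracting $\gamma$ times the corresponding entries of $\mG(\theta)$, so that substituting into the displayed expression for $V$ and expanding each squared norm is purely mechanical.

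Next I would sort the resulting terms by powers of $\gamma$. The $\gamma^0$ contribution reproduces $V(\theta)$; the $\gamma^2$ contribution equals $\norm{\mG(\theta)}^2 + \sum_{k=1}^L\sum_{i=1}^{\ell_k}(k-1)\abs{\mG_{\ell_k\ell_{k-1}+i+\sum_{h=1}^{k-1}\ell_h(\ell_{h-1}+1)}(\theta)}^2$, since the layer-$k$ bias entries carry the extra weight $(k-1)$ coming from the second summand of $V$; and the $\gamma^1$ contribution is $-\gamma$ times a linear pairing of $\theta$, the layer-weighted biases, and $f(0)$ against $\mG(\theta)$. The key algebraic step is to recognize, via the explicit gradient formula in \cref{it:grad_V} of \cref{prop:liap1}, that this $\gamma^1$ coefficient is exactly $\langle(\nabla V)(\theta),\mG(\theta)\rangle$: the layer factor $k$ appearing in the bias blocks of $(\nabla V)(\theta)$ decomposes as $1+(k-1)$, where the $1$ matches the $\norm{\theta}^2$ part and the $(k-1)$ matches the weighted bias part. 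This yields the exact identity $V(\theta-\gamma\mG(\theta))-V(\theta) = -\gamma\langle(\nabla V)(\theta),\mG(\theta)\rangle + \gamma^2\norm{\mG(\theta)}^2 + \gamma^2\sum_{k=1}^L\sum_{i=1}^{\ell_k}(k-1)\abs{\mG_{\ell_k\ell_{k-1}+i+\sum_{h=1}^{k-1}\ell_h(\ell_{h-1}+1)}(\theta)}^2$.

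I would then invoke \cref{prop:liap2} to replace $\langle(\nabla V)(\theta),\mG(\theta)\rangle$ by $4L[\int_{[a,b]^{\ell_0}}\langle\mN^{L,\theta}_{\infty}(x)-f(x),\mN^{L,\theta}_{\infty}(x)-f(0)\rangle\,\mu(\d x)]$, which produces precisely the stated equality in \cref{liap2}. Finally, for the asserted inequality I would bound the correction term using $k-1\leq L-1$ for every $k\in\{1,\dots,L\}$ together with the fact that the sum over bias entries is dominated by the full squared norm, giving $\sum_{k=1}^L\sum_{i=1}^{\ell_k}(k-1)\abs{\mG_{\ell_k\ell_{k-1}+i+\sum_{h=1}^{k-1}\ell_h(\ell_{h-1}+1)}(\theta)}^2\leq(L-1)\norm{\mG(\theta)}^2$; since $\gamma^2\geq 0$, adding this to $\gamma^2\norm{\mG(\theta)}^2$ yields the factor $\gamma^2 L\norm{\mG(\theta)}^2$ and hence the inequality for every $\gamma\in\Reals$.

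Because $V$ is quadratic, the computation is conceptually routine and I expect no genuine obstacle beyond careful index bookkeeping. The one delicate point is matching the layer-dependent coefficient $k$ in $(\nabla V)(\theta)$ against the split into the isotropic $\norm{\theta}^2$ part and the bias-weighted part, which is exactly what makes the $\gamma^1$ term collapse cleanly onto $\langle(\nabla V)(\theta),\mG(\theta)\rangle$ and lets \cref{prop:liap2} be applied verbatim.
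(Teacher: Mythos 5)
Your proposal is correct and follows essentially the same route as the paper: the paper obtains the same exact identity by applying the fundamental theorem of calculus to $t\mapsto V(\theta-t\mG(\theta))$ and using the explicit gradient formula from \cref{it:grad_V} of \cref{prop:liap1} together with \cref{prop:liap2}, which for a quadratic $V$ is just your direct expansion in disguise. Your explicit justification of the final inequality via $k-1\leq L-1$ and domination of the bias-component sum by $\norm{\mG(\theta)}^2$ is exactly the (implicit) step the paper relies on.
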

\begin{proof}[Proof of \cref{lem:liap2}]
  Throughout this proof let
  $ \mathbf{e}_1, \mathbf{e}_2, \dots, \mathbf{e}_\fd \in \Rr^\fd $ 
  satisfy $ \mathbf{e}_1= (1,0, \dots,0) $, 
  $ \mathbf{e}_2= (0,1,0, \dots,0) $, $ \dots$,
  $ \mathbf{e}_\fd = (0, \dots,0,1) $
  and let
  $g\colon \Rr\to\Rr$ satisfy 
  for all $t\in \Rr$ that
  \begin{equation}
    \label{eq_g}
    g(t)=V( \theta-t\mG( \theta)).
  \end{equation}
  \Nobs that \cref{eq_g} and the fundamental theorem of calculus
  demonstrate that
  \begin{equation}\label{V_diff1}
  \begin{split}
    V( \theta-\gamma\mG( \theta))&=g( \gamma)=g(0)+\int_0^{ \gamma}g'(t)\, \d t
    =g(0)+\int_0^{ \gamma} \ip{ ( \nabla V)( \theta-t\mG( \theta))}{ (-\mG( \theta))} \d t\\
    &=V( \theta)-\int_0^{ \gamma} \ip{ ( \nabla V)( \theta-t\mG( \theta))}{ \mG( \theta)} \d t.        
  \end{split}
  \end{equation}
  \cref{prop:liap2} \hence proves that
  \begin{equation}\label{V_diff2}
  \begin{split}
    &V( \theta-\gamma\mG( \theta))\\
    &=V( \theta)-\int_0^{ \gamma} \ip{ ( \nabla V)( \theta)}{\mG( \theta)} \d t
    +\int_0^{ \gamma} \ip{ ( \nabla V)( \theta)-( \nabla V)( \theta-t\mG( \theta))}{ \mG( \theta)} \d t\\  
    &=V( \theta)-4\gamma L \br*{ \int_{[a,b]^{ \ell_0 } } \ip{\mN^{L, \theta }_{ \infty }(x)-f(x)}
    {\mN^{L, \theta }_{ \infty }(x)-f(0)} \, \mu( \d x) } \\
    &\quad +\int_0^{ \gamma} \ip{ ( \nabla V)( \theta)-( \nabla V)( \theta-t\mG( \theta))}{\mG( \theta)} \d t.
  \end{split}
  \end{equation}
  Moreover, \nobs that \cref{it:grad_V} of
  \cref{prop:liap1} implies that for all $t\in \Reals $
  it holds that
  \begin{equation}
    ( \nabla V)( \theta)-( \nabla V)( \theta-t\mG ( \theta))
    = 2t\mG( \theta)+2 \br*{ \sum_{k= 1 }^L\sum_{ i= 1 }^{ \ell_k}(k-1)\fb^{ k,t\mG( \theta)}_{ i} \mathbf{e}_{\ell_k \ell_{k-1} + i+\sum_{h= 1 }^{ k-1} \ell_h ( \ell_{h-1} + 1)}  } .
  \end{equation}
  Combining this with \cref{V_diff2} 
  shows that
  \begin{equation}\label{V_diff3} 
  \begin{split}
    &V( \theta-\gamma\mG( \theta))\\
    &=V( \theta)
    -4\gamma L \br*{ \int_{[a,b]^{ \ell_0 } } \ip{\mN^{L, \theta }_{ \infty }(x)-f(x)}
    {\mN^{L, \theta }_{ \infty }(x)-f(0)} \, \mu( \d x) } \\
    &\quad+\int_0^\gamma \ip{ 2t\mG( \theta)+2 \br*{ \sum_{k= 1 }^L\sum_{ i= 1 }^{ \ell_k}(k-1)\fb^{ k,t\mG( \theta)}_{ i} \mathbf{e}_{\ell_k \ell_{k-1} + i+\sum_{h= 1 }^{ k-1} \ell_h ( \ell_{h-1} + 1)}  }  }{ \mG( \theta) } \, \d t\\ 
    &=V( \theta)
    -4\gamma L \br*{ \int_{[a,b]^{ \ell_0 } } \ip{\mN^{L, \theta }_{ \infty }(x)-f(x)}
    {\mN^{L, \theta }_{ \infty }(x)-f(0)} \, \mu( \d x) } 
    +2\norm{\mG( \theta)}^2 \br*{ \int_0^\gamma t \, \d t }  \\  
    &\quad +2 \br*{ \int_0^\gamma \ip{ \sum_{k= 1 }^L\sum_{ i= 1 }^{ \ell_k}(k-1)\fb^{ k,t\mG( \theta)}_{ i} \mathbf{e}_{\ell_k \ell_{k-1} + i+\sum_{h= 1 }^{ k-1} \ell_h ( \ell_{h-1} + 1)} } { \mG( \theta) }
     \, \d t } \\
     &=V( \theta)
     -4\gamma L \br*{ \int_{[a,b]^{ \ell_0 } } \ip{\mN^{L, \theta }_{ \infty }(x)-f(x)}
    {\mN^{L, \theta }_{ \infty }(x)-f(0)} \, \mu( \d x) } 
    +\gamma^2\norm{\mG( \theta)}^2\\
    &\quad+2 \br*{ \sum_{k= 1 }^L\sum_{ i= 1 }^{ \ell_k}(k-1)\big|\big\langle\mathbf{e}_{\ell_k \ell_{k-1} + i+\sum_{h= 1 }^{ k-1} \ell_h ( \ell_{h-1} + 1)}, \mG( \theta)\big\rangle\big|^2 }  \br*{ \int_0^\gamma t\, \d t } .
    \end{split}
    \end{equation}
\Hence that 
    \begin{equation}
    \begin{split}
    V( \theta-\gamma\mG( \theta)) 
    &=V( \theta)
    -4\gamma L \br*{ \int_{[a,b]^{ \ell_0 } } \ip{\mN^{L, \theta }_{ \infty }(x)-f(x)}
    {\mN^{L, \theta }_{ \infty }(x)-f(0)} \, \mu( \d x) } 
    +\gamma^2\norm{\mG( \theta)}^2\\
    &\quad+\gamma^2 \br*{ \sum_{k= 1 }^L\sum_{ i= 1 }^{ \ell_k}(k-1)\big|\big\langle\mathbf{e}_{\ell_k \ell_{k-1} + i+\sum_{h= 1 }^{ k-1} \ell_h ( \ell_{h-1} + 1)}, \mG( \theta)\big\rangle\big|^2 } \\
    &=V( \theta)
    -4\gamma L \br*{ \int_{[a,b]^{ \ell_0 } } \ip{\mN^{L, \theta }_{ \infty }(x)-f(x)}
    {\mN^{L, \theta }_{ \infty }(x)-f(0)} \, \mu( \d x) } 
    +\gamma^2\norm{\mG( \theta)}^2\\
    &\quad+\gamma^2 \br*{ \sum_{k= 1 }^L\sum_{ i= 1 }^{ \ell_k}(k-1)\abs{\mG_{\ell_k \ell_{k-1} + i+\sum_{h= 1 }^{ k-1} \ell_h ( \ell_{h-1} + 1)}( \theta)}^2 } . 
  \end{split}
  \end{equation}
  The proof of \cref{lem:liap2} is thus complete.
\end{proof}

\begin{cor}
\label{cor:liap3} 
Assume \cref{main_setting}, 
assume for all $ x \in [a,b]^{ \ell_0 } $ that $ f(x) = f(0) $, 
and let $ \gamma \in \Reals $, $ \theta \in \Reals^{ \fd } $. 
Then
\begin{equation}\label{liap3}
\begin{split}
&
  V( \theta - \gamma \mG( \theta ) ) 
  - 
  V( \theta )  
\\ 
&
\textstyle
  = \gamma^2 \norm{ \mG( \theta ) }^2
  + \gamma^2 
  \bigl[
    \sum_{ k = 1 }^L 
    \sum_{ i = 1 }^{ \ell_k } 
    ( k - 1 ) 
    \abs{
      \mG_{ 
        \ell_k \ell_{ k - 1 } + i + \sum_{ h = 1 }^{ k - 1 } \ell_h ( \ell_{ h - 1 } + 1 ) 
      }( \theta ) 
    }^2 
  \bigr] 
  - 4 \gamma L \mL_{ \infty }( \theta ) 
\\
&
  \leq
  \gamma^2 L \norm{ \mG( \theta ) }^2 
  - 4 \gamma L \mL_{ \infty }( \theta ) .    
\end{split}
\end{equation}
\end{cor}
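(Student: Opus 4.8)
The plan is to derive \cref{liap3} as an immediate specialization of the general one-step Lyapunov identity in \cref{lem:liap2} to the constant-target-function regime. The key observation is that the assumption $f(x) = f(0)$ collapses the mixed inner-product integral appearing in \cref{liap2} into the risk $\mL_{\infty}(\theta)$ itself, so the bulk of the work has already been carried out in the proof of \cref{lem:liap2}.

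Concretely, I would first invoke \cref{lem:liap2} verbatim to obtain both the exact identity and the upper bound for $V(\theta - \gamma \mG(\theta)) - V(\theta)$ in terms of $\gamma^2 \norm{\mG(\theta)}^2$, the bias-weighted gradient sum, and the integral $\int_{[a,b]^{\ell_0}} \ip{\mN^{L,\theta}_{\infty}(x) - f(x)}{\mN^{L,\theta}_{\infty}(x) - f(0)} \, \mu(\d x)$. The only remaining task is to simplify this integral. Since the hypothesis guarantees that $f(x) = f(0)$ for all $x \in [a,b]^{\ell_0}$, the two factors in the inner product coincide, and hence
\begin{equation}
  \int_{[a,b]^{\ell_0}} \ip{\mN^{L,\theta}_{\infty}(x) - f(x)}{\mN^{L,\theta}_{\infty}(x) - f(0)} \, \mu(\d x)
  = \int_{[a,b]^{\ell_0}} \norm{\mN^{L,\theta}_{\infty}(x) - f(0)}^2 \, \mu(\d x)
  = \mL_{\infty}(\theta),
\end{equation}
where the last equality uses \cref{L_r} together with the fact that $f \equiv f(0)$. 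Substituting this identification into both lines of \cref{liap2} then yields \cref{liap3} directly.

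There is no genuine obstacle here; the statement is a routine corollary, mirroring exactly how \cref{cor:liap} is deduced from \cref{prop:liap2} and how \cref{cor:GF_det_ito} is deduced from \cref{prop:GF_det_ito}. If anything, the only point requiring a moment of care is confirming that the definition of $\mL_{\infty}$ in \cref{L_r} is indeed evaluated at the target $f(x) = f(0)$, so that the squared-norm integral matches the risk functional without any additive discrepancy. Once this is noted, combining the displayed simplification with \cref{lem:liap2} completes the proof of \cref{cor:liap3}.
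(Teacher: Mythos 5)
Your proposal is correct and follows exactly the paper's own route: the paper also deduces \cref{cor:liap3} by combining \cref{lem:liap2} with \cref{L_r} and the constancy assumption $f(x)=f(0)$, which turns the mixed inner-product integral into $\mL_{\infty}(\theta)$. No further comment is needed.
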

\begin{proof}[Proof of \cref{cor:liap3}]
\Nobs that \cref{lem:liap2},
  \cref{L_r},
  and the assumption that for all
  $ x \in [a,b]^{ \ell_0 } $
  it holds that 
  $f(x)=f(0) $
  establish \cref{liap3}.
  The proof of \cref{cor:liap3} is thus complete.
\end{proof}

\begin{cor}
\label{cor:V_diff}
Assume \cref{main_setting}
and let $ \gamma\in [0, \infty) $, 
$ \theta \in \Reals^{ \fd } $. Then
\begin{equation}
\label{V_diff4}
\begin{split}
  V( \theta - \gamma \mG( \theta ) ) - V( \theta )
&
\leq 
  4 \gamma^2 \mathfrak{m} L^2
  \mathbf{a}^2 
  \bigl[
    \textstyle{ \prod }_{ p = 0 }^L
    ( \ell_p + 1 ) 
  \bigr]
  ( 2 V( \theta ) + 4 L^2 \norm{ f(0) }^2 + 1 )^{ (L-1) } 
  \mL_{ \infty }( \theta )
\\
&
\textstyle 
  \quad 
  - 4 \gamma L 
  \bigl[ 
    \int_{ [a,b]^{ \ell_0 } } 
    \ip{
      \mN^{ L, \theta }_{ \infty }( x ) - f( x ) 
    }{
      \mN^{ L, \theta }_{ \infty }( x ) - f(0) 
    } 
    \, \mu( \d x ) 
  \bigr] .
\end{split}  
\end{equation}
\end{cor}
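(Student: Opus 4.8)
The plan is to derive \cref{V_diff4} by combining three earlier results: the exact one-step Lyapunov identity in \cref{lem:liap2}, the polynomial growth bound on the generalized gradient in \cref{it:G_upper_estimate} of \cref{prop:G_upper_estimate}, and the coercivity estimate for the Lyapunov function in \cref{it:ineq} of \cref{prop:liap1}. The crossed term involving $\ip{\mN^{L,\theta}_\infty(x)-f(x)}{\mN^{L,\theta}_\infty(x)-f(0)}$ is to be carried through unchanged, so the entire task reduces to bounding the quadratic term $\gamma^2 L\norm{\mG(\theta)}^2$ by the claimed expression.

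First I would invoke the final inequality in \cref{liap2} of \cref{lem:liap2}, which already discards the nonnegative contribution $\gamma^2[\sum_{k=1}^L\sum_{i=1}^{\ell_k}(k-1)\abs{\mG_{\ldots}(\theta)}^2]$ and yields
\begin{equation*}
  V(\theta-\gamma\mG(\theta))-V(\theta)
  \leq
  \gamma^2 L\norm{\mG(\theta)}^2
  -4\gamma L\Bigl[\textstyle\int_{[a,b]^{\ell_0}}\ip{\mN^{L,\theta}_\infty(x)-f(x)}{\mN^{L,\theta}_\infty(x)-f(0)}\,\mu(\d x)\Bigr].
\end{equation*}
Next I would apply \cref{it:G_upper_estimate} of \cref{prop:G_upper_estimate}, namely $\norm{\mG(\theta)}^2\leq 4L\mathfrak{m}\mathbf{a}^2[\prod_{p=0}^L(\ell_p+1)](\norm{\theta}^2+1)^{L-1}\mL_\infty(\theta)$, so that $\gamma^2 L\norm{\mG(\theta)}^2\leq 4\gamma^2\mathfrak{m}L^2\mathbf{a}^2[\prod_{p=0}^L(\ell_p+1)](\norm{\theta}^2+1)^{L-1}\mL_\infty(\theta)$.

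Finally I would use \cref{it:ineq} of \cref{prop:liap1}, which gives $\tfrac{1}{2}\norm{\theta}^2-2L^2\norm{f(0)}^2\leq V(\theta)$ and hence $\norm{\theta}^2+1\leq 2V(\theta)+4L^2\norm{f(0)}^2+1$. Since the base is nonnegative and the exponent $L-1$ is a nonnegative integer, monotonicity of $t\mapsto t^{L-1}$ on $[0,\infty)$ yields $(\norm{\theta}^2+1)^{L-1}\leq(2V(\theta)+4L^2\norm{f(0)}^2+1)^{L-1}$. Substituting this into the previous display and combining with the one-step identity produces exactly \cref{V_diff4}.

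This argument is essentially a bookkeeping assembly of already-established estimates, so I do not anticipate a genuine obstacle; the only point requiring mild care is correctly matching the constant $4\gamma^2\mathfrak{m}L^2\mathbf{a}^2[\prod_{p=0}^L(\ell_p+1)]$ and the exponent $L-1$ when the factor of $L$ from \cref{lem:liap2} multiplies the factor $4L$ from the gradient bound, and verifying that the inequality in the penultimate step is valid for $L=1$ (where the exponent is $0$ and the estimate holds trivially). I would therefore present it as a short proof consisting of the three cited substitutions in the order above.
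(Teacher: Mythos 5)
Your proposal is correct and follows essentially the same route as the paper's proof: the paper likewise chains \cref{it:G_upper_estimate} of \cref{prop:G_upper_estimate} with \cref{it:ineq} of \cref{prop:liap1} to bound $\norm{\mG(\theta)}^2$ by $4\mathfrak{m}L\mathbf{a}^2[\prod_{p=0}^L(\ell_p+1)](2V(\theta)+4L^2\norm{f(0)}^2+1)^{L-1}\mL_{\infty}(\theta)$ and then substitutes this into the one-step estimate of \cref{lem:liap2}. The only difference is the order of presentation, which is immaterial.
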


\begin{proof}[Proof of \cref{cor:V_diff}]
  \Nobs that \cref{it:G_upper_estimate} of 
  \cref{prop:G_upper_estimate}
  and \cref{it:ineq} of 
  \cref{prop:liap1}
  demonstrate that
  \begin{equation} 
  \label{G_up5}
  \begin{split}
    \norm{\mG( \theta)}^2
    &\leq 4\mathfrak{m} L  \mathbf{a}^2 \br[\big]{ 
    \textstyle{\prod}_{p={0} }^{L}( \ell_p+1 ) }
    ( \norm{\theta }^2 + 1 )^{ ( L-1 ) } \mL_{ \infty }( \theta)\\
    &\leq 4\mathfrak{m} L \mathbf{a}^2
    	\br[\big]{ 
    \textstyle{\prod}_{p={0} }^{L}( \ell_p+1) }
    (2V( \theta)+4L^2\norm{f(0)}^2 + 1 )^{ (L-1)}
    \mL_{ \infty }( \theta).
  \end{split}  
  \end{equation}
  Combining this with
  \cref{lem:liap2}
  establishes \cref{V_diff4}.
  The proof of \cref{cor:V_diff} is thus complete.
\end{proof}

\subsection{Upper estimates for compositions of Lyapunov functions and GD processes}
\label{ssec:GD_upper}

\begin{cor}
\label{cor:V_theta_diff}
Assume \cref{main_setting},
let $ ( \gamma_n )_{ n \in \enne_0 } \subseteq [0, \infty) $, 
let $ ( \Theta_n )_{ n \in \enne_0 } \colon \enne_0 \to \Reals^{ \fd } $ 
satisfy for all $ n \in \enne_0 $ that 
$ \Theta_{ n + 1 } = \Theta_n - \gamma_n \mG( \Theta_n ) $,
and let $ n \in \enne_0 $.
Then
\begin{equation}
\label{V_theta_diff}
\begin{split}
  V( \Theta_{n+1} )-V( \Theta_n )
&
  \leq 4 ( \gamma_n )^2 \mathfrak{m} L^2
     \mathbf{a}^2 \br[\big]{ 
    \textstyle{\prod}_{p={0} }^{L}( \ell_p+1) }
    (2V( \Theta_n )+4L^2\norm{f(0)}^2 + 1  )^{ (L-1)}
    \mL_{ \infty }( \Theta_n )
\\
&
\textstyle
  \quad 
  - 4 \gamma_n L 
  \bigl[
    \int_{ [a,b]^{ \ell_0 } } 
    \ip{ 
      \mN^{ L, \Theta_n }_{ \infty }( x ) - f(x) 
    }{
      \mN^{ L, \Theta_n }_{ \infty }( x ) - f(0) 
    } 
    \, \mu( \d x ) 
  \bigr] .
  \end{split}  
  \end{equation}
\end{cor}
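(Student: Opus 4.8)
The plan is to obtain \cref{V_theta_diff} as a direct specialization of the abstract one-step Lyapunov estimate in \cref{cor:V_diff} to the time-discrete GD trajectory. First I would fix $n \in \N_0$ and record that the hypothesis $(\gamma_n)_{n \in \N_0} \subseteq [0,\infty)$ ensures $\gamma_n \in [0,\infty)$, which is exactly the regime in which \cref{cor:V_diff} is formulated.

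Next I would apply \cref{cor:V_diff} with the learning rate $\gamma \curvearrowleft \gamma_n$ and the parameter vector $\theta \curvearrowleft \Theta_n \in \Reals^{\fd}$. This produces an upper bound for $V(\Theta_n - \gamma_n \mG(\Theta_n)) - V(\Theta_n)$ whose right-hand side is literally the right-hand side of \cref{V_theta_diff}, with $\theta$ replaced by $\Theta_n$ throughout: the polynomial growth factor $(2V(\Theta_n) + 4L^2\norm{f(0)}^2 + 1)^{L-1}$, the geometric prefactor $4(\gamma_n)^2 \mathfrak{m} L^2 \mathbf{a}^2[\prod_{p=0}^L(\ell_p+1)]$, the risk $\mL_{\infty}(\Theta_n)$, and the cross-term integral of $\mN^{L,\Theta_n}_{\infty}(x)-f(x)$ against $\mN^{L,\Theta_n}_{\infty}(x)-f(0)$.

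Finally I would invoke the defining recursion $\Theta_{n+1} = \Theta_n - \gamma_n \mG(\Theta_n)$ to identify the perturbed parameter $\Theta_n - \gamma_n \mG(\Theta_n)$ with the successor $\Theta_{n+1}$, so that the left-hand side becomes $V(\Theta_{n+1}) - V(\Theta_n)$ and the resulting inequality is exactly \cref{V_theta_diff}.

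There is essentially no obstacle in this argument: all of the analytic work — the polynomial growth bound on $\norm{\mG(\theta)}$ from \cref{it:G_upper_estimate} of \cref{prop:G_upper_estimate}, the coercivity lower bound on $V$ from \cref{it:ineq} of \cref{prop:liap1}, and the exact one-step Lyapunov identity from \cref{lem:liap2} — is already encapsulated in \cref{cor:V_diff}. The single point meriting a line of verification is the elementary bookkeeping that the GD recursion turns the abstract point $\theta - \gamma\mG(\theta)$ into $\Theta_{n+1}$.
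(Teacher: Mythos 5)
Your proposal is correct and coincides with the paper's own argument: the proof given there is exactly an application of \cref{cor:V_diff} with $\gamma \curvearrowleft \gamma_n$ and $\theta \curvearrowleft \Theta_n$, combined with the recursion $\Theta_{n+1} = \Theta_n - \gamma_n \mG(\Theta_n)$ to rewrite the left-hand side. No further comment is needed.
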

\begin{proof}[Proof of \cref{cor:V_theta_diff}]
\Nobs that \cref{cor:V_diff} establishes \cref{V_theta_diff}.
  The proof of \cref{cor:V_theta_diff} is thus complete.
\end{proof}

\begin{cor}
\label{cor:V_theta_diff_const}
Assume \cref{main_setting},
assume for all $ x \in [a,b]^d $ that $ f(x) = f(0) $,
let $ ( \gamma_n )_{ n \in \enne_0 } \subseteq [0, \infty) $, 
let $ ( \Theta_n )_{ n \in \enne_0 } \colon \enne_0 \to \Reals^{ \fd } $ 
satisfy for all $ n \in \enne_0 $ that 
$ \Theta_{ n + 1 } = \Theta_n - \gamma_n \mG( \Theta_n ) $,
and let $ n \in \enne_0 $.
Then
\begin{equation}
\label{V_theta_diff_const}
\begin{split}
&
  V( \Theta_{ n + 1 } ) - V( \Theta_n )
\\
&
  \leq 4 L
  \bigl( 
    ( \gamma_n )^2 \mathfrak{m} L
    \mathbf{a}^2 
    \bigl[
      \textstyle{ \prod }_{ p = 0 }^L
      ( \ell_p + 1 ) 
    \bigr]
    ( 2 V( \Theta_n ) + 4 L^2 \norm{ f(0) }^2 + 1 )^{ (L - 1) } 
    - \gamma_n 
  \bigr)
  \mL_{ \infty }( \Theta_n )
  .
\end{split}  
\end{equation}
\end{cor}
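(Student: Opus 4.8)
The plan is to obtain \cref{V_theta_diff_const} as an immediate specialization of \cref{cor:V_theta_diff} to the constant target function case. First I would invoke \cref{cor:V_theta_diff}, which holds under \cref{main_setting} for an arbitrary measurable target function and provides the one-step bound \cref{V_theta_diff} on the increment $ V( \Theta_{n+1} ) - V( \Theta_n ) $ in terms of the risk $ \mL_{ \infty }( \Theta_n ) $, the Lyapunov value $ V( \Theta_n ) $, and the scalar-product integral $ \int_{ [a,b]^{ \ell_0 } } \ip{ \mN^{ L, \Theta_n }_{ \infty }( x ) - f(x) }{ \mN^{ L, \Theta_n }_{ \infty }( x ) - f(0) } \, \mu( \d x ) $.

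The key step is to simplify this integral under the hypothesis that $ f(x) = f(0) $ for all $ x \in [a,b]^{ \ell_0 } $. Substituting $ f(x) = f(0) $ makes both arguments of the scalar product equal to $ \mN^{ L, \Theta_n }_{ \infty }( x ) - f(0) $, so the integrand reduces to $ \norm{ \mN^{ L, \Theta_n }_{ \infty }( x ) - f(0) }^2 $, and the definition of the risk in \cref{L_r} together with the constant target assumption then shows that the integral equals $ \mL_{ \infty }( \Theta_n ) $ exactly.

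Inserting this identity into \cref{V_theta_diff} replaces the final bracketed term of \cref{cor:V_theta_diff} by $ - 4 \gamma_n L \, \mL_{ \infty }( \Theta_n ) $. It then remains only to factor the common quantity $ 4 L \, \mL_{ \infty }( \Theta_n ) $ out of the two resulting summands, namely the quadratic-in-$ \gamma_n $ term $ 4 ( \gamma_n )^2 \mathfrak{m} L^2 \mathbf{a}^2 \br[\big]{ \textstyle\prod_{ p = 0 }^L ( \ell_p + 1 ) } ( 2 V( \Theta_n ) + 4 L^2 \norm{ f(0) }^2 + 1 )^{ (L-1) } \mL_{ \infty }( \Theta_n ) $ and the linear term $ - 4 \gamma_n L \, \mL_{ \infty }( \Theta_n ) $; this produces precisely the right-hand side of \cref{V_theta_diff_const}.

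Since all the substantive analytic content has already been carried out in \cref{cor:V_theta_diff} (which itself rests on the one-step Lyapunov identity \cref{lem:liap2} and the gradient growth estimate \cref{prop:G_upper_estimate}), there is no genuine obstacle here: the only work is the elementary collapse of the cross term into the risk and the subsequent algebraic factoring, so the proof is expected to be short and routine.
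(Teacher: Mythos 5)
Your proposal is correct and follows exactly the paper's own route: the paper likewise derives \cref{V_theta_diff_const} by applying \cref{cor:V_theta_diff} and using the assumption $f(x)=f(0)$ to collapse the scalar-product integral into $\mL_{\infty}(\Theta_n)$. The remaining factoring of $4L\,\mL_{\infty}(\Theta_n)$ is the same elementary algebra, so nothing further is needed.
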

\begin{proof}[Proof of \cref{cor:V_theta_diff_const}]
\Nobs that \cref{cor:V_theta_diff} 
and the assumption that for all 
$ x \in [a,b]^{ \ell_0 } $
it holds that $ f(x) = f(0) $
establish \cref{V_theta_diff_const}.
The proof of \cref{cor:V_theta_diff_const} is thus complete.
\end{proof}

\begin{lemma}
\label{lem:V_theta_diff2}
Assume \cref{main_setting},
assume for all $ x \in [a,b]^d $ that $ f(x) = f(0) $,
let $ ( \gamma_n )_{ n \in \enne_0 } \subseteq [0, \infty) $, 
let $ ( \Theta_n )_{ n \in \enne_0 } \colon \enne_0 \to \Reals^{ \fd } $ 
satisfy for all $ n \in \enne_0 $ that 
$ \Theta_{ n + 1 } = \Theta_n - \gamma_n \mG( \Theta_n ) $, 
and assume 
\begin{equation}
\label{gamma_ass1}
  \sup\nolimits_{ n \in \enne_0 } 
  ( 
    \gamma_n  
    \mathfrak{m}
  )
  \le  
  \bigl( 
    L \mathbf{a}^2 \br[\big]{ 
    \textstyle{\prod}_{p={0} }^{L}( \ell_p+1) }
    (2V( \Theta_0)+4L^2\norm{f(0)}^2 + 1 )^{ (L-1)} 
  \bigr)^{ - 1 } 
  .
\end{equation}
Then it holds for all $ n \in \enne_0 $ that
\begin{align}
\label{V_theta_diff2}
&
  V( \Theta_{ n + 1 } ) - V( \Theta_n )
\\
\nonumber 
&
  \leq - 4 L \gamma_n
  \bigl(
    1 
    - 
    \br[\big]{ \sup\nolimits_{m\in \enne_0 } \gamma_m }
    \mathfrak{m}
    L \mathbf{a}^2 \br[\big]{ 
      \textstyle{ \prod }_{ p = 0 }^L
      ( \ell_p + 1 ) 
    }
    ( 2 V( \Theta_0 ) + 4 L^2 \norm{ f(0) }^2 + 1 )^{ (L - 1) } 
  \bigr)
  \mL_{ \infty }( \Theta_n ) 
  \leq 0 .
\end{align}
\end{lemma}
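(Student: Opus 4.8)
The plan is to establish the two inequalities in \cref{V_theta_diff2} simultaneously with the auxiliary monotonicity bound $V(\Theta_n)\le V(\Theta_0)$ by a single induction on $n\in\enne_0$. The engine of the argument is the one-step estimate \cref{cor:V_theta_diff_const}, which under the constant-target hypothesis reads
\begin{equation*}
  V(\Theta_{n+1})-V(\Theta_n)\le 4L\bigl((\gamma_n)^2\mathfrak{m}L\mathbf{a}^2\bigl[\textstyle\prod_{p=0}^L(\ell_p+1)\bigr](2V(\Theta_n)+4L^2\norm{f(0)}^2+1)^{(L-1)}-\gamma_n\bigr)\mL_{\infty}(\Theta_n).
\end{equation*}
The only genuine difficulty is that its right-hand side carries the \emph{running} quantity $V(\Theta_n)$ inside the $(L-1)$-th power, whereas the target \cref{V_theta_diff2} is phrased in terms of the \emph{fixed} initial value $V(\Theta_0)$; bridging this gap is precisely what forces the monotonicity to be carried through the induction rather than established separately.

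Before the induction I would record two elementary positivity facts and fix notation. By \cref{it:ineq} of \cref{prop:liap1} we have $2V(\Theta_n)+4L^2\norm{f(0)}^2\ge\norm{\Theta_n}^2\ge 0$, so the base $2V(\Theta_n)+4L^2\norm{f(0)}^2+1\ge 1>0$; hence $t\mapsto t^{L-1}$ is well defined and non-decreasing on the relevant range (it is the constant $1$ when $L=1$). Writing $\kappa=L\mathbf{a}^2\bigl[\prod_{p=0}^L(\ell_p+1)\bigr]\bigl(2V(\Theta_0)+4L^2\norm{f(0)}^2+1\bigr)^{(L-1)}$ for the constant appearing in \cref{gamma_ass1}, that hypothesis states exactly that $\sup_{m\in\enne_0}(\gamma_m\mathfrak{m})\le\kappa^{-1}$, equivalently $\bigl[\sup_{m}\gamma_m\bigr]\mathfrak{m}\,\kappa\le 1$ (the constant $\mathfrak{m}\ge 0$ pulls out of the supremum).

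In the inductive step, under the hypothesis $V(\Theta_n)\le V(\Theta_0)$ and monotonicity of $t\mapsto t^{L-1}$, I would replace $(2V(\Theta_n)+4L^2\norm{f(0)}^2+1)^{(L-1)}$ by $(2V(\Theta_0)+4L^2\norm{f(0)}^2+1)^{(L-1)}$ in the displayed estimate, factor out $\gamma_n$, and rewrite the bound as $-4L\gamma_n\bigl(1-\gamma_n\mathfrak{m}\kappa\bigr)\mL_{\infty}(\Theta_n)$. Since $\gamma_n\mathfrak{m}\le\sup_{m}(\gamma_m\mathfrak{m})$, \cref{gamma_ass1} gives $\gamma_n\mathfrak{m}\kappa\le 1$, so the factor $1-\gamma_n\mathfrak{m}\kappa$ is bounded below by the nonnegative constant $1-[\sup_m\gamma_m]\mathfrak{m}\kappa$ that appears in \cref{V_theta_diff2}. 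This yields the first inequality of \cref{V_theta_diff2}; nonnegativity of $\gamma_n$, $L$, $\mL_{\infty}(\Theta_n)$, and of the bracketed factor then gives the second inequality ($\le 0$), whence $V(\Theta_{n+1})\le V(\Theta_n)\le V(\Theta_0)$, closing the induction.

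The main thing to get right is the \emph{sign bookkeeping} in the last bound: the quantity $-4L\gamma_n\mL_{\infty}(\Theta_n)$ is nonpositive, so enlarging the nonnegative multiplicative factor from $1-\gamma_n\mathfrak{m}\kappa$ down to $1-[\sup_m\gamma_m]\mathfrak{m}\kappa$ makes the product \emph{smaller}, which is the direction needed; a naive comparison would point the wrong way. The only other point worth a remark is the degenerate case $\mathfrak{m}=0$, where $\mL_{\infty}\equiv 0$ and $\mG\equiv 0$, so every term in \cref{V_theta_diff2} vanishes and the claim is immediate. With these observations the argument is entirely mechanical, the substantive content being concentrated in \cref{cor:V_theta_diff_const} and the Lyapunov lower bound of \cref{prop:liap1}.
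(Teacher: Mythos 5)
Your proposal is correct and follows essentially the same route as the paper's proof: an induction on $n$ that carries the monotonicity $V(\Theta_n)\le V(\Theta_0)$ alongside the claimed estimate, uses \cref{cor:V_theta_diff_const} as the one-step engine, replaces $(2V(\Theta_n)+4L^2\norm{f(0)}^2+1)^{(L-1)}$ by $(2V(\Theta_0)+4L^2\norm{f(0)}^2+1)^{(L-1)}$ via the induction hypothesis, and then bounds $\gamma_n\mathfrak{m}$ by $\sup_m(\gamma_m\mathfrak{m})$ with the correct sign bookkeeping. The paper performs the supremum substitution directly inside the quadratic term ($(\gamma_n)^2\mathfrak{m}\le\gamma_n\sup_m(\gamma_m\mathfrak{m})$) rather than after factoring, but this is only a presentational difference.
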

\begin{proof}[Proof of \cref{lem:V_theta_diff2}]
Throughout this proof let $ \mathfrak{g} \in \Reals $
satisfy $ \mathfrak{g} = \sup_{ n \in \enne_0 } ( \gamma_n \mathfrak{m} ) $.
We prove \cref{V_theta_diff2} by 
induction on $ n \in \enne_0 $.
\Nobs that \cref{cor:V_theta_diff_const},
\cref{gamma_ass1}, and
the fact that $ \gamma_0 \mathfrak{m} \leq\mathfrak{g} $ imply that
\begin{equation}
\label{V_theta_diff_ind1}
\begin{split}
&
  V( \Theta_1 ) - V( \Theta_0 )
\\
&
  \leq 4 L
  \bigl( 
    - \gamma_0 +
    \gamma_0^2 \mathfrak{m} L
    \mathbf{a}^2 
    \br[\big]{ 
      \textstyle{ \prod }_{ p = 0 }^L ( \ell_p + 1 ) 
    }
    ( 2 V( \Theta_0 ) + 4 L^2 \norm{ f(0) }^2 + 1 )^{ (L-1) } 
  \bigr)
  \mL_{ \infty }( \Theta_0 )
\\
&
  \leq 4 L
  \bigl(
    -\gamma_0+
    \gamma_0 \mathfrak{g} L
      \mathbf{a}^2 \br[\big]{ 
    \textstyle{\prod}_{p={0} }^{L}( \ell_p+1) }
    (2V( \Theta_0)+4L^2\norm{f(0)}^2 + 1 )^{ (L-1)} 
  \bigr)
  \mL_{ \infty }( \Theta_0 )
\\
&
  =  
  - 4 L \gamma_0
  \bigl( 
    1 -
    \mathfrak{g} 
    L \mathbf{a}^2 
    \br[\big]{ 
      \textstyle{\prod}_{p={0} }^{L}( \ell_p + 1 ) 
    }
    ( 2 V( \Theta_0 ) + 4 L^2 \norm{ f(0) }^2 + 1 )^{ (L - 1) } 
  \bigr)
  \mL_{ \infty }( \Theta_0 )
  \leq 0 .
\end{split}
\end{equation}
This establishes \cref{V_theta_diff2} 
in the base case $ n = 0 $. 
For the induction step let $ n \in \enne $ satisfy 
for all $ m \in \{ 0, 1, \dots, n - 1 \} $ that
\begin{equation}
\label{V_theta_diff_indhyp}
\begin{split}
&
  V( \Theta_{m+1} )-V( \Theta_m )
\\
&
  \leq 
  - 4 L \gamma_m
  \bigl( 
    1 - \mathfrak{g} 
    L  \mathbf{a}^2 \br[\big]{
    \textstyle{\prod}_{p={0} }^{L}( \ell_p+1) }
    (2V( \Theta_0)+4L^2\norm{f(0)}^2 + 1 )^{ (L-1)} 
  \bigr)
  \mL_{ \infty }( \Theta_m ) .
\end{split}
\end{equation}   
\Nobs that \cref{V_theta_diff_indhyp}
and the fact that for all 
$ \theta \in \Reals^{ \fd } $ 
it holds that $ \mL_{ \infty }( \theta)\geq 0 $ 
ensure that
$
  V( \Theta_n ) \leq V( \Theta_{ n - 1 } ) 
  \leq \dots \leq V( \Theta_0 ) 
$.
Combining this with \cref{cor:V_theta_diff_const}, \cref{gamma_ass1}, 
and the fact that 
$ \gamma_n \mathfrak{m} \leq\mathfrak{g} $
demonstrates that 
\begin{equation}
\label{V_theta_diff_ind2}
\begin{split}
&
  V( \Theta_{ n + 1 } ) - V( \Theta_n )
\\
&
  \leq 4 L
  \bigl( 
    - \gamma_n +
    ( \gamma_n )^2 \mathfrak{m}
    L \mathbf{a}^2 \br[\big]{ 
    \textstyle{\prod}_{p={0} }^{L}( \ell_p+1) }
    (2V( \Theta_0)+4L^2\norm{f(0)}^2 + 1 )^{ (L-1)} 
  \bigr)
  \mL_{ \infty }( \Theta_n )
\\
&
  \leq 4 L
  \bigl(
    - \gamma_n+
    \gamma_n\mathfrak{g} 
     L \mathbf{a}^2 \br[\big]{ 
    \textstyle{\prod}_{p={0} }^{L}( \ell_p+1) }
    (2V( \Theta_0)+4L^2\norm{f(0)}^2 + 1 )^{ (L-1)} 
  \bigr) 
   \mL_{ \infty }( \Theta_n )
\\
&
  =  -4L\gamma_n
    \bigl( 1-
    \mathfrak{g} 
    L \mathbf{a}^2 \br[\big]{
    \textstyle{\prod}_{p={0} }^{L}( \ell_p+1) }
    (2V( \Theta_0)+4L^2\norm{f(0)}^2 + 1 )^{ (L-1)}  
  \bigr)
  \mL_{ \infty }( \Theta_n )
  \leq 0 .
\end{split}
\end{equation}
Induction thus establishes \cref{V_theta_diff2}.
The proof of \cref{lem:V_theta_diff2} is thus complete.
\end{proof}

\subsection{Convergence analysis for GD processes}
\label{ssec:GD_convergence}

\begin{theorem}
\label{thm:GD_conv}
Assume \cref{main_setting},
assume for all $ x \in [a,b]^d $ that $ f(x) = f(0) $,
let $ ( \gamma_n )_{ n \in \enne_0 } \subseteq [0, \infty) $, 
let $ ( \Theta_n )_{ n \in \enne_0 } \colon \enne_0 \to \Reals^{ \fd } $ 
satisfy for all $ n \in \enne_0 $ that 
$ 
  \Theta_{ n + 1 } = \Theta_n - \gamma_n \mG( \Theta_n ) 
$,
and assume
\begin{equation}
\label{gamma_ass2}
  \sup\nolimits_{ n \in \enne_0 } 
  ( 
    \gamma_n \mathfrak{m} 
  )
  <
  \bigl(
    L \mathbf{a}^2 
    \bigl[
      \textstyle{\prod}_{p={0} }^{L}( \ell_p+1) 
    \bigr]
    ( 2 V( \Theta_0 ) + 4 L^2 \norm{ f(0) }^2 + 1 )^{ (L - 1) } 
  \bigr)^{ - 1 }
\end{equation} 
and
$ 
  \sum_{ n = 0 }^{ \infty } \gamma_n = \infty 
$.
Then 
\begin{enumerate}[label=(\roman*)]
\item
\label{GD_conv_it1}
it holds that 
$ 
  \sup_{ n \in \enne_0 } 
  \norm{ \Theta_n } 
  \leq 
  [ 2 V( \Theta_0 ) + 4 L^2 \norm{ f(0) }^2 ]^{ 1 / 2 } < \infty
$ 
and
\item 
\label{GD_conv_it2}
it holds that
$ 
  \limsup_{ n \to \infty } \mL_{ \infty }( \Theta_n ) = 0 
$.
\end{enumerate}  
\end{theorem}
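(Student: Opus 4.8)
The plan is to combine the one-step Lyapunov descent recorded in \cref{lem:V_theta_diff2} with the local regularity of the risk to upgrade a summability estimate into genuine convergence. Throughout write $a_n = \mL_{\infty}(\Theta_n)$ and observe first that \cref{gamma_ass2} is the strict version of the hypothesis \cref{gamma_ass1}, so \cref{lem:V_theta_diff2} applies and yields $V(\Theta_{n+1}) \le V(\Theta_n)$ for every $n \in \N_0$. In particular $V(\Theta_n) \le V(\Theta_0)$ for all $n$, and the lower bound $\tfrac{1}{2}\norm{\Theta_n}^2 - 2L^2\norm{f(0)}^2 \le V(\Theta_n)$ from \cref{it:ineq} of \cref{prop:liap1} then forces $\norm{\Theta_n}^2 \le 2V(\Theta_0) + 4L^2\norm{f(0)}^2$, which is exactly \cref{GD_conv_it1}.

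Next I would turn the descent into a quantitative summability statement. Set $c = 1 - \bigl[\sup_{m}\gamma_m\bigr]\mathfrak{m} L\mathbf{a}^2\bigl[\prod_{p=0}^{L}(\ell_p+1)\bigr](2V(\Theta_0)+4L^2\norm{f(0)}^2+1)^{L-1}$, so that \cref{gamma_ass2} guarantees $c \in (0,1]$. The conclusion of \cref{lem:V_theta_diff2} reads $V(\Theta_{n+1}) - V(\Theta_n) \le -4L\gamma_n c\, a_n$, and summing this telescoping inequality from $0$ to $N-1$ together with $V(\Theta_N) \ge -2L^2\norm{f(0)}^2$ (again \cref{it:ineq} of \cref{prop:liap1}) gives
\[
  4Lc\sum_{n=0}^{N-1}\gamma_n a_n \le V(\Theta_0) - V(\Theta_N) \le V(\Theta_0) + 2L^2\norm{f(0)}^2 .
\]
Letting $N \to \infty$ shows $\sum_{n=0}^{\infty}\gamma_n a_n < \infty$. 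Combined with the hypothesis $\sum_{n=0}^{\infty}\gamma_n = \infty$ this already forces $\liminf_{n\to\infty} a_n = 0$, since otherwise $\gamma_n a_n$ would dominate a positive multiple of $\gamma_n$ for all large $n$.

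To promote $\liminf a_n = 0$ to $\limsup a_n = 0$ I would exploit the local boundedness of the dynamics. By \cref{GD_conv_it1} all iterates lie in a fixed compact set $\compactset \subseteq \Reals^{\fd}$, on which \cref{lem:loc_lip} furnishes a Lipschitz constant $\mathscr{L}$ for $\mL_{\infty}$ and \cref{cor:G_up} furnishes $\mathfrak{C} = \sup_{\theta \in \compactset}\norm{\mG(\theta)} < \infty$. Since $\Theta_{n+1} - \Theta_n = -\gamma_n \mG(\Theta_n)$, this yields the two-sided step bound $\abs{a_{n+1} - a_n} \le \mathscr{L}\mathfrak{C}\gamma_n$. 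Assume for contradiction that $\limsup a_n = \rho > 0$ and fix $\eps \in (0,\rho/2)$, so that $a_n \ge 2\eps$ occurs infinitely often while $a_n \le \eps$ also occurs infinitely often. For each peak time $s$ with $a_s \ge 2\eps$ let $t = \min\{n > s : a_n \le \eps\}$; then $a_n > \eps$ for every $n \in \{s,\dots,t-1\}$, and telescoping the descent part of the step bound gives $\eps \le a_s - a_t \le \mathscr{L}\mathfrak{C}\sum_{n=s}^{t-1}\gamma_n$, whence $\sum_{n=s}^{t-1}\gamma_n \ge \eps/(\mathscr{L}\mathfrak{C})$ and therefore $\sum_{n=s}^{t-1}\gamma_n a_n \ge \eps^2/(\mathscr{L}\mathfrak{C})$. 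Choosing successive peak times $s_1 < t_1 \le s_2 < t_2 \le \dots$ produces infinitely many disjoint blocks, each contributing at least $\eps^2/(\mathscr{L}\mathfrak{C})$ to $\sum_n \gamma_n a_n$, contradicting its finiteness. Hence $\limsup a_n = 0$, which is \cref{GD_conv_it2}.

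The main obstacle is precisely this last step: passing from the $\gamma$-weighted summability $\sum_n \gamma_n a_n < \infty$, which on its own controls only a weighted average and yields $\liminf = 0$, to the full statement $\limsup = 0$. The decisive ingredients are the compactness of the orbit from \cref{GD_conv_it1} and the resulting two-sided Lipschitz step estimate; the key technical point is to \emph{anchor the comparison at the peak time} $s$ rather than before it, so that the descent blocks satisfy $a_n \ge \eps$ throughout and no boundary term involving a possibly large single step $\gamma_{s-1}$ ever appears, keeping the accounting clean.
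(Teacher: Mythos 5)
Your proposal is correct and follows essentially the same route as the paper's proof: the descent estimate of \cref{lem:V_theta_diff2} combined with \cref{it:ineq} of \cref{prop:liap1} yields \cref{GD_conv_it1} and the weighted summability $\sum_{n}\gamma_n\mL_{\infty}(\Theta_n)<\infty$, and \cref{GD_conv_it2} is obtained by the same crossing-block contradiction built on the compactness of the orbit, \cref{cor:G_up}, and \cref{lem:loc_lip}. The only difference is how the contradiction is closed: the paper shows that the blocks force $\sum_{k}\norm{\Theta_{n_k}-\Theta_{m_k}}<\infty$ and hence $\abs{\mL_{\infty}(\Theta_{n_k})-\mL_{\infty}(\Theta_{m_k})}\to 0$, contradicting the $\varepsilon$ gap, whereas you lower-bound each block's contribution to $\sum_{n}\gamma_n\mL_{\infty}(\Theta_n)$ by $\varepsilon^2/(\mathscr{L}\mathfrak{C})$ and contradict its finiteness directly --- an equivalent and equally valid accounting.
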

\begin{proof}[Proof of \cref{thm:GD_conv}]
Throughout this proof let $ \eta\in (0, \infty) $ satisfy
\begin{equation}
\label{eta}
\begin{split} 
  \eta
&
  = 4 L
  \bigl(
    1 - 
    \br*{ 
      \sup\nolimits_{ m \in \enne_0 } \gamma_m 
    }
    \mathfrak{m}
    L \mathbf{a}^2 
    \bigl[
      \textstyle{ \prod }_{ p = 0 }^L 
      ( \ell_p + 1 ) 
    \bigr]
    ( 
      2 V( \Theta_0 ) + 4 L^2 \norm{ f(0) }^2 + 1 
    )^{ (L - 1) } 
  \bigr)
\end{split}  
\end{equation}   
and let $ \varepsilon\in \Reals $ satisfy
$ 
  \varepsilon =
  ( \nicefrac{ 1 }{ 3 } 
  )[ 
    \min\{ 
      1, 
      \limsup_{ n \to \infty } \mL_{ \infty }( \Theta_n )
    \}
  ]
$.
\Nobs that \cref{it:ineq} of \cref{prop:liap1} ensures that
for all $ n \in \enne_0 $ it holds that
\begin{equation}  
\label{Theta_up}
  \norm{ \Theta_n }
  \leq
  [ 2 V( \Theta_n ) + 4 L^2 \norm{ f(0) }^2 ]^{ 1 / 2 } 
  .
\end{equation}
\Moreover \cref{lem:V_theta_diff2} implies that 
  for all $ n \in \enne_0 $ it holds that
  $V ( \Theta_n )\leq V( \Theta_{n-1} )\leq\dots\leq V( \Theta_0) $.
  This and \cref{Theta_up} 
  establish \cref{GD_conv_it1}.
  Next \nobs that 
  \cref{it:ineq} of 
  \cref{prop:liap1}
  ensures that
  for all $ n \in \enne $
  we have that
  $V( \Theta_n )\geq 
  \frac{1}{2} \norm{\Theta_n}^2
  -2L^2 \norm{f(0)}^2
  \geq -2L^2 \norm{f(0)}^2$.  
  Combining this with 
  \cref{lem:V_theta_diff2} and 
  \cref{eta} 
  implies that
  for all $N\in \enne $ it holds that
  \begin{equation}
  \label{L_infty_up1}
    \eta \br*{ \sum_{n=0}^{N-1} \gamma_n\mL_{ \infty }( \Theta_n ) } 
    \leq\sum_{n=0}^{N-1}(V( \Theta_n )-V( \Theta_{n+1} ))
    =V( \Theta_0)-V( \Theta_N)
    \leq V( \Theta_0)
    +2L^2 \norm{f(0)}^2.
  \end{equation}
  This demonstrates that
  \begin{equation}
  \label{L_infty_up2}
    \textstyle\sum\limits_{n=0}^\infty[\gamma_n\mL_{ \infty }( \Theta_n )]
    \leq \eta ^{- 1 }
    (V( \Theta_0) + 2L^2 \norm{f(0)}^2)
    <\infty.
  \end{equation}
  Combining this with the assumption that
  $ \sum_{n=0}^\infty\gamma_n=\infty$ ensures
  that $ \liminf_{ n \to \infty } \mL_{ \infty }( \Theta_n )=0$.
  In the following we prove \cref{GD_conv_it2}
  by contradiction.   
  For this assume that
  \begin{equation}
  \label{L_infty_con}
    \limsup \nolimits_{ n \to \infty } \mL_{ \infty }( \Theta_n )
    >0.
  \end{equation}
  \Nobs that \cref{L_infty_con}
  implies that 
  \begin{equation}
  \label{L_infty_eps}
    0=\liminf\nolimits_{ n \to \infty } \mL_{ \infty }( \Theta_n )
    <\varepsilon
    <2\varepsilon
    <\limsup\nolimits_{ n \to \infty } \mL_{ \infty }( \Theta_n ).
  \end{equation}
  This ensures that there exist 
  $ (m_k, n_k) \in \enne^2 $, $ k \in \enne $, which satisfy
  for all $ k \in \enne $ that
  $ m_k < n_k < m_{k+1} $,
  $ \mL_{ \infty }( \Theta_{m_k} )>2\varepsilon $,  
  and
  $ \mL_{ \infty }( \Theta_{n_k} )<\varepsilon
  <\min_{j\in \enne\cap[m_k,n_k)} \mL_{ \infty }( \Theta_j) $.
  \Nobs that \cref{L_infty_up2}
  and the fact that
  for all $ k \in \enne $, $ j \in \enne\cap[m_k,n_k) $
  it holds that
  $1 \leq \frac{1}{\varepsilon} \mL_{ \infty }( \Theta_j) $
  demonstrate that
  \begin{equation}
  \label{sum_gamma}
    \sum\limits_{k= 1 }^\infty\sum_{ j =m_k}^{n_k-1} \gamma_j
    \leq\frac{1}{\varepsilon}  \br*{ \sum\limits_{k= 1 }^\infty\sum_{ j =m_k}^{n_k-1}( \gamma_j\mL_{ \infty }( \Theta_j)) } 
    \leq\frac{1}{\varepsilon}  \br*{ \sum\limits_{ j =0}^\infty( \gamma_j\mL_{ \infty }( \Theta_j)) } 
    <\infty.
  \end{equation}  
  \Moreover \cref{cor:G_up}
  and \cref{GD_conv_it1}
  imply that there exists $ \mathfrak{C} \in \Reals $
  which satisfies that
  \begin{equation}
  \label{G_bd_C}
    \sup\nolimits_{ n \in \enne_0 } \norm{\mG( \Theta_n )}
    \leq\mathfrak{C}.
  \end{equation}
  \Nobs that \cref{G_bd_C}, the triangle inequality, and \cref{sum_gamma} prove that 
  \begin{equation}
  \label{sum_diff_Theta}
  \begin{split}
    \sum\limits_{k= 1 }^\infty\norm{\Theta_{n_k}-\Theta_{m_k} }
    \leq \sum\limits_{k= 1 }^\infty\sum\limits_{ j =m_k}^{n_k-1} \norm{\Theta_{j+1}-\Theta_{j} }
    =\sum\limits_{k= 1 }^\infty\sum\limits_{ j =m_k}^{n_k-1}( \gamma_j\norm{\mG( \Theta_{j} )} )
    \leq
    \mathfrak{C}
     \br*{ \sum\limits_{k= 1 }^\infty\sum\limits_{ j =m_k}^{n_k-1} \gamma_j  } 
    <\infty.
  \end{split}
  \end{equation}
  Next \nobs that \cref{lem:loc_lip}
  and \cref{GD_conv_it1} ensure that there exists
  $ \mathscr{L} \in \Reals $ which satisfies for all
  $m,n\in \enne_0 $ that
  $ \abs{\mL( \Theta_m)-\mL( \Theta_n )}
  \leq\mathscr{L} \norm{\Theta_m-\Theta_n} $.
  Combining this with 
  \cref{sum_diff_Theta}
  shows that
  \begin{equation}
    \limsup\nolimits_{k\to \infty } \abs{\mL_{ \infty }( \Theta_{n_k} )
    -\mL_{ \infty }( \Theta_{m_k} )}
    \leq\limsup\nolimits_{k\to \infty }( \mathscr{L} \norm{\Theta_{n_k}-\Theta_{m_k} } )
    =0.
  \end{equation}  
  The fact that for all $ k \in \enne_0 $
  it holds that
  $ \mL_{ \infty }( \Theta_{n_k} )<\varepsilon
  <2\varepsilon<\mL_{ \infty }( \Theta_{m_k} ) $
  \hence implies that
  \begin{equation}
    0<\varepsilon
    \leq\inf\nolimits_{k\in \enne } \abs{\mL_{ \infty }( \Theta_{n_k} )
    -\mL_{ \infty }( \Theta_{m_k} )}
    \leq\limsup\nolimits_{k\to \infty } \abs{\mL_{ \infty }( \Theta_{n_k} )
    -\mL_{ \infty }( \Theta_{m_k} )}
    =0.
  \end{equation}  
  This contradiction establishes \cref{GD_conv_it2}.  
  The proof of \cref{thm:GD_conv} is thus complete.
\end{proof}

\section{Stochastic gradient descent (SGD) processes in the training of deep ANNs}
\label{sec:stoch_grad_des}

In this section we use some of the results from 
\cref{sec:risk,sec:grad_fl,sec:grad_des} 
above to establish in \cref{thm:sgd_conv}
in \cref{ssec:SGD_convergence} below 
that in the training of deep ReLU ANNs 
we have that the sequence of risks 
$
  \mL( \Theta_n )
$, 
$ n \in \N_0 $, 
of any time-discrete SGD process 
$
  \Theta = ( \Theta_n )_{ n \in \N_0 } \colon \N_0 \times \Omega \to \Rr^{ \fd }
$
converges to zero provided that 
the target function 
$ f \colon [a,b]^{ \ell_0 } \to \Rr^{ \ell_L } $
is a constant function and 
provided that the learning rates 
(the step sizes) 
$ 
  \gamma_n \in [0,\infty)
$, 
$ n \in \N_0 $, 
in the GD optimization method are sufficiently small 
(see \cref{ass_delta} in \cref{thm:sgd_conv} for details) 
but fail to be $ L^1 $-summable so that 
$
  \sum_{ n = 0 }^{ \infty } \gamma_n = \infty 
$.

In \cref{cor:sgd_conv} in \cref{ssec:SGD_convergence} 
we simplify the statement of \cref{thm:sgd_conv} 
by imposing a more restrictive smallness condition 
on the learning rates 
$ ( \gamma_n )_{ n \in \N_0 } \subseteq [0,\infty) $
(compare the smallness assumption on the learning rates in \cref{ass_Theta} in \cref{cor:sgd_conv} 
with the smallness assumption on the learning rates in \cref{ass_delta} in \cref{thm:sgd_conv} for details). 
\cref{thm:main_thm} in the introduction is a direct consequence of \cref{cor:sgd_conv}. 
In our proof of \cref{cor:sgd_conv} 
we employ the elementary upper estimate 
for the Lyapunov function $ V \colon \Rr^{ \fd } \to \Rr $ 
(see \cref{V_c_sgd} in \cref{SGD_setting} in \cref{ssec:SGD_framework} below)
in \cref{prop:liap1}
in \cref{ssec:GF_lyapunov} above 
to verify that the smallness assumption 
in \cref{ass_delta} in \cref{thm:sgd_conv} 
is satisfied so that \cref{thm:sgd_conv} 
can be applied.

Our proof of \cref{thm:sgd_conv} employs 
\cref{prop:liap1} in \cref{ssec:GF_lyapunov}, 
the well-known integrability result 
in \cref{cor:L_exp} in \cref{ssec:SGD_properties1} below, 
the uniform local boundedness result 
for the generalized gradients 
$ 
  \fG^n = ( \fG^n_1, \dots, \fG^n_{ \fd } ) 
  \colon \Reals^{ \fd } \times \Omega \to \Reals^{ \fd } 
$, 
$
  n \in \N_0
$,
(see \cref{eq:generalized_stochastic_gradients} in \cref{SGD_setting})
of the empirical risk functions  
$ 
  \fL^n_{ \infty } \colon \Reals^{ \fd } \times \Omega \to \Reals 
$, 
$ 
  n \in \N_0
$, 
in \cref{lem:G_cpt_bd} in \cref{ssec:SGD_properties2} below, 
the local Lipschitz continuity result 
for the risk function 
$
  \mL \colon \Rr^{ \fd } \to \Rr 
$
in \cref{lem:loc_lip} in \cref{ssec:local_lipschitz} above, 
and the probabilistic recursive upper estimate 
for the composition 
$ V( \Theta_n ) $, $ n \in \N_0 $, 
of the Lyapunov function 
$ V \colon \Rr^{ \fd } \to \Rr $
and the time-discrete SGD process 
$
  \Theta = ( \Theta_n )_{ n \in \N_0 } \colon \N_0 \times \Omega \to \Rr^{ \fd } 
$
in \cref{cor:Theta_diff_sgd2} 
in \cref{ssec:SGD_upper} below.

Our proof of \cref{lem:G_cpt_bd} also 
uses the local Lipschitz continuity result 
for the risk function in 
\cref{lem:loc_lip} in \cref{ssec:local_lipschitz}
as well as the explicit pathwise polynomial growth estimates 
for the generalized gradients
$ 
  \fG^n = ( \fG^n_1, \dots, \fG^n_{ \fd } ) 
  \colon \Reals^{ \fd } \times \Omega \to \Reals^{ \fd } 
$, 
$
  n \in \N_0
$, 
of the empirical risk functions 
in \cref{lem:G_upper} 
in \cref{ssec:SGD_properties2}. 
\cref{lem:G_upper}, in turn, is a direct consequence 
of the explicit polynomial growth estimate 
for the generalized gradient function 
$
  \mG \colon \Rr^{ \fd } \to \Rr^{ \fd } 
$
in \cref{prop:G_upper_estimate} 
in \cref{ssec:upper_estimates} above.

Our proof of \cref{cor:Theta_diff_sgd2} 
is based on induction as well as on the 
pathwise recursive upper estimate 
for the composition 
$ V( \Theta_n ) $, $ n \in \N_0 $, 
of the Lyapunov function 
$ V \colon \Rr^{ \fd } \to \Rr $
and the time-discrete SGD process 
$
  \Theta = ( \Theta_n )_{ n \in \N_0 } \colon \N_0 \times \Omega \to \Rr^{ \fd } 
$
in \cref{lem:Theta_diff_sgd} in \cref{ssec:SGD_upper} below.  
Our proof of \cref{lem:Theta_diff_sgd} employs 
\cref{prop:liap1} in \cref{ssec:GF_lyapunov}, 
the explicit pathwise polynomial growth estimates 
for the generalized gradients
$ 
  \fG^n = ( \fG^n_1, \dots, \fG^n_{ \fd } ) 
  \colon \Reals^{ \fd } \times \Omega \to \Reals^{ \fd } 
$, 
$
  n \in \N_0
$, 
of the empirical risk functions 
in \cref{lem:G_upper}, 
and 
the one-step Lyapunov estimate 
for the SGD method 
in \cref{lem:V_diff_sgd} 
in \cref{ssec:SGD_lyapunov} below. 
\cref{lem:V_diff_sgd} is a direct 
consequence of the one-step Lyapunov estimate 
for the GD method 
in \cref{cor:liap3}
in \cref{ssec:GD_lyapunov} above.

In \cref{cor:L_exp} 
we demonstrate that for every time point $ n \in \N_0 $ we have that 
the expectation 
$
  \E[ \fL^n_{ \infty }( \Theta_n ) ] 
$
of the empirical risk  
of the SGD process at time $ n $ 
coincides with the expectation 
$
  \E[ \mL( \Theta_n ) ] 
$
of the risk of the SGD process at time $ n $. 
Our proof of \cref{cor:L_exp} employs 
the well-known integrability result in 
\cref{prop:L_exp} in \cref{ssec:SGD_properties1} 
and the well-known measurability result in 
\cref{lem:G_meas} in \cref{ssec:SGD_properties1}. 
In \cref{prop:L_exp} we assert 
that the expectations  
$
  \E[ \fL^n_{ \infty }( \theta ) ] 
$, 
$ \theta \in \Rr^{ \fd } $, 
$ n \in \N $, 
of the empirical risk 
functions 
$
  \fL^n_{ \infty } \colon \Rr^{ \fd } \times \Omega \to \Rr
$, 
$ n \in \N $, 
coincide with the risk function 
$
  \Rr^{ \fd } \ni \theta \mapsto \mL( \theta ) \in \Rr
$.

In \cref{lem:G_meas} we collect well-known measurability and independence properties 
for the input data, the SGD process, and the generalized gradients of the empirical 
risk functions. 
Our proof of \cref{lem:G_meas} makes use of 
the local Lipschitz continuity result for the risk function in 
\cref{lem:loc_lip} in \cref{ssec:local_lipschitz} 
and of the explicit representation result for 
the generalized gradients of (approximations of) the empirical risk functions 
in \cref{prop:L_grad} 
in \cref{ssec:SGD_representation} below. 
\cref{prop:L_grad} is a direct consequence of the 
explicit representation 
for the generalized gradient function
of the risk function 
in \cref{prop:G} 
in \cref{ssec:gen_grad_risk_functions} above.

The findings in this section extend the findings in \cite[Section~4]{MR4468133} 
from shallow ReLU ANNs with just one hidden layer to deep ReLU ANNs 
with an arbitrarily large number of hidden layers.
In particular,
\cref{prop:L_grad} in \cref{ssec:SGD_representation}
extends \cite[Proposition 3.2]{MR4468133},
\cref{prop:L_exp} in \cref{ssec:SGD_properties1}
generalizes \cite[Proposition 3.3]{MR4468133},
\cref{lem:G_meas} in \cref{ssec:SGD_properties1} extends \cite[Lemma 3.4]{MR4468133},
\cref{cor:L_exp} in \cref{ssec:SGD_properties1} extends \cite[Corollary 3.5]{MR4468133},
\cref{lem:G_upper} in \cref{ssec:SGD_properties2}
extends \cite[Lemma 3.6]{MR4468133},
\cref{lem:G_cpt_bd} in \cref{ssec:SGD_properties2} 
extends \cite[Lemma 3.7]{MR4468133},
\cref{lem:V_diff_sgd} in \cref{ssec:SGD_lyapunov}
extends \cite[Lemma 3.9]{MR4468133},
\cref{lem:Theta_diff_sgd} 
in \cref{ssec:SGD_upper} 
extends \cite[Lemma 3.10]{MR4468133},
\cref{cor:Theta_diff_sgd2} 
in \cref{ssec:SGD_upper} 
extends \cite[Corollary 3.11]{MR4468133},
\cref{thm:sgd_conv} in \cref{ssec:SGD_convergence} extends \cite[Theorem 3.12]{MR4468133},
and \cref{cor:sgd_conv} in \cref{ssec:SGD_convergence}
extends \cite[Corollary 3.13]{MR4468133}.

\subsection{Mathematical framework for SGD processes and deep ANNs with ReLU activation}
\label{ssec:SGD_framework}

\begin{setting}
\label{SGD_setting}
Let 
$ L, \fd \in \N $, $ ( \ell_k)_{ k \in \N_0 } \subseteq \N $, 
$ \xi\in \Reals^{ \ell_L} $, 
$ a, \mathbf{a} \in \Reals $, $ b \in (a, \infty) $,
$ \A \in (0,\infty) $, 
$ \B \in ( \A, \infty) $ 
satisfy $ \fd = \sum_{ k = 1 }^L \ell_k( \ell_{k-1} + 1) $ 
and $ \mathbf{a} =\max\{\abs{a}, \abs{b}, 1 \} $,
for every $ \theta = ( \theta_1, \ldots, \theta_{ \fd }) \in \Reals^{ \fd } $
let $ \fw^{ k, \theta } = ( \fw^{ k, \theta }_{ i,j} )_{ (i,j) \in \{ 1, \ldots, \ell_k \} \times \{ 1, \ldots, \ell_{k-1} \} }
\in \Reals^{ \ell_k \times \ell_{k-1} } $, $ k \in \N $, 
and $ \fb^{ k, \theta } = ( \fb^{ k, \theta }_1, \ldots, \fb^{ k, \theta }_{\ell_k} )
\in \Reals^{ \ell_k} $,
$ k \in \N $, 
satisfy for all 
$ k \in \{ 1, \ldots, L\} $, $ i \in \{ 1, \ldots, \ell_k \} $,
$ j \in \{ 1, \ldots, \ell_{k-1} \} $ that
\begin{equation}
\label{wb_sgd}
\fw^{ k, \theta }_{ i,j} =
\theta_{ (i-1)\ell_{k-1} + j+\sum_{h= 1 }^{ k-1} \ell_h( \ell_{h-1} + 1)}
\qquad\text{and} \qquad
\fb^{ k, \theta }_{ i} =
\theta_{\ell_k\ell_{k-1} + i+\sum_{h= 1 }^{ k-1} \ell_h( \ell_{h-1} + 1)} \,,
\end{equation}
for every $ k \in \N $, $ \theta \in \Reals^{ \fd } $
let $ \mA_k^\theta = ( \mA_{k,1}^\theta, \ldots, \mA_{k, \ell_k}^\theta)
\colon \Reals^{ \ell_{k-1} } \to \Reals^{ \ell_k} $
satisfy 
for all $ x \in \Reals^{ \ell_{k-1} } $
that $ \mA_k^\theta(x)=\fb^{ k, \theta } + \fw^{ k, \theta }x$,
let 
$ \R_r \colon \Rr \to \Rr $, 
$ r \in [1, \infty] $,
satisfy for all 
$ r \in [1, \infty) $, 
$ x \in ( - \infty, \A r^{ - 1 } ] $, 
$ 
y \in \Rr
$, 
$
z \in [ \B r^{ - 1 }, \infty ) 
$
that 
\begin{equation}
\label{lim_R_sgd}
\R_r \in C^1( \Reals, \Reals ) ,
\qquad
\R_r(x) = 0,
\qquad 
0 \leq \R_r(y) \leq \R_{ \infty }( y ) 
= 
\max\{ y, 0 \}
,
\qqandqq
\R_r(z) = z
,
\end{equation}
assume 
$
\sup_{ r \in [1, \infty) }
\sup_{ x \in \Reals } 
| ( \R_r )'( x ) | < \infty 
$,
let $ \norm{\cdot} \colon( \cup_{ n \in \enne } \Reals^n ) \to\Rr$,
$ \ip{\cdot}{\cdot} \colon( \cup_{ n \in \enne }( \Reals^n\times \Reals^n )) \to\Reals $, and
$ \mathfrak{M}_r \colon( \cup_{ n \in \enne } \Reals^n ) \to ( \cup_{ n \in \enne } \Reals^n ) $, $ r\in [1 , \infty ] $, 
satisfy for all $ r\in [1 , \infty ] $, $ n \in \N $,
$ x = (x_1, \ldots, x_n ) $, $y= (y_1, \ldots,y_n ) \in \Rr^n $
that 
\begin{equation}\label{norm_scalar_M_sgd}
  \textstyle{  
  \norm{x} = ( \sum_{ i= 1 }^n|x_{ i}|^2)^{ 1/2},
  \qquad \ip{x}{y} =\sum_{ i= 1 }^nx_iy_i, 
  \qquad \text{and}
  \qquad \mathfrak{M}_r(x)= ( \R_r(x_1), \ldots, \R_r(x_n ))},
\end{equation}
for every $ \theta \in \Rr^{ \fd } $
let $ \mN^{ k, \theta }_r = ( \mN^{ k, \theta }_{r,1}, \ldots, \mN^{ k, \theta }_{r, \ell_k} )
\colon \Rr^{ \ell_0 } \to \Rr^{ \ell_k} $,
$ r\in [1 , \infty ] $, $ k \in \N $, 
and 
$ \mX^{ k, \theta }_{ i} \subseteq\Rr^{ \ell_0 } $, $k,i \in \N $,
satisfy 
for all $ r\in [1 , \infty ]  $, $ k \in \N $, $ i \in \{ 1, \ldots, \ell_k \} $,
$ x \in \Reals^{ \ell_0 } $
that
\begin{equation}
\begin{split}
\label{N_sgd}
  &
  \mN^{ 1, \theta }_r(x)=\mA^\theta_1(x), \qquad
  \mN^{ k+1, \theta }_r(x)=\mA_{k+1}^\theta( \mathfrak{M}_{r^{1 / k } }( \mN^{ k, \theta }_r(x))),
\end{split}
\end{equation}
and $ \mX^{ k, \theta }_{ i} = \{ y \in [a,b]^{ \ell_0 } \colon
\mN^{ k, \theta }_{ \infty, i}(y)>0\} $,
let $ ( \Omega, \F, \P) $ be a probability space,
let $ X^{n,m} =$ $(X^{n,m}_1, \ldots,$ $ X^{n,m}_{\ell_0 } )\colon	\Omega\to [a,b]^{ \ell_0 } $,
$ n, m \in \enne_0 $ be i.i.d.\ random variables,
let $ \mL \colon \Reals^{ \fd } \to \Reals $ and
$ V \colon \Reals^{ \fd } \to \Reals $ satisfy 
for all $ \theta \in \Reals^{ \fd } $
that 
$ 
  \mL( \theta ) = \E[ \norm{ \mN_{ \infty }^{ L, \theta }( X^{0,0} ) - \xi }^2 ] 
$
and
\begin{equation}
\label{V_c_sgd}
\textstyle 
  V( \theta) = 
  \br[\big]{ 
    \sum_{ k = 1 }^L 
    \pr[\big]{ 
      k \| \fb^{ k, \theta } \|^2 
      +
      \sum_{ i = 1 }^{ \ell_k } 
      \sum_{ j = 1 }^{ \ell_{k-1} }
      |
        \fw^{ k, \theta }_{ i, j } 
      |^2 
    } 
  }
  - 2 L \langle \xi, \fb^{ L, \theta } \rangle 
  ,
\end{equation}
let $ ( M_n )_{ n \in \enne_0 } \subseteq \enne $,
for every $ n \in \enne_0 $, $ r \in [ 1 , \infty ]  $
let 
$ 
  \fL^n_r \colon \Reals^{ \fd } \times \Omega \to \Reals 
$ 
satisfy for all $ \theta \in \Reals^{ \fd } $, $ \omega \in \Omega $ that
$ 
  \fL^n_r( \theta, \omega ) 
  = \frac{ 1 }{ M_n } 
  \sum_{ m = 1 }^{ M_n } 
  \norm{ \mN^{ L, \theta }_r( X^{ n, m }( \omega ) ) - \xi }^2
$,
for every $ n \in \enne_0 $
let 
$ 
  \fG^n = ( \fG^n_1, \dots, \fG^n_{ \fd } ) 
  \colon \Reals^{ \fd } \times \Omega \to \Reals^{ \fd } 
$ 
satisfy for all 
$ \theta \in \Reals^{ \fd } $, 
$ 
  \omega \in \{ w \in \Omega 
  \colon ( ( \nabla_\theta \fL^n_r)( \theta, w ) )_{ r \in [ 1 , \infty ) } 
  \text{ is } $ $ \text{convergent} \} 
$ 
that
\begin{equation}
\label{eq:generalized_stochastic_gradients}
\textstyle 
  \fG^n( \theta, \omega ) = 
  \lim_{ r \to \infty }( \nabla_{ \theta } \fL^n_r )( \theta, \omega ) ,
\end{equation}
let 
$ 
  \Theta = ( \Theta_n )_{ n \in \enne_0 } \colon \enne_0 \times \Omega \to \Reals^{ \fd } 
$
be a stochastic process,
let $ ( \gamma_n )_{ n \in \enne_0 } \subseteq [0, \infty) $,
assume that 
$ \Theta_0 $ and 
$
  ( X^{ n, m } )_{ (n,m) \in ( \enne_0)^2 } 
$ 
are independent,
and assume for all $ n \in \enne_0 $, $ \omega \in \Omega $
that $ \Theta_{n+1}( \omega)=\Theta_n( \omega)-\gamma_n\fG^n( \Theta_n( \omega), \omega) $.
\end{setting}

\subsection{Explicit representations for the generalized gradients 
of the empirical risk function}
\label{ssec:SGD_representation}

\begin{prop}
\label{prop:L_grad}
  Assume \cref{SGD_setting} and let $ n \in \enne_0 $, $ \omega \in \Omega $.
  Then
  \begin{enumerate}[(i)]
  \item\label{it:L_C1}
  it holds for all $ r \in [ 1 , \infty )  $ that 
  $ 
    ( \Rr^{ \fd } \ni \theta \mapsto \fL^n_r( \theta, \omega ) \in \Rr ) \in C^1( \Reals^{ \fd }, \Reals) 
  $,
\item\label{it:L_grad1}
it holds for all $ r\in [ 1 , \infty )  $, $ k \in \{ 1, \ldots, L\} $, 
$ i \in \{ 1, \ldots, \ell_k \} $,
$ j \in \{ 1, \ldots, \ell_{k-1} \} $, 
$ \theta = ( \theta_1, \ldots, \theta_{ \fd }) \in \Reals^{ \fd } $ 
that
\begin{align}
\label{L_grad1}
& 
   \biggl(  
     \frac{ \partial }{
       \partial\theta_{ (i-1) \ell_{k-1} + j 
       + \sum_{ h = 1 }^{ k - 1 } \ell_h ( \ell_{ h - 1 } + 1 ) } 
     }
     \fL^n_r 
   \biggr)( \theta, \omega)
\\
&  =
  \!\!\!\!\!
   \sum_{\substack{v_k,v_{k+1},  
  \ldots,v_L\in \enne, \\\forall w\in \enne
  \cap[k, L]\colon v_w\leq\ell_w} }
  \!\!\!\!\!
  \textstyle
  \frac{2}{M_n} \sum\limits_{ m = 1 }^{M_n}
  \Big[
  \R_{ r ^{ 1 / ( \max \cu { k - 1 , 1 } ) } } ( \mN^{ \max\{k-1,1\}, \theta }_{r,j}(X^{n,m}( \omega)))
  \indicator{  (1, L]}(k)
  +X^{n,m}_j( \omega) \indicator{ \{1\} }(k)
  \Big]
\nonumber 
\\
\nonumber
&
  \quad\cdot
  \Big[ \indicator{ \{ i \} }(v_k)\Big]
  \Big[ \mN_{r,v_L}^{L, \theta }(X^{n,m}( \omega))-\xi_{v_L} \Big] 
  \Bigl[
    \textstyle{ \prod }_{ q = k + 1 }^L 
    \bigl(
      \fw^{q, \theta }_{v_{q}, v_{q-1} }
      \bigl[
        ( 
          \R_{ r^{ 1 / ( q - 1 ) } } 
        )'( 
          \mN^{ q - 1, \theta }_{ r, v_{ q - 1 } }( X^{ n, m }( \omega ) )
        )
      \bigr]
    \bigr)
  \Bigr] ,
\end{align}
\item
\label{it:L_grad2}
it holds for all 
$ r \in [ 1, \infty ) $, $ k \in \{ 1, \dots, L \} $, 
$ i \in \{ 1, \dots, \ell_k \} $, 
$ \theta = ( \theta_1, \ldots, \theta_{ \fd } ) \in \Reals^{ \fd } $ 
that
\begin{equation}
\label{L_grad2}
\begin{split}
& 
  \biggl( 
    \frac{ \partial }{
      \partial \theta_{ 
        \ell_k \ell_{k-1} + i 
        + \sum_{ h = 1 }^{ k-1} \ell_h ( \ell_{ h - 1 } + 1 ) 
      } 
    }
    \fL^n_r 
  \biggr)( \theta, \omega ) 
\\
&
  =
  \sum_{
    \substack{v_k,v_{k+1},  
    \ldots,v_L\in \enne, \\\forall w\in \enne
    \cap[k, L]\colon v_w\leq\ell_w} 
  }
  \textstyle 
  \frac{2}{M_n} 
  \sum\limits_{ m = 1 }^{ M_n }
  \Big[ \indicator{ \{ i \} }(v_k)\Big]
  \Big[ \mN_{r,v_L}^{L, \theta }(X^{n,m}( \omega))-\xi_{v_L} \Big] 
  \\
  &\quad\cdot
  \Big[
  \textstyle{\prod}_{q={k+1} }^{L} \big(
  \fw^{q, \theta }_{v_{q}, v_{q-1} }
  \big[( \R_ {r^{ 1 / ( q - 1 ) } } )'( \mN^{q-1, \theta }_{r,v_{q-1} }(X^{n,m}( \omega)))\big]
  \big)
  \Big],
\end{split}
\end{equation}
\item
\label{it:grad_L_conv_G}
it holds 
for all $ \theta = ( \theta_1, \ldots, \theta_{ \fd } ) \in \Reals^{ \fd } $ 
that
$
  \limsup\nolimits_{r\to \infty } \norm{ ( \nabla \fL_r^n )( \theta, \omega)
  -\fG^n( \theta, \omega)} =0
$,
\item  
\label{it:G_1}
it holds for all $ k \in \{ 1, \ldots, L\} $,
$ i \in \{ 1, \ldots, \ell_k \} $, 
$ j \in \{ 1, \ldots, \ell_{k-1} \} $, 
$ \theta = ( \theta_1, \ldots, \theta_{ \fd } ) \in \Reals^{ \fd } $ 
that
\begin{equation}
\label{G_1}
\begin{split}
&
  \fG^n_{ (i-1) \ell_{ k - 1 } + j + \sum_{ h = 1 }^{ k - 1 } 
  \ell_h ( \ell_{ h - 1 } + 1 ) }( \theta, \omega ) 
\\
&
  =
  \sum_{
    \substack{ 
      v_k, v_{k+1}, \dots, v_L \in \enne, 
    \\
      \forall w \in \enne \cap [k, L] \colon v_w \leq \ell_w 
    } 
  }
  \textstyle 
  \frac{2}{M_n} 
  \sum\limits_{ m = 1 }^{ M_n }
  \Big[
  \R_{ \infty }( \mN^{ \max\{k-1,1\}, \theta }_{ \infty,j}(X^{n,m}( \omega)))
  \indicator{  (1, L]}(k)
  +X^{n,m}_j( \omega) \indicator{ \{1\} }(k)
  \Big]\\
  &\quad\cdot
  \Big[ \indicator{ \{ i \} }(v_k)\Big]
  \Big[ \mN_{ \infty,v_L}^{L, \theta }(X^{n,m}( \omega))-\xi_{v_L} \Big] 
  \Big[
  \textstyle{\prod}_{q={k+1} }^{L} \big(
  \fw^{q, \theta }_{v_{q}, v_{q-1} }
  \indicator{ \mX^{q-1, \theta }_{v_{q-1} }}(X^{n,m}( \omega))
  \big)
  \Big],
\end{split}
\end{equation}
and
\item  
\label{it:G_2}
it holds for all 
$ k \in \{ 1, \ldots, L \} $,
$ i \in \{ 1, \ldots, \ell_k \} $, 
$ \theta = ( \theta_1, \ldots, \theta_{ \fd } ) \in \Reals^{ \fd } $ 
that
\begin{equation}
\label{G_2}
\begin{split}
&
  \fG^n_{ \ell_k \ell_{k-1} + i + \sum_{ h = 1 }^{ k - 1 } 
  \ell_h ( \ell_{ h - 1 } + 1 ) }( \theta, \omega )
  =
  \sum_{
    \substack{ 
      v_k, v_{k+1}, \dots, v_L \in \enne, 
    \\
      \forall w \in \enne \cap [k, L] \colon v_w \leq \ell_w 
    } 
  }
  \textstyle 
  \frac{ 2 }{ M_n } 
  \sum\limits_{ m = 1 }^{ M_n }
  \Bigl[ 
    \indicator{ \{ i \} }( v_k )
  \Bigr]
   \\
  &\quad\cdot
  \Big[ \mN_{ \infty,v_L}^{L, \theta }(X^{n,m}( \omega))-\xi_{v_L} \Big]
  \Big[
  \textstyle{\prod}_{q={k+1} }^{L} \big(
  \fw^{q, \theta }_{v_{q}, v_{q-1} }
  \indicator{ \mX^{q-1, \theta }_{v_{q-1} }}(X^{n,m}( \omega))
  \big)
  \Big].
\end{split}
\end{equation}
\end{enumerate}
\end{prop}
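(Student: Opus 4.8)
The plan is to recognize that, for each fixed realization $\omega \in \Omega$, the empirical risk function $\Rr^{\fd} \ni \theta \mapsto \fL^n_r(\theta,\omega) \in \Rr$ is nothing but an instance of the risk function $\mL_r$ studied in \cref{main_setting}, namely the one associated to the finite empirical measure
\[
  \mu^{n,\omega} = \tfrac{1}{M_n} \textstyle\sum_{m=1}^{M_n} \delta_{X^{n,m}(\omega)}
\]
on $\mathcal{B}([a,b]^{\ell_0})$ and to the constant target function $[a,b]^{\ell_0} \ni x \mapsto \xi \in \Rr^{\ell_L}$. Since $\omega$ is held fixed, every object in sight is deterministic and no measurability subtlety arises, so the entire proposition should follow by a single application of \cref{prop:G}, with no genuinely new analysis required.

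Concretely, I would first fix $n \in \enne_0$ and $\omega \in \Omega$ and introduce $\mu^{n,\omega}$ as above. As $X^{n,m}(\omega) \in [a,b]^{\ell_0}$, the measure $\mu^{n,\omega}$ is a well-defined Borel measure on $[a,b]^{\ell_0}$ with total mass $\mu^{n,\omega}([a,b]^{\ell_0}) = 1$, and the constant map $x \mapsto \xi$ is trivially measurable. All remaining data of \cref{main_setting} -- the numbers $L$, $\fd$, $(\ell_k)_{k}$, the constants $a$, $\mathbf{a}$, $b$, $\A$, $\B$, the activation approximations $\R_r$, the multidimensional maps $\mathfrak{M}_r$, and the realization functions $\mN^{k,\theta}_r$ together with the sets $\mX^{k,\theta}_i$ -- are supplied verbatim by \cref{SGD_setting}. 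Thus \cref{main_setting} is fulfilled with $\mu \curvearrowleft \mu^{n,\omega}$ and $f \curvearrowleft (x \mapsto \xi)$, and for every measurable $g \colon [a,b]^{\ell_0} \to \Rr$ the elementary identity $\int_{[a,b]^{\ell_0}} g(x)\,\mu^{n,\omega}(\d x) = \frac{1}{M_n}\sum_{m=1}^{M_n} g(X^{n,m}(\omega))$ yields $\mL_r(\theta) = \fL^n_r(\theta,\omega)$ for all $r \in [1,\infty)$, $\theta \in \Rr^{\fd}$.

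With this identification in place, each assertion is the corresponding assertion of \cref{prop:G}, rewritten through the above sum representation and with $f_{v_L}(x)$ replaced by $\xi_{v_L}$: \cref{it:L_C1} follows from \cref{G1} of \cref{prop:G}; the weight- and bias-derivative formulas \cref{L_grad1} and \cref{L_grad2} follow from \cref{G2'} and \cref{G2''} of \cref{prop:G}; the convergence statement \cref{it:grad_L_conv_G} follows from \cref{G3} of \cref{prop:G}, which, holding for the arbitrary $\theta$ fixed there, holds for all $\theta \in \Rr^{\fd}$; and the explicit generalized-gradient representations \cref{G_1} and \cref{G_2} follow from \cref{G4'} and \cref{G4''} of \cref{prop:G}. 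Moreover, since both $\fG^n(\cdot,\omega)$ (via \cref{eq:generalized_stochastic_gradients}) and $\mG$ (via \cref{main_setting}) are defined as the $r \to \infty$ limit of the respective gradients, the convergence provided by \cref{G3} simultaneously identifies $\fG^n(\theta,\omega)$ with $\mG(\theta)$ for every $\theta$.

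The proof carries no substantial obstacle; the only point needing a little care is the bookkeeping that \cref{SGD_setting} indeed instantiates \cref{main_setting} under the stated substitutions and that the generalized gradient $\fG^n(\cdot,\omega)$ coincides with the generalized gradient $\mG$ produced by \cref{main_setting} for the empirical measure $\mu^{n,\omega}$. Once this reduction is recorded, all six items of the proposition are immediate specializations of \cref{prop:G}.
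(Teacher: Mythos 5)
Your proposal is correct and follows exactly the paper's own argument: the paper likewise proves this proposition by applying \cref{prop:G} with $\mu$ taken to be the empirical measure $\mathcal{B}([a,b]^{\ell_0}) \ni A \mapsto \frac{1}{M_n}\sum_{m=1}^{M_n}\mathbbm{1}_A(X^{n,m}(\omega))$ and $f$ taken to be the constant function $x\mapsto\xi$. The additional bookkeeping you record (that \cref{SGD_setting} instantiates \cref{main_setting} under these substitutions and that $\fG^n(\cdot,\omega)$ coincides with the resulting $\mG$) is exactly what the paper leaves implicit.
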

\begin{proof}[Proof of \cref{prop:L_grad}]
\Nobs that \cref{G1,G2',G2'',G3,G4',G4''} of 
\cref{prop:G} (applied with
$ 
  \mu \curvearrowleft 
  ( 
    \mathcal{B}( [a,b]^{ \ell_0 } ) 
    \ni A \mapsto 
    \frac{ 1 }{ M_n } 
    \sum_{ m = 1 }^{ M_n } 
    \mathbbm{1}_A( X^{ n, m }( \omega ) ) 
    \in [0,1] 
  ) 
$,
$
  f \curvearrowleft ( [a,b]^{ \ell_0 } \ni x
  \mapsto \xi \in \Reals^{ \ell_L} ) 
$ 
in the notation of \cref{prop:G}) prove 
\cref{it:L_C1,it:L_grad1,it:L_grad2,it:grad_L_conv_G,it:G_1,it:G_2}.
The proof of \cref{prop:L_grad} is thus complete.
\end{proof}

\subsection{Properties of expectations of the empirical risk functions}
\label{ssec:SGD_properties1}

\begin{prop}
\label{prop:L_exp}
Assume \cref{SGD_setting}.
Then it holds for all $ n \in \enne_0 $, $ \theta \in \Reals^{ \fd } $ that
\begin{equation}
\label{fL_exp}
  \E[ \fL^n_{ \infty }( \theta ) ] = \mL( \theta ) 
  .
\end{equation} 
\end{prop}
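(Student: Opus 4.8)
The plan is to exploit the fact that $\fL^n_\infty(\theta,\cdot)$ is, by its very definition in \cref{SGD_setting}, a sample average over the i.i.d.\ input data $X^{n,1},\dots,X^{n,M_n}$, so that the expectation should reduce to a single copy of the risk function after using the identical distribution of the $X^{n,m}$. First I would recall from \cref{SGD_setting} that for all $\theta\in\Reals^{\fd}$ and $\omega\in\Omega$ we have
\begin{equation}
\fL^n_\infty(\theta,\omega)=\frac{1}{M_n}\sum_{m=1}^{M_n}\norm{\mN^{L,\theta}_\infty(X^{n,m}(\omega))-\xi}^2,
\end{equation}
and that by assumption $X^{n,m}$, $n,m\in\N_0$, are i.i.d.\ random variables taking values in $[a,b]^{\ell_0}$. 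In particular every $X^{n,m}$ has the same distribution as $X^{0,0}$.

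The key analytic input is an integrability guarantee, which I would draw from \cref{lem:f:integrable} (applied with $f\curvearrowleft([a,b]^{\ell_0}\ni x\mapsto\xi\in\Reals^{\ell_L})$, exactly the specialization used in the proof of \cref{prop:L_grad}) together with the local boundedness of the realization function $\mN^{L,\theta}_\infty$ on the compact cube $[a,b]^{\ell_0}$ that follows from \cref{lem:der:unif:bounded}. These ensure that $\norm{\mN^{L,\theta}_\infty(X^{n,m})-\xi}^2$ is integrable, so that expectation and the finite sum may be freely interchanged and the defining expression for $\mL$ is meaningful. Concretely, I would write
\begin{equation}
\E[\fL^n_\infty(\theta)]=\frac{1}{M_n}\sum_{m=1}^{M_n}\E\!\br[\big]{\norm{\mN^{L,\theta}_\infty(X^{n,m})-\xi}^2},
\end{equation}
and then invoke the identical-distribution property to replace each summand by $\E[\norm{\mN^{L,\theta}_\infty(X^{0,0})-\xi}^2]=\mL(\theta)$, where the last equality is the definition of $\mL$ in \cref{SGD_setting}.

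Combining the two displays gives $\E[\fL^n_\infty(\theta)]=\frac{1}{M_n}\sum_{m=1}^{M_n}\mL(\theta)=\mL(\theta)$, which is \cref{fL_exp}. The main (and really only) obstacle is the measurability and integrability bookkeeping: one must confirm that $\omega\mapsto\norm{\mN^{L,\theta}_\infty(X^{n,m}(\omega))-\xi}^2$ is a genuine (measurable, integrable) random variable before applying linearity of expectation, which is why the uniform boundedness result \cref{lem:der:unif:bounded} on the compact set $[a,b]^{\ell_0}$ is needed. Everything beyond this is the elementary fact that expectations of identically distributed integrable random variables coincide, so I expect the argument to be short.
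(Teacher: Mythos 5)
Your proposal is correct and follows essentially the same route as the paper's proof: expand $\fL^n_\infty(\theta)$ as the sample average, use linearity of expectation, and replace each summand by $\E[\norm{\mN^{L,\theta}_\infty(X^{0,0})-\xi}^2]=\mL(\theta)$ via the identical distribution of the $X^{n,m}$. The additional integrability and measurability bookkeeping you supply (via \cref{lem:f:integrable} and \cref{lem:der:unif:bounded}) is sound but not spelled out in the paper, which relies directly on the boundedness of the continuous realization function on the compact cube $[a,b]^{\ell_0}$.
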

\begin{proof}[Proof of \cref{prop:L_exp}]
\Nobs that the fact that
for all $ n \in \N_0 $,
$ \theta \in \Reals^{ \fd } $, 
$ \omega \in \Omega $ 
it holds that
$ 
  \fL^n_{ \infty }( \theta, \omega ) = 
  \frac{ 1 }{ M_n } 
  \sum_{ m = 1 }^{ M_n } 
  \norm{
    \mN^{ L, \theta }_{ \infty }( X^{ n, m }( \omega ) ) - \xi 
  }^2
$, 
the fact that for all 
$ n \in \N_0 $, $ \theta \in \Rr^{ \fd } $
it holds that 
$
  \mL( \theta ) = 
  \E[ \norm{ \mN_{ \infty }^{ L, \theta }( X^{0,0} ) - \xi }^2 ] 
$, 
and the assumption that
$ 
  X^{ n, m } \colon \Omega \to [a,b]^{ \ell_0 } 
$, 
$ n,m \in \enne_0 
$, 
are i.i.d.\ random variables
imply that for all $ n \in \enne_0 $, $ \theta \in \Reals^{ \fd } $ 
we have that
\begin{equation}
  \E[ \fL^n_{ \infty }( \theta ) ]
  =
  \frac{ 1 }{ M_n } 
  \sum_{ m = 1 }^{ M_n } 
  \E\bigl[ 
    \norm{ \mN^{ L, \theta }_{ \infty }( X^{n,m} ) - \xi }^2 
  \bigr]
  =
  \E\bigl[ 
    \norm{ \mN^{ L, \theta }_{ \infty }( X^{0,0} ) - \xi }^2 
  \bigr]
  =
  \mL( \theta ) 
  .
\end{equation}
The proof of \cref{prop:L_exp} is thus complete.
\end{proof}

\begin{lemma}
\label{lem:G_meas}
  Assume \cref{SGD_setting} and let 
  $ \mathbb{F}_n\subseteq\F$, $ n \in \enne_0 $,
   satisfy 
  for all $ n \in \enne $ that 
  $ \mathbb{F}_0=\sigma( \Theta_0) $ and 
  $ \mathbb{F}_n=\sigma( \Theta_0,(X^{ \mathfrak{n}, \mathfrak{m} } )_{\mathfrak{n}, \mathfrak{m} \in ( \enne\cap[0,n))\times \enne_0 } ) $.
  Then 
  \begin{enumerate}[label=(\roman*)]
  \item
  \label{it:G_meas}
  it holds for all $ n \in \enne_0 $ that
  $ \Reals^{ \fd }\times \Omega\ni( \theta, \omega)\mapsto\fG^n( \theta, \omega) \in \Reals^{ \fd } $
  is $ ( \mathcal{B}( \Reals^{ \fd })\otimes\mathbb{F}_{n+1} )/\mathcal{B}( \Reals^{ \fd }) $-measurable,
  \item
  \label{it:Theta_meas}
  it holds for all $ n \in \enne_0 $ that
  $ \Theta_n$ is $ \mathbb{F}_n/\mathcal{B}( \Reals^{ \fd }) $-measurable,
  and
  \item
  \label{it:ind}
  it holds for all $m,n\in \enne_0 $ that
  $ \sigma(X^{n,m} ) $ and $ \mathbb{F}_n$ are independent. 
  \end{enumerate}   
\end{lemma}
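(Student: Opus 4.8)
The plan is to establish the three measurability and independence assertions of \cref{lem:G_meas} separately, relying chiefly on \cref{prop:L_grad} (for the structure of $\fG^n$) and the standing assumptions of \cref{SGD_setting} (that the $X^{n,m}$ are i.i.d.\ and independent of $\Theta_0$). The key observation throughout is that the filtration $(\mathbb{F}_n)_{n\in\enne_0}$ is designed so that $\fG^n$ depends only on $\Theta_0$ and on the input data $X^{n,m}$ available up to and including the $n$-th SGD step, while $\Theta_n$ is built only from data strictly before step $n$.

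First I would prove \cref{it:G_meas}. The idea is that for each fixed $r\in[1,\infty)$ the map $(\theta,\omega)\mapsto(\nabla_\theta\fL^n_r)(\theta,\omega)$ is jointly measurable: by \cref{it:L_grad1,it:L_grad2} of \cref{prop:L_grad} each partial derivative is an explicit finite sum and product of the continuous functions $\R_{r^{1/(q-1)}}$, $(\R_{r^{1/(q-1)}})'$, the realization components $\mN^{k,\theta}_{r,i}$, and the random variables $X^{n,m}$, all of which are jointly measurable in $(\theta,\omega)$ and depend on $\omega$ only through $X^{n,1},\dots,X^{n,M_n}$, hence are $(\mathcal{B}(\Reals^{\fd})\otimes\mathbb{F}_{n+1})/\mathcal{B}(\Reals^{\fd})$-measurable. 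Since $\fG^n$ is, by \cref{eq:generalized_stochastic_gradients} together with \cref{it:grad_L_conv_G} of \cref{prop:L_grad}, the pointwise limit as $r\to\infty$ (along, say, $r\in\enne$) of these jointly measurable maps on the set where the limit exists, and is extended arbitrarily off that set, I would conclude that $\fG^n$ is $(\mathcal{B}(\Reals^{\fd})\otimes\mathbb{F}_{n+1})/\mathcal{B}(\Reals^{\fd})$-measurable, using that pointwise limits of measurable functions are measurable.

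Next I would prove \cref{it:Theta_meas} by induction on $n\in\enne_0$. In the base case $\Theta_0$ is $\mathbb{F}_0=\sigma(\Theta_0)$-measurable by definition of $\mathbb{F}_0$. For the induction step, assuming $\Theta_n$ is $\mathbb{F}_n/\mathcal{B}(\Reals^{\fd})$-measurable, the recursion $\Theta_{n+1}(\omega)=\Theta_n(\omega)-\gamma_n\fG^n(\Theta_n(\omega),\omega)$ expresses $\Theta_{n+1}$ as a composition of the jointly measurable map $\fG^n$ from \cref{it:G_meas} with the $\mathbb{F}_{n+1}$-measurable pair $(\Theta_n,\mathrm{id}_\Omega)$; since $\mathbb{F}_n\subseteq\mathbb{F}_{n+1}$, both $\Theta_n$ and the data entering $\fG^n$ are $\mathbb{F}_{n+1}$-measurable, and I would conclude that $\Theta_{n+1}$ is $\mathbb{F}_{n+1}/\mathcal{B}(\Reals^{\fd})$-measurable. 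Finally, for \cref{it:ind} I would observe that $\mathbb{F}_n$ is generated by $\Theta_0$ and by the variables $X^{\mathfrak{n},\mathfrak{m}}$ with $\mathfrak{n}<n$, whereas $X^{n,m}$ carries the index $\mathfrak{n}=n\not<n$; since $\Theta_0$ and the entire family $(X^{\mathfrak{n},\mathfrak{m}})_{(\mathfrak{n},\mathfrak{m})}$ are assumed independent and the $X^{\mathfrak{n},\mathfrak{m}}$ are i.i.d., the $\sigma$-algebra $\sigma(X^{n,m})$ is independent of the $\sigma$-algebra generated by this disjoint collection, which is exactly $\mathbb{F}_n$.

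I expect the only genuine subtlety to lie in \cref{it:G_meas}: one must take care that the limit defining $\fG^n$ in \cref{eq:generalized_stochastic_gradients} is handled correctly on the exceptional set where $((\nabla_\theta\fL^n_r)(\theta,\omega))_{r\in[1,\infty)}$ fails to converge, so that $\fG^n$ is a genuinely everywhere-defined jointly measurable function rather than only defined on a measurable subset. Since \cref{it:grad_L_conv_G} of \cref{prop:L_grad} guarantees convergence for \emph{every} $(\theta,\omega)$, this exceptional set is in fact empty and the limit exists identically, so the measurability of $\fG^n$ follows cleanly from the measurability of the approximating gradients and the stability of measurability under pointwise limits; the remaining two parts are then essentially bookkeeping about the chosen filtration.
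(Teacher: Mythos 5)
Your proposal is correct and follows essentially the same route as the paper: joint measurability of $(\theta,\omega)\mapsto(\nabla_\theta\fL^n_r)(\theta,\omega)$ for each fixed $r$ via the explicit representation in \cref{prop:L_grad}, passage to the (everywhere-existing, by \cref{it:grad_L_conv_G}) pointwise limit to get \cref{it:G_meas}, induction using the recursion and $\mathbb{F}_n\subseteq\mathbb{F}_{n+1}$ for \cref{it:Theta_meas}, and the disjointness of the generating families for \cref{it:ind}. The only cosmetic difference is that the paper derives joint measurability by separately checking continuity in $\theta$ and measurability in $\omega$ and then citing a Carath\'eodory-type lemma, whereas you read it off directly from the explicit formula; both are fine.
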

\begin{proof}[Proof of \cref{lem:G_meas}]
\Nobs that \cref{lem:loc_lip}, 
\cref{it:L_grad1,it:L_grad2} of \cref{prop:L_grad},
and the assumption that for all $ r\in [ 1 , \infty ) $ it holds that
$ \R_r \in C^1( \Reals, \Reals ) $
ensure that for all 
$ n \in \enne_0 $, $ r \in [ 1, \infty ) $, $ \omega \in \Omega$
we have that
$ 
  \Reals^{ \fd } \ni \theta \mapsto 
  ( \nabla_\theta \fL^n_r )( \theta, \omega) 
  \in \Reals^{ \fd } 
$
is continuous. 
\Moreover \cref{it:L_grad1,it:L_grad2} of \cref{prop:L_grad}
and the assumption that for all $ m, n \in \enne_0 $ 
it holds that $ X^{n,m} $ is 
$ \mathbb{F}_{n+1} $/$ \mathcal{B}( [a,b]^{ \ell_0 } ) $-measurable
imply that for all $ n \in \enne_0 $, $ r\in [ 1 , \infty ) $,
$ \theta \in \Reals^{ \fd } $ it holds that
$ 
  \Omega \ni \omega \mapsto 
  ( \nabla_\theta \fL^n_r( \theta, \omega)) \in \Reals^{ \fd } 
$
is $ \mathbb{F}_{n+1}/\mathcal{B}( \Reals^{ \fd }) $-measurable.
Combining this and, e.g., 
Beck et al.~\cite[Lemma 2.4]{MR4293960} 
(applied with 
$
  (E, \delta) 
  \curvearrowleft 
  ( \Reals^{ \fd }, \mathcal{B}( \Reals^{ \fd } ) ) 
$, 
$ 
  ( \Omega, \F ) 
  \curvearrowleft 
  ( \Omega, \mathbb{F}_{ n + 1 } ) 
$,
$
  X \curvearrowleft 
  ( 
    \Reals^{ \fd } \times \Omega \ni ( \theta, \omega)
    \mapsto ( \nabla_{ \theta } \fL^n_r )( \theta, \omega ) 
    \in \Reals^{ \fd } 
  ) 
$
in the notation of \cite[Lemma 2.4]{MR4293960})
demonstrates that
for all $ n \in \enne_0 $, $ r \in [ 1 , \infty ) $ 
we have that
$ 
  \Reals^{ \fd } \times \Omega 
  \ni ( \theta, \omega) \mapsto
  ( \nabla_\theta \fL^n_r )( \theta, \omega ) \in \Reals^{ \fd } 
$
is 
$ ( \mathcal{B}( \Reals^{ \fd } ) \otimes \mathbb{F}_{ n + 1 } ) 
$/$ \mathcal{B}( \Reals^{ \fd } ) $-measurable.
\Cref{it:grad_L_conv_G} of \cref{prop:L_grad}
\hence proves that for all $ n \in \enne_0 $ it holds that
$ 
  \Reals^{ \fd } \times \Omega \ni ( \theta, \omega ) \mapsto
  \fG^n( \theta, \omega ) \in \Reals^{ \fd } 
$
is 
$ ( \mathcal{B}( \Reals^{ \fd } ) \otimes \mathbb{F}_{ n + 1 } ) 
$/$ \mathcal{B}( \Reals^{ \fd } ) $-measurable. 
This establishes \cref{it:G_meas}.
  
Next we prove \cref{it:Theta_meas} by induction on $ n \in \enne_0 $.
The assumption that $ \mathbb{F}_0 = \sigma( \Theta_0 ) $ 
implies that $ \Theta_0 $ is 
$ \mathbb{F}_0 $/$ \mathcal{B}( \Reals^{ \fd } ) $-measurable.
This establishes \cref{it:Theta_meas} in the base case $ n = 0 $.
For the induction step let $ n \in \enne_0 $ satisfy that
$ \Theta_n $ is $ \mathbb{F}_n $/$ \mathcal{B}( \Reals^{ \fd } ) $-measurable.
\Nobs that the fact that
$ 
  \mathbb{F}_n = 
  \sigma( 
    \Theta_0, 
    ( X^{ \mathfrak{n}, \mathfrak{m} } )_{ 
      \mathfrak{n}, \mathfrak{m} \in ( \enne\cap[0,n))\times \enne_0 
    } 
  ) 
$  
ensures that $ \mathbb{F}_n \subseteq \mathbb{F}_{n+1} $.
\Hence that
$ \Theta_n $ is $ \mathbb{F}_{n+1} $/$ \mathcal{B}( \Reals^{ \fd } ) $-measurable.
\Moreover the fact that 
$ \mathbb{F}_n \subseteq \mathbb{F}_{n+1} $
and \cref{it:G_meas} imply that
$ \fG_n $ is $ \mathbb{F}_{n+1} $/$ \mathcal{B}( \Reals^{ \fd } ) $-measurable.
Combining this, the fact that  
$ \Theta_n $ is $ \mathbb{F}_{n+1} $/$ \mathcal{B}( \Reals^{ \fd } ) $-measurable,
and the assumption that
$ \Theta_{n+1} = \Theta_n - \gamma_n \fG^n( \Theta_n ) $
proves that $ \Theta_{n+1} $
is $ \mathbb{F}_{n+1} $/$ \mathcal{B}( \Reals^{ \fd } ) $-measurable.
Induction thus establishes \cref{it:Theta_meas}.
  
\Moreover 
the assumption that $ X^{n,m} $, $ n,m \in \enne_0 $,
are independent,
the assumption that 
$ \Theta_0 $ and $ ( X^{n,m} )_{ (n,m) \in ( \enne_0)^2 } $
are independent,  
and the fact that
$ 
  \mathbb{F}_n = 
  \sigma( 
    \Theta_0, 
    ( X^{ \mathfrak{n}, \mathfrak{m} } )_{ 
      \mathfrak{n}, \mathfrak{m} \in ( \enne \cap [0,n) ) \times \enne_0 
    } 
  ) 
$  
establish \cref{it:ind}.
The proof of \cref{lem:G_meas} is thus complete.
\end{proof}

\begin{cor}
\label{cor:L_exp}
Assume \cref{SGD_setting}.
Then it holds for all $ n \in \enne_0 $ that
\begin{equation}
\label{L_exp}
  \E[ \fL^n_{ \infty }( \Theta_n ) ] = \E[ \mL( \Theta_n ) ] .
\end{equation}
\end{cor}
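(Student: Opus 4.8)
The claim $\E[\fL^n_\infty(\Theta_n)] = \E[\mL(\Theta_n)]$ equates the expected empirical risk evaluated at the (random) SGD iterate $\Theta_n$ with the expected true risk at the same iterate. The subtle point is that $\Theta_n$ is \emph{random}, so one cannot simply quote \cref{prop:L_exp}, which asserts $\E[\fL^n_\infty(\theta)] = \mL(\theta)$ only for \emph{deterministic} $\theta \in \Reals^{\fd}$. The reason the equality nonetheless holds is an \emph{independence} structure: the iterate $\Theta_n$ depends only on $\Theta_0$ and on the input data $X^{\mathfrak n,\mathfrak m}$ with $\mathfrak n < n$, whereas the empirical risk $\fL^n_\infty$ uses only the fresh data $X^{n,1},\dots,X^{n,M_n}$, which are independent of all of those.

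**Plan.** The plan is to condition on the $\sigma$-algebra $\mathbb F_n$ introduced in \cref{lem:G_meas}. First I would recall from \cref{it:Theta_meas} of \cref{lem:G_meas} that $\Theta_n$ is $\mathbb F_n/\mathcal B(\Reals^{\fd})$-measurable, and from \cref{it:ind} of \cref{lem:G_meas} that for every $m \in \enne_0$ the $\sigma$-algebra $\sigma(X^{n,m})$ is independent of $\mathbb F_n$; hence the whole family $(X^{n,m})_{m \in \enne_0}$ is independent of $\mathbb F_n$. Next I would invoke the standard ``freezing lemma'' (the factorization/disintegration property of conditional expectations for a measurable function of an $\mathbb F_n$-measurable random variable and an independent random element): since
\begin{equation}
\fL^n_\infty(\Theta_n,\omega) = \tfrac{1}{M_n}\sum_{m=1}^{M_n}\norm{\mN^{L,\Theta_n(\omega)}_\infty(X^{n,m}(\omega)) - \xi}^2
\end{equation}
is a measurable function of the $\mathbb F_n$-measurable variable $\Theta_n$ and the independent data $(X^{n,m})_{m}$, the conditional expectation satisfies
\begin{equation}
\E\!\left[\fL^n_\infty(\Theta_n)\,\middle|\,\mathbb F_n\right] = \Bigl(\E[\fL^n_\infty(\theta)]\Bigr)\Big|_{\theta = \Theta_n}.
\end{equation}

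**Conclusion.** Then I would substitute \cref{prop:L_exp}, which gives $\E[\fL^n_\infty(\theta)] = \mL(\theta)$ for every fixed $\theta$, so that the right-hand side becomes $\mL(\Theta_n)$, yielding $\E[\fL^n_\infty(\Theta_n)\mid \mathbb F_n] = \mL(\Theta_n)$. Taking expectations and applying the tower property finishes the proof:
\begin{equation}
\E[\fL^n_\infty(\Theta_n)] = \E\!\left[\E[\fL^n_\infty(\Theta_n)\mid \mathbb F_n]\right] = \E[\mL(\Theta_n)].
\end{equation}
To make the freezing step fully rigorous one also needs the measurability of the map $(\theta,\omega)\mapsto \fL^n_\infty(\theta,\omega)$ jointly in its two arguments (so that the composition with $\Theta_n$ is a genuine random variable and the conditioning is well defined); this is of the same type as the joint measurability established for $\fG^n$ in \cref{it:G_meas} of \cref{lem:G_meas}, and can be obtained analogously from the continuity in $\theta$ and the $\mathbb F_{n+1}$-measurability in $\omega$ of the relevant expressions.

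**Main obstacle.** The genuine content is entirely in the independence bookkeeping: one must be careful that the data indexed by the current time $n$ is independent of $\mathbb F_n$ (which encodes only the \emph{past} data with first index strictly below $n$ together with $\Theta_0$). Everything else — integrability (so the conditional expectation exists, which follows from \cref{cor:L_exp} being stated only for the identity of expectations and from the square-integrability of the input data on the bounded cube $[a,b]^{\ell_0}$) and the substitution via \cref{prop:L_exp} — is routine. I expect the only place requiring care is the precise statement and invocation of the freezing lemma, ensuring that the ``frozen'' map $\theta \mapsto \E[\fL^n_\infty(\theta)]$ is exactly the deterministic quantity computed in \cref{prop:L_exp}.
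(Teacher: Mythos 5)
Your proposal is correct and follows essentially the same route as the paper: the paper also writes $\fL^n_\infty(\Theta_n)$ as a measurable function of the fresh data $(X^{n,1},\dots,X^{n,M_n})$ and the $\mathbb{F}_n$-measurable iterate $\Theta_n$, and then applies a freezing/factorization lemma (citing \cite[Lemma~2.8]{JentzenKuckuckNeufeldVonWurstemberger2021}) together with \cref{prop:L_exp} and \cref{lem:G_meas}, which is exactly your conditional-expectation-plus-tower-property argument in a slightly different packaging.
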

\begin{proof}[Proof of \cref{cor:L_exp}]
  Throughout this proof let $ \mathbb{F}_n\subseteq\F$, $ n \in \enne_0 $,
  satisfy for all $ n \in \enne_0 $ that 
  $ \mathbb{F}_0=\sigma( \Theta_0) $ and
  $ \mathbb{F}_n=\sigma( \Theta_0,(X^{ \mathfrak{n}, \mathfrak{m} } )_{\mathfrak{n}, \mathfrak{m} \in ( \enne\cap[0,n))\times \enne_0 } ) $
  and let $ \boldsymbol{L}^n\colon([a,b]^{ \ell_0 } )^{M_n} \times \Reals^{ \fd }\to\Reals $, 
  $ n \in \enne_0 $,
  satisfy for all $ n \in \enne_0 $, $ \theta \in \Reals^{ \fd } $, $x_1, \ldots, x_{M_n} \in [a,b]^{ \ell_0 } $
   that
  \begin{equation}
  \label{bold_L}
    \boldsymbol{L}^n(x_{1}, \ldots, x_{M_n}, \theta)
    =\frac{1}{M_n} \sum_{m= 1 }^{M_n} \norm{\mN^{L, \theta }_{ \infty }(x_m)-\xi}^2.
  \end{equation}  
  \Nobs that \cref{bold_L}
  and the assumption that
  for all 
  $ n \in \enne_0 $, 
  $ \theta \in \Reals^{ \fd } $, $ \omega\in \Omega$ it holds that
$ \fL^n_{ \infty }( \theta, \omega)=\frac{1}{M_n} \sum_{m= 1 }^{M_n} \norm{\mN^{L, \theta }_{ \infty }(X^{n,m}( \omega))-\xi}^2$
  imply that for all 
  $ n \in \enne_0 $, $ \theta \in \Reals^{ \fd } $, $ \omega\in \Omega$
  we have that
  \begin{equation}
  \label{L=bold_L1}
    \fL^n_{ \infty }( \theta, \omega)
    =\boldsymbol{L}^n(X^{n,1}( \omega), \ldots,X^{n,M_n}( \omega), \theta).
  \end{equation}
  Combining this with \cref{prop:L_exp}
  demonstrates that for all $ n \in \enne_0 $, $ \theta \in \Reals^{ \fd } $
  it holds that
  \begin{equation}
  \label{exp_bold_L1}
    \E[\boldsymbol{L}^n(X^{n,1}, \ldots,X^{n,M_n}, \theta)]
    =\E[\fL^n_{ \infty }( \theta)]
    =\mL( \theta).
  \end{equation}  
  Moreover, \nobs that \cref{L=bold_L1} assures  
  that for all 
  $ n \in \enne_0 $ it holds that
  \begin{equation}
  \label{L=bold_L2}
   \fL^n_{ \infty }( \Theta_n )
   =\boldsymbol{L}^n(X^{n,1}, \ldots,X^{n,M_n}, \Theta_n ).
  \end{equation}
  Combining this, \cref{exp_bold_L1}, 
  \cref{lem:G_meas},
  and, e.g.,
  \cite[Lemma 2.8]{JentzenKuckuckNeufeldVonWurstemberger2021}
  (applied with $ ( \Omega, \F, \P)\curvearrowleft( \Omega, \F, \P) $,
  $ \mG\curvearrowleft\mathbb{F}_n$, 
  $ ( \mathbb{X}, \mathcal{X} )\curvearrowleft(([a,b]^{ \ell_0 } )^{M_n},
  \mathcal{B}(([a,b]^{ \ell_0 } )^{M_n} )) $,
  $ ( \mathbb{Y}, \mathcal{Y} )\curvearrowleft( \Reals^{ \fd }, \mathcal{B}( \Reals^{ \fd })) $,
  $X\curvearrowleft( \Omega\ni\omega\mapsto
  (X^{n,1}( \omega), \ldots,$ $ X^{n,M_n}( \omega)) \in ([a,b]^{ \ell_0 } )^{M_n} ) $,
  $Y\curvearrowleft( \Omega\ni\omega\mapsto\Theta_n( \omega) \in \Reals^{ \fd }) $
  in the notation of
  \cite[Lemma 2.8]{JentzenKuckuckNeufeldVonWurstemberger2021})
  proves that for all $ n \in \enne_0 $
  it holds that
  \begin{equation}
   \E[\boldsymbol{L}^n(X^{n,1}, \ldots,X^{n,M_n}, \Theta_n )]
    =\E[\mL( \Theta_n )].
  \end{equation}
  This and 
  \cref{L=bold_L1}
  establish \cref{L_exp}.
  The proof of \cref{cor:L_exp}
  is thus complete.
\end{proof}

\subsection{Upper estimates for the norm of the generalized gradients 
of the empirical risk function}
\label{ssec:SGD_properties2}

\begin{lemma}
\label{lem:G_upper}
Assume \cref{SGD_setting} and let
$ n \in \enne_0 $, $ \theta \in \Reals^{ \fd } $, $ \omega\in \Omega$.
Then 
\begin{equation}
\label{G_upper}
  \norm{\fG^n( \theta, \omega)}^2
  \leq 4 L \mathbf{a}^2\br[\big]{ \textstyle{\prod}_{p={0} }^{L}( \ell_p+1)}
    ( \norm{\theta }^2 + 1 )^{(L-1)} \fL^n_{ \infty }( \theta, \omega).
\end{equation}
\end{lemma}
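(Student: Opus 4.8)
The plan is to recognize \cref{lem:G_upper} as the empirical-measure specialization of the deterministic growth estimate \cref{it:G_upper_estimate} in \cref{prop:G_upper_estimate}, in exactly the same way that \cref{prop:L_grad} specializes \cref{prop:G}. First I would fix $n \in \enne_0$, $\theta \in \Reals^{\fd}$, and $\omega \in \Omega$, and introduce the (pathwise) empirical measure $\nu \colon \mathcal{B}([a,b]^{\ell_0}) \to [0,1]$ characterized by $\nu(A) = \frac{1}{M_n}\sum_{m=1}^{M_n}\indicator{A}(X^{n,m}(\omega))$ for all $A \in \mathcal{B}([a,b]^{\ell_0})$. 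Since $X^{n,m}(\omega) \in [a,b]^{\ell_0}$ for every $m$, the measure $\nu$ is a probability measure and hence has total mass $\nu([a,b]^{\ell_0}) = 1$.

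Next I would invoke \cref{prop:G_upper_estimate} with the substitutions $\mu \curvearrowleft \nu$ and $f \curvearrowleft ([a,b]^{\ell_0} \ni x \mapsto \xi \in \Reals^{\ell_L})$, so that the quantity $\mathfrak{m} = \mu([a,b]^{\ell_0})$ of \cref{main_setting} becomes $1$. All the remaining data of \cref{main_setting} (the numbers $L$, $\fd$, $(\ell_k)_{k\in\enne_0}$, $a$, $b$, $\mathbf{a}$, $\A$, $\B$, the functions $\R_r$, $\mathfrak{M}_r$, $\mA^{\theta}_k$, $\mN^{k,\theta}_r$, the sets $\mX^{k,\theta}_i$, the norm $\norm{\cdot}$, and the scalar product $\ip{\cdot}{\cdot}$) agree verbatim with their \cref{SGD_setting} counterparts. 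Under this substitution the approximate risk $\mL_r$ of \cref{main_setting} satisfies $\mL_r(\vartheta) = \int_{[a,b]^{\ell_0}}\norm{\mN^{L,\vartheta}_r(x)-\xi}^2\,\nu(\d x) = \frac{1}{M_n}\sum_{m=1}^{M_n}\norm{\mN^{L,\vartheta}_r(X^{n,m}(\omega))-\xi}^2 = \fL^n_r(\vartheta,\omega)$ for all $\vartheta \in \Reals^{\fd}$, whence its $r\to\infty$ limit $\mL_\infty$ becomes $\fL^n_\infty(\cdot,\omega)$ and, because both generalized gradients are defined as the $r\to\infty$ limits of $\nabla\mL_r$ and $\nabla_\theta\fL^n_r(\cdot,\omega)$ respectively, the gradient function $\mG$ of \cref{main_setting} becomes $\fG^n(\cdot,\omega)$. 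This last identification is precisely the one underlying the proof of \cref{prop:L_grad}, and the everywhere-existence of the two limits is guaranteed by \cref{G3} of \cref{prop:G} and \cref{it:grad_L_conv_G} of \cref{prop:L_grad}.

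With $\mathfrak{m}$ replaced by $1$, \cref{it:G_upper_estimate} of \cref{prop:G_upper_estimate} then delivers the claim directly, namely
\[
  \norm{\fG^n(\theta,\omega)}^2 \leq 4L\mathbf{a}^2\br[\big]{\textstyle\prod_{p=0}^L(\ell_p+1)}(\norm{\theta}^2+1)^{L-1}\fL^n_\infty(\theta,\omega).
\]
The only point requiring genuine care is the bookkeeping that the stochastic gradient $\fG^n(\cdot,\omega)$ indeed coincides with the gradient function $\mG$ produced by \cref{main_setting} under the empirical measure $\nu$ with constant target $\xi$; I expect this identification to be the main (and indeed the sole) obstacle, since it is entirely formal rather than analytic. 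Once it is in place, no further estimate is needed, as all of the substantive work is already contained in the deterministic bound \cref{it:G_upper_estimate}, which is applicable here because $\nu$ is a finite measure and $\fL^n_\infty(\theta,\omega)$ is a finite sum and thus automatically finite.
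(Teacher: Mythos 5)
Your proposal is correct and follows exactly the paper's own route: the paper's proof of \cref{lem:G_upper} is a one-line application of \cref{it:G_upper_estimate} of \cref{prop:G_upper_estimate} with $\mu$ taken to be the empirical measure $\mathcal{B}([a,b]^{\ell_0}) \ni A \mapsto \frac{1}{M_n}\sum_{m=1}^{M_n}\mathbbm{1}_A(X^{n,m}(\omega))$ (so that $\mathfrak{m}=1$) and $f$ the constant function $x \mapsto \xi$. Your additional bookkeeping identifying $\mL_r$ with $\fL^n_r(\cdot,\omega)$ and $\mG$ with $\fG^n(\cdot,\omega)$ is exactly the (implicit) content of that substitution.
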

\begin{proof}[Proof of \cref{lem:G_upper}]
\Nobs that \cref{it:G_upper_estimate} of \cref{prop:G_upper_estimate} 
(applied with
$ 
  \mu \curvearrowleft 
  ( 
    \mathcal{B}( [a,b]^{ \ell_0 } ) \ni A \mapsto 
    \frac{ 1 }{ M_n } 
    \sum_{ m = 1 }^{ M_n } \mathbbm{1}_A( X^{ n, m }( \omega ) ) \in [0,1] 
  ) 
$,
$
  f \curvearrowleft 
  (
    [a,b]^{ \ell_0 } \ni x \mapsto \xi \in \Reals^{ \ell_L} 
  ) 
$ 
in the notation of \cref{prop:G_upper_estimate})
establishes \cref{G_upper}.
The proof of \cref{lem:G_upper} is thus complete.
\end{proof}

\begin{lemma}
\label{lem:G_cpt_bd}
Assume \cref{SGD_setting} and let $ K \subseteq \Reals^{ \fd } $
be compact. Then
\begin{equation}
\label{G_cpt_bd}
  \sup\nolimits_{ n \in \enne_0 } 
  \sup\nolimits_{ \theta \in K } 
  \sup\nolimits_{ \omega \in \Omega }
  \norm{
    \fG^n( \theta, \omega ) 
  }
  < \infty .
\end{equation}
\end{lemma}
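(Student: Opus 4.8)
The plan is to combine the pointwise growth bound for the generalized stochastic gradients in \cref{lem:G_upper} with two uniform bounds over $\theta \in K$: one for the polynomial factor $(\norm{\theta}^2+1)^{L-1}$ and one for the empirical risks $\fL^n_\infty(\theta,\omega)$. Since $K$ is compact, there exists $\kappa\in[0,\infty)$ with $\sup_{\theta\in K}\norm{\theta}\le\kappa$, which immediately controls the polynomial factor via $(\norm{\theta}^2+1)^{L-1}\le(\kappa^2+1)^{L-1}$ for all $\theta\in K$. It thus remains to bound $\fL^n_\infty$ uniformly in $n\in\enne_0$, $\theta\in K$, and $\omega\in\Omega$.

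First I would observe, using the definition of $\fL^n_\infty$ in \cref{SGD_setting} and the fact that each $X^{n,m}(\omega)$ lies in the cube $[a,b]^{\ell_0}$, that $\fL^n_\infty(\theta,\omega) = \frac{1}{M_n}\sum_{m=1}^{M_n}\norm{\mN^{L,\theta}_\infty(X^{n,m}(\omega))-\xi}^2 \le \sup_{x\in[a,b]^{\ell_0}}\norm{\mN^{L,\theta}_\infty(x)-\xi}^2$, whose right-hand side depends neither on $n$ nor on $\omega$. Hence everything reduces to bounding $\sup_{x\in[a,b]^{\ell_0}}\norm{\mN^{L,\theta}_\infty(x)}$ uniformly over $\theta\in K$.

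For this I would apply \cref{lem:loc_lip} to the compact set $K\cup\{0\}$ (instantiating the target function as the constant $[a,b]^{\ell_0}\ni x\mapsto\xi$ and $\mu$ as an arbitrary measure, so that only the $\theta$-dependent realization bound is used) to obtain a constant $\mathscr{L}\in\Reals$ with $\sup_{x\in[a,b]^{\ell_0}}\norm{\mN^{L,\theta}_\infty(x)-\mN^{L,\vartheta}_\infty(x)}\le\mathscr{L}\norm{\theta-\vartheta}$ for all $\theta,\vartheta\in K\cup\{0\}$; choosing $\vartheta=0$ and using $\mN^{L,0}_\infty(x)=0$ then yields $\sup_{x\in[a,b]^{\ell_0}}\norm{\mN^{L,\theta}_\infty(x)}\le\mathscr{L}\kappa$ for every $\theta\in K$. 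Together with the previous display and the triangle inequality this gives $\sup_{n\in\enne_0}\sup_{\theta\in K}\sup_{\omega\in\Omega}\fL^n_\infty(\theta,\omega)\le(\mathscr{L}\kappa+\norm{\xi})^2<\infty$. (Alternatively, \cref{lem:der:unif:bounded} together with \cref{it:lim_aux1} of \cref{lem:properties2} supplies the very same uniform bound on the realization functions.)

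Finally I would feed these two uniform bounds into \cref{lem:G_upper}, which then yields, for all $n\in\enne_0$, $\theta\in K$, $\omega\in\Omega$,
\[
  \norm{\fG^n(\theta,\omega)}^2 \le 4L\mathbf{a}^2\br[\big]{\textstyle\prod_{p=0}^L(\ell_p+1)}(\kappa^2+1)^{L-1}(\mathscr{L}\kappa+\norm{\xi})^2 ,
\]
and taking the supremum over $n$, $\theta\in K$, and $\omega$ establishes \cref{G_cpt_bd}. I do not expect a genuine obstacle here; the only point requiring care is that the bound must be uniform in the sample point $\omega$, which is guaranteed precisely because every input datum $X^{n,m}(\omega)$ takes values in the fixed compact cube $[a,b]^{\ell_0}$, so that the random empirical risks are dominated by a deterministic supremum of the realization error over that cube, independent of $n$ and $\omega$.
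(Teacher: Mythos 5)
Your proposal is correct and follows essentially the same route as the paper's proof: both reduce \cref{G_cpt_bd} to a uniform bound on the empirical risks $\fL^n_{\infty}(\theta,\omega)$ over $n$, $\theta \in K$, $\omega$, obtain that bound from \cref{lem:loc_lip} (via a uniform bound on $\sup_{x \in [a,b]^{\ell_0}} \norm{\mN^{L,\theta}_{\infty}(x)}$ over the compact set, using that the data take values in $[a,b]^{\ell_0}$), and then conclude with \cref{lem:G_upper}. Your version merely makes explicit the step of comparing with $\vartheta = 0$ on $K \cup \{0\}$, which the paper leaves implicit.
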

\begin{proof}[Proof of \cref{lem:G_cpt_bd}]
\Nobs that \cref{lem:loc_lip}
(applied with
$ 
  \mu \curvearrowleft 
  ( 
    \mathcal{B}( [a,b]^{ \ell_0 } ) \ni A \mapsto 
    \frac{ 1 }{ M_n } \sum_{ m = 1 }^{ M_n } \mathbbm{1}_A( X^{ n, m }( \omega ) 
$
$
    \in [0,1] 
  ) 
$,
$
  f \curvearrowleft 
  ( [a,b]^{ \ell_0 } \ni x \mapsto \xi \in \Reals^{ \ell_L } ) 
$
in the notation of \cref{lem:loc_lip}) 
ensures that there exists
$ \fC \in \Reals $ 
which satisfies for all $ \theta \in K $ that
$ 
  \sup_{ x \in [a,b]^{ \ell_0 } } 
  \norm{ \mN^{ L, \theta }_{ \infty }( x ) }^2
  \leq \fC 
$.
The fact that for all $ n, m \in \enne_0 $, $ \omega \in \Omega $
it holds that $ X^{n,m}( \omega ) \in [a,b]^{ \ell_0 } $
\hence demonstrates that 
for all $ n \in \enne_0 $, $ \theta \in K $, $ \omega \in \Omega $
we have that
\begin{equation}
\textstyle
  \fL^n_{ \infty }( \theta, \omega )
  = 
  \frac{ 1 }{ M_n } 
  \sum_{ m = 1 }^{ M_n } 
  \norm{ 
    \mN^{ L, \theta }_{ \infty }( X^{ n, m }( \omega ) ) - \xi 
  }^2
  \leq
  \frac{ 2 }{ M_n } 
  \sum_{ m = 1 }^{ M_n }
  \bigl[
    \norm{ \mN^{ L, \theta }_{ \infty }( X^{ n, m }( \omega ) ) }^2 
    + 
    \norm{ \xi }^2 
  \bigr]
  \leq 2 \fC + 2 \norm{ \xi }^2 
  .    
\end{equation}
Combining this with \cref{lem:G_upper} establishes \cref{G_cpt_bd}.
The proof of \cref{lem:G_cpt_bd} is thus complete.
\end{proof}

\subsection{Lyapunov type estimates for the dynamics of SGD processes}
\label{ssec:SGD_lyapunov}

\begin{lemma}
\label{lem:V_diff_sgd}
Assume \cref{SGD_setting} and let
$ n \in \enne_0 $, $ \theta \in \Reals^{ \fd } $, $ \omega \in \Omega$. Then
\begin{equation}
\begin{split}
\label{V_diff_sgd}
&
  V( \theta - \gamma_n \fG^n( \theta, \omega ) ) - V( \theta ) 
\\
&
\textstyle
  =
  ( \gamma_n )^2 
  \norm{ \fG^n( \theta, \omega ) }^2
  +
  ( \gamma_n )^2
  \biggl[
    \sum\limits_{ k = 1 }^L 
    \sum\limits_{ i = 1 }^{ \ell_k }
    ( k - 1 )
    \abs{
      \fG^n_{ 
        \ell_k \ell_{ k - 1 } + i 
        + \sum_{ h = 1 }^{ k - 1 } \ell_h ( \ell_{ h - 1 } + 1 ) 
      }( \theta, \omega ) 
    }^2
  \biggr]
    -4\gamma_n L\fL^n_{ \infty }( \theta, \omega)
\\
&
  \leq 
  ( \gamma_n )^2 L \norm{ \fG^n( \theta, \omega ) }^2
  - 4 \gamma_n L \fL^n_{ \infty }( \theta, \omega ) .
\end{split}
\end{equation}
\end{lemma}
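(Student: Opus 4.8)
The plan is to deduce \cref{lem:V_diff_sgd} directly from the deterministic one-step Lyapunov estimate in \cref{cor:liap3} by a pointwise-in-$\omega$ specialization. For a fixed $ n \in \enne_0 $ and a fixed $ \omega \in \Omega $, I would apply \cref{cor:liap3} with the (deterministic) empirical measure
\begin{equation*}
  \mathcal{B}([a,b]^{\ell_0}) \ni A \mapsto \frac{1}{M_n}\sum_{m=1}^{M_n}\mathbbm{1}_A(X^{n,m}(\omega)) \in [0,1]
\end{equation*}
in place of $ \mu $ and the constant function $ [a,b]^{\ell_0} \ni x \mapsto \xi \in \Reals^{\ell_L} $ in place of $ f $. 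This is exactly the substitution already used in the proof of \cref{prop:L_grad}, and under it the empirical risk $ \fL^n_r(\cdot,\omega) $, its limit $ \fL^n_\infty(\cdot,\omega) $, and the generalized gradient $ \fG^n(\cdot,\omega) $ become, respectively, the objects $ \mL_r $, $ \mL_\infty $, and $ \mG $ of \cref{main_setting} associated with this choice of measure and target.

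First I would verify that the two ingredients of \cref{cor:liap3} transfer correctly. Since the target $ x \mapsto \xi $ is constant, it satisfies $ f(x) = f(0) $, so the standing hypothesis of \cref{cor:liap3} holds; moreover $ f(0) = \xi $, so the Lyapunov function $ V $ of \cref{main_setting} built from this target agrees verbatim with the function $ V $ in \cref{V_c_sgd}. Next I would record that $ \fG^n(\cdot,\omega) $ is indeed the limit $ \lim_{r\to\infty}(\nabla\mL_r)(\cdot) $ for the empirical measure, which is exactly the content of \cref{it:grad_L_conv_G} of \cref{prop:L_grad}, so that the generalized gradient $ \mG $ appearing in \cref{cor:liap3} may legitimately be identified with $ \fG^n(\cdot,\omega) $.

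With these identifications in hand, \cref{cor:liap3} applied with $ \gamma \curvearrowleft \gamma_n $ and $ \theta \curvearrowleft \theta $ yields, term for term, the equality and the subsequent inequality asserted in \cref{V_diff_sgd}. The only thing to watch is the bookkeeping of the correspondence---matching the parameter indices $ \ell_k \ell_{k-1} + i + \sum_{h=1}^{k-1}\ell_h(\ell_{h-1}+1) $ in the middle sum, the normalizing constant $ \mathbf{a} $, and the limit defining the generalized gradient---rather than any analytic difficulty, since the estimate is established separately for each fixed $ \omega $ and no measurability or integration issues enter at this stage.
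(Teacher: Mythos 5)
Your proposal is correct and follows exactly the paper's own argument: the proof of \cref{lem:V_diff_sgd} is precisely an application of \cref{cor:liap3} with the empirical measure $\mathcal{B}([a,b]^{\ell_0}) \ni A \mapsto \frac{1}{M_n}\sum_{m=1}^{M_n}\mathbbm{1}_A(X^{n,m}(\omega))$ in place of $\mu$, the constant target $x \mapsto \xi$ in place of $f$, and $\gamma \curvearrowleft \gamma_n$. Your additional verification that $\fG^n(\cdot,\omega)$ is identified with $\mG$ via \cref{it:grad_L_conv_G} of \cref{prop:L_grad} and that $V$ in \cref{V_c_sgd} matches \cref{V_c} under $f(0) = \xi$ is a correct and slightly more explicit account of the same bookkeeping the paper leaves implicit.
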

\begin{proof}[Proof of \cref{lem:V_diff_sgd}]
\Nobs[Observe] that \cref{cor:liap3} 
(applied with 
$ 
  \mu \curvearrowleft 
  ( 
    \mathcal{B}( [a,b]^{ \ell_0 } ) \ni A \mapsto
    \frac{ 1 }{ M_n } 
    \sum_{ m = 1 }^{ M_n } 
    \allowbreak
    \mathbbm{1}_A( X^{ n, m }( \omega ) )
    \in [0,1]
  ) 
$,
$ 
  f(0) \curvearrowleft \xi 
$, 
$ \gamma \curvearrowleft \gamma_n $ 
in the notation of \cref{cor:liap3})
establishes \cref{V_diff_sgd}.
The proof of \cref{lem:V_diff_sgd} is thus complete.
\end{proof}

\subsection{Upper estimates for compositions of Lyapunov functions and SGD processes}
\label{ssec:SGD_upper}

\begin{lemma}
\label{lem:Theta_diff_sgd}
Assume \cref{SGD_setting}. 
Then it holds for all $ n \in \enne_0 $ that
\begin{equation}
\label{Theta_diff_sgd}
\begin{split}
  V( \Theta_{ n + 1 } ) - V( \Theta_n )
  \leq 
  4 L 
  \bigl(
    ( \gamma_n )^2 L \mathbf{a}^2
    \bigl[ 
      \textstyle{ \prod }_{ p = 0 }^L 
      ( \ell_p + 1 ) 
    \bigr]
    ( 2 V( \Theta_n ) + 4 L^2 \norm{ \xi }^2 + 1 )^{ (L - 1) } - \gamma_n 
  \bigr)
  \fL^n_{ \infty }( \Theta_n )
  .
\end{split}
\end{equation}
\end{lemma}
\begin{proof}[Proof of \cref{lem:Theta_diff_sgd}]
\Nobs that \cref{lem:G_upper} and \cref{it:ineq} of 
\cref{prop:liap1}
(applied with 
$ 
  \mu \curvearrowleft 
  ( 
    \mathcal{B}( [a,b]^{ \ell_0 } ) 
    \allowbreak
    \ni A \mapsto
    \frac{ 1 }{ M_n } \sum_{ m = 1 }^{ M_n } 
    \mathbbm{1}_A( X^{ n, m }( \omega ) ) \in [0,1]
  ) 
$,
$
  f \curvearrowleft 
  ( [a,b]^{ \ell_0 } \ni x \mapsto \xi \in \Reals^{ \ell_L } ) 
$ 
in the notation of \cref{prop:liap1}) 
demonstrate that
for all $ n \in \enne_0 $ it holds that
\begin{equation}
\begin{split}
&
  \norm{ \fG^n( \Theta_n ) }^2
  \leq 
  4 L \mathbf{a}^2 
  \bigl[
    \textstyle{ \prod }_{ p = 0 }^L 
    ( \ell_p + 1 ) 
  \bigr]
  ( \norm{ \Theta_n }^2 + 1 )^{ (L - 1) } 
  \fL^n_{ \infty }( \Theta_n )
\\
&
  \leq
  4 L \mathbf{a}^2 
  \bigl[ 
    \textstyle{ \prod }_{ p = 0 }^L 
    ( \ell_p + 1 ) 
  \bigr]
  ( 2 V( \Theta_n ) + 4 L^2 \norm{ \xi }^2 + 1 )^{ (L - 1) } 
  \fL^n_{ \infty }( \Theta_n )
  .
\end{split}  
\end{equation}
\cref{lem:V_diff_sgd} \hence implies that
\begin{equation}
\begin{split}
&
  V( \Theta_{ n + 1 } ) - V( \Theta_n )
  \leq 
  ( \gamma_n )^2 L \norm{ \fG^n( \Theta_n ) }^2
  - 4 \gamma_n L \fL^n_{ \infty }( \Theta_n )
\\
&
  \leq
  4 ( \gamma_n )^2 L^2 \mathbf{a}^2
  \br[\big]{ 
    \textstyle{ \prod }_{ p = 0 }^L 
    ( \ell_p + 1 ) 
  }
  ( 2 V( \Theta_n ) + 4 L^2 \norm{ \xi }^2 + 1 )^{ (L - 1) }
  \fL^n_{ \infty }( \Theta_n )
  - 4 \gamma_n L \fL^n_{ \infty }( \Theta_n )
\\
&
  = 
  4 L 
  \bigl( 
    ( \gamma_n )^2 L \mathbf{a}^2
    \bigl[
      \textstyle{ \prod }_{ p = 0 }^L 
      ( \ell_p + 1 ) 
    \bigr]
    ( 2 V( \Theta_n ) + 4 L^2 \norm{ \xi }^2 + 1 )^{ (L - 1) } - \gamma_n 
  \bigr)
  \fL^n_{ \infty }( \Theta_n )
  .
\end{split}
  \end{equation}
  The proof of \cref{lem:Theta_diff_sgd}
  is thus complete.
\end{proof}

\begin{cor}
\label{cor:Theta_diff_sgd2}
Assume \cref{SGD_setting}
and assume
\begin{equation}
\label{gamma_ass3}
  \P\bigl( 
    \sup\nolimits_{ n \in \enne_0 } 
    \gamma_n 
    \leq 
    \br[\big]{ 
      L \mathbf{a}^2
      \br[\big]{ 
        \textstyle{ \prod }_{ p = 0 }^L 
        ( \ell_p + 1 ) 
      }
      ( 2 V( \Theta_0 ) + 4 L^2 \norm{ \xi }^2 + 1 )^{ (L - 1) } 
    }^{ - 1 } 
  \bigr)
  = 1 .
\end{equation}
Then it holds for all $ n \in \enne_0 $ that
\begin{equation}
\label{Theta_diff_sgd2}
\begin{multlined}
  \P\biggl(
    V( \Theta_{ n + 1 } ) - V( \Theta_n )
    \leq - 4 \gamma_n L 
    \Bigl[ 
      1 - [\sup\nolimits_{m\in \enne_0 } \gamma_m]
\\
      \cdot L \mathbf{a}^2 
      \bigl[ 
        \textstyle{ \prod }_{ p = 0 }^L 
        ( \ell_p + 1 ) 
      \bigr]
      ( 2 V( \Theta_0 ) + 4 L^2 \norm{ \xi }^2 + 1 )^{ (L - 1) }  
    \Bigr]
    \fL^n_{ \infty }( \Theta_n )
    \leq 0
  \biggr)
  = 1 .
\end{multlined}
\end{equation}
\end{cor}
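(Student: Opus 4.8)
The plan is to mimic the deterministic induction carried out in the proof of \cref{lem:V_theta_diff2}, but now pathwise on the full-measure event on which the smallness assumption \cref{gamma_ass3} holds, using the one-step empirical Lyapunov estimate from \cref{lem:Theta_diff_sgd} in place of \cref{cor:V_theta_diff_const}. Throughout I would abbreviate $\mathfrak{g} = \sup_{m \in \enne_0} \gamma_m$ and $\mathfrak{C} = L \mathbf{a}^2 [\prod_{p=0}^L (\ell_p + 1)](2 V(\Theta_0) + 4 L^2 \norm{\xi}^2 + 1)^{(L-1)}$, and introduce the event $A = \{ \mathfrak{g} \leq \mathfrak{C}^{-1} \} \in \F$, so that \cref{gamma_ass3} reads $\P(A) = 1$. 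The entire argument will be performed $\omega$-wise on $A$; this makes every step deterministic and reduces the corollary to the observation that the event appearing in \cref{Theta_diff_sgd2} contains $A$ and hence has probability $\geq \P(A) = 1$.

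First I would record two elementary monotonicity facts. Since for every $n \in \enne_0$ the quantity $\fL^n_{\infty}(\Theta_n)$ is an arithmetic mean of squared norms, it holds that $\fL^n_{\infty}(\Theta_n) \geq 0$; and since $L \geq 1$ and $2t + 4 L^2 \norm{\xi}^2 + 1 \geq 1$, the map $[0,\infty) \ni t \mapsto (2t + 4 L^2 \norm{\xi}^2 + 1)^{(L-1)} \in [1,\infty)$ is non-decreasing. These are precisely the ingredients that let me replace the running value $V(\Theta_n)$ by the frozen initial value $V(\Theta_0)$ in the coefficient of the one-step estimate without reversing the inequality. I would also use that $\gamma_n \in [0,\infty)$ and $\gamma_n \leq \mathfrak{g}$ for all $n$.

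The core is a coupled induction on $n$ establishing, pathwise on $A$, the two assertions \textbf{(a)} $V(\Theta_{n+1}) - V(\Theta_n) \leq -4 L \gamma_n (1 - \mathfrak{g}\mathfrak{C}) \fL^n_{\infty}(\Theta_n) \leq 0$ and \textbf{(b)} $V(\Theta_n) \leq V(\Theta_0)$. For the induction step I would feed \cref{lem:Theta_diff_sgd} with the hypothesis $V(\Theta_n) \leq V(\Theta_0)$: the monotonicity fact turns the coefficient $\gamma_n^2 L \mathbf{a}^2 [\prod_{p=0}^L(\ell_p+1)](2 V(\Theta_n) + 4 L^2 \norm{\xi}^2 + 1)^{(L-1)}$ into $\gamma_n^2 \mathfrak{C}$, giving $V(\Theta_{n+1}) - V(\Theta_n) \leq -4 L \gamma_n (1 - \gamma_n \mathfrak{C}) \fL^n_{\infty}(\Theta_n)$; next, on $A$ one has $\gamma_n \leq \mathfrak{g} \leq \mathfrak{C}^{-1}$, so $1 - \gamma_n \mathfrak{C} \geq 1 - \mathfrak{g}\mathfrak{C} \geq 0$, and since $-4 L \gamma_n \fL^n_{\infty}(\Theta_n) \leq 0$ this yields assertion \textbf{(a)}; finally, \textbf{(a)} forces $V(\Theta_{n+1}) \leq V(\Theta_n)$, which together with the hypothesis gives \textbf{(b)} for $n+1$. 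The base case $n=0$ is identical, using $V(\Theta_0) \leq V(\Theta_0)$ trivially.

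The only genuine obstacle is the mismatch between the exponent in the target estimate \cref{Theta_diff_sgd2}, which is frozen at $V(\Theta_0)$, and the exponent in the available one-step bound \cref{lem:Theta_diff_sgd}, which tracks the running $V(\Theta_n)$. Bridging this forces the induction on \textbf{(b)} to be run \emph{simultaneously} with \textbf{(a)}: one cannot replace $V(\Theta_n)$ by $V(\Theta_0)$ without already knowing that the trajectory of $V$ is non-increasing, and that non-increase is itself a consequence of the one-step bound being non-positive. Once this coupling is set up correctly, the remaining sign bookkeeping (passing from $\gamma_n$ to $\mathfrak{g}$ and from $V(\Theta_n)$ to $V(\Theta_0)$, both of which rely on $\fL^n_{\infty}(\Theta_n) \geq 0$) is routine, and observing that the resulting inclusion of events upgrades the pathwise bound on $A$ to the probability-one statement in \cref{Theta_diff_sgd2} completes the proof.
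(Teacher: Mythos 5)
Your proposal is correct and follows essentially the same route as the paper: an induction on $n$ in which the one-step estimate of \cref{lem:Theta_diff_sgd} is combined with the inductively obtained monotonicity $V(\Theta_n)\leq V(\Theta_0)$ (via the non-negativity of $\fL^n_{\infty}(\Theta_n)$ and the monotonicity of $t\mapsto(2t+4L^2\norm{\xi}^2+1)^{(L-1)}$) to freeze the exponent at $V(\Theta_0)$, and the bound $\gamma_n\leq\sup_m\gamma_m$ together with \cref{gamma_ass3} closes the argument. The only cosmetic difference is that you run the whole induction pathwise on a single full-measure event rather than asserting the $\P$-a.s.\ inequality step by step as the paper does.
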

\begin{proof}[Proof of \cref{cor:Theta_diff_sgd2}]
  Throughout this proof let $ \mathfrak{g} \in \Reals $
  satisfy that $ \mathfrak{g} =\sup_{ n \in \enne_0 } \gamma_n$.
  We prove \cref{Theta_diff_sgd2} by induction on $ n \in \enne_0 $.
  \Nobs that 
  \cref{lem:Theta_diff_sgd}
  and the assumption that $ \gamma_0\leq \mathfrak{g} $
  ensure that it holds $ \P$-a.s.\ that
  \begin{equation} 
  \begin{split}
    &V( \Theta_1)-V( \Theta_0)\\
    &\leq 
    4L\br*{ -\gamma_0+\gamma_0^2 L \mathbf{a}^2\br[\big]{ \textstyle{\prod}_{p={0} }^{L}( \ell_p+1)}
    (2V( \Theta_0)+4L^2\norm{\xi}^2 + 1 )^{ (L-1)} }
    \fL^0_{ \infty }( \Theta_0)\\
    &\leq 
    4L \pr*{ -\gamma_0+\gamma_0\mathfrak{g} L \mathbf{a}^2\br[\big]{ \textstyle{\prod}_{p={0} }^{L}( \ell_p+1)}
    (2V( \Theta_0)+4L^2\norm{\xi}^2 + 1 )^{ (L-1)} }
    \fL^0_{ \infty }( \Theta_0)\\
    &= 
    -4\gamma_0 L \pr*{ 1-\mathfrak{g} L \mathbf{a}^2\br[\big]{ \textstyle{\prod}_{p={0} }^{L}( \ell_p+1)}
    (2V( \Theta_0)+4L^2\norm{\xi}^2 + 1 )^{ (L-1)}  }
    \fL^0_{ \infty }( \Theta_0).
  \end{split}
  \end{equation}
  This and \cref{gamma_ass3} establish \cref{Theta_diff_sgd2} in the
  base case $ n =0$.
  For the induction step let $ n \in \enne $ satisfy that
  for all $ m \in \{0, 1, \dots,n-1\} $ it holds $ \P$-a.s.\ that
  \begin{equation}
  \begin{split} \label{V_diff6}
    &V( \Theta_{m+1} )-V( \Theta_m)\\
    &\leq -4\gamma_m L
    \pr*{ 1-\mathfrak{g} L \mathbf{a}^2\br[\big]{ \textstyle{\prod}_{p={0} }^{L}( \ell_p+1)}
    (2V ( \Theta_0)+4L^2\norm{\xi}^2 + 1 )^{ (L-1)} }
    \fL^m_{ \infty }( \Theta_m)
    \leq 0.
  \end{split}
  \end{equation}
  \Nobs that \cref{V_diff6} implies that it holds $ \P$-a.s.\ that
  $V( \Theta_n )\leq V( \Theta_{n-1} )
  \leq\ldots \leq V( \Theta_0) $.
  Combining \cref{lem:Theta_diff_sgd} 
  with \cref{gamma_ass3} and the 
  assumption that $ \gamma_n\leq \mathfrak{g} $ 
  \hence 
  shows that it holds $\P$-a.s.\ that
  \begin{equation}
  \begin{split}
    &V( \Theta_{n+1} )-V( \Theta_n )\\
    &\leq 
    4L \pr*{ -\gamma_n+\gamma_n^2 L \mathbf{a}^2\br[\big]{ \textstyle{\prod}_{p={0} }^{L}( \ell_p+1)}
    (2V( \Theta_0)+4L^2\norm{\xi}^2 + 1 )^{ (L-1)} }
    \fL^n_{ \infty }( \Theta_n )\\
    &\leq 
    4 L \pr*{ -\gamma_n+\gamma_n\mathfrak{g} L \mathbf{a}^2\br[\big]{ \textstyle{\prod}_{p={0} }^{L}( \ell_p+1)}
    (2V( \Theta_0)+4L^2\norm{\xi}^2 + 1 )^{ (L-1)} }
    \fL^n_{ \infty }( \Theta_n )\\
    &= 
    -4\gamma_n L \pr*{ 1-\mathfrak{g} L \mathbf{a}^2\br[\big]{ \textstyle{\prod}_{p={0} }^{L}( \ell_p+1)}
    (2V( \Theta_0)+4L^2\norm{\xi}^2 + 1 )^{ (L-1)} }
    \fL^n_{ \infty }( \Theta_n )
    \leq 0.
  \end{split}
  \end{equation}
  Induction thus establishes \cref{Theta_diff_sgd2}.
  The proof of \cref{cor:Theta_diff_sgd2}
  is thus complete.
\end{proof}

\subsection{Convergence analysis for SGD processes}
\label{ssec:SGD_convergence}

\begin{theorem}
\label{thm:sgd_conv}
Assume \cref{SGD_setting},
assume $ \sum_{ n = 0 }^{ \infty } \gamma_n = \infty $, 
and let $ \delta \in (0,1) $ satisfy 
\begin{equation}
\label{ass_delta}
\textstyle 
  \inf_{ n \in \N }
  \P\bigl( 
      \gamma_n
      L \mathbf{a}^2
      [ 
        \textstyle{ \prod }_{ p = 0 }^L 
        ( \ell_p + 1 ) 
      ]
      ( 2 V( \Theta_0 ) + 4 L^2 \norm{ \xi }^2 + 1 )^{ (L - 1) }
    \leq \delta 
  \bigr) 
  = 1 .
\end{equation}
Then 
\begin{enumerate}[label=(\roman*)]
\item
\label{it:sgd_conv1}
there exists $ \fC \in \Reals $ such that
$ \P( \sup_{ n \in \enne_0 } \norm{ \Theta_n } \leq \fC ) = 1 $,
\item
\label{it:sgd_conv2}
it holds that
$ \P( \limsup_{ n \to \infty } \mL( \Theta_n ) = 0 ) = 1 $,
and
\item
\label{it:sgd_conv3}
it holds that
$ \limsup_{ n \to \infty } \E[ \mL( \Theta_n ) ] = 0 $.
\end{enumerate}
\end{theorem}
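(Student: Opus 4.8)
The plan is to follow the same strategy that drives the GD convergence result in \cref{thm:GD_conv}, but now carried out pathwise and in expectation using the probabilistic tools assembled in \cref{ssec:SGD_properties1,ssec:SGD_properties2,ssec:SGD_lyapunov,ssec:SGD_upper}. First I would observe that the smallness assumption \cref{ass_delta}, together with \cref{it:ineq} of \cref{prop:liap1} to control $V(\Theta_0)$, guarantees that $\P$-almost surely the hypothesis \cref{gamma_ass3} of \cref{cor:Theta_diff_sgd2} holds. This allows me to invoke \cref{cor:Theta_diff_sgd2} to conclude that $\P$-almost surely the sequence $(V(\Theta_n))_{n\in\N_0}$ is non-increasing and that for all $n\in\N_0$ we have
\begin{equation}
  V(\Theta_{n+1}) - V(\Theta_n) \leq -4\gamma_n L (1-\delta) \fL^n_{\infty}(\Theta_n) \leq 0
  .
\end{equation}
Combining this monotonicity with the lower bound $V(\theta)\geq \frac12\norm{\theta}^2 - 2L^2\norm{\xi}^2$ from \cref{it:ineq} of \cref{prop:liap1} then yields $\P$-almost surely that $\sup_{n\in\N_0}\norm{\Theta_n} \leq [2V(\Theta_0)+4L^2\norm{\xi}^2]^{1/2} < \infty$, establishing \cref{it:sgd_conv1}.

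Next I would telescope the above recursive estimate: summing from $0$ to $N-1$ and using the lower bound on $V$ gives $\P$-almost surely that
\begin{equation}
  4L(1-\delta) \textstyle\sum_{n=0}^{N-1} \gamma_n \fL^n_{\infty}(\Theta_n) \leq V(\Theta_0) + 2L^2\norm{\xi}^2 < \infty
  ,
\end{equation}
so that $\sum_{n=0}^{\infty} \gamma_n \fL^n_{\infty}(\Theta_n) < \infty$ almost surely. The subtle point here, in contrast to the deterministic GD proof, is that the summand involves the \emph{empirical} risk $\fL^n_{\infty}(\Theta_n)$ evaluated along the same randomness used to generate the increment, rather than the true risk $\mL(\Theta_n)$. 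To transfer information about $\fL^n_{\infty}(\Theta_n)$ into information about $\mL(\Theta_n)$ I would use \cref{cor:L_exp}, which asserts $\E[\fL^n_{\infty}(\Theta_n)] = \E[\mL(\Theta_n)]$, together with the uniform boundedness of the gradients on the (almost surely bounded) trajectory supplied by \cref{lem:G_cpt_bd}. Taking expectations in the telescoped inequality and using $\sum_{n=0}^\infty \gamma_n = \infty$ should force $\liminf_{n\to\infty}\E[\mL(\Theta_n)] = 0$.

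To upgrade this liminf to the full conclusions I would run, pathwise, essentially the same contradiction argument as in the proof of \cref{thm:GD_conv}. On the event of probability one where the trajectory is bounded, \cref{lem:loc_lip} provides a Lipschitz constant $\mathscr{L}$ for $\mL$ on a compact set containing all $\Theta_n$, and \cref{lem:G_cpt_bd} bounds $\norm{\fG^n(\Theta_n)}$ uniformly by some $\fC$; hence consecutive risk values cannot jump faster than the step sizes allow. Assuming toward a contradiction that $\limsup_{n\to\infty}\mL(\Theta_n) > 0$ on a set of positive probability, I would extract crossing times $(m_k, n_k)$ between two thresholds and show, exactly as in \cref{thm:GD_conv}, that $\sum_k\sum_{j=m_k}^{n_k-1}\gamma_j < \infty$ (using $\sum_n \gamma_n \fL^n_{\infty}(\Theta_n)<\infty$ and that the empirical risk stays bounded below on these excursions), whence $\sum_k \norm{\Theta_{n_k}-\Theta_{m_k}} < \infty$ and the risk gap across each excursion tends to zero, contradicting the threshold separation. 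This yields \cref{it:sgd_conv2}. Finally, \cref{it:sgd_conv3} follows by combining the almost sure convergence in \cref{it:sgd_conv2} with a dominated convergence argument: the risks $\mL(\Theta_n)$ are uniformly bounded on the almost surely bounded trajectory (again via \cref{lem:loc_lip}), so almost sure convergence to zero upgrades to convergence in $L^1(\P;\Reals)$.

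The main obstacle I anticipate is the careful handling of the mismatch between the empirical and true risks. The telescoping naturally controls $\sum_n \gamma_n \fL^n_{\infty}(\Theta_n)$, but the contradiction argument and the final $L^1$ statement are phrased in terms of $\mL(\Theta_n)$; bridging these requires the measurability and independence structure recorded in \cref{lem:G_meas} and the expectation identity \cref{cor:L_exp}, and one must be attentive that $\fL^n_{\infty}(\Theta_n)$ is \emph{not} $\mathbb{F}_n$-measurable while $\Theta_n$ is. Making the liminf-to-limsup contradiction work pathwise, rather than only in expectation, is where the argument is most delicate, since the excursion estimates must hold on a single probability-one event simultaneously with the Lipschitz and gradient bounds.
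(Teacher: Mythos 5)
Your overall architecture matches the paper's: verify the hypothesis of \cref{cor:Theta_diff_sgd2} from \cref{ass_delta} and \cref{prop:liap1}, telescope the Lyapunov recursion to get $\P$-a.s.\ boundedness of the trajectory and summability of $\sum_n \gamma_n \fL^n_{\infty}(\Theta_n)$, pass through the expectation identity of \cref{cor:L_exp}, run the excursion/contradiction argument with the local Lipschitz bound of \cref{lem:loc_lip} and the gradient bound of \cref{lem:G_cpt_bd}, and finish \cref{it:sgd_conv3} by dominated convergence. However, there is a concrete gap in your plan for the pathwise excursion step, and it sits exactly at the empirical-versus-true-risk mismatch you flag as the main obstacle.

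You propose to bound $\sum_k \sum_{j=m_k}^{n_k-1}\gamma_j$ ``using $\sum_n \gamma_n \fL^n_{\infty}(\Theta_n)<\infty$ and that the empirical risk stays bounded below on these excursions.'' The crossing times $(m_k,n_k)$ are defined by thresholds on the \emph{true} risk $\mL(\Theta_j)$, since \cref{it:sgd_conv2} concerns $\limsup_n \mL(\Theta_n)$. Knowing $\mL(\Theta_j)\geq\varepsilon$ for $j\in\N\cap[m_k,n_k)$ gives no lower bound on $\fL^j_{\infty}(\Theta_j)$: the empirical risk at step $j$ is built from the fresh samples $X^{j,1},\dots,X^{j,M_j}$ and can be arbitrarily small on any given step even when the true risk is bounded away from zero, so the inequality $\gamma_j \leq \varepsilon^{-1}\gamma_j\fL^j_{\infty}(\Theta_j)$ is unavailable and your parenthetical claim is in general false. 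The paper closes this gap \emph{before} the excursion argument: from the a.s.\ (and in fact uniformly bounded) telescoped sum it takes expectations, uses \cref{cor:L_exp} to get $\sum_{n=0}^{\infty}\gamma_n\,\E[\mL(\Theta_n)]<\infty$, and then applies the monotone convergence theorem to the nonnegative series to conclude $\P\bigl(\sum_{n=0}^{\infty}\gamma_n\,\mL(\Theta_n)<\infty\bigr)=1$. The excursion argument is then carried out entirely with the true risk $\mL$, for which the threshold lower bound does hold on $[m_k,n_k)$. Your step deducing only $\liminf_{n\to\infty}\E[\mL(\Theta_n)]=0$ is correct but is not the statement you need; the missing ingredient is the Tonelli/monotone-convergence interchange that upgrades summability of the expectations to a.s.\ summability of $\sum_n\gamma_n\mL(\Theta_n)$. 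With that substitution the rest of your argument goes through as in the paper.
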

\begin{proof}[Proof of \cref{thm:sgd_conv}]
Throughout this proof let $ \mathfrak{g} \in [0, \infty]$
satisfy 
$ \mathfrak{g} =\sup_{ n \in \enne_0 } \gamma_n $.
\Nobs that 
\cref{ass_delta}
and 
the assumption that $ \sum_{ n = 0 }^{ \infty } \gamma_n = \infty $
ensure that
$ \mathfrak{g} \in (0, \infty) $.
This and \cref{ass_delta} imply that
there exists $ \fC\in [1, \infty) $ which satisfies 
\begin{equation}
\label{eq:V_theta_0_leq_fC}
  \P( V( \Theta_0 ) \leq \fC ) = 1 .
\end{equation}
\Nobs that \cref{eq:V_theta_0_leq_fC} and \cref{cor:Theta_diff_sgd2} 
demonstrates that
\begin{equation}
\label{V_c_Theta_bd}
  \P\pr[\big]{ 
    \sup\nolimits_{ n \in \enne_0 }V( \Theta_n ) \leq \fC 
  } = 1 .
\end{equation}
\Cref{it:ineq} of \cref{prop:liap1}
(applied with 
$ 
  \mu \curvearrowleft ( \mathcal{B}( [a,b]^{ \ell_0 } ) \ni A \mapsto
  \frac{ 1 }{ M_n } \sum_{ m = 1 }^{ M_n } \mathbbm{1}_{ A }( X^{ n, m }( \omega ) ) \in [0,1] ) 
$,
$ 
  f \curvearrowleft ( [a,b]^{ \ell_0 } \ni x\mapsto \xi \in \Reals^{ \ell_L } ) 
$ 
in the notation of \cref{prop:liap1})
\hence shows that
\begin{equation}
\label{Theta_bd}
  \P\bigl( 
    \sup\nolimits_{ n \in \enne_0 } 
    \norm{ \Theta_n } \leq \fC 
  \bigr) = 1 .
\end{equation}
This establishes \cref{it:sgd_conv1}.
\Nobs that \cref{cor:Theta_diff_sgd2} and \cref{ass_delta} 
ensure that for all $ n \in \enne_0 $
it holds $ \P$-a.s.\ that
\begin{equation}
\begin{split}
&
  - ( V( \Theta_n ) - V( \Theta_{ n + 1 } ) )
\\
&
  \leq - 4 \gamma_n L 
  \bigl(
    1 - \mathfrak{g} L \mathbf{a}^2 
    \br[\big]{ 
      \textstyle{ \prod }_{ p = 0 }^L 
      ( \ell_p + 1 ) 
    }
    ( 2 V( \Theta_0 ) + 4 L^2 \norm{ \xi }^2 + 1 )^{ (L - 1) } 
  \bigr)    
  \fL^n_{ \infty }( \Theta_n )
  .
\end{split} 
\end{equation}  
This and \cref{ass_delta} prove  
 that for all $ n \in \enne_0 $
it holds $ \P$-a.s.\ that
\begin{equation}
\begin{split} 
\label{gamma_n_L_n}
  \gamma_n \fL^n_{ \infty }( \Theta_n )
&
  \leq 
  \frac{ 
    V( \Theta_n ) - V( \Theta_{ n + 1 } ) 
  }{
    4 L 
    [ 
      1 - \mathfrak{g} L \mathbf{a}^2
      \br{ 
        \textstyle{ \prod }_{ p = 0 }^L 
        ( \ell_p + 1 )
      }
      ( 2 V( \Theta_0 ) + 4 L^2 \norm{ \xi }^2 + 1 )^{ (L - 1) } 
    ] 
  }
  \leq 
  \frac{ 
    V( \Theta_n ) - V( \Theta_{ n + 1 } ) 
  }{ 4 L ( 1 - \delta ) } .
\end{split}
\end{equation}
\Moreover \cref{it:ineq} of \cref{prop:liap1}
(applied with 
$ 
  \mu \curvearrowleft 
  ( \mathcal{B}( [a,b]^{ \ell_0 } ) \ni A \mapsto
  \frac{ 1 }{ M_n } \sum_{ m = 1 }^{ M_n } 
  \allowbreak 
  \mathbbm{1}_A( X^{ n, m }( \omega ) ) \in [0,1] ) 
$,
$
  f \curvearrowleft ( [a,b]^{ \ell_0 } \ni x \mapsto \xi \in \Reals^{ \ell_L } ) 
$
in the notation of \cref{prop:liap1})
ensures that
for all $ n \in \enne_0 $
it holds $ \P $-a.s.\ that
$
  V( \Theta_n ) \geq 
  \frac{ 1 }{ 2 } \norm{ \Theta_n }^2
  - 2 L^2 \norm{ f(0) }^2
  \geq - 2 L^2 \norm{ f(0) }^2
$.
Combining this with \cref{V_c_Theta_bd,gamma_n_L_n} 
implies that for all $ N \in \enne_0 $ 
it holds $ \P $-a.s.\ that
\begin{equation}
\label{sum_gamma_L}
\begin{split}
&
  \sum_{ n = 0 }^{ N - 1 } 
  \gamma_n 
  \fL^n_{ \infty }( \Theta_n )
  \leq 
  \frac{ 
    \sum_{ n = 0 }^{ N - 1 } 
    ( V( \Theta_n ) - V( \Theta_{ n + 1 } ) ) 
  }{ 
    4 L ( 1 - \delta ) 
  }
  = 
  \frac{
    V( \Theta_0 ) - V( \Theta_N ) 
  }{
    4 L ( 1 - \delta ) 
  } 
\\
&
  \leq 
  \frac{ 
    V( \Theta_0 )
    + 2 L^2 \norm{ f(0) }^2 
  }{ 4 L ( 1 - \delta ) }
  \leq
  \frac{
    \fC + 2 L^2 \norm{ f(0) }^2 
  }{
    4 L ( 1 - \delta ) 
  }
  < \infty .
\end{split}
\end{equation}
\Hence that
\begin{equation}
\label{sum_gamma_exp_L}
\textstyle 
  \sum_{ n = 0 }^{ \infty }
  \gamma_n 
  \E[ \fL^n_{ \infty }( \Theta_n ) ]
  =
  \lim_{ N \to \infty } 
  \bigl[ 
    \sum_{ n = 0 }^{ N - 1 } 
    \gamma_n \fL^n_{ \infty }( \Theta_n )
  \bigr]
  \leq
  \frac{ \fC + 2 L^2 \norm{ f(0) }^2 }{ 4 L ( 1 - \delta ) }
  < \infty .
\end{equation}
\Moreover \cref{cor:L_exp} proves that for all $ n \in \enne_0 $
it holds that
$
  \E[ \fL^n_{ \infty }( \Theta_n ) ]
  = \E[ \mL( \Theta_n ) ] 
$.
This and \cref{sum_gamma_exp_L} show that
$
  \sum_{ n = 0 }^{ \infty } 
  \gamma_n
  \E[ \mL( \Theta_n ) ]
  < \infty 
$.
The monotone convergence theorem
and the fact that for all $ n \in \enne_0 $, 
$ \omega \in \Omega $
it holds that 
$ \mL( \Theta_n( \omega) ) \geq 0 $ 
\hence ensure that
\begin{equation}
\textstyle 
  \E\br*{ 
    \sum_{ n = 0 }^{ \infty } 
    \gamma_n \mL( \Theta_n ) 
  }
  =
  \sum_{ n = 0 }^{ \infty } 
  \E[ \gamma_n \mL( \Theta_n ) ]
  < \infty .
\end{equation}
\Hence that
\begin{equation}
\label{sum_gamma_L_bd}
\textstyle 
  \P\pr*{ 
    \sum_{ n = 0 }^{ \infty } \gamma_n \mL( \Theta_n ) < \infty 
  }
  = 1 .
\end{equation}
Next let $ A \subseteq \Omega $ satisfy 
\begin{equation}
\label{A_set}
  A = 
  \bigl\{
    \omega \in \Omega \colon 
    \bigl[ 
      \pr[\big]{ 
        \textstyle{ \sum }_{ n = 0 }^{ \infty } 
        \gamma_n \mL( \Theta_n( \omega ) ) < \infty 
      } 
      \wedge
      \pr*{ 
        \sup\nolimits_{ n \in \enne_0 } \norm{ \Theta_n( \omega) } \leq \fC 
      } 
    \bigr]
  \bigr\}
  .
\end{equation}
\Nobs that \cref{Theta_bd,sum_gamma_L_bd,A_set}
ensure that $ A \in \F $ and $ \P( A ) = 1 $.
In the following let $ \omega \in A $ be arbitrary.
\Nobs that \cref{A_set} ensures that  
\begin{equation}
\label{sup_Theta}
  \sup\nolimits_{ n \in \enne_0 } 
  \norm{ \Theta_n( \omega) } 
  \leq \fC 
  .    
\end{equation} 
\Moreover the assumption that
$ 
  \sum_{ n = 0 }^{ \infty } \gamma_n = \infty 
$ 
and
the fact that
$ 
  \sum_{ n = 0 }^{ \infty } 
  \gamma_n 
  \mL( \Theta_n( \omega ) ) < \infty 
$
show that
$ 
  \liminf_{ n \to \infty } \mL( \Theta_n( \omega ) ) = 0 
$.  
In the following we prove \cref{it:sgd_conv2} by contradiction.
For this assume that
\begin{equation}
  \limsup\nolimits_{ n \to \infty } 
  \mL( \Theta_n( \omega ) ) > 0 
  .
\end{equation}
This implies that there exists $ \varepsilon\in (0, \infty) $
which satisfies 
\begin{equation}
\label{epsilon}
  0 = 
  \liminf\nolimits_{ n \to \infty } 
  \mL( \Theta_n( \omega) )
  < \varepsilon
  < 2 \varepsilon
  < \limsup\nolimits_{ n \to \infty } \mL( \Theta_n( \omega ) ) 
  .
\end{equation}
\Nobs that \cref{epsilon} assures that
there exist 
$
  ( m_k, n_k ) \in \enne^2 
$, $ k \in \enne $,
which satisfy that for all $ k \in \enne $ it holds that
$ m_k < n_k < m_{ k + 1 } $, 
$ \mL( \Theta_{ m_k }( \omega ) ) > 2 \varepsilon 
$, 
and
$ 
  \mL( \Theta_{n_k}( \omega ) )
  < \varepsilon < 
  \min_{ j \in \enne \cap [m_k, n_k) } \mL( \Theta_j( \omega ) ) 
$.
Combining this with 
the fact that
$ 
  \sum_{ n = 0 }^{ \infty } 
  \gamma_n \mL( \Theta_n( \omega ) ) < \infty 
$
shows that
\begin{equation}
\label{sum_gamma_sgd}
  \sum_{ k = 1 }^{ \infty }
  \sum_{ j = m_k }^{ n_k - 1 } 
  \gamma_j
  \leq
  \frac{ 1 }{ \varepsilon } 
  \Biggl[
    \sum_{ k = 1 }^{ \infty } 
    \sum_{ j = m_k }^{ n_k - 1 }
    ( \gamma_j \mL( \Theta_j( \omega ) ) )
  \Biggr]
  \leq
  \frac{ 1 }{ \varepsilon } 
  \Bigg[
    \sum_{ k = 0 }^{ \infty }
    ( \gamma_k \mL( \Theta_k( \omega ) ) )
  \Bigg]
  < \infty .
\end{equation}
\Moreover \cref{sup_Theta} and \cref{lem:G_cpt_bd}
ensure that there exists $ \mathfrak{D} \in \Reals $
which satisfies for all $ n \in \enne_0 $ that
$ 
  \norm{
    \fG^n( \Theta_n( \omega ), \omega ) 
  }
  \leq \mathfrak{D} 
$.
This, \cref{sum_gamma_sgd}, and 
the fact that for all $ n \in \enne_0 $,
$ \omega \in \Omega $ it holds that 
$ 
  \Theta_{ n + 1 }( \omega ) - \Theta_n( \omega )
  = - \gamma_n \fG^n( \Theta_n( \omega ), \omega ) 
$
demonstrate that
\begin{equation}
\label{sum_theta}
\begin{split}
  \sum_{ k = 1 }^{ \infty }
  \norm{ 
    \Theta_{ n_k }( \omega ) - \Theta_{ m_k }( \omega ) 
  }
&
  \leq \sum_{ k = 1 }^{ \infty }
  \sum_{ j = m_k }^{ n_k - 1 } 
  \norm{ \Theta_{ j + 1 }( \omega ) - \Theta_j( \omega ) }
  =
  \sum_{ k = 1 }^{ \infty }
  \sum_{ j = m_k }^{ n_k - 1 } 
  \gamma_j
  \norm{ \fG^j( \Theta_j( \omega ), \omega ) } 
\\
&
  \leq
  \mathfrak{D} 
  \Biggl[
    \sum_{ k = 1 }^{ \infty }
    \sum_{ j = m_k }^{ n_k - 1 } 
    \gamma_j
  \Biggr]
  < \infty .
\end{split}
\end{equation}
\Moreover \cref{lem:loc_lip}
(applied with 
$ 
  \mu \curvearrowleft 
  ( \mathcal{B}( [a,b]^{ \ell_0 } ) \ni A \mapsto
  \frac{ 1 }{ M_n } 
  \sum_{ m = 1 }^{ M_n } 
  \mathbbm{1}_A( X^{ n, m }( \omega ) ) \in [0,1] ) 
$,
$
  f \curvearrowleft 
  ( [a,b]^{ \ell_0 } \ni x \mapsto \xi \in \Reals^{ \ell_L} ) 
$
in the notation of \cref{lem:loc_lip})
and \cref{sup_Theta}
prove that
there exists $ \mathscr{L} \in \Reals $ which satisfies
for all $ m, n \in \enne_0 $ that
$
  | \mL( \Theta_m( \omega ) ) - \mL( \Theta_n( \omega ) ) |
  \leq \mathscr{L} \norm{ \Theta_m( \omega ) - \Theta_n( \omega ) } 
$.
Combining this with \cref{sum_theta} proves that
\begin{equation}
  \limsup\nolimits_{ k \to \infty } 
  \abs{
    \mL( \Theta_{ n_k }( \omega ) ) - 
    \mL( \Theta_{ m_k }( \omega ) ) 
  }
  \leq 
  \limsup\nolimits_{ k \to \infty } 
  \bigl(
    \mathscr{L} 
    \norm{
      \Theta_{ n_k }( \omega ) - 
      \Theta_{ m_k }( \omega ) 
    }
  \bigr)
  = 0 .
\end{equation}
This and 
the fact that for all $ k \in \enne_0 $
it holds that 
$ 
  \mL( \Theta_{ n_k }( \omega ) ) 
  < \varepsilon < 2 \varepsilon
  \leq \mL( \Theta_{ m_k } ) 
$
demonstrate that
\begin{equation}
  0 < \varepsilon
  \leq
  \inf\nolimits_{ k \in \enne } 
  \abs{
    \mL( \Theta_{ n_k }( \omega ) ) - 
    \mL( \Theta_{ m_k }( \omega ) ) 
  }
  \leq
  \limsup\nolimits_{ k \to \infty } 
  \abs{
    \mL( \Theta_{ n_k }( \omega ) ) - 
    \mL( \Theta_{ m_k }( \omega ) ) 
  }
  = 0 .
\end{equation}
This contradiction proves that 
$ 
  \limsup_{ n \to \infty } 
  \mL( \Theta_n( \omega ) ) = 0
$.
Combining this with the fact that $ \P(A) = 1 $
establishes \cref{it:sgd_conv2}. 
\Nobs that \cref{it:sgd_conv1} and 
the fact that $ \mL $ is continuous 
show that there exists $ \fC \in \Reals $
which satisfies that
\begin{equation}
\label{eq:sup_probability_equal_to_1}
\textstyle 
  \P( \sup_{ n \in \enne_0 } \mL( \Theta_n ) \leq \fC ) = 1 
  .
\end{equation}
\Nobs that 
\cref{it:sgd_conv2}, 
\cref{eq:sup_probability_equal_to_1}, 
and the dominated convergence theorem
establish \cref{it:sgd_conv3}. 
The proof of \cref{thm:sgd_conv} is thus complete.  
\end{proof}

\begin{cor}
\label{cor:sgd_conv}
Assume \cref{SGD_setting},
assume $ \sum_{ n = 0 }^{ \infty } \gamma_n = \infty $,
and assume for all $ n \in \enne_0 $ that
\begin{equation}
\label{ass_Theta}
  \P\bigl( 
    ( 4 L \fd \max\{ \mathbf{a}, \norm{\xi} \} )^{ 2 L } 
    \gamma_n 
    \leq 
    ( \norm{ \Theta_0 } + 1 )^{ - 2 L } 
  \bigr) = 1 .
\end{equation}
Then
\begin{enumerate}[label=(\roman*)]
\item
\label{it:sgd_conv1_cor}
there exists $ \fC\in \Reals $ such that
$ 
  \P( \sup_{ n \in \enne_0 } \norm{ \Theta_n } \leq \fC ) = 1
$,
\item
\label{it:sgd_conv2_cor}
it holds that
$ 
  \P( \limsup_{ n \to \infty } \mL( \Theta_n ) = 0 ) = 1
$,
and
\item
\label{it:sgd_conv3_cor}
it holds that
$ \limsup_{ n \to \infty } \E[\mL( \Theta_n )]=0$.
\end{enumerate}
\end{cor}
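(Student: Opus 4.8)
The plan is to deduce \cref{cor:sgd_conv} from \cref{thm:sgd_conv} by checking that the explicit smallness hypothesis \cref{ass_Theta} is strong enough to guarantee the abstract smallness hypothesis \cref{ass_delta}. The conclusions \cref{it:sgd_conv1_cor,it:sgd_conv2_cor,it:sgd_conv3_cor} of \cref{cor:sgd_conv} are verbatim the conclusions \cref{it:sgd_conv1,it:sgd_conv2,it:sgd_conv3} of \cref{thm:sgd_conv}, and the remaining standing assumptions of \cref{SGD_setting} together with $\sum_{n=0}^{\infty}\gamma_n=\infty$ are common to both statements. Hence the whole task reduces to producing some $\delta\in(0,1)$ for which \cref{ass_delta} holds.

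First I would control the Lyapunov function at initialization. Since the function $V$ in \cref{V_c_sgd} agrees with the function $V$ in \cref{V_c} for the constant target $f(0)=\xi$, \cref{it:ineq} of \cref{prop:liap1} supplies the pointwise bound $V(\Theta_0)\leq 2L\norm{\Theta_0}^2+L\norm{\xi}^2$. Writing $\mathfrak{c}=\max\{\mathbf{a},\norm{\xi}\}$ (so that $\mathfrak{c}\geq 1$ because $\mathbf{a}\geq 1$) and inserting this bound, I would estimate
\[
  2V(\Theta_0)+4L^2\norm{\xi}^2+1
  \leq 4L\norm{\Theta_0}^2+2L\norm{\xi}^2+4L^2\norm{\xi}^2+1
  \leq 11L^2\mathfrak{c}^2(\norm{\Theta_0}+1)^2 ,
\]
using $\norm{\Theta_0}^2\leq(\norm{\Theta_0}+1)^2$ and $L,\mathfrak{c}\geq 1$. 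Next I would exploit the elementary facts $\ell_p\leq\fd$ for every $p\in\{0,1,\dots,L\}$ and $\fd\geq 2L$, both of which follow from $\fd=\sum_{k=1}^L\ell_k(\ell_{k-1}+1)$, to bound the dimension product by $\prod_{p=0}^L(\ell_p+1)\leq(2\fd)^{L+1}$.

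Combining these estimates and raising the displayed inequality to the power $L-1$, I would show that $L\mathbf{a}^2\br{\prod_{p=0}^L(\ell_p+1)}(2V(\Theta_0)+4L^2\norm{\xi}^2+1)^{(L-1)}$ is dominated by a fixed fraction of $(4L\fd\mathfrak{c})^{2L}(\norm{\Theta_0}+1)^{2L}$: tracking the powers of $2$, $11$, $L$, $\fd$, and $\mathfrak{c}$, the quotient of the two coefficients equals $\tfrac{2}{11L}(11/8)^L\fd^{-(L-1)}$, which by $\fd\geq 2L\geq 2$ is at most $\tfrac{4}{11L}(11/16)^L\leq\tfrac14$ for every $L\in\N$, so one may take $\delta=\tfrac12$. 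Multiplying this inequality by $\gamma_n$ and inserting the hypothesis \cref{ass_Theta}, which rearranges to $\gamma_n(4L\fd\mathfrak{c})^{2L}(\norm{\Theta_0}+1)^{2L}\leq 1$ $\P$-a.s., would yield $\gamma_n L\mathbf{a}^2\br{\prod_{p=0}^L(\ell_p+1)}(2V(\Theta_0)+4L^2\norm{\xi}^2+1)^{(L-1)}\leq\delta$ $\P$-a.s.\ for every $n$, which is exactly \cref{ass_delta}. A single application of \cref{thm:sgd_conv} then establishes \cref{it:sgd_conv1_cor,it:sgd_conv2_cor,it:sgd_conv3_cor}. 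The main obstacle will be the bookkeeping in this last comparison: one must line up the exponents so that the factor $(\norm{\Theta_0}+1)^{2(L-1)}$ coming from the Lyapunov term is absorbed into $(\norm{\Theta_0}+1)^{2L}$, and so that the degenerate case $L=1$, in which the exponent $L-1$ vanishes and the Lyapunov factor disappears, is handled by the very same chain of inequalities.
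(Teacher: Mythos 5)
Your proposal is correct and follows essentially the same route as the paper: bound $V(\Theta_0)$ via \cref{it:ineq} of \cref{prop:liap1}, dominate $\prod_{p=0}^L(\ell_p+1)$ by a power of $\fd$, and track constants to show that \cref{ass_Theta} forces \cref{ass_delta} with an explicit $\delta\in(0,1)$, after which \cref{thm:sgd_conv} gives all three conclusions. The only differences are cosmetic choices in the constant bookkeeping (the paper arrives at the bound $7^L 8^{-L}<1$ where you arrive at $\tfrac14$ and take $\delta=\tfrac12$).
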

\begin{proof}[Proof of \cref{cor:sgd_conv}]
Throughout this proof let 
$
  \mathbf{B} \in \Rr
$
satisfy
$
  \mathbf{B} = \max\{\mathbf{a}, \norm{\xi} \}
$. 
\Nobs that \cref{it:ineq} of \cref{prop:liap1}
(applied with 
$ 
  \mu \curvearrowleft 
  ( \mathcal{B}( [a,b]^{ \ell_0 } ) \ni A \mapsto
  \frac{ 1 }{ M_n } 
  \sum_{ m = 1 }^{ M_n } 
  \mathbbm{1}_A( X^{ n, m }( \omega ) ) \in [0,1] ) 
$,
$
  f \curvearrowleft 
  ( 
    [a,b]^{ \ell_0 } \ni x \mapsto \xi \in \Reals^{ \ell_L } 
  ) 
$
in the notation of \cref{prop:liap1}) 
and the fact that for all 
$ x, y \in \Reals $, $ M \in \enne $
it holds that
$
  ( x + y )^M \leq 
  ( 2^{ M + 1 } - 1 )( x^M + y^M ) 
$
ensure that it holds $ \P$-a.s.\ that 
\begin{equation}
\begin{split}
&
  ( 2 V( \Theta_0 ) + 4 L^2 \norm{ \xi }^2 + 1 )^{ (L - 1) }
  \leq 
  \bigl( 
    2 
    ( 
      2 L \norm{ \Theta_0 }^2
      + L \norm{ \xi }^2 
    )
    + 4 L^2 \norm{ \xi }^2 + 1 
  \bigr)^{ (L-1) } 
\\
&
  \leq 
  ( 
    4 L \norm{ \Theta_0 }^2 + 6 L^2 \norm{ \xi }^2 + 1 
  )^{ (L - 1) }
  \leq
  ( 2^L - 1 ) 
  \bigl(
    ( 4 L \norm{ \Theta_0 }^2 )^{ (L - 1) } 
    + 
    ( 6 L^2 \norm{ \xi }^2 + 1 )^{ (L - 1) } 
  \bigr) .
\end{split}  
\end{equation}
\Hence that it holds $ \P $-a.s.\ that
  \begin{equation} 
  \begin{split}
    &(2V( \Theta_0)+4 L^2 \norm{\xi}^2 + 1 )^{ (L-1)}
    \leq (2^{L}-1)\Big[
    (4 L )^{ (L-1)} \norm{\Theta_0 }^{2(L-1)} + (7 \mathbf{B}^2 L ^2 )^{(L-1)} \Big]\\
    &\leq (2^{L}-1) (7 L^2 )^{ (L-1)} \mathbf{B}^{2(L-1)}(
    \norm{\Theta_0 }^{2(L-1)} + 1)
    \leq 14^{L} L^{2 ( L - 1 ) } \mathbf{B}^{2(L-1)}(
    \norm{\Theta_0 } + 1)^{2L}.
  \end{split}
  \end{equation}
This and \cref{ass_Theta} show 
that for all $ n \in \enne_0 $
it holds $ \P$-a.s.\ that
  \begin{equation}
  \begin{split}
    &\gamma_n
     L \mathbf{a}^2\br[\big]{ \textstyle{\prod}_{p={0} }^{L}( \ell_p+1)}
    (2V( \Theta_0)+4L^2\norm{\xi}^2 + 1 )^{ (L-1)} \\
    &\leq
    \gamma_n L^{2 L - 1 }  \mathbf{B}^{2 L } 14^L
    \br[\big]{ \textstyle{\prod}_{p={0} }^{L}( \ell_p+1)} ( \norm{\Theta_0 } + 1) ^{ 2 L }
    \le
    \gamma_n 14^L( L  \mathbf{B} )^{2 L }
   \br[\big]{ \textstyle{\prod}_{p={0} }^{L}( \ell_p+1)} ( \norm{\Theta_0 } + 1) ^{ 2 L }\\
    &\leq
    \gamma_n 14^L ( L  \mathbf{B} )^{2 L }
    \mathfrak{d}^{L + 1 } ( \norm{\Theta_0 } + 1) ^{ 2 L }
    \leq
    \gamma_n 14^L ( L  \mathbf{B} \mathfrak{d} )^{2 L }
     ( \norm{\Theta_0 } + 1) ^{ 2 L }
    \leq 
    14^L 4^{ - 2 L }
    = 7^L 8^{ - L }
    < 1
    .
\end{split}
\end{equation}
Combining this with \cref{thm:sgd_conv} establishes
\cref{it:sgd_conv1_cor,it:sgd_conv2_cor,it:sgd_conv3_cor}.
The proof of \cref{cor:sgd_conv} is thus complete.
\end{proof}

\section{Numerical simulations} 
\label{sec:numerics}

In this section we supplement 
the theoretical findings of 
\cref{thm:main_thm}
and 
\cref{thm:sgd_conv}, 
respectively, 
by means of 
two numerical examples.

\subsection{Numerical simulation of an SGD processes for certain ANNs with one hidden layer}
\label{subsec:numerical_example_1}

\begin{figure}
\centering
\begin{adjustbox}{width=\textwidth}
\def\layersep{5cm}
\begin{tikzpicture}[shorten >=1pt, ->, draw=black!50, node distance=\layersep]
  \tikzstyle{neuron} = [circle, draw=black!80, minimum size=17pt, line width=0.5mm, inner sep=0pt]
  \tikzstyle{input neuron} = [neuron, fill=red!50]
  \tikzstyle{output neuron} = [neuron, fill=violet!50]
  \tikzstyle{hidden neuron} = [neuron, fill=blue!50]
  \tikzstyle{annot} = [text width=9em, text centered]
  \tikzstyle{annot2} = [text width=4em, text centered]

  \foreach \y in {1,...,1}
    \node[input neuron] (I-\y) at (0,0) {};
  \foreach \y in {1,...,1}    
    \node[annot,left of=I-\y, node distance=0.7cm, align=center] () {$ x $};
  \foreach \y in {1,...,7}
    \node[hidden neuron] (H-\y) at (\layersep, 4-\y) {};      
  \foreach \y in {1,...,1}
    \node[output neuron](O-\y) at (2*\layersep, 0) {};   
  \foreach \y in {1,...,1}
    \node[annot,right of=O-\y, node distance=1.1cm, align=center] () {$ \mN^{ 2, \theta }_{ \infty }(x) $};
            
  \foreach \source in {1,...,1}
    \foreach \dest in {1,...,7}
      \path[>=stealth] (I-\source) edge (H-\dest);     
  \foreach \source in {1,...,7}
    \foreach \dest in {1,...,1}
      \path[>=stealth] (H-\source) edge (O-\dest);     

  \node[annot,above of=H-1, node distance=1.0cm, align=center] (hl) {Hidden layer\\(2nd layer)};
  \node[annot,left of=hl, align=center] {Input layer\\ (1st layer)};
  \node[annot,right of=hl, align=center] {Output layer\\(3rd layer)};    
  \node[annot2,below of=H-7, node distance=1.0cm, align=center] (sl) {$ \ell_1 = 7 $};
  \node[annot2,right of=sl, align=center] (sl2) {$ \ell_2 = 1$};
  \node[annot2,left of=sl, align=center] {$ \ell_0 = 1 $};
\end{tikzpicture}
\end{adjustbox}
\caption{Graphical illustration of the ANN architecture used in \cref{subsec:numerical_example_1} 
($ \ell_0 = 1 $ neuron on the input layer, $ \ell_1 = 7 $ neurons on the hidden layer, and $ \ell_2 = 1 $ neuron on the output layer). 
In this situation we have for every ANN parameter vector $ \theta \in \Reals^{ \fd } = \Reals^{ 22 } $ 
that the realization function 
$ \Reals \ni x \mapsto \mN^{ 2, \theta }_{ \infty }(x) \in \Reals $ 
of the considered ANN maps the 1-dimensional input $ x \in [0,1] $ to the 1-dimensional 
output $ \mN^{ 2, \theta }_{ \infty }( x ) \in \Reals $.}
\label{figure_ANN_architecture_sim1}
\end{figure}

In this subsection we present numerical simulations for a certain SGD process 
in the training of some shallow ANNs with $ 1 $ neuron on the input layer, with $ 7 $ neurons on the hidden layer, 
and with $ 1 $ neuron on the output layer (see \cref{figure_ANN_architecture_sim1}, Table~\ref{table:ANN_num_sim1}, 
and Figure~\ref{fig:ANN_num_sim1}).
\begin{figure}
\begin{center}
\includegraphics[scale=0.75]{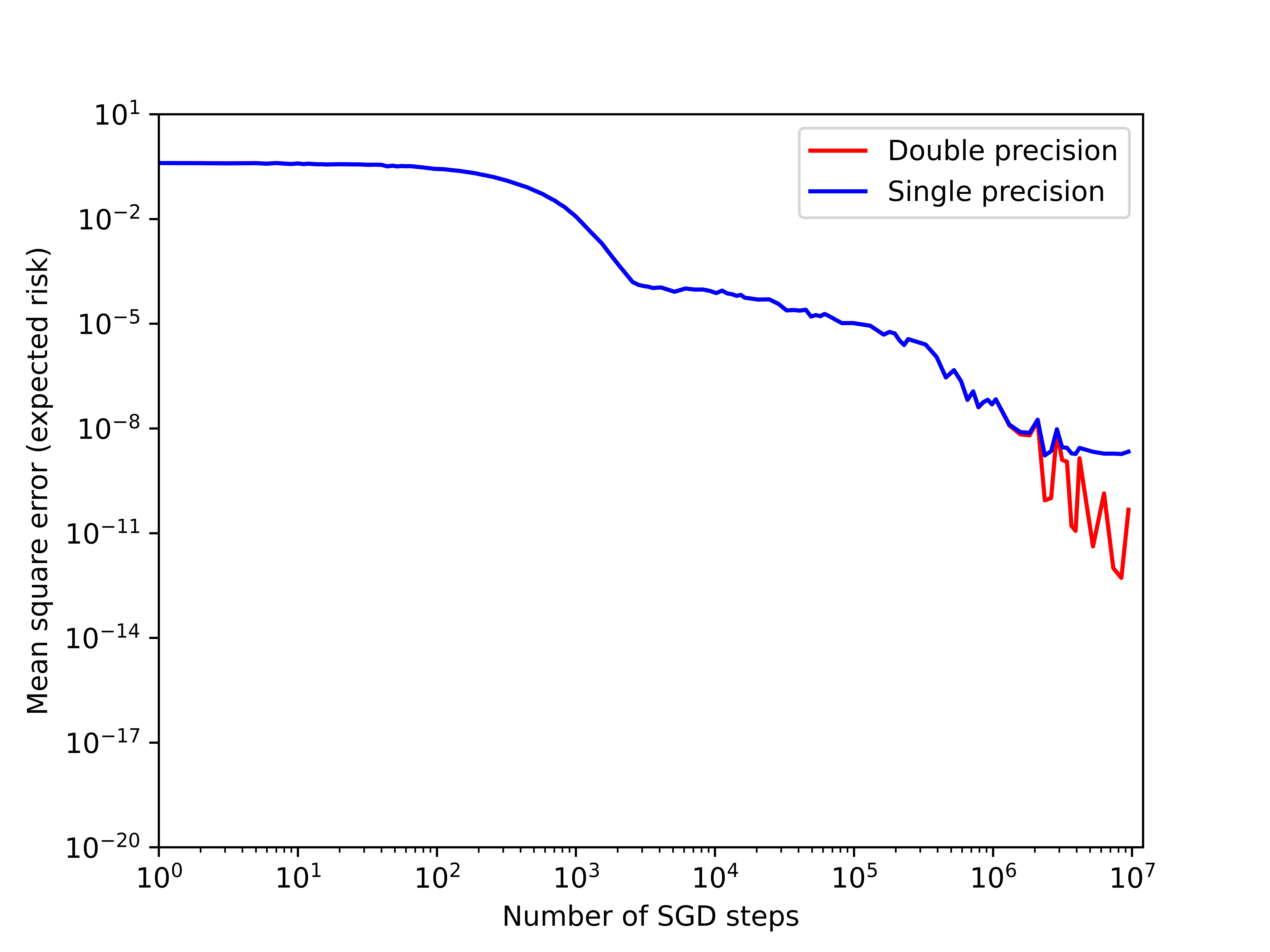}
\caption{Plot of the estimated mean square error (expected risk) against the number of SGD steps 
for \cref{subsec:numerical_example_1}}
\label{fig:ANN_num_sim1}
\end{center}
\end{figure}
More formally, assume \cref{SGD_setting}, 
let $ e_1, e_2, \dots, e_{ \fd } \in \Reals^{ \fd } $ 
satisfy 
$
  e_1 = ( 1, 0, \dots, 0 )
$, 
$
  e_2 = ( 0, 1, \dots, 0 )
$,
$ \dots $,
$
  e_{ \fd } = ( 0, \dots, 0, 1 )
$,
assume 
\begin{equation}
\label{eq:num_example_1_assumption1}
  L = 2 ,
  \qquad 
  \ell_0 = 1 ,
  \qquad 
  \ell_1 = 7 ,
  \qquad 
  \ell_2 = 1 ,
  \qquad 
  \xi = 1 ,
  \qquad 
  a = 0 ,
  \qquad 
  \text{and}
  \qquad 
  b = 1
  ,
\end{equation}
assume for all 
$ k \in \{ 1, 2, \dots, \fd \} $, $ x \in \Reals $ that 
\begin{equation}
\label{eq:num_example_1_assumption2}
\textstyle 
  \P(
    X^{ 0, 0 } < x
  )
  =
  \max\{ \min\{ x, 1 \}, 0 \}
\quad 
\text{and}
\quad 
  \P(
    \langle e_k, (\ell_1 )^{ 1 / 2 } \Theta_0 \rangle 
    < x
  )
  =
  \int_{ - \infty }^x
  ( 2 \pi )^{ - 1 / 2 }
  \exp\bigl( - \frac{ y^2 }{ 2 } \bigr)
  \, \d y
  ,
\end{equation}
and assume for all $ n \in \N_0 $ that $ M_n = 100 $ and $ \gamma_n = \frac{ 1 }{ 2000 } $. 
\Nobs that \cref{eq:num_example_1_assumption1} ensures that 
the number of ANN parameters $ \fd $ satisfies 
\begin{equation}
\textstyle 
  \fd = 
  \sum_{ k = 1 }^L
  \ell_k ( \ell_{ k - 1 } + 1 ) 
  =
  2 \ell_1 + \ell_1 + 1
  =
  3 \ell_1 + 1
  =
  22 
  .
\end{equation}
\Moreover \cref{eq:num_example_1_assumption1} assures that the considered 
ANNs consist of $ 14 $ weight parameters and $ 8 $ bias parameters. 
We also refer to \cref{figure_ANN_architecture_sim1} for a graphical 
illustration of the ANN architecture described in \cref{eq:num_example_1_assumption1}. 
\Moreover \cref{eq:num_example_1_assumption2} ensures that for all $ n, m \in \N_0 $ 
we have that 
$
  X^{ n, m }
$
is continuous uniformly distributed on $ (0,1) $. 
\Moreover \cref{eq:num_example_1_assumption2} assures that 
$
  ( \ell_1 )^{ 1 / 2 } \Theta_0
$
is a $ 22 $-dimensional standard normal random vector. 
In \cref{table:ANN_num_sim1} we approximately specify 
the mean square error (expected risk)
\begin{equation}
\label{eq:expected_risk_ex1}
  \E[ \mL( \Theta_n ) ] 
\end{equation}
against the number $ n $ of SGD steps 
for several $ n \in \{ 1, 2, \dots, 10^6 \} $ 
and in \cref{fig:ANN_num_sim1} we approximately plot the values of \cref{table:ANN_num_sim1}. 
In \cref{table:ANN_num_sim1} and \cref{fig:ANN_num_sim1} we approximate 
the expectations in \cref{eq:expected_risk_ex1} by means of Monte Carlo 
averages with $ 10^6 $ samples. 
The {\sc Python} source code used to create 
\cref{table:ANN_num_sim1} and \cref{fig:ANN_num_sim1} can be found in \cref{lst:sgd1}. 
Moreover, in \cref{lst:sgd1_short} we present a simplified variant of the 
{\sc Python} code in \cref{lst:sgd1}. 
\newpage
\begin{longtable}{|c|S[table-auto-round,
                 table-format=1.15, table-column-width=10em]|
                 S[table-auto-round,
                 table-format=1.15, table-column-width=10em]|}
\caption{Estimated mean square error (expected risk)\\ against the number of SGD steps for \cref{subsec:numerical_example_1}}
\label{table:ANN_num_sim1}\\
  \hline
  \multicolumn{1}{|c|}{Number of SGD steps}
   & \multicolumn{2}{|c|}{Estimated mean square error (expected risk)} \\
  \cline{2-3}
  \multicolumn{1}{|c|}{}
  & \multicolumn{1}{|c|}{Single precision}
  & \multicolumn{1}{|c|}{Double precision}\\
  \hline
\endhead
\hline
\endfoot
  \csvreader[head=false, late after line=\\\hline, late after last line=\ ]{mean_square_error_example1.csv}{}
  {\csvcoli & \csvcolii & \csvcoliii}
\end{longtable}

\clearpage
\begin{center}
    \lstinputlisting{sgd1.py}
    \captionof{listing}{\label{lst:sgd1} {\sc Python} code used to create Table~\ref{table:ANN_num_sim1} 
    and Figure~\ref{fig:ANN_num_sim1} in \cref{subsec:numerical_example_1} (filename: \texttt{sgd1.py}).}
    \bigskip
\end{center}

\begin{center}
    \lstinputlisting{sgd1_short.py}
    \captionof{listing}{\label{lst:sgd1_short} Simplified variant of the {\sc Python}script in \cref{lst:sgd1} above 
    (filename: \texttt{sgd1\_short.py}).}
    \bigskip
\end{center}

\FloatBarrier

\subsection{Numerical simulation of an SGD process for certain ANNs with two hidden layers}
\label{subsec:numerical_example_2}

In this subsection we present numerical simulations for a certain SGD process 
in the training of some deep ANNs with two hidden layers with $ 1 $ neuron on the input layer, with $ 3 $ neurons on the first hidden layer, 
with $ 7 $ neurons on the second hidden layer,
and with $ 1 $ neuron on the output layer (see \cref{figure_ANN_architecture_sim2},
\begin{figure}
\centering
\begin{adjustbox}{width=\textwidth}
\begin{tikzpicture}[shorten >=1pt,->,draw=black!50, node distance=\layersep]
    \tikzstyle{neuron} =[circle, draw=black!80, 
    minimum size=17pt,line width=0.5mm, inner sep=0pt]
    \tikzstyle{input neuron} =[neuron, fill=red!50];
    \tikzstyle{output neuron} =[neuron,
    fill=violet!50];
    \tikzstyle{hidden neuron} =[neuron,
    fill=blue!50];
    \tikzstyle{annot} = [text width=9em, text centered]
    \tikzstyle{annot2} = [text width=4em, text centered]

        \node[input neuron] (I-1) at (0,0) {};  
        \node[annot,left of=I-1, node distance=0.7cm, align=center] () {$x$};
    \foreach \name / \y in {1,...,3}
        \path
            node[hidden neuron] (H-\name) at ( \layersep,2-\y ) {};      
    \foreach \name / \y in {1,...,7}
        \path
            node[hidden neuron] (H2-\name) at (2*\layersep,4-\y ) {};          
        \path
            node[output neuron](O-1) at (3*\layersep,0) {};      
            \node[annot,right of=O-1, node distance=1.1cm, align=center] () {$ \mN^{3, \theta }_{ \infty}(x) $};
            
        \foreach \dest in {1,...,3}
            \path[>=stealth] (I-1) edge (H-\dest);     
    \foreach \source in {1,...,3}
        \foreach \dest in {1,...,7}
            \path[>=stealth] (H-\source) edge (H2-\dest);     
    \foreach \source in {1,...,7}
            \path[>=stealth] (H2-\source) edge (O-1);

    \node[annot,above of=H-1, node distance=3cm, align=center] (hl) {1st hidden layer\\(2nd layer)};
    \node[annot,above of=H2-1, node distance=1cm, align=center] (hl2) {2nd hidden layer\\(3rd layer)};
    \node[annot,left of=hl, align=center] {Input layer\\ (1st layer)};
    \node[annot,right of=hl2, align=center] {Output layer\\(4th layer)};
    
    \node[annot2,below of=H-3, node distance=3cm, align=center] (sl) {$ \ell_1=3$};
    \node[annot2,below of=H2-7, node distance=1cm, align=center] (sl2) {$ \ell_2=7$};
    \node[annot2,left of=sl, align=center] {$ \ell_0=1$};
    \node[annot2,right of=sl2, align=center] {$ \ell_3=1$};
\end{tikzpicture}
\end{adjustbox}
\caption{Graphical illustration of the ANN architecture used in \cref{subsec:numerical_example_2} 
($ \ell_0 = 1 $ neuron on the input layer, $ \ell_1 = 3 $ neurons on the first hidden layer,
$ \ell_2 = 7 $ neurons on the second hidden layer, and $ \ell_3 = 1 $ neuron on the output layer). 
In this situation we have for every ANN parameter vector $ \theta \in \Reals^{ \fd } = \Reals^{ 42 } $ 
that the realization function 
$ \Reals \ni x \mapsto \mN^{ 3, \theta }_{ \infty }(x) \in \Reals $ 
of the considered ANN maps the 1-dimensional input $ x \in [0,1] $ to the 1-dimensional 
output $ \mN^{ 3, \theta }_{ \infty }( x ) \in \Reals $.}
\label{figure_ANN_architecture_sim2}
\end{figure}
Table~\ref{table:ANN_num_sim2}, 
and Figure~\ref{fig:ANN_num_sim2}).
\begin{figure}
\begin{center}
\includegraphics[scale=0.75]{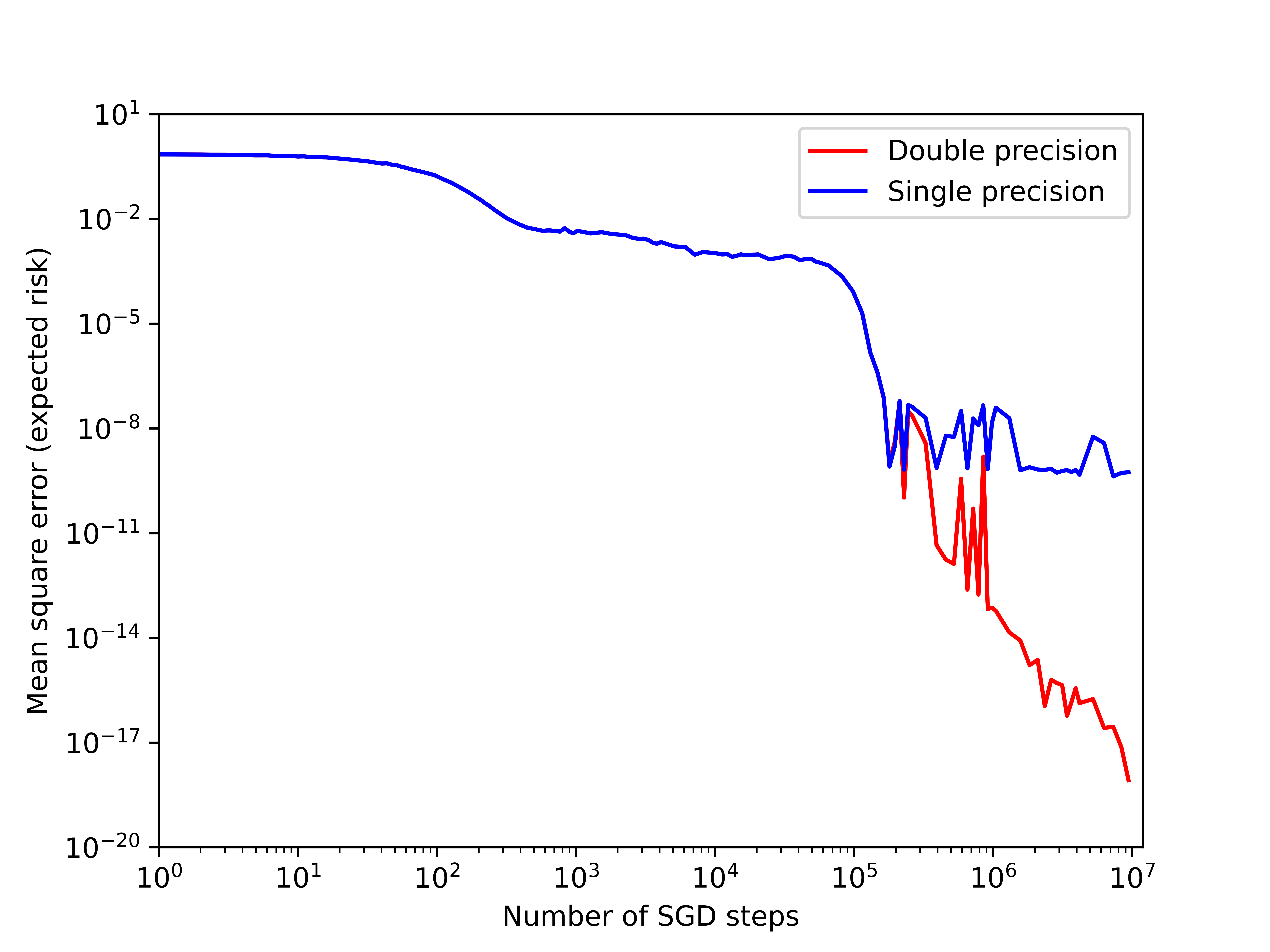}
\caption{Plot of the estimated mean square error (expected risk) against 
the number of SGD steps for \cref{subsec:numerical_example_2}}
\label{fig:ANN_num_sim2}
\end{center}
\end{figure}
Assume \cref{SGD_setting}, 
let $ e_1, e_2, \dots, e_{ \fd } \in \Reals^{ \fd } $ 
satisfy 
$
  e_1 = ( 1, 0, \dots, 0 )
$, 
$
  e_2 = ( 0, 1, \dots, 0 )
$,
$ \dots $,
$
  e_{ \fd } = ( 0, \dots, 0, 1 )
$,
assume 
\begin{equation}
\label{eq:num_example_2_assumption1}
  L = 3 ,
  \qquad 
  \ell_0 = 1 ,
  \qquad 
  \ell_1 = 3 ,
  \qquad 
  \ell_2 = 7 ,
  \qquad 
  \ell_3 = 1 ,
  \qquad 
  \xi = 1 ,
  \qquad 
  a = 0 ,
  \qquad 
  \text{and}
  \qquad 
  b = 1
  ,
\end{equation}
assume for all 
$ k \in \{ 1, 2, \dots, \fd \} $, $ x \in \Reals $ that 
\begin{equation}
\label{eq:num_example_2_assumption2}
\textstyle 
  \P(
    X^{ 0, 0 } < x
  )
  =
  \max\{ \min\{ x, 1 \}, 0 \}
\quad 
\text{and}
\quad 
  \P(
    \langle e_k, (\ell_1 )^{ 1 / 2 } \Theta_0 \rangle 
    < x
  )
  =
  \int_{ - \infty }^x
  ( 2 \pi )^{ - 1 / 2 }
  \exp\bigl( - \frac{ y^2 }{ 2 } \bigr)
  \, \d y
  ,
\end{equation}
and assume for all $ n \in \N_0 $ that $ M_n = 100 $ and $ \gamma_n = \frac{ 1 }{ 2000 } $. 
\Nobs that \cref{eq:num_example_2_assumption1} ensures that 
the number of deep ANN parameters $ \fd $ satisfies 
\begin{equation}
\textstyle 
  \fd = 
  \sum_{ k = 1 }^L
  \ell_k ( \ell_{ k - 1 } + 1 ) 
  =
  2\ell_1 + \ell_2(\ell_1 +1 )
  + \ell_2 +1 
  =
  42 
  .
\end{equation}
\Moreover \cref{eq:num_example_2_assumption1} assures that the considered deep
ANNs consist of $ 31 $ weight parameters and $ 11 $ bias parameters. 
We also refer to \cref{figure_ANN_architecture_sim2} for a graphical 
illustration of the deep ANN architecture described in \cref{eq:num_example_2_assumption1}. 
\Moreover \cref{eq:num_example_2_assumption2} ensures that for all $ n, m \in \N_0 $ 
we have that 
$
  X^{ n, m }
$
is continuous uniformly distributed on $ (0,1) $. 
\Moreover \cref{eq:num_example_2_assumption2} assures that
$
  ( \ell_1 )^{ 1 / 2 } \Theta_0
$
is a $ 42 $-dimensional standard normal random vector. 
In \cref{table:ANN_num_sim2} we approximately specify the mean square error (expected risk) 
\begin{equation}
\label{eq:expected_risk_ex2}
  \E[ \mL( \Theta_n ) ] 
\end{equation}
against the number $ n $ of SGD steps 
for several $ n \in \{ 1,2, \dots, 10^6 \} $ 
and in \cref{fig:ANN_num_sim2} we approximately plot the values of \cref{table:ANN_num_sim2}. 
In \cref{table:ANN_num_sim2} and \cref{fig:ANN_num_sim2} we approximate the expectations in 
\cref{eq:expected_risk_ex2} by means of Monte Carlo 
averages with $ 10^6 $ samples. 
The {\sc Python} source code used to create \cref{table:ANN_num_sim2} and \cref{fig:ANN_num_sim2} can be found in \cref{lst:sgd2}. 
In addition, in \cref{lst:sgd2_short} we present a simplified variant of the {\sc Python} code in \cref{lst:sgd2}.

\clearpage
\begin{longtable}{|l|S[table-auto-round,
                 table-format=1.18]|
                 S[table-auto-round,
                 table-format=1.18]|}
\caption{Estimated mean square error (expected risk)\\ against the number of SGD steps for \cref{subsec:numerical_example_2}}
\label{table:ANN_num_sim2}\\
  \hline
  \multicolumn{1}{|c|}{Number of SGD steps}
   & \multicolumn{2}{|c|}{Estimated mean square error (expected risk)} \\
  \cline{2-3}
  \multicolumn{1}{|c|}{}
  & \multicolumn{1}{|c|}{Single precision}
  & \multicolumn{1}{|c|}{Double precision}\\
  \hline
\endhead
\hline
\endfoot
  \csvreader[head=false, late after line=\\\hline, late after last line=\ ]{mean_square_error_example2.csv}{}
  {\csvcoli & \csvcolii & \csvcoliii}
\end{longtable}

\newpage
\begin{center}
    \lstinputlisting{sgd2.py}
    \captionof{listing}{\label{lst:sgd2}
    {\sc Python} code used to create Table~\ref{table:ANN_num_sim2} 
        and \cref{fig:ANN_num_sim2} in \cref{subsec:numerical_example_2} 
        (filename: \texttt{sgd2.py}).}
    \bigskip
\end{center}

\begin{center}
    \lstinputlisting{sgd2_short.py}
    \captionof{listing}{\label{lst:sgd2_short}
    Simplified variant of the {\sc Python}script in \cref{lst:sgd2} above 
    (filename: \texttt{sgd2\_short.py}).}
    \bigskip
\end{center}

\subsection*{Acknowledgments}
Benno Kuckuck is gratefully acknowledged for several helpful suggestions. 
This work has been funded by the Deutsche Forschungsgemeinschaft 
(DFG, German Research Foundation) 
under Germany's Excellence Strategy EXC 2044-390685587, Mathematics M\"{u}nster: 
Dynamics-Geometry-Structure and under the research project HU1889/7-1.

\bibliographystyle{acm}
\bibliography{ref2}

\end{document}